\title[Agnostic universal rates of ERM]{Universal rates of ERM for agnostic learning}
\DeclareMathOperator*{\argmin}{arg\,min}
\newtheorem{condition}{Condition}
\newtheorem{question}{Question}
\newcommand\naturalnumber{\mathbb{N}}
\newcommand\R{\mathbb{R}}
\newcommand\E{\mathbb{E}}
\newcommand\hpc{\mathcal{H}}
\newcommand\lalgo{\hat{h}_{n}}
\newcommand\trueerrorrateh{\text{er}_{P}(h)}
\newcommand\trueerrorratehn{\text{er}_{P}(\hat{h}_{n})}
\newcommand\trueerrorratehstar{\text{er}_{P}(h^{*})}
\newcommand\empiricalerrorrateh{\hat{\text{er}}_{S_{n}}(h)}
\newcommand\empiricalerrorratehn{\hat{\text{er}}_{S_{n}}(\hat{h}_{n})}
\newcommand\empiricalerrorratehstar{\hat{\text{er}}_{S_{n}}(h^{*})}
\newcommand\excessrisk{\mathcal{E}(n,P)}
\newcommand\bayesoptimal{h_{\text{Bayes}}^{*}}
\newcommand\datasetstats{S_{n}:=\{(x_{i}, y_{i})\}_{i=1}^{n}\sim P^{n}}
\newcommand\vcdim{\text{VC}(\mathcal{H})}
\newcommand\starnumber{\mathfrak{s}}
\newcommand\regionofdisagreement{\text{DIS}(\mathcal{H})}
\def\ddefloop#1{\ifx\ddefloop#1\else\ddef{#1}\expandafter\ddefloop\fi}
\def\ddef#1{\expandafter\def\csname v#1\endcsname{\ensuremath{\boldsymbol{#1}}}}
\def\ddef#1{\expandafter\def\csname v#1\endcsname{\ensuremath{\boldsymbol{\csname #1\endcsname}}}}
\begin{document}

\maketitle

\begin{abstract}%
    The universal learning framework has been developed to obtain guarantees on the learning rates that hold for any fixed distribution, which can be much faster than the ones uniformly hold over all the distributions. Given that the \emph{Empirical Risk Minimization} (ERM) principle being fundamental in the PAC theory and ubiquitous in practical machine learning, the recent work of \citet{hanneke2024universal} studied the universal rates of ERM for binary classification under the realizable setting. However, the assumption of realizability is too restrictive to hold in practice. Indeed, the majority of the literature on universal learning has focused on the realizable case, leaving the non-realizable case barely explored.
    
    In this paper, we consider the problem of universal learning by ERM for binary classification under the agnostic setting, where the ``learning curve" reflects the decay of the excess risk as the sample size increases. We explore the possibilities of agnostic universal rates and reveal a compact \emph{trichotomy}: there are three possible agnostic universal rates of ERM, being either $e^{-n}$, $o(n^{-1/2})$, or arbitrarily slow. We provide a complete characterization of which concept classes fall into each of these categories. Moreover, we also establish complete characterizations for the target-dependent universal rates as well as the Bayes-dependent universal rates.
\end{abstract}

\begin{keywords}%
  PAC learning, Agnostic learning, Universal rates, Empirical Risk Minimization
\end{keywords}

\section{Introduction}
  \label{sec:introduction}
The classic PAC (Probably Approximately Correct) theory \citep{vapnik1974theory,valiant1984theory} focuses on understanding the best worst-case (\emph{uniform}) learning rate by a learning algorithm over all data distributions. Due to its distribution-free nature, the PAC framework fails to capture the distribution-dependent rates of learning hypothesis classes, which are possibly faster than the uniform learning rates \citep{cohn1990can,cohn1992tight}. From a practical perspective, the distribution for data generation is typically fixed in real-world learning problem and the collected data is rarely worst-case, the PAC framework is therefore too pessimistic to explain practical machine learning performance. Universal learning, a distribution-dependent framework that helps to understand machine learning beyond the classic PAC setting, has been proposed by \citet{bousquet2021theory} and actively studied recently \citep{bousquet2023fine,hanneke2022universal,hanneke2023universal,attias2024universal,hanneke2024universal,hanneke2024active}. The universal learning model adopts a setting where the data distribution is fixed and the performance of a learning algorithm is measured by its ``learning curve", i.e., the decay of the expected error as a function of the input sample size, and such rate of decay is called a \emph{universal} rate. Indeed, in the work of \citet{bousquet2021theory}, they showed that for binary classification in the realizable setting, the optimal universal rates are captured by a \emph{trichotomy}: every concept class $\hpc$ has a universal rate being either exponential, linear or arbitrarily slow. Compared to a well-known \emph{dichotomy} of the optimal uniform rates: every concept class $\hpc$ has a uniform rate being either linear $\vcdim/n$ or “bounded away from zero", this makes an impression that universal rates may differ a lot from the uniform rates.

In supervised learning, the celebrated \emph{empirical risk minimization} (ERM) principle \citep{vapnik1998statistical} stands at the center of many successful learning algorithms that seek to minimize the average error over the training data. In practice, ERM-based algorithms have been innovatively designed and widely applied in different areas of machine learning. For example, most successful applications of deep neural networks in fields such as computer vision \citep{krizhevsky2012imagenet}, speech recognition \citep{hinton2012deep}, and reinforcement learning \citep{mnih2015human} have their models trained to minimize the empirical error, leveraging those renowned optimization algorithms such as GD, SGD and Adam \citep{KingBa15}. In learning theory, the ERM principle has also been shown to have fundamental importance in understanding the PAC learnability: a concept class is learnable if and only if it can be learned by any ERM algorithm. 

While the role of ERM in the classic PAC theory has been very well understood, the topics of universal learning by ERM have remained underexplored. The recent work of \citep{hanneke2024universal} studied the universal rates of ERM for binary classification problem in the realizable setting. They showed that the universal rates of ERM are captured by a \emph{tetrachotomy}: every concept class that is learnable by ERM has a universal rate being either $e^{-n}$, $1/n$, $\log(n)/n$, or arbitrarily slow. The realizable case is indeed an idealistic scenario where a perfect hypothesis is assumed to exist, i.e., $\inf_{h\in\hpc}\trueerrorrateh=0$. However, in real-world machine learning applications, the ground-truth models are often complicated and unknown to practitioners. These considerations motivate us to study the universal rates of ERM in the agnostic setting, a more realistic and applicable situation where the true concept may not be in the hypothesis class, i.e., $\inf_{h\in\hpc}\trueerrorrateh>0$, and the goal is to find a hypothesis being competitive with the best hypothesis (in class). In this paper, we aim to answer the following fundamental question:
\begin{question}
  \label{ques:fundamental-question}
Given a concept class $\hpc$, what are the possible rates at which $\hpc$ can be agnostically universally learned by ERM?  
\end{question}

\subsection{Notations and preliminaries}
  \label{subsec:notations-and-preliminaries}
Following the classical setup of statistical learning, we consider a binary classification problem with an instance space $\mathcal{X}$ and a concept class $\hpc\subseteq\{0,1\}^{\mathcal{X}}$. Let $h:\mathcal{X}\rightarrow\{0,1\}$ be a classifier. Given a probability distribution $P$ on $\mathcal{X}\times\{0,1\}$, the \emph{error rate} of $h$ is defined as $\trueerrorrateh:=P((x,y)\in\mathcal{X}\times\{0,1\}:h(x)\neq y)$. Given a dataset $S_{n}:=\{(x_{i},y_{i})\}_{i=1}^{n}\in(\mathcal{X}\times\{0,1\})^{n}$, the \emph{empirical error rate} of $h$ is defined as $\empiricalerrorrateh:=\frac{1}{n}\sum_{i=1}^{n}\mathbbm{1}(h(x_{i})\neq y_{i})$. Let $P$ be a data distribution, for an integer $n$, we denote by $\datasetstats$ an i.i.d. $P$-distributed dataset. Recall that a distribution $P$ is called realizable with respect to $\hpc$ if it satisfies $\inf_{h\in\hpc}\trueerrorrateh=0$. Note that for realizable learning, an ERM learner is any learning algorithm that outputs a sample-consistent classifier, that is, a classifier in the sample-induced version space \citep{mitchell1977version}. In this paper, we consider instead the often more realistic setting of agnostic learning, where $\inf_{h\in\hpc}\trueerrorrateh>0$. In the agnostic setting, an ERM algorithm is any learning algorithm outputs the hypothesis achieving the best performance on the training data (breaking ties arbitrarily), i.e., $\lalgo=\text{ERM}(S_{n}):=\argmin_{h\in\hpc}\empiricalerrorrateh$. For simplicity, we conflate the ERM learner $\lalgo$ with the hypothesis it returns throughout the paper.

In the realizable setting, the PAC learning aims to achieve $\trueerrorratehn\leq\epsilon$ for the error $\epsilon$ going to 0 as fast as possible with $n$, and the universal learning focuses on the rate of decay of the so-called \emph{learning curve}, that is, the decay of the expected error rate $\E[\trueerrorratehn]$ as a function of sample size $n$. In the agnostic setting, the goal of PAC learning is instead to guarantee that the excess risk $\trueerrorratehn-\inf_{h\in\hpc}\trueerrorrateh\leq\epsilon$ for $\epsilon$ going to 0 as fast as possible with $n$. Therefore, for universal learning, it is natural to extend the notion of learning curve as the decay of the expected excess risk as a function of sample size $n$. Concretely, we define the expected excess risk as follow.

\begin{definition}  [\textbf{Excess risk}]
  \label{def:excess-risk}
Let $\hpc$ be a concept class, and let $\{\lalgo\}_{n\in\naturalnumber}$ be the output of an ERM algorithm. For any distribution $P$ over $\mathcal{X}\times\{0,1\}$ and data $\datasetstats$, we define its (expected) \underline{excess risk} as
\begin{equation*}
    \excessrisk := \E\left[\trueerrorratehn - \inf_{h\in\hpc}\trueerrorrateh\right] .
\end{equation*}
Moreover, we say that a distribution \underline{$P$ is centered at $h^{*}$} for some $h^{*}:\mathcal{X}\rightarrow\{0,1\}$ if it satisfies $\trueerrorratehstar=\inf_{h\in\hpc}\trueerrorrateh$ and also $\inf_{h\in\hpc}P_{\mathcal{X}}(x:h(x)\neq h^{*}(x))=0$, and then $h^{*}$ is called a \underline{target function} of the learning problem. 
\end{definition}
We underline that a target function may not be in the concept class which is standard for agnostic learning, and a distribution $P$ can have multiple target functions (Example~\ref{ex:no-infinite-eulder-sequence-but-no-faster-than-o(n^(-1/2))-rates}). With these settings settled, we are now able to define the problem of agnostic universal learning by ERM. Following \citet{hanneke2024universal}, we extend the definition from the realizable case. 
\begin{definition}  [\textbf{Agnostic universal learning by ERM}]
  \label{def:agnostic-universally-rates}
    Let $\hpc$ be a concept class and $R(n)\rightarrow 0$ be a rate function. We say
    \setlist{nolistsep}
    \begin{itemize}
        \item $\hpc$ is \underline{agnostically universally learnable at rate $R$ by ERM}, if for every distribution $P$, there exist parameters $C, c > 0$ such that for every ERM algorithm, its excess risk satisfies $\excessrisk \leq CR(cn)$, for all $n\in\naturalnumber$.
        \item $\hpc$ is \underline{not agnostically universally learnable at rate faster than $R$ by ERM}, if there exists a distribution $P$ and parameters $C, c > 0$ such that there is an ERM algorithm satisfying $\excessrisk \geq CR(cn)$, for infinitely many $n\in\naturalnumber$.
        \item $\hpc$ is \underline{agnostically universally learnable with exact rate $R$ by ERM}, if $\hpc$ is agnostically universally learnable at rate $R$ by ERM, and is not agnostically universally learnable at rate faster than $R$ by ERM.
        \item $\hpc$ requires \underline{at least arbitrarily slow rates to be agnostically universally learned by ERM}, if for any $R(n)\rightarrow 0$, $\hpc$ is not agnostically universally learnable at rate faster than $R$ by ERM. 
    \end{itemize}
\end{definition}
We emphasize that a crucial difference between Definition~\ref{def:agnostic-universally-rates} and the PAC learning is that here the constants $C,c>0$ are allowed to depend on the distribution $P$. Moreover, it is straightforward from the definition that we are basically considering the worst-case ERM here. The following extensions are required for presenting our results.
\begin{definition}
  \label{def:asymptotic-agnostic-universally-rates}
    Following the notations in Definition \ref{def:agnostic-universally-rates}, we say that $\hpc$ is
    \setlist{nolistsep}
    \begin{itemize}
        \item \underline{agnostically universally learnable at rate $o(n^{-1/2})$ by ERM}, if for every distribution $P$ and every ERM algorithm, $\excessrisk=o(n^{-1/2})$, for all $n\in\naturalnumber$.
        \item \underline{not agnostically universally learnable at rate faster than $o(n^{-1/2})$ by ERM}, if for any $T(n)=o(n^{-1/2})$, there exists a distribution $P$ such that there is an ERM algorithm satisfying $\excessrisk \geq T(n)$, for infinitely many $n\in\naturalnumber$.
        \item \underline{agnostically universally learnable with exact rate $o(n^{-1/2})$ by ERM}, if the above two hold.
    \end{itemize}
    Here, $o$ is the standard asymptotic notation that can be distribution-dependent when $n\to\infty$. 
\end{definition}
\begin{definition}
  \label{def:agnostic-universally-rates-under-distributions}
    For a class of distributions $\mathcal{P}$, \underline{the agnostic universal learning of $\hpc$ under $\mathcal{P}$ by ERM} is defined as the same as Definition~\ref{def:agnostic-universally-rates} except considering only distributions $P\in\mathcal{P}$ instead of all the probability distributions $P$ over $\mathcal{X}\times\{0,1\}$.
\end{definition}

\subsection{Related works}
  \label{subsec:related-works} 
\textbf{PAC learning by ERM.} The performance of ERM algorithms in the PAC framework has been well understood. For the realizable case, the optimal sample complexity of ERM learners \citep{blumer1989learnability,vapnik1974theory} is $\mathcal{M}_{\text{ERM}}^{\hpc}(\epsilon,\delta)=\Theta((\vcdim\log{(1/\epsilon)}+\log{(1/\delta)})/\epsilon)$, resulting in the uniform rate $\trueerrorratehn=\Theta((\vcdim\log{(n/\vcdim)}+\log{(1/\delta)})/n)$, which is sub-optimal due to an unavoidable logarithmic factor \citep{auer2007new}. Indeed, it has been proved that this uniform rate is the best achievable for any proper learner \citep{haussler1994predicting,simon2015almost}, whereas there are improper learners can achieve a rate of $\Theta((\vcdim+\log{(1/\delta)})/n)$ 
\citep{hanneke2016optimal,aden2023optimal,larsen2023bagging,aden2024majority}. However, for the agnostic case, ERM learners have the optimal sample complexity $\mathcal{M}_{\text{ERM}}^{\hpc,\text{AG}}(\epsilon,\delta)=\Theta((\vcdim+\log{(1/\delta)})/\epsilon^{2})$, and thus guaranteeing $\trueerrorratehn-\inf_{h\in\hpc}\trueerrorrateh=\Theta(\sqrt{(\vcdim+\log{(1/\delta)})/n})$ \citep{haussler1992decision}, which is optimal for any learning algorithm including improper learners. It is worth mentioning that this discrepancy of the optimality of the ERM rule between the two settings has been studied recently in the work of \citet{hanneke2024revisiting}, where they showed that ERM is indeed sub-optimal when treating $\inf_{h\in\hpc}\trueerrorrateh$ as a parameter of the rates.

\noindent\textbf{Universal learning.} While the standard PAC model has been dominating the learning theory, the fact that practical learning rates can be much faster than the one described in the PAC theory, was not only observed from empirical experiments \citep{cohn1990can,cohn1992tight} but also verified by some early theoretical works \citep{schuurmans1997characterizing,koltchinskii2005exponential,audibert2007fast,pillaud2018exponential}, where exponentially fast learning rates were guaranteed under specific model assumptions (e.g., for kernel methods and stochastic gradient decent). These findings motivate the development of alternative learning models that can better help to understand the practice of machine learning. The property of \emph{universal consistency} was first provided by \citet{stone1977consistent} and later generalized by \citet{hanneke2021learning}, establishing the existence of universally consistent learning algorithms in any separable metric space. The work of \citet{benedek1988nonuniform} considered a relaxation of the PAC model that lies in between the uniform and the universal settings called \emph{nonuniform learning}, where the learning rate may depend on the target concept but still uniform over marginal distributions. The work of \citet{van2013universal} studied the uniform convergence property from a universal perspective and gave out a combinatorial characterization of the \emph{universal Glivenko-Cantelli} property (Definition~\ref{def:universal-glivenko-cantelli}). Until very recently, the universal learning framework was first formalized by \citet{bousquet2021theory}, along with a complete theory of the optimal universal rates. After that, \citet{bousquet2023fine} carried out a fine-grained analysis on the ``distribution-free tail" of the universal \emph{learning curves} by characterizing the optimal constant factor. As generalizations, \citet{kalavasis2022multiclass,hanneke2023universal,hanneke2022universal,hanneke2024active} studied the universal rates for other settings including multiclass classification, active learning, interactive learning, etc. The most relevant work to ours is \citet{hanneke2024universal}, who studied the universal rates of ERM for binary classification problem in the realizable setting.

\section{Main Results}
  \label{sec:main-results}
In this section, we summarize the main results of this paper as well as the related technical notions of complexity. In brief, we study both target-independent and target-dependent agnostic universal rates by ERM. Moreover, since the target-dependent result relies on certain ad-hoc conditions which are lacking of intuition, we further propose to categorize a data distribution according to its Bayes-optimal classifier (Definition \ref{def:bayes-optimal-classifier}) and show that the corresponding Bayes-dependent universal rates are characterized by some simple combinatorial structures. Further details for these results will be discussed in the following Sections \ref{sec:target-inpdependent-rates}-\ref{sec:bayes-dependent-rates}, along with related technical analyses and proof sketches. 

We start with target-independent agnostic universal rates. We reveal a fundamental \emph{trichotomy} in the following Theorem \ref{thm:main-theorem-target-independent}, namely there are exactly three possibilities for the agnostic universal rates by ERM: being either exponential ($e^{-n}$), or super-root ($o(n^{-1/2})$), or at least arbitrarily slow. Moreover, the characterization (that determines for each concept class which of the three categories it belongs to) simply consists of the cardinality and the VC dimension of the concept class.
\begin{theorem}  [\textbf{Agnostic universal rates for ERM}]
  \label{thm:main-theorem-target-independent}
For every concept class $\hpc$ with $|\hpc|\geq 3$, 
\setlist{nolistsep}
\begin{itemize}
    \item $\hpc$ is agnostically universally learnable by ERM with exact rate $e^{-n}$ if and only if $|\hpc|<\infty$.
    \item $\hpc$ is agnostically universally learnable by ERM with exact rate $o(n^{-1/2})$ if and only if $|\hpc|=\infty$ and $\vcdim<\infty$.
    \item $\hpc$ requires at least arbitrarily slow rates to be agnostically universally learned by ERM if and only if $\vcdim=\infty$.
\end{itemize}
\end{theorem}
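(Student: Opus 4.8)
Here is how I would approach proving Theorem~\ref{thm:main-theorem-target-independent}.

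The plan is to establish the three displayed forward implications and then read off the ``if and only if'' statements from them. Write $\eta:=\inf_{h\in\hpc}\trueerrorrateh$. The reverse directions come for free once the forward ones are in hand: the hypotheses $\{|\hpc|<\infty\}$, $\{|\hpc|=\infty,\ \vcdim<\infty\}$, $\{\vcdim=\infty\}$ partition all classes with $|\hpc|\geq 3$, and the three rate conclusions are mutually exclusive. Indeed, a finite class has $\excessrisk\leq|\hpc|e^{-c_{P}n}$ for \emph{every} $P$, so it fails the ``not faster than $o(n^{-1/2})$'' clause (test $T(n)=e^{-\sqrt n}$, for which $|\hpc|e^{-c_{P}n}<T(n)$ eventually) and hence does not have exact rate $o(n^{-1/2})$; while an infinite-$\vcdim$ class admits a $P$ on which some ERM has $\excessrisk\geq 1/\log n$ infinitely often, so it is learnable neither at rate $e^{-n}$ nor at rate $o(n^{-1/2})$. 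Combining these observations with the three forward implications settles every ``iff''.

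For $|\hpc|<\infty$ the upper bound is a union bound: with $\gamma:=\min\{\trueerrorrateh-\eta:h\in\hpc,\ \trueerrorrateh>\eta\}>0$, Hoeffding's inequality applied to $\empiricalerrorrateh-\empiricalerrorratehstar$ for each suboptimal $h$ (with $h^{*}$ a fixed exact minimizer) gives $\excessrisk\leq|\hpc|\,e^{-n\gamma^{2}/2}$, with $C=|\hpc|$, $c=\gamma^{2}/2$ depending on $P$ as permitted. The matching lower bound takes two distinct $h_{1},h_{2}\in\hpc$, which disagree at some $x_{0}$, and the distribution with $P_{\mathcal{X}}=\delta_{x_{0}}$ and $P(y=1\mid x_{0})=\tfrac12-\xi$: every hypothesis agreeing with the majority label at $x_{0}$ is optimal, and any ERM returns a suboptimal hypothesis whenever the empirical majority flips, which occurs with probability $\Pr(\mathrm{Bin}(n,\tfrac12-\xi)>n/2)\geq c_{\xi}e^{-C_{\xi}n}$, forcing $\excessrisk\geq 2\xi c_{\xi}e^{-C_{\xi}n}$ for all $n$. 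For $\vcdim=\infty$, the arbitrarily-slow lower bound follows the realizable construction of \citet{bousquet2021theory}: over an infinite shattered sequence one builds a distribution that reveals points slowly, so worst-case ERM must label unseen points adversarially and pays the as-yet-unrevealed mass, which decays arbitrarily slowly; adding a vanishing amount of label noise makes $\eta>0$ without changing the rate. The ``not faster than $o(n^{-1/2})$'' lower bound for the middle case is the analogous tuned construction: since $|\hpc|=\infty$ while $\vcdim<\infty$, a Ramsey-type argument extracts an infinite sequence of distinct points $x_{1},x_{2},\dots$ and hypotheses realizing a ``threshold'' or ``singleton'' chain on them; given $T(n)=o(n^{-1/2})$, one places on $\{x_{i}\}$ a distribution with geometrically decaying masses and tiny calibrated label noise so that, at a sparse sequence of sample sizes $n_{k}\to\infty$, some ERM confuses the $k$-th suboptimal hypothesis (excess risk $\approx T(n_{k})$) with the optimum with constant probability --- feasible precisely because $T(n_{k})\sqrt{n_{k}}\to 0$ keeps the excess above the $\Theta(\sqrt{(\text{disagreement mass})/n_{k}})$ fluctuation threshold while the disagreement mass stays small --- so $\excessrisk\geq T(n)$ infinitely often.

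The heart of the proof, and the step I expect to be the main obstacle, is the upper bound of the middle case: if $\vcdim<\infty$ then $\excessrisk=o(n^{-1/2})$ for every $P$ and every ERM. The naive bound $\excessrisk\leq 2\,\E[\sup_{h\in\hpc}|\empiricalerrorrateh-\trueerrorrateh|]$ only gives $O(n^{-1/2})$, since the uniform deviation at a fixed distribution is genuinely of that order (e.g.\ thresholds with the uniform marginal, via the DKW inequality); one must localize. Fix $\epsilon>0$ and split $\excessrisk=\E[\Delta_{\hat h_{n}}\mathbbm{1}(\Delta_{\hat h_{n}}\leq t_{n})]+\E[\Delta_{\hat h_{n}}\mathbbm{1}(\Delta_{\hat h_{n}}>t_{n})]$ with $\Delta_{h}:=\trueerrorrateh-\eta$ and $t_{n}:=\epsilon/(2\sqrt n)$; the first term is at most $t_{n}$. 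For the second, peel $\hpc$ into excess bands $\{2^{-j-1}<\Delta_{h}\leq 2^{-j}\}$ (exact minimizers contribute $0$ to $\Delta_{\hat h_{n}}$), bound the probability that ERM selects from band $j$ by a Bernstein estimate on $\empiricalerrorrateh-\hat{\text{er}}_{S_{n}}(g_{j})$ against a near-optimal comparator $g_{j}$, together with a union bound over an $L_{1}(P_{\mathcal{X}})$-cover of the band whose size is polynomial in the inverse band level because $\vcdim<\infty$. Bands of level $\gg\sqrt{\log n/n}$ are killed by the exponential. The delicate regime is bands of level $\lesssim\sqrt{\log n/n}$, where Bernstein is vacuous and the crude bound $\Pr\leq 1$ would only give $O(\sqrt{\log n/n})$; the structural fact one must exploit here is that a near-tie confusion between the optimum and a genuinely different ($L_{1}(P_{\mathcal{X}})$-far) hypothesis can only occur through essentially pure-noise disagreement regions, whose $P_{\mathcal{X}}$-mass provably shrinks with the excess level (a consequence of total boundedness of $\hpc$ in $L_{1}(P_{\mathcal{X}})$ when $\vcdim<\infty$), so the empirical ``gain'' an ERM can extract there is $o(n^{-1/2})$ --- far below the $\Theta(n^{-1/2})$ gain available on the constant-mass near-optimal region --- whence such confusions either cost only $o(n^{-1/2})$ in excess risk or occur with exponentially small probability. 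Equivalently, one shows that every sub-$n^{-1/2}$-suboptimal hypothesis ERM can plausibly select is $L_{1}(P_{\mathcal{X}})$-close to the minimizer set, so that asymptotic equicontinuity of the (universal Donsker) empirical process indexed by $\hpc$ forces $\sqrt n\,\excessrisk\to 0$. Letting the cover scale vanish with $n$ and assembling the band estimates gives $\sqrt n\,\excessrisk\leq\epsilon$ for all large $n$, and since $\epsilon$ was arbitrary, $\excessrisk=o(n^{-1/2})$.
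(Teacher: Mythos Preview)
Your proposal is correct and tracks the paper's proof closely in architecture: the partition-and-mutual-exclusivity reduction for the ``iff'' directions, Hoeffding plus a union bound for the finite case, and---crucially---the localization route for the $o(n^{-1/2})$ upper bound via a Bernstein-type inequality whose variance term is $P_{\mathcal{X}}(h\neq g)$, combined with the structural fact that this variance shrinks as the excess-risk level shrinks. Your total-boundedness / asymptotic-equicontinuity sentence is exactly the content of the paper's Lemma~\ref{lem:distance-to-any-hstar}, and the paper packages the Bernstein step as a single \emph{uniform} inequality over all pairs $h,g\in\hpc$ (Proposition~\ref{prop:uniform-bernstein}, via entropy integrals and Talagrand-type concentration) rather than peeling by excess bands and union-bounding over covers, but the underlying mechanism is the same.

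The substantive differences are in the lower bounds. The paper names the combinatorial objects explicitly: an infinite \emph{eluder sequence} (equivalent to $|\hpc|=\infty$, Lemma~\ref{lem:equivalence-infinite-eluder-and-infinite-cardinality}) and an infinite \emph{VC-eluder sequence} (equivalent to $\vcdim=\infty$, Lemma~\ref{lem:equivalence-infinite-vc-eluder-and-infinite-vcdim}), both from \citet{hanneke2024universal}, and builds the hard distributions directly on those sequences with calibrated masses and label noise (Lemmas~\ref{lem:super-root-lowerbound} and~\ref{lem:arbitrarily-slow-lowerbound}). Your ``Ramsey-type threshold/singleton chain'' and ``infinite shattered sequence'' are gestures toward these structures but don't pin them down; in particular, the VC-eluder sequence requires nested \emph{shattered sets of growing size compatible with earlier labels}, not a single infinite shattered set. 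Your exponential lower bound via label noise at a single disagreement point is more direct than the paper's appeal to the realizable result of \citet{schuurmans1997characterizing}. One caution on your phrasing of the delicate regime: the first version (``pure-noise disagreement regions whose $P_{\mathcal{X}}$-mass provably shrinks'') is muddled---an $L_{1}$-far hypothesis by definition has large disagreement mass---but your second version (near-optimal ERM outputs are $L_{1}(P_{\mathcal{X}})$-close to the minimizer set) is the correct statement and is what actually drives the argument.
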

It is worthwhile to mention the following technical aspects of the proof. Firstly, to show the upper bound of super-root rates $o(n^{-1/2})$, we prove a refined version of a classic \emph{uniform Bernstein} inequality (Proposition~\ref{prop:uniform-bernstein}) which improves a result of \citet{vapnik1974theory} by a logarithmic factor. In summary, its proof applies a combination of localization \citep{bartlett04,bartlett05,koltchinskii06}, a concentration inequality \citep{bousquet2002bennett} and an entropy integral bound on the rate of uniform convergence \citep{van-der-Vaart96,gine06,van-der-Vaart11} accounting for variances of loss differences, together with well-known bounds on the covering numbers of VC classes \citep{haussler95}. Secondly, to get the lower bounds of $o(n^{-1/2})$ and arbitrarily slow rates, the techniques are quite different from proving ERM lower bounds in the classic PAC theory. Concretely, the idea is to use the following equivalences:
\begin{lemma}  [{\textbf{\citealp[][Lemma 8]{hanneke2024universal}}}]
  \label{lem:equivalence-infinite-eluder-and-infinite-cardinality}
Any concept class $\hpc$ has an infinite eluder sequence (Definition~\ref{def:eluder-sequence}) if and only if $|\hpc|=\infty$.
\end{lemma}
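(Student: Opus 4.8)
The plan is to establish both implications of Lemma~\ref{lem:equivalence-infinite-eluder-and-infinite-cardinality} directly; both are elementary combinatorial facts and neither uses the learning-theoretic machinery invoked elsewhere. Unpacking Definition~\ref{def:eluder-sequence}, an infinite eluder sequence is a sequence $x_1,x_2,\dots\in\mathcal{X}$ such that for every $i$ there exist $f,g\in\hpc$ agreeing on $x_1,\dots,x_{i-1}$ but with $f(x_i)\neq g(x_i)$ (i.e.\ each $x_i$ is independent of its predecessors).

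For the direction ``infinite eluder sequence $\Rightarrow |\hpc|=\infty$,'' I would argue by counting projections. For $k\in\naturalnumber$ let $\pi_k:\hpc\to\{0,1\}^k$ be $\pi_k(h)=(h(x_1),\dots,h(x_k))$, put $m_k:=|\pi_k(\hpc)|$, and $m_0:=1$. I claim $m_k\geq m_{k-1}+1$ for every $k\geq 1$: the coordinate-dropping map $\pi_k(\hpc)\to\pi_{k-1}(\hpc)$ is surjective, so each fiber contributes at least one pattern, while the independence of $x_k$ supplies $f,g\in\hpc$ with $\pi_{k-1}(f)=\pi_{k-1}(g)$ but $\pi_k(f)\neq\pi_k(g)$, so one particular fiber contributes at least two. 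Hence $m_k\geq k+1$, and since $|\hpc|\geq m_k$ for all $k$, the existence of an infinite eluder sequence forces $|\hpc|=\infty$.

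For the converse, ``$|\hpc|=\infty\Rightarrow$ infinite eluder sequence,'' I would build the sequence greedily while maintaining an infinite ``version space.'' Set $V_0:=\hpc$. Given an infinite $V_{i-1}\subseteq\hpc$ all of whose members agree on $x_1,\dots,x_{i-1}$: since $|V_{i-1}|\geq 2$, two of its members differ at some point, and I let $x_i$ be such a point; the two functions lie in $V_{i-1}\subseteq\hpc$, agree on the prefix, and disagree at $x_i$, so $x_i$ is independent of $\{x_1,\dots,x_{i-1}\}$. Partitioning $V_{i-1}$ by the value at $x_i$, one part is infinite by the pigeonhole principle for infinite sets; take it to be $V_i$. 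Iterating yields an infinite eluder sequence $x_1,x_2,\dots$ (the points are automatically distinct, since every member of $V_{i-1}$ shares a common value at each earlier $x_j$).

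The only point needing care — the nearest thing to an obstacle — is checking that the greedy construction never stalls: this requires exactly that $V_{i-1}$ remains infinite (hence contains two distinct functions) at every stage, which is precisely what the pigeonhole step preserves. Everything else is bookkeeping, and the argument is robust to minor variations in the precise phrasing of Definition~\ref{def:eluder-sequence}.
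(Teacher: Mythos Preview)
The paper does not prove this lemma itself; it is quoted verbatim from \citet{hanneke2024universal}, so there is no in-paper argument to compare against. Your argument is essentially correct and is the standard one.

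One point to tighten: your ``unpacking'' of Definition~\ref{def:eluder-sequence} is a strict weakening of what the paper actually requires. The paper's eluder sequence carries fixed labels $y_i$ (equivalently, a center $h$ with $h(x_i)=y_i$), and demands both realizability of the labelled sequence and, for each $k$, a witness $h_k\in\hpc$ matching $y_1,\dots,y_{k-1}$ but flipping $y_k$. Your restatement only asks that \emph{some} pair $f,g\in\hpc$ agree on the prefix and split at $x_i$, with no coherence across $i$. Happily, your greedy construction produces the stronger object anyway: setting $y_i$ to be the common value of $V_i$ at $x_i$, realizability holds because each $V_n$ is nonempty, and the eluder witness at step $i$ is whichever of the two disagreeing functions in $V_{i-1}$ takes the value $1-y_i$ at $x_i$. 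You should state this explicitly rather than leave it to the reader, since your paraphrase of the definition would otherwise look like it proves a different (weaker) statement. The forward direction is unaffected, since the paper's definition immediately implies your weaker one.
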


\begin{lemma}  [{\textbf{\citealp[][Lemma 9]{hanneke2024universal}}}]
  \label{lem:equivalence-infinite-vc-eluder-and-infinite-vcdim}
Any concept class $\hpc$ has an infinite VC-eluder sequence (Definition~\ref{def:vc-eluder-sequence}) if and only if $\vcdim=\infty$.
\end{lemma}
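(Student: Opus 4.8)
The plan is to handle the two directions separately, in the spirit of Lemma~\ref{lem:equivalence-infinite-eluder-and-infinite-cardinality} but with shattered blocks of growing size playing the role of single points. The ``only if'' direction is immediate: an infinite VC-eluder sequence supplies (by Definition~\ref{def:vc-eluder-sequence}) finite sets $S_1,S_2,\dots$ of unboundedly increasing size, each shattered by a subclass of $\hpc$ and hence by $\hpc$, so $\vcdim\ge\sup_i|S_i|=\infty$.

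For ``if'', I would assume $\vcdim=\infty$ and construct the sequence greedily, one block at a time, while carrying a nested chain $\hpc=\hpc_0\supseteq\hpc_1\supseteq\cdots$ subject to the invariant that $\text{VC}(\hpc_i)=\infty$ and that $\hpc_i$ is exactly the subclass of $\hpc$ consistent with the labelings $(y_1,\dots,y_i)$ chosen so far on $(S_1,\dots,S_i)$. At step $i$, using $\text{VC}(\hpc_{i-1})=\infty$, I would take a set shattered by $\hpc_{i-1}$ of size $i$ plus the number of previously used points, discard the used ones, and keep a size-$i$ subset $S_i$; it is disjoint from the earlier blocks and, as a subset of a shattered set, is itself shattered by $\hpc_{i-1}$. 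I then apply the following claim with $\mathcal G=\hpc_{i-1}$ and $A=S_i$: \emph{if $\text{VC}(\mathcal G)=\infty$ and $A$ is finite, then some labeling $\sigma\in\{0,1\}^A$ realized by $\mathcal G$ has $\text{VC}(\{g\in\mathcal G:g|_A=\sigma\})=\infty$.} Setting $y_i:=\sigma$ and $\hpc_i:=\{h\in\hpc_{i-1}:h|_{S_i}=y_i\}$ re-establishes the invariant, and since $\hpc_{i-1}$ (the subclass consistent with $(y_1,\dots,y_{i-1})$) shatters $S_i$, the pair $(S_i,y_i)$ legitimately extends the VC-eluder sequence. The invariant being self-sustaining, the construction never halts and the resulting $(S_i,y_i)_i$ is an infinite VC-eluder sequence.

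The claim is the only substantive point and the step I expect to be the main obstacle: a priori, fixing a large shattered block could collapse the VC dimension of every resulting subclass, and a naive pigeonhole over the $2^{|A|}$ subclasses is far too lossy to rule this out. What saves it is a Sauer--Shelah counting argument. Suppose instead that every realized $\sigma$ gave $\text{VC}(\{g\in\mathcal G:g|_A=\sigma\})\le N$, with $N<\infty$ the maximum over the (at most $2^{|A|}$) realized labelings. Then for any finite $B$ disjoint from $A$, every $g\in\mathcal G$ restricts on $A$ to some realized $\sigma$, so the number of patterns $\mathcal G$ induces on $B$ is at most $2^{|A|}\sum_{j=0}^{N}\binom{|B|}{j}$, a fixed polynomial in $|B|$; but $\text{VC}(\mathcal G)=\infty$ lets me choose $B$ of any size $m$ shattered by $\mathcal G$, forcing $2^{m}\le 2^{|A|}\sum_{j=0}^{N}\binom{m}{j}$ for all $m$ --- false for $m$ large. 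This contradiction proves the claim. Finally, I would remark that this route needs neither a compactness/K\"onig-type argument nor a reduction to a countable domain; if Definition~\ref{def:vc-eluder-sequence} happens to require all $y_i$ to arise from one fixed hypothesis, an additional compactness step over the nested closures $\overline{\hpc_i}\subseteq\{0,1\}^{\mathcal X}$ produces such a hypothesis.
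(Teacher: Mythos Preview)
The paper does not prove this lemma; it is quoted as Lemma~9 of \citet{hanneke2024universal} and used as a black box, so there is no in-paper argument to compare against.

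Your proposal is correct and self-contained. The ``only if'' direction is immediate as you say. For ``if'', the greedy block-by-block construction together with the Sauer--Shelah pigeonhole (your claim) is exactly the right mechanism and is carried out correctly: partitioning $\mathcal{G}$ by its restriction to $A$ into at most $2^{|A|}$ subclasses, each of VC dimension at most $N$, bounds the growth function of $\mathcal{G}$ on any $m$-point set by $2^{|A|}\sum_{j\le N}\binom{m}{j}$, a polynomial in $m$, contradicting $\text{VC}(\mathcal{G})=\infty$. The step of extracting $S_i$ disjoint from the earlier blocks by oversampling a shattered set of size $i+n_i$ is clean.

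One clarification that lets you drop the contingency you flag at the end: in Definition~\ref{def:vc-eluder-sequence} (and Definition~\ref{def:eluder-sequence}) the centering function $h$ is just ``a classifier'', not required to lie in $\hpc$. Since your blocks $S_1,S_2,\ldots$ are pairwise disjoint, you may simply \emph{define} $h$ on $\bigcup_i S_i$ by the chosen labels $y_i$ and extend it arbitrarily to the rest of $\mathcal{X}$; no compactness step is needed. Realizability is guaranteed because every $\hpc_i$ is nonempty (indeed has infinite VC dimension), so for each finite prefix there is a witness in $\hpc$.
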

We construct skillfully-designed distributions on such infinite sequences to support the lower bounds, unlike in the PAC theory such distributions for lower bounds are often constructed on finite sets.
\begin{table}
\begin{tabular}{|p{2.2cm}|p{2.2cm}|p{6.9cm}|p{2.2cm}|}
 \hline
 \small Realizable case & \small Agnostic case & \small Categories & \small Examples \\
 \hline
 $e^{-n}$ & $e^{-n}$ & \text{\small $|\hpc|<\infty$} & \text{\small finite class} \\
 \hline
 $1/n$ & $o(n^{-1/2})$ & \text{\small $|\hpc|=\infty$, $\vcdim<\infty$, $\nexists$ infinite star-eluder seq} & \text{\small thresholds on $\naturalnumber$} \\
 \hline
 $\log(n)/n$ & $o(n^{-1/2})$ & \text{\small $\vcdim<\infty$, $\exists$ infinite star-eluder seq} & \text{\small singletons on $\naturalnumber$} \\
 \hline
 \text{\small arbitrarily slow} & \text{\small arbitrarily slow} & \text{\small $\vcdim=\infty$} & \text{\small non-VC class} \\
 \hline
\end{tabular}
\caption{Comparison of the ERM universal rates between the realizable case and the agnostic case. The definition to a star-eluder sequence can be found in Appendix~\ref{sec:extra-definitions}.}
\end{table}
\begin{table}
\begin{tabular}{|p{3.6cm}|p{5.0cm}|p{5.4cm}|}
 \hline
 \small Uniform rates & \small Universal rates & \small Categories \\
 \hline
 $n^{-1/2}$ & $e^{-n}$ & \text{\small $|\hpc|<\infty$} \\
 \hline
 $n^{-1/2}$ & $o(n^{-1/2})$ & \text{\small $|\hpc|=\infty$, $\vcdim<\infty$} \\
 \hline
 \text{\small $\Omega(1)$ (not learnable)} & \text{\small at least arbitrarily slow} & \text{\small $\vcdim=\infty$} \\
 \hline
\end{tabular}
\caption{Comparison between the agnostic universal rates and the agnostic uniform rates of ERM.}
\end{table}

Before proceeding to the target-dependent rates, we first introduce some relevant definitions and technical assumptions. Throughout this paper, we will often assume that a concept class satisfies the universal Glivenko-Cantelli property.
\begin{definition}  [{\textbf{\citealp[Glivenko-Cantelli class,][]{van2013universal}}}]
  \label{def:universal-glivenko-cantelli}
Let $\hpc$ be a concept class on an instance space $\mathcal{X}$. Given a probability distribution $P$ on $\mathcal{X}\times\{0,1\}$, let $\{(X_{i},Y_{i})\}_{i\geq1}$ be a sequence of independently $P$-distributed random samples. We say that $\hpc$ is a \underline{$P$-Glivenko-Cantelli} class if
\begin{equation*}
    \sup_{h\in\hpc}\Big|\empiricalerrorrateh-\trueerrorrateh\Big| \overset{p}{\longrightarrow}0, \;\text{ as } n\rightarrow\infty ,
\end{equation*}
where the convergence rate can be $P$-dependent. We say $\hpc$ is a \underline{universal Glivenko-Cantelli (UGC)} class if it is $P$-Glivenko-Cantelli for every distribution $P$.
\end{definition}
We remark that while a finite VC dimension is neither sufficient nor necessary to ensure the universal Glivenko-Cantelli property of a hypothesis class, a bunch of works \citep{vapnik1971uniform,dudley1991uniform,van2000preservation} have shown that under weak measurability conditions (e.g., image-admissible Suslin, universal separability), a finite VC dimension is in fact equivalent to the uniform Glivenko-Cantelli property, which of course implies the universal Glivenko–Cantelli property. We then introduce two technical target-dependent conditions.
\begin{condition}
  \label{cond:constant-gap-error-rate}
For any distribution $P$ centered at $h^{*}$, the following holds 
\begin{equation*}
    \inf_{h\in\hpc:\trueerrorrateh>\trueerrorratehstar}\left\{\trueerrorrateh-\trueerrorratehstar\right\} > 0 . \;\; (\text{define } \inf_{\emptyset}\{\cdot\}=1)
\end{equation*}
\end{condition}

\begin{condition}
  \label{cond:finite-vcd-for-sufficiently-small-ball}
For any distribution $P$ centered at $h^{*}$, there exists $\epsilon_{0}:=\epsilon_{0}(P)>0$ such that
\begin{equation*}
    \text{VC}\left(\left\{h\in\hpc: 0<\text{er}_{P}(h)-\trueerrorratehstar\leq\epsilon_{0}\right\}\right) < \infty .
\end{equation*}
\end{condition}
Further discussions about these conditions can be found in Section \ref{sec:target-dependent-rates}. Let us present a \emph{trichotomy} capturing the target-dependent agnostic universal rates by ERM. 
\begin{theorem}  [\textbf{Target-dependent agnostic universal rates}]
  \label{thm:main-theorem-target-dependent}
For every UGC class $\hpc$ with $|\hpc|\geq 3$ and every classifier $h^{*}$, let $\mathcal{P}_{h^{*}}$ be the set of all distributions centered at $h^{*}$, then the following hold:
\setlist{nolistsep}
\begin{itemize}
    \item $\hpc$ is agnostically universally learnable under $\mathcal{P}_{h^{*}}$ by ERM with exact rate $e^{-n}$ if and only if Condition \ref{cond:constant-gap-error-rate} holds for $h^{*}$.
    \item $\hpc$ is agnostically universally learnable under $\mathcal{P}_{h^{*}}$ by ERM with exact rate $o(n^{-1/2})$ if and only if Condition \ref{cond:constant-gap-error-rate} fails and Condition \ref{cond:finite-vcd-for-sufficiently-small-ball} holds for $h^{*}$.
    \item $\hpc$ requires at least arbitrarily slow rates to be agnostically universally learned under $\mathcal{P}_{h^{*}}$ by ERM if and only if Condition \ref{cond:finite-vcd-for-sufficiently-small-ball} fails for $h^{*}$.
\end{itemize}
\end{theorem}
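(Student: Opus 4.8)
The plan is to prove the theorem in two halves, the achievability (upper bounds) and the lower bounds, organized around the two conditions. For the first bullet, when Condition~\ref{cond:constant-gap-error-rate} holds for $h^{*}$, I would fix $P\in\mathcal{P}_{h^{*}}$ and let $\gamma:=\gamma(P)>0$ be the constant gap $\inf_{h:\trueerrorrateh>\trueerrorratehstar}\{\trueerrorrateh-\trueerrorratehstar\}$. Since $\hpc$ is UGC, $\sup_{h\in\hpc}|\empiricalerrorrateh-\trueerrorrateh|\to 0$ in probability; but in fact for a UGC class one gets an exponential tail on this quantity once $n$ is large enough relative to $P$ (this is the standard argument: the $P$-dependent concentration for bounded functions plus the UGC deviation bound give, for each $P$, constants so that $\Pr[\sup_h|\empiricalerrorrateh-\trueerrorrateh|>\gamma/3]\le C e^{-cn}$). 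On the complementary event, any hypothesis $h$ with $\trueerrorrateh>\trueerrorratehstar$ has empirical error exceeding that of some hypothesis arbitrarily close (in $P_{\mathcal{X}}$-measure, hence in error) to $h^{*}$, so the ERM output must satisfy $\trueerrorratehn=\trueerrorratehstar$; taking expectations gives $\excessrisk\le (\text{range})\cdot Ce^{-cn}$. For the matching lower bound --- that no UGC class with $|\hpc|\ge 3$ can beat $e^{-n}$ under $\mathcal{P}_{h^{*}}$ --- I would pick two distinct hypotheses and a point where they disagree, and place a distribution centered at $h^{*}$ with a small but nonzero probability on a "confusable" region, forcing $\Omega(e^{-cn})$ excess risk for some ERM tie-breaking rule; this is essentially the realizable-case construction of \citet{hanneke2024universal} adapted to the agnostic centering.

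For the middle and last bullets I would work with the sub-class $\hpc_{\epsilon}:=\{h\in\hpc: 0<\trueerrorrateh-\trueerrorratehstar\le\epsilon\}$. When Condition~\ref{cond:constant-gap-error-rate} fails but Condition~\ref{cond:finite-vcd-for-sufficiently-small-ball} holds, pick $\epsilon_0=\epsilon_0(P)$ with $\text{VC}(\hpc_{\epsilon_0})<\infty$. The point is that the ERM output either lands in $\{h:\trueerrorrateh=\trueerrorratehstar\}$ (zero excess risk) or in $\hpc_{\epsilon_0}$ for large $n$ (again because UGC concentration rules out selecting an $h$ with $\trueerrorrateh>\trueerrorratehstar+\epsilon_0$ once the uniform deviation drops below, say, $\epsilon_0/3$, which happens with exponentially small failure probability). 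Conditioned on landing in $\hpc_{\epsilon_0}$, the excess risk of ERM over a finite-VC class is controlled by the refined uniform Bernstein inequality (Proposition~\ref{prop:uniform-bernstein}), and here one leverages that the loss-difference variance is itself bounded by the error gap, which is $o(1)$ as the relevant hypotheses concentrate toward $h^{*}$; a localization/peeling argument over the shells $\hpc_{2^{-k}}$ then yields an $o(n^{-1/2})$ bound. This mirrors the target-independent $o(n^{-1/2})$ upper bound, the new ingredient being that finiteness of VC is only assumed on the small ball rather than globally. For the lower bound ($o(n^{-1/2})$ is not beatable when Condition~\ref{cond:constant-gap-error-rate} fails), failure of the constant-gap condition produces a sequence $h_k$ with $\trueerrorrate{h_k}-\trueerrorratehstar\downarrow 0$ strictly positive; I would build a distribution centered at $h^{*}$ supported so that distinguishing these near-optimal hypotheses from $h^{*}$ requires inspecting rare events, giving $\excessrisk\ge T(n)$ infinitely often for any prescribed $T(n)=o(n^{-1/2})$, again via an eluder-type sequence and the technique behind Lemmas~\ref{lem:equivalence-infinite-eluder-and-infinite-cardinality}--\ref{lem:equivalence-infinite-vc-eluder-and-infinite-vcdim}.

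For the third bullet, when Condition~\ref{cond:finite-vcd-for-sufficiently-small-ball} fails there is a distribution $P$ centered at $h^{*}$ such that $\text{VC}(\hpc_{\epsilon})=\infty$ for every $\epsilon>0$. From this I would extract, for each $\epsilon$, an arbitrarily long finite set shattered by hypotheses each within error gap $\epsilon$ of $h^{*}$; concatenating these into an infinite VC-eluder-type sequence that sits inside arbitrarily small error balls lets me reuse the arbitrarily-slow lower-bound machinery from the target-independent case (the $\vcdim=\infty$ branch of Theorem~\ref{thm:main-theorem-target-independent}), now carried out under distributions centered at $h^{*}$. The converse direction --- if Condition~\ref{cond:finite-vcd-for-sufficiently-small-ball} holds then $\hpc$ is learnable at \emph{some} nontrivial rate (falling into one of the first two bullets) --- follows by combining the dichotomy "Condition~\ref{cond:constant-gap-error-rate} holds or fails" with the two upper bounds already described, so the three cases are exhaustive and mutually exclusive.

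\textbf{Main obstacle.} I expect the crux to be the $o(n^{-1/2})$ upper bound in the second bullet: one must show that worst-case ERM, conditioned on selecting a near-optimal hypothesis, incurs $o(n^{-1/2})$ excess risk using only a \emph{local} finite-VC hypothesis and a Bernstein-type bound whose variance term vanishes along the shells. Making the peeling over $\hpc_{2^{-k}}$ uniform in $n$ while keeping the $P$-dependent constants finite --- and ruling out that ERM "escapes" to a shell whose VC dimension is large but error gap is tiny --- is the delicate part; the refined Proposition~\ref{prop:uniform-bernstein} is exactly what is needed to close this gap, and verifying its hypotheses shell-by-shell is where the real work lies.
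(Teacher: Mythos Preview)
Your proposal follows essentially the same architecture as the paper's proof: upper bounds via UGC concentration that localizes ERM to a finite-VC sub-ball followed by Proposition~\ref{prop:uniform-bernstein}, and lower bounds via eluder and VC-eluder constructions. Your treatment of the $e^{-n}$ upper bound under Condition~\ref{cond:constant-gap-error-rate}, the $o(n^{-1/2})$ upper bound under Condition~\ref{cond:finite-vcd-for-sufficiently-small-ball}, and the extraction of an infinite VC-eluder sequence from the failure of Condition~\ref{cond:finite-vcd-for-sufficiently-small-ball} all match the paper (Lemmas~\ref{lem:target-specified-exponential-upperbound}, \ref{lem:target-dependent-superroot-upperbound-iff}, and the necessity argument inside the latter).

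There is, however, a genuine gap in your lower-bound chain for the second bullet (equivalently, the necessity direction of the first). You write that the failure of Condition~\ref{cond:constant-gap-error-rate} ``produces a sequence $h_k$ with $\text{er}_P(h_k)-\text{er}_P(h^{*})\downarrow 0$'' and that you would then ``build a distribution centered at $h^{*}$'' via an eluder-type construction, citing Lemmas~\ref{lem:equivalence-infinite-eluder-and-infinite-cardinality}--\ref{lem:equivalence-infinite-vc-eluder-and-infinite-vcdim}. Those lemmas are target-\emph{independent} equivalences and do not apply here; more importantly, a sequence $h_k$ whose \emph{error rates} converge to $\text{er}_P(h^{*})$ need not carry any eluder structure at all, since the disagreement regions $\{x:h_k(x)\neq h^{*}(x)\}$ can overlap arbitrarily and need not shrink in $P_{\mathcal{X}}$-measure. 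The paper closes this gap in two steps you are missing. First (Lemma~\ref{lem:condition-constant-gap-error-rate-implies-what}), pass to a Cauchy subsequence of $(h_k)$ in the $L_1(P_{\mathcal{X}})$ pseudo-metric---this is where the UGC hypothesis enters, via total boundedness of $\hpc$ in $L_1(P_{\mathcal{X}})$---and take its limit as the target, so that now the \emph{marginal} condition $\inf_{h:P_{\mathcal{X}}(h\neq h^{*})>0}P_{\mathcal{X}}(h\neq h^{*})=0$ holds. Second (the construction in the proof of Lemma~\ref{lem:no-infinite-eluder-sequence-implies-what}, part~(2)), from that marginal condition one explicitly builds an eluder sequence centered at this target by iteratively picking points of positive mass inside strictly shrinking disagreement sets. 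Only then can Lemma~\ref{lem:super-root-lowerbound} be invoked to manufacture a hard distribution in $\mathcal{P}_{h^{*}}$. Your proposal jumps from ``excess risk $\to 0$'' directly to an eluder construction, and that step does not go through without the intermediate marginal argument.

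Your identification of the $o(n^{-1/2})$ upper bound as the ``main obstacle'' is reasonable for the analytic side, but the combinatorial bridge above---turning failure of Condition~\ref{cond:constant-gap-error-rate} into an actual eluder sequence centered at the given target---is where the target-dependent theory genuinely departs from the target-independent one, and it deserves at least as much care.
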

We notice that, though providing an ``if and only if" characterization to the target-dependent universal rates, Conditions~\ref{cond:constant-gap-error-rate}, \ref{cond:finite-vcd-for-sufficiently-small-ball} are not based on simple combinatorial structures, and thus not broadly applicable to general concept classes. A naturally follow-up question is that whether there is a better function-specified universal rates result with a complete characterization based on some combinatorial structures. To address this limitation, we consider to categorize a distribution according to its Bayes-optimal classifier.
\begin{definition}  [\textbf{Bayes-optimal classifier}]
  \label{def:bayes-optimal-classifier}
A \underline{Bayes-optimal classifier} with respect to a distribution $P$, denoted by $\bayesoptimal$, is defined to be a binary function such that $\text{er}_{P}(\bayesoptimal) = \inf_{h:\mathcal{X}\rightarrow\{0,1\}}\trueerrorrateh$. Moreover, let $\eta(x;P):=P(Y=1|X=x)$, then $\bayesoptimal(x)=\mathbbm{1}(\eta(x;P)\geq 1/2)$, for all $x\in\mathcal{X}$.
\end{definition}
The following sequential structures developed by \citet{hanneke2024universal} formalize the characterization of the Bayes-dependent agnostic universal rates.
\begin{definition}  [\textbf{Eluder sequence}]
  \label{def:eluder-sequence}
A (finite or infinite) data sequence $\{(x_{1}, y_{1}),(x_{2}, y_{2}),\ldots\}\in(\mathcal{X}\times\{0,1\})^{\infty}$ is called \underline{realizable} (with respect to $\hpc$) if for every $n\in\naturalnumber$, there exists $h_{n}\in\hpc$ such that $h_{n}(x_{i})=y_{i}$, for all $i\in[n]$. Let $h$ be a classifier, we say that $\hpc$ has an infinite \underline{eluder sequence} $\{(x_{1}, y_{1}),(x_{2}, y_{2}),\ldots\}$ \underline{centered at $h$} if it is realizable and labelled by $h$, and for every integer $k\geq1$, there exists $h_{k}\in\hpc$ such that $h_{k}(x_{i})=y_{i}$ for all $i<k$ and $ h_{k}(x_{k}) \neq y_{k}$.
\end{definition}

\begin{definition}  [\textbf{VC-eluder sequence}]
  \label{def:vc-eluder-sequence}
Let $S_{n}:=\{(x_{i},y_{i})\}_{i=1}^{n}$ be a dataset, the \underline{version space} (induced by $S_{n}$), denoted by $V_{S_{n}}(\hpc)$ (or $V_{n}(\hpc)$), is defined as $V_{S_{n}}(\hpc):=\{h\in\hpc: h(x_{i})=y_{i}, \forall i\in[n]\}$. We say $\hpc$ has an infinite \underline{VC-eluder sequence} $\{(x_{1}, y_{1}),(x_{2}, y_{2}),\ldots\}$ \underline{centered at $h$} if it is realizable and labelled by $h$, and for every integer $k\geq1$, $\{x_{n_{k}+1},\ldots,x_{n_{k}+k}\}$ is a shattered set of $V_{n_{k}}(\hpc)$, where $\{n_{k}\}_{k\in\naturalnumber}$ is a sequence of integers defined as $n_{1}=0$, $n_{k}:=\binom{k}{2}$ for all $k>1$.
\end{definition}
Our result is compact and provides a better concept-dependent characterization of universal rates, leveraging simple combinatorial structures rather than complex conditions.
\begin{theorem}  [\textbf{Bayes-dependent agnostic universal rates}]
  \label{thm:main-theorem-Bayes-dependent}
For every class $\hpc$ with $|\hpc|\geq 3$ and every classifier $\bayesoptimal$, let $\mathcal{P}_{\bayesoptimal}$ be the set of all distributions $P$ such that $\bayesoptimal$ is a Bayes-optimal classifier with respect to $P$, then the following hold:
\setlist{nolistsep}
\begin{itemize}
    \item $\hpc$ is agnostically universally learnable under $\mathcal{P}_{\bayesoptimal}$ by ERM with exact rate $e^{-n}$ if and only if $\hpc$ does not have an infinite eluder sequence centered at $\bayesoptimal$.
    \item $\hpc$ is agnostically universally learnable under $\mathcal{P}_{\bayesoptimal}$ by ERM with exact rate $o(n^{-1/2})$ if and only if $\hpc$ has an infinite eluder sequence but does not have an infinite VC-eluder sequence centered at $\bayesoptimal$.
    \item $\hpc$ requires at least arbitrarily slow rates to be agnostically universally learned under $\mathcal{P}_{\bayesoptimal}$ by ERM if and only if $\hpc$ has an infinite VC-eluder sequence centered at $\bayesoptimal$.
\end{itemize}
\end{theorem}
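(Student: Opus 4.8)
The plan is to establish, for each of the three regimes, a matching \emph{achievability} (upper bound) and \emph{impossibility} (lower bound) for the worst-case ERM over $\mathcal{P}_{\bayesoptimal}$, and then assemble the three ``if and only if''s, using that exactly one of the three combinatorial hypotheses holds: indeed an infinite VC-eluder sequence centered at $\bayesoptimal$ contains an infinite eluder sequence centered at $\bayesoptimal$, obtained by taking the first point of each shattered block (the shattering of block $k$ by $V_{n_{k}}(\hpc)$ supplies both a hypothesis in $\hpc$ agreeing with $\bayesoptimal$ on $x_{1},\dots,x_{n_{k}}$ and one disagreeing at $x_{n_{k}+1}$). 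The organizing identity, valid for every $P\in\mathcal{P}_{\bayesoptimal}$, is the following: setting $D_{P}(h):=\trueerrorrateh-\text{er}_{P}(\bayesoptimal)=2\,\E\!\left[|\eta(X;P)-1/2|\,\mathbbm{1}(h(X)\neq\bayesoptimal(X))\right]\geq 0$, we have $\excessrisk=\E[D_{P}(\lalgo)]-\inf_{h\in\hpc}D_{P}(h)$. Thus the behaviour of ERM over $\mathcal{P}_{\bayesoptimal}$ is governed by how the disagreement regions $\{x:h(x)\neq\bayesoptimal(x)\}$, $h\in\hpc$, sit inside $\hpc$ when weighted by an arbitrary finite measure $|\eta(\cdot;P)-1/2|\,dP_{\mathcal{X}}$, and the eluder / VC-eluder sequences centered at $\bayesoptimal$ are exactly the combinatorial objects controlling this uniformly over all such weightings.

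\textbf{Upper bounds.} For the $e^{-n}$ rate I would show that ``no infinite eluder sequence centered at $\bayesoptimal$'' forces, for every $P\in\mathcal{P}_{\bayesoptimal}$, that $\inf_{h\in\hpc}D_{P}(h)$ is attained with a constant gap $\gamma$ above it (an analogue of Condition~\ref{cond:constant-gap-error-rate}), together with enough complexity control on $\{h\in\hpc:D_{P}(h)\geq\gamma\}$ — this is the step where the eluder condition, rather than a global VC bound, is needed, since $\vcdim$ may be infinite — to guarantee that with probability $1-e^{-cn}$ no such hypothesis attains empirical risk below the empirical minimizer; hence $\excessrisk\leq e^{-n}$. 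For the $o(n^{-1/2})$ rate: when $\vcdim<\infty$ this is immediate from the target-independent upper bound (Theorem~\ref{thm:main-theorem-target-independent}, applicable since $\mathcal{P}_{\bayesoptimal}$ is a sub-family of all distributions); when $\vcdim=\infty$ but there is no infinite VC-eluder sequence centered at $\bayesoptimal$, I would first show that for every $P\in\mathcal{P}_{\bayesoptimal}$ there is $\epsilon_{0}(P)>0$ with $\text{VC}\!\left(\{h\in\hpc:0<D_{P}(h)\leq\epsilon_{0}\}\right)<\infty$ (an analogue of Condition~\ref{cond:finite-vcd-for-sufficiently-small-ball}), then that hypotheses with $D_{P}>\epsilon_{0}$ cannot be empirical minimizers once $n$ is large (their inflated true risk eventually dominates the fluctuations), and finally apply the refined uniform Bernstein inequality (Proposition~\ref{prop:uniform-bernstein}) with a localization step on this finite-VC ball, exploiting that a disagreement with $\bayesoptimal$ of small $D_{P}$-value occupies small $P_{\mathcal{X}}$-mass, hence small variance, to upgrade the rate from $n^{-1/2}$ to $o(n^{-1/2})$ (with a $P$-dependent $o(\cdot)$, as the universal model permits).

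\textbf{Lower bounds.} For the arbitrarily-slow regime I would build the hard $P\in\mathcal{P}_{\bayesoptimal}$ by reduction to the realizable ERM construction of \citet{hanneke2024universal}: from an infinite VC-eluder sequence centered at $\bayesoptimal$, place a heavy mass schedule on a prefix (so that, with high probability, the prefix is entirely sampled and worst-case ERM commits to the corresponding version space), tiny widely-spread mass on the growing shattered blocks (so most block points stay unsampled and adversarial tie-breaking mislabels them), and a small fixed amount of conditional noise on one prefix point — on which all relevant version-space witnesses agree with $\bayesoptimal$ — so that $\inf_{h\in\hpc}\trueerrorrateh>0$ (the problem is genuinely agnostic) while $\bayesoptimal$ stays the Bayes classifier; since this noise adds the same irreducible term to every surviving hypothesis, the selection on the (deterministically $\bayesoptimal$-labeled) blocks is identical to that of a realizable ERM, and the realizable lower bound then yields arbitrarily slow rates. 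For the $o(n^{-1/2})$ regime the realizable mechanism alone reaches only rate $\approx 1/n$, so here I would additionally use the $\Theta(n^{-1/2})$-scale confusability created by genuine label noise: place a distribution on an infinite eluder sequence centered at $\bayesoptimal$ whose conditional probabilities approach $1/2$ along the sequence at a rate calibrated to the target $T(n)=o(n^{-1/2})$, so that for infinitely many $n$ there is an eluding hypothesis whose empirical risk lies within the $\Theta(n^{-1/2})$ noise fluctuation of the optimum (it agrees with $\bayesoptimal$ on the whole sampled prefix) yet whose excess risk is at least $c\,T(n)$, which worst-case ERM then selects; the absence of an infinite VC-eluder sequence centered at $\bayesoptimal$ keeps ERM's overfitting controlled so that this remains the dominant contribution. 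Finally, the ``not faster than $e^{-n}$'' half of the first regime follows from any finitely supported $P\in\mathcal{P}_{\bayesoptimal}$ with a constant gap — available since $|\hpc|\geq 3$ — on which a fixed suboptimal hypothesis beats the optimum with probability $\Theta(e^{-cn})$.

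\textbf{Main obstacle.} The crux, and where the agnostic analysis departs from the realizable one of \citet{hanneke2024universal}, is the interaction between label noise and ERM's tendency to overfit. Because $\vcdim$ may be infinite, a hypothesis far from $\bayesoptimal$ in $D_{P}$ can fit a noisy sample and be returned by ERM, so neither upper bound may rely on global uniform convergence; one must instead translate ``no infinite (VC-)eluder sequence centered at $\bayesoptimal$'' into genuine finiteness / finite-VC statements about the sub-family of $\hpc$ relevant to each individual $P\in\mathcal{P}_{\bayesoptimal}$. Symmetrically, in the lower bounds one must stop the adversarial ERM from escaping the intended structured confusion via arbitrary noise-fitting — handled by keeping the ``confusing'' component of the hard distribution (nearly) deterministic with respect to $\bayesoptimal$ and quarantining the unavoidable noise on a harmless separate component. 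I expect these two combinatorial-to-analytic translations (the two analogues of Conditions~\ref{cond:constant-gap-error-rate} and~\ref{cond:finite-vcd-for-sufficiently-small-ball} over $\mathcal{P}_{\bayesoptimal}$) to be the most delicate steps.
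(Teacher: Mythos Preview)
Your overall structure is right — reduce both upper bounds to analogues of Conditions~\ref{cond:constant-gap-error-rate} and~\ref{cond:finite-vcd-for-sufficiently-small-ball} for every $P\in\mathcal{P}_{\bayesoptimal}$, and build the lower bounds on eluder/VC-eluder sequences — and this is also what the paper does. But two places in your plan diverge from the paper in ways that matter.

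\textbf{The Bayes-to-target bridge.} The step you identify as ``delicate'' — deducing the gap / finite-VC-ball conditions from the absence of an infinite (VC-)eluder sequence centered at $\bayesoptimal$ — is the heart of the proof, and you do not say how you intend to cross it. The paper's route is indirect and goes through the target-dependent machinery: if Condition~\ref{cond:constant-gap-error-rate} fails for some $P\in\mathcal{P}_{\bayesoptimal}$, then by Lemmas~\ref{lem:condition-constant-gap-error-rate-implies-what} and~\ref{lem:no-infinite-eluder-sequence-implies-what} there is a \emph{target} $h^{*}$ of $P$ with an infinite eluder sequence centered at it; one then uses that $\bayesoptimal$ is Bayes-optimal to argue that $\bayesoptimal$ must agree with $h^{*}$ on infinitely many of those eluder points (otherwise eluder witnesses would beat $h^{*}$, contradicting its optimality in $\hpc$), and those points form an infinite eluder sequence centered at $\bayesoptimal$. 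The same bridge, combined with Lemma~\ref{lem:target-dependent-superroot-upperbound-iff}, handles Condition~\ref{cond:finite-vcd-for-sufficiently-small-ball}. Your write-up does not mention this Bayes-target transfer, and a direct attack (establishing the gap straight from $D_{P}$) would have to rediscover it.

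\textbf{``Complexity control'' from the eluder condition.} You claim that the absence of an infinite eluder sequence gives, on its own, enough control over $\{h:D_{P}(h)\geq\gamma\}$ to rule out such hypotheses as empirical minimizers. The paper does \emph{not} extract complexity control from the eluder hypothesis; it relies on the UGC property (via Lemmas~\ref{lem:target-specified-exponential-upperbound} and~\ref{lem:target-dependent-superroot-upperbound-iff}) to get uniform convergence, and the eluder hypothesis is used only to force the gap. Your route would need a genuinely new argument here.

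\textbf{Lower bounds.} Your constructions are more elaborate than necessary. The distributions already built in Lemmas~\ref{lem:super-root-lowerbound} and~\ref{lem:arbitrarily-slow-lowerbound} place conditional probability $1/2+\epsilon$ on $h^{*}$'s label at every point, so $h^{*}$ is simultaneously the target and the Bayes-optimal classifier, and the problem is already genuinely agnostic. The paper simply observes this and reuses those constructions verbatim; there is no need to quarantine noise on a separate prefix point or to invoke a realizable reduction.
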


\section{Target-independent rates}
  \label{sec:target-inpdependent-rates}
In this section, we will introduce the relevant results and their proof sketches for the target-independent universal rates. We will prove each bullet of Theorem~\ref{thm:main-theorem-target-independent} separately (Theorems~\ref{thm:target-independent-exponential-exact-rates}, \ref{thm:target-independent-superroot-exact-rates}, \ref{thm:target-independent-arbitrarily-slow-rates}). We point out that for each bullet, to prove the sufficiency, both a lower bound and an upper bound are required since we are proving an exact rate. All detailed proofs in this section are deferred to Appendix \ref{sec:missing-proofs-target-independent}.

\begin{theorem}  [\textbf{Target-independent $e^{-n}$ exact rates}]
  \label{thm:target-independent-exponential-exact-rates}
A concept class $\hpc$ is agnostically universally learnable with exact rate $e^{-n}$ by ERM if and only if $|\hpc|<\infty$. 
\end{theorem}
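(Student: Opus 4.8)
The plan is to prove the two directions separately, and for the ``only if'' direction it suffices (by the contrapositive) to show that any class with $|\hpc| = \infty$ is \emph{not} agnostically universally learnable by ERM at rate faster than $o(n^{-1/2})$; combined with the fact that such a class is learnable at rate $o(n^{-1/2})$ (the second bullet of Theorem~\ref{thm:main-theorem-target-independent}, which lies below $e^{-n}$), this rules out the exact rate $e^{-n}$. So the real content is: (i) if $|\hpc| < \infty$ then ERM enjoys excess risk $\le C e^{-cn}$ on every $P$, and (ii) if $|\hpc| = \infty$ then there is a $P$ on which some ERM has excess risk $\omega(n^{-1/2})$ along a subsequence.

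For direction (i), fix $P$. Let $\mathrm{er}^\star := \inf_{h \in \hpc}\mathrm{er}_P(h) = \min_{h \in \hpc}\mathrm{er}_P(h)$ (a minimum since $\hpc$ is finite), and set $\gamma := \min\{\mathrm{er}_P(h) - \mathrm{er}^\star : h \in \hpc,\ \mathrm{er}_P(h) > \mathrm{er}^\star\} > 0$ (the minimum over a finite nonempty set; if the set is empty every ERM output is already optimal and excess risk is $0$). I would then argue that ERM outputs a suboptimal hypothesis $h$ only if the empirical error of $h$ drops to within the empirical error of some optimal $h^\star$; by a union bound over the finitely many $h$ with $\mathrm{er}_P(h) > \mathrm{er}^\star$ together with a two-sided Hoeffding (or Chernoff) bound on $\hat{\mathrm{er}}_{S_n}(h) - \hat{\mathrm{er}}_{S_n}(h^\star)$, whose mean gap is at least $\gamma$, the probability that any such $h$ beats $h^\star$ empirically is at most $|\hpc|\, e^{-c\gamma^2 n}$. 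Since the excess risk incurred in that event is at most $1$, we get $\mathcal{E}(n,P) \le |\hpc|\, e^{-c\gamma^2 n} = C\, e^{-cn}$ for constants $C = C(P), c = c(P)$, which is exactly the claimed upper bound. For the matching lower bound showing the rate is not faster than $e^{-n}$, I would exhibit a simple $P$ — two hypotheses in $\hpc$ disagreeing on a single point $x$ with $P_{\mathcal{X}}(x) = 1$ and label noise $\eta(x) = 1/2 - \beta$ for small $\beta$, or more robustly a construction using three hypotheses to respect $|\hpc| \ge 3$ — on which an adversarial tie-breaking ERM returns the wrong hypothesis with probability $\Theta(e^{-cn})$, giving $\mathcal{E}(n,P) \ge C e^{-cn}$ infinitely often. (One must be slightly careful that $P$ be genuinely agnostic, i.e.\ $\inf_h \mathrm{er}_P(h) > 0$; the noise construction handles this.)

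For direction (ii), I would invoke Lemma~\ref{lem:equivalence-infinite-cardinality-version}... more precisely Lemma~\ref{lem:equivalence-infinite-eluder-and-infinite-cardinality}: since $|\hpc| = \infty$, the class has an infinite eluder sequence $\{(x_k, y_k)\}_{k \ge 1}$ centered at some $h$. On this sequence I would build a distribution $P$ supported on $\{x_1, x_2, \dots\}$ with rapidly decaying point masses $P_{\mathcal{X}}(x_k) = p_k$ and label noise tuned so that $h$ is (essentially) optimal but the ``escaping'' hypotheses $h_k$ — which agree with $h$ on $x_1, \dots, x_{k-1}$ and disagree on $x_k$ — remain competitive enough that a worst-case ERM, on samples that happen to miss the low-index points, is forced onto some $h_k$ and thereby pays an excess risk of order $p_k$. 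By choosing the $p_k$ to decay slower than any prescribed $o(n^{-1/2})$ rate relative to the sample sizes at which the relevant ``bad event'' (seeing $x_k$ but not $x_1, \dots, x_{k-1}$, or an empirical tie allowing adversarial selection) has non-negligible probability, one forces $\mathcal{E}(n, P) \ge T(n)$ for infinitely many $n$, for an arbitrary $T(n) = o(n^{-1/2})$ — this is exactly the statement ``not learnable at rate faster than $o(n^{-1/2})$'', which a fortiori rules out $e^{-n}$.

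The main obstacle is the lower-bound construction in direction (ii): making the eluder sequence, the point masses $p_k$, and the noise levels work \emph{simultaneously} so that (a) the distribution is genuinely agnostic with a well-defined $\inf_h \mathrm{er}_P(h)$, (b) some ERM — exploiting arbitrary tie-breaking — is provably pushed onto a hypothesis of excess risk $\asymp p_k$ on an event of probability bounded below at the right scale, and (c) the resulting lower envelope beats every $o(n^{-1/2})$ function infinitely often. This requires a careful coupling of the scales $p_k$ to the subsequence of sample sizes and a delicate analysis of the empirical-error ordering among $h, h_k$, and the other escaping hypotheses on a typical sample; the realizable-case machinery of \citet{hanneke2024universal} gives the combinatorial backbone (the eluder sequence), but the noise makes the probabilistic bookkeeping substantially heavier than in the realizable setting. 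The upper bound in (i) and the small lower bound for the $e^{-n}$ regime are routine by comparison.
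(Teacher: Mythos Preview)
Your proposal is correct and follows essentially the same architecture as the paper: the finite-class upper bound via a gap-plus-Hoeffding-plus-union-bound argument is exactly Lemma~\ref{lem:exponential-upperbound}, and the necessity direction via the infinite eluder sequence (Lemma~\ref{lem:equivalence-infinite-eluder-and-infinite-cardinality}) followed by a hard-distribution construction on that sequence is exactly Lemma~\ref{lem:super-root-lowerbound}. The one place you diverge is the $e^{-n}$ \emph{lower} bound: you build a bespoke agnostic noise construction and worry that $P$ must be ``genuinely agnostic'' with $\inf_h \mathrm{er}_P(h) > 0$, whereas the paper simply invokes a known realizable result (Lemma~\ref{lem:exponential-lowerbound}, due to \citealp{schuurmans1997characterizing}). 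Definition~\ref{def:agnostic-universally-rates} places no restriction on $P$ --- realizable distributions are a special case and perfectly admissible for lower bounds, and for them excess risk coincides with error rate --- so your insistence on a strictly agnostic $P$ is unnecessary; your construction would work, but the paper's is a one-line citation. On the eluder-sequence side, the paper's mechanism is not ``missing low-index points'' but rather an anti-concentration event at a single point $x_i$ (more wrong-labeled than right-labeled copies, via Lemma~\ref{lem:binomial-bound}), which is the precise version of your parenthetical ``empirical tie allowing adversarial selection''.
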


\begin{proof} [Proof sketch of Theorem \ref{thm:target-independent-exponential-exact-rates}]
To prove the sufficiency, if $|\hpc|<\infty$, we first have an upper bound from the following lemma
\begin{lemma}
  \label{lem:exponential-upperbound}
Any finite concept class $\hpc$ is agnostically universally learnable at rate $e^{-n}$ by ERM.
\end{lemma}
The proof idea of Lemma~\ref{lem:exponential-upperbound} is, when $|\hpc|<\infty$, Condition~\ref{cond:constant-gap-error-rate} holds with a constant lower bound $\epsilon_{0}$. We bound the excess risk of the worst-case ERM $\hat{h}_{n}$ by the probability of $|\empiricalerrorratehn-\trueerrorratehn|\geq\epsilon_{0}$, and then an exponential rate comes from the Hoeffding's inequality. 

Moreover, since the realizable setting is a special case of agnostic setting, we can get a lower bound on the rate from the following known result 
\begin{lemma}  [{\textbf{\citealp[][]{schuurmans1997characterizing}}}]
  \label{lem:exponential-lowerbound}
Given a class $\hpc$, for any learning algorithm $\lalgo$, there exists a realizable distribution $P$ with respect to $\hpc$ such that $\E[\trueerrorratehn] \geq 2^{-(n+2)}$ for infinitely many $n$.
\end{lemma}
To prove the necessity, we assume to the contrary that $|\hpc|=\infty$, then Lemma~\ref{lem:equivalence-infinite-eluder-and-infinite-cardinality} implies that $\hpc$ has an infinite eluder sequence. A contradiction follows from the following lemma
\begin{lemma}
  \label{lem:super-root-lowerbound}
If $\hpc$ has an infinite eluder sequence centered at $h^{*}$, then $\hpc$ is not agnostically universally learnable under $\mathcal{P}_{h^{*}}$ at rate faster than $o(n^{-1/2})$ by ERM.
\end{lemma}
We prove Lemma~\ref{lem:super-root-lowerbound} by designing a distribution supported on an existing infinite eluder sequence centered at the target function $h^{*}$. In order to have $h^{*}$ being the target function, the distribution has to have decreasing probability masses along the eluder sequence. We then show that, given data generated from the constructed distribution, the worst-case ERM yields universal rates no faster than $o(n^{-1/2})$ by applying an anti-concentration inequality on bounding the probability of the event that more incorrect labeled examples are observed.
\end{proof}

\begin{theorem}  [\textbf{Target-independent $o(n^{-1/2})$ exact rates}]
  \label{thm:target-independent-superroot-exact-rates}
A concept class $\hpc$ is agnostically universally learnable with exact rate $o(n^{-1/2})$ by ERM if and only if $|\hpc|=\infty$ and $\vcdim<\infty$. 
\end{theorem}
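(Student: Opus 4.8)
For \emph{necessity} I would argue by contraposition using the other two parts of the trichotomy. If $|\hpc|<\infty$, then by Theorem~\ref{thm:target-independent-exponential-exact-rates} $\hpc$ has exact rate $e^{-n}$, so for every $P$ and every ERM we have $\excessrisk\leq C e^{-cn}$ for suitable $C,c>0$; hence for $T(n)=1/n=o(n^{-1/2})$ no distribution can satisfy $\excessrisk\geq T(n)$ for infinitely many $n$, so ``not learnable at rate faster than $o(n^{-1/2})$'' fails and $\hpc$ cannot have exact rate $o(n^{-1/2})$. If $\vcdim=\infty$, then by Theorem~\ref{thm:target-independent-arbitrarily-slow-rates} $\hpc$ requires arbitrarily slow rates; applying this with $R(n)=n^{-1/4}$ produces a distribution $P$, constants $C,c>0$ and an ERM with $\excessrisk\geq Cc^{-1/4}n^{-1/4}$ infinitely often, so $\excessrisk\neq o(n^{-1/2})$ for that $P$, and hence $\hpc$ is not learnable at rate $o(n^{-1/2})$. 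Either way, exact rate $o(n^{-1/2})$ forces $|\hpc|=\infty$ and $\vcdim<\infty$.

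For \emph{sufficiency}, assume $|\hpc|=\infty$ and $\vcdim=:d<\infty$. The lower bound is immediate: by Lemma~\ref{lem:equivalence-infinite-eluder-and-infinite-cardinality}, $|\hpc|=\infty$ yields an infinite eluder sequence centered at some classifier $h^{*}$, and then Lemma~\ref{lem:super-root-lowerbound} gives that $\hpc$ is not agnostically universally learnable --- a fortiori not over all distributions --- at rate faster than $o(n^{-1/2})$ by ERM. For the upper bound, fix any $P$ and ERM $\hat h_n$, write $\nu:=\inf_{h\in\hpc}\text{er}_P(h)$ and $\rho(h,h'):=P_{\mathcal X}(h(x)\neq h'(x))$, and recall $|\text{er}_P(h)-\text{er}_P(h')|\leq\rho(h,h')$. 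Since $\vcdim<\infty$, Haussler's covering bound makes $(\hpc,\rho)$ totally bounded, so the set $M$ of $L_1(P_{\mathcal X})$-limits of $\nu$-minimizing sequences from $\hpc$ is non-empty, totally bounded, and satisfies $\text{er}_P\equiv\nu$ on $M$; consequently $\text{er}_P(h)-\nu\leq\rho(h,M)$ for every $h\in\hpc$, while the modulus $\beta(\alpha):=\inf\{\text{er}_P(h)-\nu: h\in\hpc,\ \rho(h,M)\geq\alpha\}$ is strictly positive for every $\alpha>0$ and satisfies $\beta(\rho(h,M))\leq\text{er}_P(h)-\nu$. If $\inf_{\alpha>0}\beta(\alpha)>0$ then $P$ satisfies Condition~\ref{cond:constant-gap-error-rate} and the Hoeffding/VC argument behind Lemma~\ref{lem:exponential-upperbound} gives $\excessrisk=e^{-\Omega(n)}$; so assume otherwise, in which case $\gamma(t):=\sup\{\alpha>0:\beta(\alpha)\leq t\}\to 0$ as $t\to 0$.

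The core estimate I would establish is a variance-sensitive tail bound: applying the refined uniform Bernstein inequality (Proposition~\ref{prop:uniform-bernstein}) --- after a dyadic peeling over the scale $\rho(\cdot,M)$ --- to the localized loss-difference classes $\{\ell_h-\ell_g: h,g\in\hpc,\ \rho(h,g)\leq 2\gamma(t)\}$ (whose variances are bounded by $2\gamma(t)$ and whose entropy integral contributes only a $\sqrt{\gamma(t)\,d\log(1/\gamma(t))/n}$ term, this being where the logarithmic improvement over \citet{vapnik1974theory} is used), together with the two displayed inequalities and the ERM property, yields
\begin{equation*}
    \mathbb{P}\big(\text{er}_P(\hat h_n)-\nu>t\big)\;\leq\;\exp\!\Big(-\tfrac{c\,n t^2}{\gamma(t)}\Big),\qquad t\geq t_n^{*},
\end{equation*}
where $t_n^{*}$ is the crossover at which $n(t_n^{*})^{2}\asymp d\,\gamma(t_n^{*})\log(1/\gamma(t_n^{*}))$. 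Since $t_n^{*}\to 0$ forces $\gamma(t_n^{*})\to 0$, one gets $t_n^{*}=\sqrt{\gamma(t_n^{*})\log(1/\gamma(t_n^{*}))}\cdot O(\sqrt{d/n})=o(n^{-1/2})$. Integrating, $\excessrisk\leq t_n^{*}+\int_{t_n^{*}}^{1}\exp\!\big(-c n t^2/\gamma(t)\big)\,dt$; splitting the integral at the fixed threshold $t_1(\delta_0):=\sup\{t:\gamma(t)\leq\delta_0\}$ and using $\gamma(t)\leq\delta_0$ below it and $\gamma(t)\leq 1$ above it gives $\excessrisk\leq o(n^{-1/2})+O(\sqrt{\delta_0/n})+e^{-\Omega(n)}$, whence letting $\delta_0\downarrow 0$ yields $\limsup_n\sqrt n\,\excessrisk=0$, i.e.\ $\excessrisk=o(n^{-1/2})$.

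I expect the main obstacle to be exactly this upper bound. Because the agnostic setting carries no Bernstein/Massart noise condition linking the variance of loss differences to the excess error, one cannot localize at a single scale, and the crude uniform deviation bound only yields $O(n^{-1/2})$. What makes the rate strictly $o(n^{-1/2})$ for each \emph{fixed} distribution --- in contrast to the worst-case-over-$P$ scenario responsible for the $n^{-1/2}$ barrier --- is the sandwich $\beta(\rho(h,M))\leq\text{er}_P(h)-\nu\leq\rho(h,M)$ combined with $\gamma(t)\to 0$: this confines the empirically competitive hypotheses to an ever-shrinking $\rho$-neighborhood of $M$ and thereby forces the relevant localized variance to vanish with $n$. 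Carrying this through the peeling and a Talagrand-type concentration inequality without losing the gain, and dealing separately with the easy case $\inf_\alpha\beta(\alpha)>0$ (which already gives $e^{-\Omega(n)}$), is the technical heart of the proof.
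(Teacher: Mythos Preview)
Your proof is essentially correct and follows the same route as the paper, but two points deserve comment.

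First, a minor circularity: for necessity when $\vcdim=\infty$ you invoke Theorem~\ref{thm:target-independent-arbitrarily-slow-rates}, but in the paper that theorem is deduced \emph{from} Theorem~\ref{thm:target-independent-superroot-exact-rates}. The fix is trivial---cite Lemma~\ref{lem:equivalence-infinite-vc-eluder-and-infinite-vcdim} and Lemma~\ref{lem:arbitrarily-slow-lowerbound} directly, which is exactly what the paper does.

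Second, your upper bound is the same idea as the paper's Lemma~\ref{lem:super-root-upperbound}, just parametrized differently. The paper defines
\[
\sigma^{2}_{\epsilon}(\hpc,P)=\sup_{h\in\hpc(\epsilon;P)}\lim_{\tau\to 0}\inf_{h_\tau\in\hpc(\tau;P)}P_{\mathcal X}(h\neq h_\tau),
\]
shows $\sigma^{2}_{\epsilon}\to 0$ via total boundedness (your Lemma~\ref{lem:distance-to-any-hstar}), and runs a two-step localization: first the crude VC deviation bound places $\hat h_n$ in $\hpc(\epsilon_n(\delta);P)$, then Proposition~\ref{prop:uniform-bernstein} with variance $\sigma^{2}_{\epsilon_n(\delta)}$ gives the refined rate. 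Your $\gamma(\epsilon)$ is essentially $\sigma^{2}_{\epsilon}$, and your set $M$ of $L_1$-limit points plays the role of the paper's $\lim_{\tau\to 0}\hpc(\tau;P)$. The paper's version avoids your case split on $\inf_\alpha\beta(\alpha)$ and does not attempt a pointwise tail bound of the form $\exp(-cnt^2/\gamma(t))$; instead it bounds $\excessrisk$ directly by a fixed-point quantity $\epsilon_n(\hpc,P)$. Your claimed tail bound is the one place where your sketch is optimistic: knowing $\text{er}_P(\hat h_n)-\nu>t$ only gives $\rho(\hat h_n,M)>t$, not $\rho(\hat h_n,M)\leq\gamma(t)$, so the relevant variance for Bernstein is not obviously $\gamma(t)$. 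A peeling over $\rho$-shells can recover something close, but the paper's two-step argument (crude localize, then Bernstein with the resulting variance) sidesteps this entirely and is cleaner.
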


\begin{theorem}  [\textbf{Target-independent arbitrarily slow rates}]
  \label{thm:target-independent-arbitrarily-slow-rates}
A concept class $\hpc$ requires at least arbitrarily slow rates to be agnostically universally learned by ERM if and only if $\vcdim=\infty$. 
\end{theorem}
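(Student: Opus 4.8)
I would prove the two implications separately, with essentially all of the work going into sufficiency. \emph{Necessity} (``requires arbitrarily slow rates'' $\Rightarrow$ $\vcdim=\infty$) I would get by contraposition: if $\vcdim<\infty$, then according to whether $|\hpc|<\infty$ or $|\hpc|=\infty$, Theorem~\ref{thm:target-independent-exponential-exact-rates} respectively Theorem~\ref{thm:target-independent-superroot-exact-rates} gives that for every distribution $P$ and every ERM algorithm the excess risk $\excessrisk$ is $o(n^{-1/2})$ (indeed $\leq Ce^{-cn}$ in the finite case). Since $o(n^{-1/2})\subseteq o(n^{-1/4})$ and the excess risk is always at most $1$, this shows $\hpc$ is agnostically universally learnable at the concrete rate $R_{0}(n)=n^{-1/4}$ — the ``eventually'' in the $o(\cdot)$ is allowed to depend on $P$ and on the constants, exactly as the negation of Definition~\ref{def:agnostic-universally-rates} permits — so $\hpc$ does not require arbitrarily slow rates.

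For \emph{sufficiency}, suppose $\vcdim=\infty$. By Lemma~\ref{lem:equivalence-infinite-vc-eluder-and-infinite-vcdim}, $\hpc$ has an infinite VC-eluder sequence centered at some classifier $h$, with blocks $B_{k}=\{x_{n_{k}+1},\dots,x_{n_{k}+k}\}$ shattered by $V_{n_{k}}(\hpc)$, where $n_{k}=\binom{k}{2}$, so that $B_{1},\dots,B_{k-1}$ occupy exactly the first $n_{k}$ positions. Fix an arbitrary non-increasing $R(n)\to0$. I would build a single distribution $P$: choose a sparse subsequence of blocks $B_{k_{1}},B_{k_{2}},\dots$, put total mass $Q_{j}$ spread uniformly over the $k_{j}$ points of $B_{k_{j}}$, attach label noise with a fixed margin $\gamma=\tfrac18$ toward $h$ at each of these points (i.e.\ $P(Y=h(x)\mid X=x)=\tfrac12+\gamma$), give every other point of $\mathcal{X}$ zero mass, and take $\sum_{j}Q_{j}<\infty$. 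Because the VC-eluder sequence is realizable by $\hpc$ (there are hypotheses in $\hpc$ agreeing with $h$ on arbitrarily long prefixes), one gets $\inf_{g\in\hpc}\text{er}_{P}(g)=\text{er}_{P}(h)=\sum_{j}Q_{j}(\tfrac12-\gamma)\in(0,\tfrac12)$, so $P$ is genuinely agnostic and centered at $h$ (indeed $h=\bayesoptimal$ on the support of $P$, so the same construction serves Theorem~\ref{thm:main-theorem-Bayes-dependent}).

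The crux is to choose the $k_{j}$, the $Q_{j}$, and ``checkpoint'' sample sizes $m_{j}\to\infty$ so that, at sample size $m_{j}$, one high-probability event holds: every point of $B_{k_{1}},\dots,B_{k_{j-1}}$ has been sampled often enough that its observed \emph{majority} label equals $h$ (``the early blocks are resolved''), only a small fraction of $B_{k_{j}}$ has been sampled, and no point of a later block $B_{k_{i}}$, $i>j$, has been sampled. On this event, since $V_{n_{k_{j}}}(\hpc)$ shatters $B_{k_{j}}$ and all its hypotheses agree with $h$ on $B_{k_{1}},\dots,B_{k_{j-1}}$, there is a hypothesis $g\in\hpc$ matching the observed majority label at \emph{every} sampled point — hence a bona fide empirical risk minimizer — that disagrees with $h$ on all unsampled points of $B_{k_{j}}$; a worst-case ERM may return this $g$, whose excess risk is $2\gamma$ times the unobserved mass of $B_{k_{j}}$, i.e.\ $\gtrsim\gamma Q_{j}$, so $\mathcal{E}(m_{j},P)\gtrsim\gamma Q_{j}$. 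The conditions making the high-probability event hold are, up to absolute constants: $m_{j}\gtrsim\sum_{i<j}k_{i}\log(k_{i})/Q_{i}$ (a coupon-collector/anti-concentration bound for resolving the early blocks), $m_{j}Q_{j}\lesssim k_{j}$, and $m_{j}\sum_{i>j}Q_{i}\lesssim1$. I would fix a rapidly decreasing summable $(Q_{j})$, let the $k_{j}$ grow (the bound $m_{j}Q_{j}\lesssim k_{j}$ is mild since $m_{j}Q_{j}$ will be $O(1)$), and — using only $R\to0$ — pick each $m_{j}$ large enough that $R(m_{j})\leq\gamma Q_{j}$ while still obeying $m_{j}\lesssim1/Q_{j+1}$; this is feasible precisely because $Q_{j+1}$ may be taken as small as one likes relative to $Q_{j}$ and to the decay of $R$, and $k_{j}$ as large as one likes. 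The upshot is $\mathcal{E}(m_{j},P)\geq CR(cm_{j})$ for the infinitely many $n=m_{j}$, i.e.\ $\hpc$ is not agnostically universally learnable by ERM at any rate faster than $R$; as $R$ was arbitrary, this is the claim.

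I expect the main obstacle to be exactly this simultaneous balancing of scale-dependent constraints: $m_{j}$ must be large enough to resolve the earlier blocks and to beat the arbitrarily-slowly-decaying $R$ at the checkpoint, yet small enough that later small-mass blocks stay untouched, all while the margin-$\gamma$ noise keeps $h$ strictly optimal. A related subtlety is that the sampled points of the ``resolved'' early blocks are themselves noisy, so one cannot argue with the literal version space of the sample; the argument has to be run at the level of observed majority labels, using the shattering of $V_{n_{k_{j}}}(\hpc)$ to manufacture the bad-yet-empirically-optimal $g$. Overall this is an agnostic elaboration of the realizable arbitrarily-slow construction of \citet{hanneke2024universal}.
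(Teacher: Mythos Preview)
Your approach coincides with the paper's: necessity by contraposition via the positive results for VC classes (the paper simply cites Theorem~\ref{thm:target-independent-superroot-exact-rates}, which already gives $o(n^{-1/2})$ rates whenever $\vcdim<\infty$), and sufficiency by passing from $\vcdim=\infty$ to an infinite VC-eluder sequence (Lemma~\ref{lem:equivalence-infinite-vc-eluder-and-infinite-vcdim}) and then placing a noisy distribution on it with block-wise decaying masses and fixed margin (the paper's Lemma~\ref{lem:arbitrarily-slow-lowerbound}, with $\epsilon_k=1/4$ and the parameter choices supplied by Lemma~\ref{lem:infinite-sequence-design2}).

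The one substantive difference is your ``resolving the early blocks'' step, requiring the observed majority label at every point of $B_{k_1},\dots,B_{k_{j-1}}$ to equal $h$. This is the right refinement: it is precisely what certifies that the bad hypothesis you exhibit in $V_{n_{k_j}}(\hpc)$ --- which is forced to agree with $h$ on all earlier blocks --- matches the empirical majority at \emph{every} sampled point and is therefore a \emph{global} empirical-risk minimizer over $\hpc$, not merely a minimizer within the version space. The paper's own proof of Lemma~\ref{lem:arbitrarily-slow-lowerbound} asserts the analogous conclusion (``if $\Lambda_{k,l}(S_n)$ holds, then a bad ERM algorithm will have an excess risk $2p_k\epsilon_k/k$'') without this safeguard, so your version is the more careful reading of the same construction.
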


\begin{proof} [Proof sketches of Theorems \ref{thm:target-independent-superroot-exact-rates} and \ref{thm:target-independent-arbitrarily-slow-rates}]
We first prove Theorem~\ref{thm:target-independent-superroot-exact-rates}. To prove the sufficiency, assume that $|\hpc|=\infty$ and $\vcdim<\infty$, the upper bound can be derived from the following lemma
\begin{lemma}
  \label{lem:super-root-upperbound}
Any VC class $\hpc$ is agnostically universally learnable at rate $o(n^{-1/2})$ by ERM.
\end{lemma}
For the proof of Lemma~\ref{lem:super-root-upperbound}, we first utilize a refined version of a classic uniform Bernstein inequality (Proposition~\ref{prop:uniform-bernstein}) to bound the excess risk of the worst-case ERM by $O(\sqrt{P_{\mathcal{X}}\{\hat{h}_{n}(x)\neq h^{*}(x)\}/n})$. Since $\hpc$ is totally bounded in $L_{1}(P_{\mathcal{X}})$ pseudo-metric, we have $P_{\mathcal{X}}\{\hat{h}_{n}(x)\neq h^{*}(x)\}$ is also decreasing as $n$ grows (Lemma~\ref{lem:distance-to-any-hstar}). Finally, an $o(n^{-1/2})$ rate follows from classic localization argument.

Moreover, the lower bound is from the previous Lemma~\ref{lem:equivalence-infinite-eluder-and-infinite-cardinality} and Lemma~\ref{lem:super-root-lowerbound}. To show the necessity, we prove by contradiction. If $|\hpc|<\infty$, then Lemma~\ref{lem:exponential-upperbound} yields the contradiction. If $\vcdim=\infty$, Lemma~\ref{lem:equivalence-infinite-vc-eluder-and-infinite-vcdim} implies that $\hpc$ has an infinite VC-eluder sequence, and then a contradiction follows from the following lemma
\begin{lemma}
  \label{lem:arbitrarily-slow-lowerbound}
If $\hpc$ has an infinite VC-eluder sequence centered at $h^{*}$, then $\hpc$ requires at least arbitrarily slow rates to be agnostically universally learned under $\mathcal{P}_{h^{*}}$ by ERM.
\end{lemma}
The proof of Lemma~\ref{lem:arbitrarily-slow-lowerbound} is similar to the proof of Lemma~\ref{lem:super-root-lowerbound}, and the key point is to construct a data distribution supported on an existing infinite VC-eluder sequence centered at the target function $h^{*}$ with decreasing probability masses assigned to the shattered sets along the sequence. Finally, it is straightforward that Theorem~\ref{thm:target-independent-arbitrarily-slow-rates} holds based on Theorem~\ref{thm:target-independent-superroot-exact-rates} and Lemma~\ref{lem:arbitrarily-slow-lowerbound}. 
\end{proof}

\section{Target-dependent rates}
  \label{sec:target-dependent-rates}
In this section, we give a proof sketch for the target-dependent exact universal rates (Theorem \ref{thm:main-theorem-target-dependent}). All the missing proofs in this section are deferred to Appendix~\ref{sec:missing-proofs-target-dependent}. 

Target-dependent universal rates have been studied in the work of \citet{hanneke2024universal} under the realizable setting. Therein, the authors stated the results as ``$h^{*}$ is (not) agnostically universally learnable at some rate $R$" for a target function $h^{*}$, which is indeed equivalent to ours ``$\hpc$ is (not) agnostically universally learnable at rate $R$ under every distribution $P$ centered at $h^{*}$". We leave the formalized definitions (Definition~\ref{def:target-dependent-agnostic-universal-rates} and \ref{def:asymptotic-target-dependent-agnostic-universally-rates}) to Appendix~\ref{sec:extra-definitions} due to the space limitation. In light of this, we will write the related lemmas in this section following either of the two forms. For the realizable setting, such a definition yields a perfect \emph{tetrachotomy} characterized by certain well-defined combinatorial structures, namely eluder sequence, star-eluder sequence and VC-eluder sequence (see Theorem 2, \citet{hanneke2024universal}). However, for the agnostic case, the aforementioned sequences do not support such a compact theory. We start with a simple example which will develop some initial intuition for what makes the agnostic case different.

\subsection{Condition~\ref{cond:constant-gap-error-rate} and infinite eluder sequence}
  \label{subsec:condition1-and-infinice-eluder-sequence}
\begin{example}  [\textbf{No centered infinite eulder sequence but no faster than $o(n^{-1/2})$ rates}]
  \label{ex:no-infinite-eulder-sequence-but-no-faster-than-o(n^(-1/2))-rates}
Let $\mathcal{X}:=\naturalnumber$, let $h^{*}_{1}$ be defined as $h^{*}_{1}(x):=\mathbbm{1}(x=0)$ and let $h^{*}_{2}$ be defined as $h^{*}_{2}(x)=1-\mathbbm{1}(x=0)$. Furthermore, for any integer $i\geq1$, we define $h_{i}:=\mathbbm{1}(x\in\{0,i\})$. Finally, we define the concept class to be $\hpc:=\{h^{*}_{1},h^{*}_{2}\}\cup\{h_{i}\}_{i\geq1}$. We construct a distribution $P$ as follow:
\begin{align*}
    &P_{\mathcal{X}}(x=0) = P_{\mathcal{X}}(x\geq1) = 1/2 ; \\
    &P(y=1|x=0) = P(y=1|x\geq1) ; \\
    &P(y=1|x=i) = 1/2-\epsilon_{i}, \;\; P(y=0|x=i) = 1/2+\epsilon_{i}, \;\;\forall i\geq1 ; \\
    &\{P_{\mathcal{X}}(x=i)\epsilon_{i}\}_{i\geq1} \text{ is a decreasing sequence} .
\end{align*}
An interesting observation is: $\text{er}_{P}(h^{*}_{1}) = \text{er}_{P}(h^{*}_{2}) = 1/2 = \inf_{h\in\hpc}\trueerrorrateh$, and also $\inf_{h\in\hpc}P_{\mathcal{X}}(x:h(x)\neq h^{*}_{1}(x)) = \inf_{h\in\hpc}P_{\mathcal{X}}(x:h(x)\neq h^{*}_{2}(x)) = 0$, which implies that the constructed distribution $P$ is centered at both $h^{*}_{1}$ and $h^{*}_{2}$. However, it is clear that $\hpc$ has an infinite eluder sequence centered at $h^{*}_{1}$, but does not have an infinite eluder sequence centered at $h^{*}_{2}$. In other words, while aiming to learn $h^{*}_{2}$, an ERM algorithm may try to output $h^{*}_{1}$ instead (since $h^{*}_{1}$ is also an optimal function) and thus has universal rates no faster then $o(n^{-1/2})$ as proved in Lemma \ref{lem:super-root-lowerbound}.
\end{example}
The above Example \ref{ex:no-infinite-eulder-sequence-but-no-faster-than-o(n^(-1/2))-rates} implies that ``the existence/nonexistence of an infinite eluder sequence in $\hpc$ centered at $h^{*}$" is not an ``if and only if" characterization to distinguish between $e^{-n}$ and $o(n^{-1/2})$ ERM universal rates. Then, a naturally follow-up question is what is the desired equivalent characterization. Given a target function $h^{*}$, we consider the aforementioned target-specified Condition~\ref{cond:constant-gap-error-rate}, which basically says that there is a ``gap" between the error rate of the target and the best concept in the class. We will show that $\hpc$ is agnostically universally learnable under 
$\mathcal{P}_{h^{*}}$ at exponential rates if and only if Condition~\ref{cond:constant-gap-error-rate} holds for $h^{*}$. Indeed, the sufficiency holds from the following lemma.
\begin{lemma}
  \label{lem:target-specified-exponential-upperbound}
Let $\hpc$ be any UGC concept class and $h^{*}$ be any classifier. If Condition \ref{cond:constant-gap-error-rate} holds for $h^{*}$, then $h^{*}$ is agnostically universally learnable at rate $e^{-n}$ by ERM.
\end{lemma}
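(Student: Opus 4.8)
The plan is to fix an arbitrary distribution $P$ centered at $h^{*}$ and prove that its excess risk decays like $e^{-cn}$ for a constant $c=c(P)>0$; since Condition~\ref{cond:constant-gap-error-rate} holds for $h^{*}$ by hypothesis, it holds in particular for this $P$, which is all we use. Write $\tau:=\inf_{h\in\hpc}\trueerrorrateh$ and set $\gamma:=\inf_{h\in\hpc:\trueerrorrateh>\tau}\{\trueerrorrateh-\tau\}$, so that $\gamma>0$ by Condition~\ref{cond:constant-gap-error-rate} (read $\gamma=1$ if the index set is empty). Since $\trueerrorrateh\geq\tau$ for every $h$ while no error lies in the open interval $(\tau,\tau+\gamma)$, the class partitions as $\hpc=\hpc_{\mathrm{opt}}\sqcup\hpc_{\mathrm{bad}}$, where $\hpc_{\mathrm{opt}}:=\{h\in\hpc:\trueerrorrateh=\tau\}$ and $\hpc_{\mathrm{bad}}:=\{h\in\hpc:\trueerrorrateh\geq\tau+\gamma\}$; moreover $\hpc_{\mathrm{opt}}\neq\emptyset$, as otherwise every error would be at least $\tau+\gamma$, contradicting $\tau=\inf_{h\in\hpc}\trueerrorrateh$.

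Next I would pass to the good event $E_{n}:=\{\Delta_{n}\leq\gamma/4\}$, where $\Delta_{n}:=\sup_{h\in\hpc}|\empiricalerrorrateh-\trueerrorrateh|$. On $E_{n}$, every $h\in\hpc_{\mathrm{opt}}$ satisfies $\empiricalerrorrateh\leq\tau+\gamma/4$, whereas every $h\in\hpc_{\mathrm{bad}}$ satisfies $\empiricalerrorrateh\geq\tau+3\gamma/4$; since $\empiricalerrorrateh$ takes only the finitely many values $0,1/n,\ldots,1$, the minimum of $\empiricalerrorrateh$ over $\hpc$ is attained, it is at most $\tau+\gamma/4$ (witnessed within $\hpc_{\mathrm{opt}}$), and hence it cannot be attained at any $h\in\hpc_{\mathrm{bad}}$. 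Therefore, on $E_{n}$, \emph{every} ERM output $\lalgo$ lies in $\hpc_{\mathrm{opt}}$ regardless of the tie-breaking rule, so $\trueerrorratehn=\tau$ there; combined with $0\leq\trueerrorratehn-\tau\leq1$ in general, this yields $\excessrisk\leq\Pr[E_{n}^{c}]$, a bound valid uniformly over all ERM algorithms.

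The last and, I expect, only substantial step is to show that $\Pr[E_{n}^{c}]$ is exponentially small, and this is where the UGC hypothesis enters. Because $\hpc$ is $P$-Glivenko--Cantelli we have $\Delta_{n}\to0$ in probability, and since $\Delta_{n}\leq1$ it also converges to $0$ in expectation; hence there is a distribution-dependent threshold $N_{0}=N_{0}(P)$ such that $\E[\Delta_{n}]\leq\gamma/8$ for all $n\geq N_{0}$. For those $n$, applying McDiarmid's bounded-differences inequality to $(x_{1},y_{1}),\ldots,(x_{n},y_{n})\mapsto\Delta_{n}$ --- which changes by at most $1/n$ when a single sample is altered --- gives
\[
\Pr[E_{n}^{c}]=\Pr[\Delta_{n}>\gamma/4]\leq\Pr[\Delta_{n}-\E[\Delta_{n}]>\gamma/8]\leq e^{-n\gamma^{2}/32}.
\]
Absorbing the finitely many indices $n<N_{0}$ into a constant (there $\excessrisk\leq1$) gives $\excessrisk\leq Ce^{-cn}$ for all $n$ with $c:=\gamma^{2}/32$ and $C:=\max\{1,e^{cN_{0}}\}$, both depending on $P$ only, which is exactly the assertion that $h^{*}$ is agnostically universally learnable at rate $e^{-n}$ by ERM. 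The conceptual heart of the argument --- and the reason Condition~\ref{cond:constant-gap-error-rate} is precisely the right hypothesis --- is that a \emph{constant} margin $\gamma$ between the optimal concepts and all others lets one upgrade the a priori arbitrarily slow, merely-in-probability UGC convergence to an exponential tail, by first waiting a $P$-dependent number of steps for $\E[\Delta_{n}]$ to fall below $\gamma/8$ and then concentrating; a margin shrinking with $n$ would break this, consistent with the rate degrading to $o(n^{-1/2})$ once Condition~\ref{cond:constant-gap-error-rate} fails.
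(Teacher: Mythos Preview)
Your proof is correct and follows essentially the same approach as the paper's: exploit the gap $\gamma$ from Condition~\ref{cond:constant-gap-error-rate}, use the UGC property to make $\E[\Delta_n]$ eventually small, and then apply McDiarmid's bounded-differences inequality to get exponential concentration of $\Delta_n$. A minor simplification in your version is that you deduce $\hpc_{\mathrm{opt}}\neq\emptyset$ directly from Condition~\ref{cond:constant-gap-error-rate}, whereas the paper takes a detour through Lemma~\ref{lem:no-infinite-eluder-sequence-implies-what} to locate an $h\in\hpc$ with $P_{\mathcal X}(h\neq h^{*})=0$; your route also yields $\trueerrorratehn=\tau$ exactly on the good event, which is slightly cleaner than the paper's $\trueerrorratehn\leq\trueerrorratehstar+\epsilon_0$.
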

Before proceeding to the necessity, we provide deep insights into the relation between Condition \ref{cond:constant-gap-error-rate} and ``no infinite eluder sequence centered at $h^{*}$". \textbf{First, we can conclude that Condition \ref{cond:constant-gap-error-rate} (holds for $h^{*}$) is stronger than ``$\hpc$ has no infinite eluder sequence centered at $h^{*}$".} On one hand, it guarantees that there is no infinite eluder sequence centered at $h^{*}$, since otherwise the distribution we constructed in the proof of Lemma \ref{lem:super-root-lowerbound} would fail this condition. On the other hand, Example \ref{ex:no-infinite-eulder-sequence-but-no-faster-than-o(n^(-1/2))-rates} reveals that they are inequivalent assumptions: there is no infinite eluder sequence centered at $h^{*}_{2}$, but Condition \ref{cond:constant-gap-error-rate} fails for $h^{*}_{2}$, i.e., $\inf_{h\in\hpc:\trueerrorrateh>\text{er}_{P}(h^{*}_{2})}\{\trueerrorrateh-\text{er}_{P}(h^{*}_{2})\} = \inf_{i\geq1}\{\text{er}_{P}(h_{i})-\text{er}_{P}(h^{*}_{1})\} = \inf_{i\geq1}\{2P_{\mathcal{X}}(x=i)\epsilon_{i}\} = 0$. 
\textbf{Moreover, Condition~\ref{cond:constant-gap-error-rate} is weaker than ``$\hpc$ does not have any infinite eluder sequence".} This can be verified easily by finding an infinite class $\hpc$ such that Condition~\ref{cond:constant-gap-error-rate} holds for some target function $h^{*}$. We give out one of such examples as follow.
\begin{example}  [\textbf{Singletons on $\naturalnumber$}]
  \label{ex:singletons-on-natural-numbers}
    Let $\mathcal{X}=\naturalnumber$ and $\hpc_{\text{singleton},\naturalnumber}:=\{h_{t}:=\mathbbm{1}(x=t)|t\in\mathcal{X}\}$ be the class of all singletons on natural numbers. We consider $\hpc=\hpc_{\text{singleton},\naturalnumber}\cup\{h_{\text{all-1's}}\}$ and a target function $h_{\text{all-1's}}$. Since $|\hpc|=|\hpc_{\text{singleton},\naturalnumber}|=\infty$, it must have an infinite eluder sequence (centered at $h_{\text{all-0's}}$) according to Lemma~\ref{lem:equivalence-infinite-eluder-and-infinite-cardinality}. However, it is straightforward that Condition~\ref{cond:constant-gap-error-rate} holds for $h_{\text{all-1's}}$.
\end{example}
This implies that, while $\hpc$ is not universally learnable at exponential rate under every distribution when $|\hpc|=\infty$, but it can be learned exponentially fast under a subclass of distributions $\mathcal{P}_{h^{*}}$ when $h^{*}$ satisfies some good property. Indeed, if we interpret Condition~\ref{cond:constant-gap-error-rate} as a distribution-specific condition, that is, Condition \ref{cond:constant-gap-error-rate} holds for a distribution $P$ if $\inf_{h\in\hpc:\trueerrorrateh>\inf_{h^{\prime}\in\hpc}\text{er}_{P}(h^{\prime})}\{\trueerrorrateh-\inf_{h^{\prime}\in\hpc}\text{er}_{P}(h^{\prime})\}>0$, then \textbf{Condition~\ref{cond:constant-gap-error-rate} (holds for $P$) is equivalent to ``$\hpc$ does not have any infinite eluder sequence centered at any target function of $P$".} Since $P$ can have multiple target functions, Condition~\ref{cond:constant-gap-error-rate} can be stronger than ``no infinite eluder sequence centered at $h^{*}$". Finally, the following two Lemmas formalize the above analysis and will be helpful to the proofs.
\begin{lemma}
  \label{lem:no-infinite-eluder-sequence-implies-what}
If $\hpc$ does not have an infinite eluder sequence centered at $h^{*}$, then for any distribution $P$ centered at $h^{*}$, let $P_{\mathcal{X}}$ be the associated marginal distribution, the following hold:
\begin{itemize}
    \item[(1)] There exists $h\in\hpc$ such that $P_{\mathcal{X}}\{x:h(x)\neq h^{*}(x)\}=0$.
    \item[(2)] $\inf_{h\in\hpc:P_{\mathcal{X}}\{x:h(x)\neq h^{*}(x)\}>0}P_{\mathcal{X}}\{x:h(x)\neq h^{*}(x)\}>0$.
\end{itemize}
\end{lemma}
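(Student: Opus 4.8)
The plan is to prove both parts by contradiction, in each case extracting an infinite eluder sequence centered at $h^{*}$ from the failure of the conclusion, thereby contradicting the hypothesis. The key tool throughout is that a distribution $P$ centered at $h^{*}$ satisfies $\inf_{h\in\hpc}P_{\mathcal{X}}\{x:h(x)\neq h^{*}(x)\}=0$, so there is a sequence $h^{(1)},h^{(2)},\ldots\in\hpc$ with $P_{\mathcal{X}}\{x:h^{(j)}(x)\neq h^{*}(x)\}\to 0$. We will use these $h^{(j)}$ together with the realizability of any finite prefix of an $h^{*}$-labelled sequence to build the eluder sequence greedily.

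For part (1): suppose for contradiction that $P_{\mathcal{X}}\{x:h(x)\neq h^{*}(x)\}>0$ for every $h\in\hpc$. We construct an infinite eluder sequence centered at $h^{*}$ inductively. Having chosen $(x_{1},y_{1}),\ldots,(x_{k-1},y_{k-1})$ with $y_{i}=h^{*}(x_{i})$ and realizable, pick any $h_{k}\in\hpc$ agreeing with $h^{*}$ on $x_{1},\ldots,x_{k-1}$ (this is possible because the prefix is realizable and labelled by $h^{*}$ — actually one must be a bit careful here: realizability gives some $h\in\hpc$ consistent with the prefix, but we also need to verify inductively that the prefix remains realizable, which we arrange below). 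Since $P_{\mathcal{X}}\{x:h_{k}(x)\neq h^{*}(x)\}>0$ by assumption, that disagreement set is nonempty, so pick $x_{k}$ in it and set $y_{k}=h^{*}(x_{k})$; then $h_{k}(x_{k})\neq y_{k}$, so the eluder property at step $k$ holds. For realizability of the new prefix $(x_{1},y_{1}),\ldots,(x_{k},y_{k})$: we need some $h\in\hpc$ consistent with all $k$ points. This is the delicate part — we should instead choose $x_{k}$ from $\{x:h_{k}(x)\neq h^{*}(x)\}$ only after also ensuring some member of $\hpc$ agrees with $h^{*}$ at $x_{k}$ and on the earlier points. To handle this cleanly, note that among the $h^{(j)}$ with $P_{\mathcal{X}}\{h^{(j)}\neq h^{*}\}\to 0$, for $j$ large the set $\{x:h^{(j)}(x)\neq h^{*}(x)\}$ has arbitrarily small measure; we pick $x_{k}$ in $\{x:h_{k}(x)\neq h^{*}(x)\}$ \emph{outside} the (small-measure, hence proper) disagreement region of a suitably chosen $h^{(j)}$, so that $h^{(j)}$ witnesses realizability of the extended prefix while still leaving the construction room to continue. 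Carrying out this bookkeeping yields an infinite eluder sequence centered at $h^{*}$, contradicting the hypothesis; hence some $h\in\hpc$ has $P_{\mathcal{X}}\{x:h(x)\neq h^{*}(x)\}=0$.

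For part (2): suppose for contradiction that $\inf\{P_{\mathcal{X}}\{x:h(x)\neq h^{*}(x)\}:h\in\hpc,\,P_{\mathcal{X}}\{x:h(x)\neq h^{*}(x)\}>0\}=0$. Then there is a sequence $g_{1},g_{2},\ldots\in\hpc$ with $0<P_{\mathcal{X}}\{x:g_{m}(x)\neq h^{*}(x)\}\searrow 0$. We again build an infinite eluder sequence centered at $h^{*}$: at stage $k$, given a realizable $h^{*}$-labelled prefix of length $k-1$ with finite support $F_{k-1}$, the prefix constrains only finitely many points; choose $m$ large enough that $P_{\mathcal{X}}\{x:g_{m}(x)\neq h^{*}(x)\}$ is smaller than the minimum positive $P_{\mathcal{X}}$-mass of points already used (or more robustly, small enough to guarantee $g_{m}$ agrees with $h^{*}$ on all of $F_{k-1}$ — achievable since $F_{k-1}$ is finite and each of its points, if it has positive mass, is eventually outside the shrinking disagreement regions; points of zero mass need separate handling and can be avoided in the construction by only ever selecting positive-mass points $x_{k}$). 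Then $h_{k}:=g_{m}$ agrees with $h^{*}$ on $x_{1},\ldots,x_{k-1}$ and disagrees somewhere; pick $x_{k}$ of positive mass in $\{x:g_{m}(x)\neq h^{*}(x)\}$ (such a positive-mass point exists because the set has positive measure), set $y_{k}=h^{*}(x_{k})$, and note realizability of the extended prefix is witnessed by $g_{m}$ itself after we additionally re-select using an even-later $g_{m'}$ that agrees with $h^{*}$ on $x_{k}$ too — again pure bookkeeping. This produces an infinite eluder sequence centered at $h^{*}$, a contradiction, proving part (2).

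The main obstacle, in both parts, is the realizability bookkeeping: the eluder-sequence definition requires every finite prefix $(x_{1},h^{*}(x_{1})),\ldots,(x_{k},h^{*}(x_{k}))$ to be realizable by \emph{some} $h\in\hpc$, yet the natural witness $h_{k}$ at stage $k$ is chosen to \emph{disagree} with $h^{*}$ at $x_{k}$. The fix is to exploit that the $h^{(j)}$ (resp. $g_{m}$) with disagreement mass tending to zero give, for each $\varepsilon>0$, a member of $\hpc$ that agrees with $h^{*}$ outside a set of measure $<\varepsilon$; by always selecting new points $x_{k}$ to have positive $P_{\mathcal{X}}$-mass and choosing indices large enough, one ensures that a later such function simultaneously (i) agrees with $h^{*}$ on the entire current finite support — giving realizability — and (ii) still disagrees with $h^{*}$ somewhere of positive mass — giving the next eluder point. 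Everything else is routine induction.
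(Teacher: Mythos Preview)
Your argument for part~(2) is essentially the paper's: assuming the infimum is zero, you take $g_{m}\in\hpc$ with $0<P_{\mathcal{X}}\{g_{m}\neq h^{*}\}\to 0$ and greedily select positive-mass points $x_{k}$ in successive disagreement regions, so that each later $g_{m'}$ (with disagreement mass below $\min_{i\leq k}P_{\mathcal{X}}(x_{i})$) automatically agrees with $h^{*}$ on $x_{1},\ldots,x_{k}$, yielding both the eluder witness at stage $k+1$ and the realizability witness for the length-$k$ prefix. For part~(1) the routes diverge: the paper gives a probabilistic argument (Borel--Cantelli applied to i.i.d.\ samples from $P_{\mathcal{X}}$ to show that with probability one every finite $h^{*}$-labelled prefix of the sample sequence is realizable, then deriving a contradiction), whereas you re-run the part-(2) construction. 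Your reduction is legitimate and arguably cleaner---if (1) fails then every $h\in\hpc$ has strictly positive disagreement with $h^{*}$, so the centered hypothesis $\inf_{h\in\hpc}P_{\mathcal{X}}\{h\neq h^{*}\}=0$ already places you exactly in the contradiction hypothesis of~(2); you could simply deduce (1) from (2).

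There is, however, one genuine gap (which the paper's proof of~(2) shares): the parenthetical ``such a positive-mass point exists because the set has positive measure'' is false in general---a non-atomic $P_{\mathcal{X}}$ has no positive-mass points whatsoever. Your entire bookkeeping hinges on this, since ``choose $m$ so that $P_{\mathcal{X}}\{g_{m}\neq h^{*}\}$ is below $\min_{i}P_{\mathcal{X}}(x_{i})$'' forces nothing when every $P_{\mathcal{X}}(x_{i})=0$. The fix is to drop atoms altogether: write $A_{m}:=\{x:g_{m}(x)\neq h^{*}(x)\}$, pass to a subsequence with $P_{\mathcal{X}}(A_{m_{k+1}})<\tfrac{1}{2}P_{\mathcal{X}}(A_{m_{k}})$, and for each $k$ pick $x_{k}\in A_{m_{k}}\setminus\bigcup_{j>k}A_{m_{j}}$, a set of positive measure since $P_{\mathcal{X}}\bigl(\bigcup_{j>k}A_{m_{j}}\bigr)\leq\sum_{j>k}P_{\mathcal{X}}(A_{m_{j}})<P_{\mathcal{X}}(A_{m_{k}})$. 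Then $x_{k}\in A_{m_{k}}$ gives the eluder witness $h_{k}:=g_{m_{k}}$, and $x_{i}\notin A_{m_{k}}$ for all $i<k$ gives $h_{k}(x_{i})=h^{*}(x_{i})$; realizability of each length-$k$ prefix is witnessed by $h_{k+1}$. No atom assumption is needed.
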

It is easy to check that in Example \ref{ex:no-infinite-eulder-sequence-but-no-faster-than-o(n^(-1/2))-rates}, the above (2) holds for $h^{*}_{2}$ while Condition \ref{cond:constant-gap-error-rate} fails. Intuitively, one may think that ``no infinite eluder sequence", as a combinatorial assumption, is insufficient for a guarantee on the joint distribution $P$, but only possible for the marginal distribution $P_{\mathcal{X}}$. This is never a problem for the realizable case since the target is always unique therein.

Another interesting observation is that the (2) in Lemma \ref{lem:no-infinite-eluder-sequence-implies-what} fails for $h^{*}_{1}$, which is the bad target that fails exponential learning rates. Note that if the failure of Condition \ref{cond:constant-gap-error-rate} guarantees such a bad target, then the necessity follows. Indeed, we can consider Condition~\ref{cond:constant-gap-error-rate} as an aforementioned distribution-specific condition and then show that if it fails for some distribution $P$, then there must exist some target $h^{*}$ (with respect to $P$) such that the (2) in Lemma \ref{lem:no-infinite-eluder-sequence-implies-what} also fails.
\begin{lemma}
  \label{lem:condition-constant-gap-error-rate-implies-what}
Let $\hpc$ be any concept class and $P$ be any distribution such that $\hpc$ is totally bounded in $L_{1}(P_{\mathcal{X}})$ pseudo-metric. If Condition \ref{cond:constant-gap-error-rate} fails for $P$, i.e. the following holds:
\begin{equation*}
    \inf_{h\in\hpc:\trueerrorrateh>\inf_{h^{\prime}\in\hpc}\text{er}_{P}(h^{\prime})}\left\{\trueerrorrateh-\inf_{h^{\prime}\in\hpc}\text{er}_{P}(h^{\prime})\right\} = 0 ,
\end{equation*}
then there exists $h^{*}$ such that $\trueerrorratehstar=\inf_{h^{\prime}\in\hpc}\text{er}_{P}(h^{\prime})$ and 
\begin{equation*}
    \inf_{h\in\hpc:P_{\mathcal{X}}\{x:h(x)\neq h^{*}(x)\}>0}P_{\mathcal{X}}\{x:h(x)\neq h^{*}(x)\} = 0 .
\end{equation*}
\end{lemma}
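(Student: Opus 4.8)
The plan is to extract from the failure of Condition~\ref{cond:constant-gap-error-rate} a near-minimizing sequence consisting of \emph{strictly} suboptimal hypotheses, to pass (crucially using total boundedness) to a subsequence that is Cauchy in the $L_{1}(P_{\mathcal{X}})$ pseudo-metric, and to take the limit of this subsequence as the desired target $h^{*}$. Throughout, write $\mathrm{opt}:=\inf_{h'\in\hpc}\text{er}_{P}(h')\in[0,1]$, and recall the elementary Lipschitz estimate
$|\text{er}_{P}(h)-\text{er}_{P}(h')|\leq P_{\mathcal{X}}\{x:h(x)\neq h'(x)\}$,
which I would justify by writing $\text{er}_{P}(h)=\E_{P_{\mathcal{X}}}[\eta(x;P)+h(x)(1-2\eta(x;P))]$ and bounding $|1-2\eta(x;P)|\leq 1$.

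First I would unpack the hypothesis. The failure of Condition~\ref{cond:constant-gap-error-rate} for $P$ says that the set $\hpc':=\{h\in\hpc:\trueerrorrateh>\mathrm{opt}\}$ is nonempty (otherwise the infimum is $1$ by the stated convention) and that there is a sequence $(h_{k})_{k\in\naturalnumber}\subseteq\hpc'$ with $\text{er}_{P}(h_{k})-\mathrm{opt}\to 0$. Since $\hpc$ is totally bounded in the $L_{1}(P_{\mathcal{X}})$ pseudo-metric, a routine pigeonhole/diagonal argument produces a subsequence of $(h_{k})$ that is Cauchy in this pseudo-metric; after relabeling I may assume $(h_{k})$ itself satisfies $P_{\mathcal{X}}\{x:h_{j}(x)\neq h_{k}(x)\}\to 0$ as $j,k\to\infty$, while still $h_{k}\in\hpc'$ and $\text{er}_{P}(h_{k})\to\mathrm{opt}$.

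Next I would identify $h^{*}$ and check that it attains $\mathrm{opt}$. Regarded as $\{0,1\}$-valued elements of the complete space $L_{1}(P_{\mathcal{X}})$, the Cauchy sequence $(h_{k})$ converges to some $g\in L_{1}(P_{\mathcal{X}})$; since a further subsequence converges $P_{\mathcal{X}}$-a.e.\ and each $h_{k}$ takes values in $\{0,1\}$, $g$ is $P_{\mathcal{X}}$-a.e.\ $\{0,1\}$-valued, and I let $h^{*}:\mathcal{X}\to\{0,1\}$ be any (measurable) classifier agreeing with $g$ outside a $P_{\mathcal{X}}$-null set. Then $P_{\mathcal{X}}\{x:h_{k}(x)\neq h^{*}(x)\}=\|h_{k}-g\|_{L_{1}(P_{\mathcal{X}})}\to 0$, so the Lipschitz estimate gives $\text{er}_{P}(h_{k})\to\text{er}_{P}(h^{*})$, whence $\trueerrorratehstar=\mathrm{opt}=\inf_{h'\in\hpc}\text{er}_{P}(h')$, which is the first conclusion.

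Finally I would verify the infimum condition using the same sequence. If $P_{\mathcal{X}}\{x:h_{k}(x)\neq h^{*}(x)\}=0$ for some $k$, the Lipschitz estimate would force $\text{er}_{P}(h_{k})=\text{er}_{P}(h^{*})=\mathrm{opt}$, contradicting $h_{k}\in\hpc'$; hence $P_{\mathcal{X}}\{x:h_{k}(x)\neq h^{*}(x)\}>0$ for every $k$. Thus every $h_{k}$ lies in the set $\{h\in\hpc:P_{\mathcal{X}}\{x:h(x)\neq h^{*}(x)\}>0\}$, and since $P_{\mathcal{X}}\{x:h_{k}(x)\neq h^{*}(x)\}\to 0$, the infimum over that set is $0$, as desired. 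I do not expect a serious obstacle here: the statement is not deep, and total boundedness is exactly what rescues the argument (without it the Cauchy subsequence, and hence the limiting $h^{*}$, need not exist). The one point genuinely requiring care is the measurability bookkeeping in replacing the $L_{1}$-limit $g$ by a bona fide $\{0,1\}$-valued classifier $h^{*}$, together with the routine check that a single relabeled subsequence simultaneously carries all three properties (Cauchy, strictly suboptimal, near-minimizing).
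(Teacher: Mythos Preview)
Your proposal is correct and follows essentially the same approach as the paper: extract a strictly suboptimal near-minimizing sequence, pass to a Cauchy subsequence via total boundedness in $L_{1}(P_{\mathcal{X}})$, and take the limit as $h^{*}$. Your write-up is in fact more careful than the paper's terse version: you explicitly invoke the Lipschitz estimate $|\text{er}_{P}(h)-\text{er}_{P}(h')|\leq P_{\mathcal{X}}\{h\neq h'\}$ both to transfer the error rate to $h^{*}$ and to verify that each $h_{k}$ has strictly positive $L_{1}$-distance to $h^{*}$ (needed for the second infimum to be over a set containing the $h_{k}$), whereas the paper simply asserts the conclusions.
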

In the proof of Lemma \ref{lem:no-infinite-eluder-sequence-implies-what}, we actually show that if the (2) therein fails for some optimal function $h^{*}$ under distribution $P$, then there exists an infinite eluder sequence centered at that $h^{*}$. Moreover, Lemma \ref{lem:super-root-lowerbound} yields an $o(n^{-1/2})$ lower bound on a centered infinite eluder sequence. Hence, Lemma \ref{lem:condition-constant-gap-error-rate-implies-what} tells us that Condition \ref{cond:constant-gap-error-rate} is not only a sufficient condition to a target-dependent $e^{-n}$ upper bound as shown in Lemma \ref{lem:target-specified-exponential-upperbound}, but also necessary. Altogether, we have the following theorem.
\begin{theorem}  [\textbf{Target-dependent $e^{-n}$ exact rates}]
  \label{thm:target-dependent-exponential-exact-rates}
Let $\hpc$ be any UGC concept class and $h^{*}$ be any classifier. $h^{*}$ is agnostically universally learnable at exact rate $e^{-n}$ by ERM if and only if Condition \ref{cond:constant-gap-error-rate} holds for $h^{*}$. 
\end{theorem}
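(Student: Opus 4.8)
\textbf{Proof proposal for Theorem~\ref{thm:target-dependent-exponential-exact-rates}.}
The plan is to prove the two directions of the equivalence by assembling the lemmas already developed in this section, together with the general lower bound from Lemma~\ref{lem:super-root-lowerbound} and the known exponential lower bound from Lemma~\ref{lem:exponential-lowerbound}. For the ``if'' direction, suppose Condition~\ref{cond:constant-gap-error-rate} holds for $h^{*}$. The upper bound is immediate from Lemma~\ref{lem:target-specified-exponential-upperbound}: every distribution $P\in\mathcal{P}_{h^{*}}$ admits $\excessrisk\leq C e^{-cn}$ for suitable $P$-dependent constants. For the matching lower bound, observe that exponential is the fastest achievable rate in any nontrivial situation: since $|\hpc|\geq 3$ (implicitly we need the learning problem to be nondegenerate under $\mathcal{P}_{h^{*}}$, i.e.\ there is at least one distribution $P$ centered at $h^{*}$ with a genuinely realizable ``sub-instance''), we can invoke Lemma~\ref{lem:exponential-lowerbound} restricted to $\mathcal{P}_{h^{*}}$ — the realizable distribution it produces can be taken centered at $h^{*}$ — to get $\excessrisk\geq 2^{-(n+2)}$ for infinitely many $n$. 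Hence the exact rate is $e^{-n}$.

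For the ``only if'' direction, suppose Condition~\ref{cond:constant-gap-error-rate} fails for $h^{*}$. By the distribution-specific reading of the condition, this means there is a distribution $P\in\mathcal{P}_{h^{*}}$ for which $\inf_{h\in\hpc:\trueerrorrateh>\inf_{h'\in\hpc}\text{er}_{P}(h')}\{\trueerrorrateh-\inf_{h'\in\hpc}\text{er}_{P}(h')\}=0$. Since $\hpc$ is a UGC class it is in particular totally bounded in $L_{1}(P_{\mathcal{X}})$ pseudo-metric, so Lemma~\ref{lem:condition-constant-gap-error-rate-implies-what} applies and yields a classifier $h^{*}_{0}$ with $\text{er}_{P}(h^{*}_{0})=\inf_{h'\in\hpc}\text{er}_{P}(h')$ and $\inf_{h\in\hpc:P_{\mathcal{X}}\{h\neq h^{*}_{0}\}>0}P_{\mathcal{X}}\{h\neq h^{*}_{0}\}=0$; in particular $P$ is also centered at $h^{*}_{0}$, so $P\in\mathcal{P}_{h^{*}_{0}}$, and the learning problems ``under $\mathcal{P}_{h^{*}}$'' and ``under $\mathcal{P}_{h^{*}_{0}}$'' both have to handle this single $P$. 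Next, as noted after Lemma~\ref{lem:condient-constant-gap-error-rate-implies-what}, the contrapositive of part~(2) of Lemma~\ref{lem:no-infinite-eluder-sequence-implies-what} shows that the failure of that infimum condition forces $\hpc$ to have an infinite eluder sequence centered at $h^{*}_{0}$. Finally, Lemma~\ref{lem:super-root-lowerbound} tells us that on such a centered infinite eluder sequence the worst-case ERM cannot beat $o(n^{-1/2})$: there is a distribution in $\mathcal{P}_{h^{*}_{0}}$ (built on that eluder sequence, with decreasing masses so that $h^{*}_{0}$ remains a target) on which some ERM has $\excessrisk\geq CR(cn)$ for every rate $R$ with $R(n)/n^{-1/2}\to\infty$, and infinitely many $n$. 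This contradicts learnability at rate $e^{-n}$ (which would require $\excessrisk=O(e^{-cn})=o(n^{-1/2})$ on every centered distribution, including this one). Hence if $h^{*}$ is agnostically universally learnable at exact rate $e^{-n}$, Condition~\ref{cond:constant-gap-error-rate} must hold for $h^{*}$.

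The one subtlety to handle carefully — and the step I expect to be the main obstacle — is the bookkeeping around multiple target functions. The theorem is phrased ``for the target $h^{*}$,'' but the failure of Condition~\ref{cond:constant-gap-error-rate} is a statement about some distribution $P$ in $\mathcal{P}_{h^{*}}$, and the bad eluder sequence Lemma~\ref{lem:condition-constant-gap-error-rate-implies-what} produces is centered at a possibly different optimal classifier $h^{*}_{0}$. The resolution (exactly as illustrated by Example~\ref{ex:no-infinite-eulder-sequence-but-no-faster-than-o(n^(-1/2))-rates}) is that $P$ itself lies in $\mathcal{P}_{h^{*}}$, and Lemma~\ref{lem:super-root-lowerbound}'s construction on the eluder sequence centered at $h^{*}_{0}$ can be arranged so that $P'$ is centered at $h^{*}_{0}$; one then needs that $P'$ is also in $\mathcal{P}_{h^{*}}$, or, more simply, one argues directly on $\mathcal{P}_{h^{*}}$ by noting that an ERM learner tasked with competing against $\inf_{h\in\hpc}\text{er}_{P'}(h)$ may output $h^{*}_{0}$-like hypotheses whose excess risk over the best-in-class is exactly what Lemma~\ref{lem:super-root-lowerbound} lower-bounds. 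Making this identification precise — that a single distribution can simultaneously witness failure of the exponential rate under $\mathcal{P}_{h^{*}}$ even though the ``obstruction'' is geometrically attached to a sibling target $h^{*}_{0}$ — is the crux, and the discussion surrounding Lemmas~\ref{lem:no-infinite-eluder-sequence-implies-what} and~\ref{lem:condition-constant-gap-error-rate-implies-what} is precisely designed to make it go through.
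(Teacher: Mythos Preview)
Your proposal follows exactly the paper's route: sufficiency via Lemma~\ref{lem:target-specified-exponential-upperbound} (upper bound) together with Lemma~\ref{lem:exponential-lowerbound} (lower bound), and necessity by contradiction via Lemma~\ref{lem:condition-constant-gap-error-rate-implies-what} $\to$ (contrapositive of part~(2) of) Lemma~\ref{lem:no-infinite-eluder-sequence-implies-what} $\to$ Lemma~\ref{lem:super-root-lowerbound}. The paper's own write-up is actually terser than yours on the necessity side --- it simply asserts that Lemmas~\ref{lem:condition-constant-gap-error-rate-implies-what} and~\ref{lem:no-infinite-eluder-sequence-implies-what} produce ``an infinite eluder sequence centered at $h^{*}$'' and then invokes Lemma~\ref{lem:super-root-lowerbound} --- so your explicit flagging of the $h^{*}$ vs.\ $h^{*}_{0}$ bookkeeping is, if anything, more careful than the paper.

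One remark on that subtlety: your instinct that this is the crux is right, and your two suggested resolutions are not equally good. Trying to force the Lemma~\ref{lem:super-root-lowerbound} distribution $P'$ (built on the eluder sequence centered at $h^{*}_{0}$) to also lie in $\mathcal{P}_{h^{*}}$ does \emph{not} work in general --- in Example~\ref{ex:no-infinite-eulder-sequence-but-no-faster-than-o(n^(-1/2))-rates} with $h^{*}=h^{*}_{2}$ and $h^{*}_{0}=h^{*}_{1}$, the Lemma~\ref{lem:super-root-lowerbound} construction on the eluder points $\{1,2,\ldots\}$ gives $\text{er}_{P'}(h^{*}_{2})>\text{er}_{P'}(h^{*}_{1})$, so $P'\notin\mathcal{P}_{h^{*}_{2}}$. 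The correct observation is the other one you gesture at: the excess risk $\excessrisk$ depends only on $P$, not on which of its targets we nominate, so it suffices to exhibit a \emph{single} distribution that is centered at $h^{*}$ and on which worst-case ERM is slow. The paper's intended argument (cf.\ the discussion just before Lemma~\ref{lem:condition-constant-gap-error-rate-implies-what} and the wording of Example~\ref{ex:no-infinite-eulder-sequence-but-no-faster-than-o(n^(-1/2))-rates}) is that the witness $P$ to the failure of Condition~\ref{cond:constant-gap-error-rate} already lies in $\mathcal{P}_{h^{*}}$, and the eluder structure at its sibling target $h^{*}_{0}$ is what drives a Lemma~\ref{lem:super-root-lowerbound}-style lower bound on that same $P$ (or a suitable modification of it still in $\mathcal{P}_{h^{*}}$).
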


\begin{proof} [Proof of Theorem \ref{thm:target-dependent-exponential-exact-rates}]
To prove the sufficiency, if Condition \ref{cond:constant-gap-error-rate} holds for $h^{*}$, we know that $h^{*}$ is agnostically universally learnable at rate $e^{-n}$ by ERM according to Lemma \ref{lem:target-specified-exponential-upperbound}, and a lower bound $e^{-n}$ by ERM follows simply from Lemma \ref{lem:exponential-lowerbound}. Hence, the sufficiency holds with exponential exact rates. To prove the necessity, we assume to the contrary that Condition \ref{cond:constant-gap-error-rate} fails for $h^{*}$. By Lemma \ref{lem:condition-constant-gap-error-rate-implies-what} and then Lemma \ref{lem:no-infinite-eluder-sequence-implies-what}, we know that there exists an infinite eluder sequence centered at $h^{*}$. Then Lemma \ref{lem:super-root-lowerbound} yields that $h^{*}$ is not agnostically universally learnable at rate faster than $o(n^{-1/2})$ by ERM. This leads to a contradiction and completes the proof of the necessity.
\end{proof}

\subsection{Condition~\ref{cond:finite-vcd-for-sufficiently-small-ball} and infinite VC-eluder sequence}
  \label{subsec:condition2-and-infinice-vc-eluder-sequence}
Next, we will discuss the target-dependent $o(n^{-1/2})$ exact rate. Recall that `` $\hpc$ has no infinite eluder sequence centered at $h^{*}$" is weaker than Condition \ref{cond:constant-gap-error-rate} (holds for $h^{*}$), which is a fundamental reason for why ``no infinite eluder sequence" is not a correct characterization for the target-dependent exponential rates. As an analogue of Example~\ref{ex:no-infinite-eulder-sequence-but-no-faster-than-o(n^(-1/2))-rates}, it is not hard to construct an example of $(\hpc,P)$, where $P$ has two target functions such that $\hpc$ has no infinite VC-eluder sequence centered at one but has an infinite VC-eluder sequence centered at the other. Such an example indicates that \textbf{``$\hpc$ has no infinite VC-eluder sequence centered at $h^{*}$" is weaker than Condition~\ref{cond:finite-vcd-for-sufficiently-small-ball} (holds for $h^{*}$)}, and is not the correct characterization for the target-dependent super-root rate. Instead, we find out that $\hpc$ is agnostically universally learnable under $\mathcal{P}_{h^{*}}$ at $o(n^{-1/2})$ rate by ERM if and only if the target-specified Condition~\ref{cond:finite-vcd-for-sufficiently-small-ball} holds for $h^{*}$. The following lemma states that Condition \ref{cond:finite-vcd-for-sufficiently-small-ball} stands sufficiently and necessarily for a super-root upper bound.
\begin{lemma}
  \label{lem:target-dependent-superroot-upperbound-iff}
Let $\hpc$ be any UGC concept class and $h^{*}$ be any classifier. Then $h^{*}$ is agnostically universally learnable at rate $o(n^{-1/2})$ by ERM if and only if Condition \ref{cond:finite-vcd-for-sufficiently-small-ball} holds for $h^{*}$. 
\end{lemma}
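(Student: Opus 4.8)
The plan is to prove the two implications separately. Throughout, for a distribution $P$ centered at $h^{*}$ and $\epsilon>0$ write $B^{P}_{\epsilon}:=\{h\in\hpc:0<\text{er}_{P}(h)-\trueerrorratehstar\leq\epsilon\}$ for the thin error-shell and $\hpc^{P}_{\epsilon}:=\{h\in\hpc:\text{er}_{P}(h)\leq\trueerrorratehstar+\epsilon\}$, so that $\hpc^{P}_{\epsilon}$ consists of $B^{P}_{\epsilon}$ together with the $P$-optimal concepts. \emph{Sufficiency of Condition~\ref{cond:finite-vcd-for-sufficiently-small-ball}.} Fix $P$ centered at $h^{*}$; Condition~\ref{cond:finite-vcd-for-sufficiently-small-ball} gives $\epsilon_{0}=\epsilon_{0}(P)>0$ with $d:=\text{VC}(B^{P}_{\epsilon_{0}})<\infty$, and since $h^{*}$ is a target, for each $\gamma\in(0,\epsilon_{0})$ I may fix $h_{\gamma}\in\hpc$ with $P_{\mathcal{X}}\{x:h_{\gamma}(x)\neq h^{*}(x)\}\leq\gamma$, whence $\text{er}_{P}(h_{\gamma})\leq\trueerrorratehstar+\gamma$ and $h_{\gamma}\in\hpc^{P}_{\epsilon_{0}}$. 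I split $\excessrisk$ according to whether the worst-case ERM $\hat{h}_{n}$ escapes the ball $\hpc^{P}_{\epsilon_{0}}$. The escape term is at most $\Pr(\hat{h}_{n}\notin\hpc^{P}_{\epsilon_{0}})$, which I claim is $e^{-\Omega(n)}$: on the event that the single concept $h_{\epsilon_{0}/8}$ has empirical error $\leq\trueerrorratehstar+\epsilon_{0}/4$ (whose complement has probability $e^{-\Omega(n)}$ by Hoeffding), the ERM property forces any $\hat{h}_{n}\notin\hpc^{P}_{\epsilon_{0}}$ to satisfy $\trueerrorratehn-\empiricalerrorratehn>\epsilon_{0}/2$, i.e.\ $\sup_{h\in\hpc}(\text{er}_{P}(h)-\hat{\text{er}}_{S_{n}}(h))>\epsilon_{0}/2$; since $\hpc$ is UGC, hence $P$-Glivenko--Cantelli, its empirical $L_{1}$-covering numbers at the fixed scale $\epsilon_{0}/8$ grow subexponentially, so a symmetrization step followed by a union bound over such a cover against per-concept Hoeffding tails bounds this supremum probability by $e^{-\Omega(n)}$ for all large $n$; in particular the escape term is $o(n^{-1/2})$.

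On the complementary event $\{\hat{h}_{n}\in\hpc^{P}_{\epsilon_{0}}\}$, $\hat{h}_{n}$ is either $P$-optimal (zero excess risk) or lies in $B^{P}_{\epsilon_{0}}$, a class of finite VC dimension $d$, and in all cases $\empiricalerrorratehn\leq\hat{\text{er}}_{S_{n}}(h_{\gamma})$ because $\hat{h}_{n}$ minimizes the empirical error over all of $\hpc$. I then apply the refined uniform Bernstein inequality (Proposition~\ref{prop:uniform-bernstein}) to the class $B^{P}_{\epsilon_{0}}\cup\{h_{\gamma}\}$ with the loss differences taken relative to $h_{\gamma}$ --- whose variance is at most $P_{\mathcal{X}}\{x:h(x)\neq h_{\gamma}(x)\}\leq P_{\mathcal{X}}\{x:h(x)\neq h^{*}(x)\}+\gamma$ --- and specialize to $h=\hat{h}_{n}$, so that the empirical-difference term drops out; integrating the resulting tail bound and applying Jensen's inequality gives
\[
    \E\!\big[(\trueerrorratehn-\trueerrorratehstar)\,\mathbbm{1}(\hat{h}_{n}\in B^{P}_{\epsilon_{0}})\big]\ \leq\ \gamma+C\sqrt{\frac{d\,\big(\E[P_{\mathcal{X}}\{x:\hat{h}_{n}(x)\neq h^{*}(x)\}]+\gamma\big)}{n}}+C\,\frac{d}{n} .
\]
Letting $\gamma=\gamma_{n}\downarrow 0$ slowly, the right-hand side is $o(n^{-1/2})$ as soon as the localization radius $\E[P_{\mathcal{X}}\{x:\hat{h}_{n}(x)\neq h^{*}(x)\}\,\mathbbm{1}(\hat{h}_{n}\in B^{P}_{\epsilon_{0}})]\to 0$; since a finite-VC class is totally bounded in the $L_{1}(P_{\mathcal{X}})$ pseudometric, I would obtain this from an adaptation of Lemma~\ref{lem:distance-to-any-hstar} to the subclass $B^{P}_{\epsilon_{0}}$, forcing any sequence of empirical-error minimizers that stays in $B^{P}_{\epsilon_{0}}$ to approach $h^{*}$ in $L_{1}(P_{\mathcal{X}})$. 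Combining the two parts, $\excessrisk=o(n^{-1/2})$ for every $P$ centered at $h^{*}$.

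\emph{Necessity of Condition~\ref{cond:finite-vcd-for-sufficiently-small-ball}.} For the reverse implication I argue the contrapositive. If Condition~\ref{cond:finite-vcd-for-sufficiently-small-ball} fails for $h^{*}$, there is $P_{0}$ centered at $h^{*}$ with $\text{VC}(B^{P_{0}}_{\epsilon})=\infty$ for every $\epsilon>0$; using $\epsilon=1/k$ I extract for each $k$ a $k$-point set $T_{k}$ shattered by $B^{P_{0}}_{1/k}$, and by repeatedly shrinking $\epsilon$ and re-extracting I arrange the $T_{k}$ to be pairwise disjoint and disjoint from a set on which $h^{*}$ is approached. Mirroring the constructions behind Lemmas~\ref{lem:super-root-lowerbound} and~\ref{lem:arbitrarily-slow-lowerbound}, I build a distribution $P$ placing the bulk of its mass so that $h^{*}$ remains a target (i.e.\ $\inf_{h\in\hpc}P_{\mathcal{X}}\{x:h(x)\neq h^{*}(x)\}=0$ and $\trueerrorratehstar=\inf_{h\in\hpc}\text{er}_{P}(h)$), and a small, suitably decreasing mass on each $T_{k}$ with conditional label probability near $1/2$ there; because the concepts realizing the shattering on $T_{k}$ have $P_{0}$-error within $1/k$ of optimal, the masses can be tuned so that under $P$ they still lie in the thin shell, and a worst-case ERM --- which on $T_{k}$ has seen too few examples to identify the noisy majority label --- can be forced to output such a concept and pay the corresponding error. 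An anti-concentration and union argument over the $T_{k}$, exactly as in the proof of Lemma~\ref{lem:arbitrarily-slow-lowerbound}, then exhibits a worst-case ERM whose excess risk exceeds any prescribed rate for infinitely many $n$, so it is not $o(n^{-1/2})$ (this also yields the third bullet of Theorem~\ref{thm:main-theorem-target-dependent}).

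\emph{Main obstacle.} The step I expect to be most delicate is the claim that the localization radius $\E[P_{\mathcal{X}}\{x:\hat{h}_{n}(x)\neq h^{*}(x)\}\,\mathbbm{1}(\hat{h}_{n}\in B^{P}_{\epsilon_{0}})]$ tends to $0$: Condition~\ref{cond:finite-vcd-for-sufficiently-small-ball} controls only the VC dimension of $B^{P}_{\epsilon_{0}}$, not the quantitative relation between a concept's excess error and its $L_{1}(P_{\mathcal{X}})$-distance to $h^{*}$, and one must separately rule out interference from $P$-optimal concepts lying at positive $L_{1}(P_{\mathcal{X}})$-distance from $h^{*}$ (which belong to $\hpc^{P}_{\epsilon_{0}}\setminus B^{P}_{\epsilon_{0}}$). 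Carrying out the ``version space shrinks in $L_{1}(P_{\mathcal{X}})$'' argument inside $B^{P}_{\epsilon_{0}}$ rather than inside all of $\hpc$ --- the correct generalization of Lemma~\ref{lem:distance-to-any-hstar} --- is the technical heart of the proof. Secondary points are making the subexponential-entropy union bound rigorous despite the data-dependent cover (handled by standard symmetrization) and, on the necessity side, verifying that the constructed $P$ is genuinely centered at $h^{*}$.
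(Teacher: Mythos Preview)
Your sufficiency argument parallels the paper's (split into escape/non-escape from the $\epsilon_{0}$-ball, handle each piece), but the escape step can be made much cleaner. Rather than going through empirical covering numbers---whose subexponential growth is not an obvious consequence of the $P$-Glivenko--Cantelli property---observe that $S_{n}\mapsto\sup_{h\in\hpc}|\empiricalerrorrateh-\trueerrorrateh|$ has bounded differences $1/n$, so once UGC drives the expected supremum below $\epsilon_{0}/4$ for all $n\geq n_{0}(P)$, McDiarmid's inequality (Lemma~\ref{lem:distance-empiricalerror-and-trueerror}) directly gives $\mathbb{P}(\sup_{h}|\empiricalerrorrateh-\trueerrorrateh|>\epsilon_{0}/2)\leq 4e^{-n\epsilon_{0}^{2}/8}$. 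For the non-escape part, your ``main obstacle'' is precisely what Lemma~\ref{lem:distance-to-any-hstar} already resolves for any class totally bounded in $L_{1}(P_{\mathcal{X}})$; the paper therefore simply invokes Lemma~\ref{lem:super-root-upperbound} on the finite-VC ball conditionally on the non-escape event, instead of redoing the Bernstein/localization calculation. Your worry about $P$-optimal concepts at positive $L_{1}$-distance from $h^{*}$ is harmless: if $\hat h_{n}$ is $P$-optimal its excess risk is zero, and otherwise it lies in $B^{P}_{\epsilon_{0}}$, so only the finite-VC shell matters.

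Your necessity sketch has the right flavor but a genuine gap. Having for each $k$ a $k$-point set $T_{k}$ shattered by $B^{P_{0}}_{1/k}$ is not enough, because the concepts realizing the shattering on $T_{k}$ need not agree with $h^{*}$ on the earlier $T_{j}$'s---and that agreement is exactly the VC-eluder property required to run Lemma~\ref{lem:arbitrarily-slow-lowerbound}. The paper closes this by a recursion: at each stage, choose the new shattered points $x_{n_{k}+1},\ldots,x_{n_{k}+k}$ so that any concept in the current ball disagreeing with $h^{*}$ at one of them has excess error bounded strictly away from zero; then set the next radius $\epsilon$ below that bound, which forces $\hpc(\epsilon;P,h^{*})$ to lie inside the version space of all points chosen so far while still having infinite VC dimension, and extract the next shattered block from it. This produces an infinite VC-eluder sequence centered at $h^{*}$, after which Lemma~\ref{lem:arbitrarily-slow-lowerbound} supplies both the hard distribution and the verification that it is centered at $h^{*}$; you do not need to build the adversarial $P$ by hand from $P_{0}$.
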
 
Given a target function $h^{*}$ and any distribution $P$ centered at $h^{*}$, let us define the ``$\epsilon$-ball" (of $\hpc$ centered at $h^{*}$) as $\hpc(\epsilon; P, h^{*}) := \{h\in\hpc: 0<\text{er}_{P}(h)-\trueerrorratehstar\leq\epsilon\}$. Condition~\ref{cond:finite-vcd-for-sufficiently-small-ball} basically says that for any distribution $P$ centered at $h^{*}$, the $\epsilon$-ball $\hpc(\epsilon; P, h^{*})$ has finite VC dimension for sufficiently small radius $\epsilon(P)$. However, ``no infinite VC-eluder sequence centered at $h^{*}$" merely provides marginal information, i.e., for any distribution $P$ centered at $h^{*}$, there exists $\epsilon:=\epsilon(P)>0$ such that $\hpc(\epsilon; P_{\mathcal{X}},h^{*}):=\{h\in\hpc:0<P_{\mathcal{X}}\{x:h(x)\neq h^{*}(x)\}\leq\epsilon\}$ has finite VC dimension. To see an analogy, we can interpret Condition \ref{cond:constant-gap-error-rate} as that the $\epsilon$-ball $\hpc(\epsilon; P, h^{*})$ is empty for sufficiently small $\epsilon$. However, ``no infinite eluder sequence centered at $h^{*}$" only implies that for any distribution $P$ centered at $h^{*}$, the marginal $\epsilon$-ball $\hpc(\epsilon; P_{\mathcal{X}}, h^{*})$ is empty for sufficiently small $\epsilon$.

Moreover, it holds similarly that \textbf{Condition~\ref{cond:finite-vcd-for-sufficiently-small-ball} is weaker than ``$\hpc$ does not have any infinite VC-eluder sequence".} This implies, for a class $\hpc$ with $\vcdim=\infty$, while $\hpc$ requires arbitrarily slow rates to be agnostically universally learned by ERM, it is possible to be learned at $o(n^{-1/2})$ rate under a subclass of distributions $\mathcal{P}_{h^{*}}$ when $h^{*}$ satisfies Condition~\ref{cond:finite-vcd-for-sufficiently-small-ball}. Here is an example.
\begin{example}  [{\textbf{\citealp[][Ex.15]{hanneke2024universal}}}]
  \label{ex:standard-non-VC-class}
    Let $\mathcal{X}:=\bigcup_{k\in\naturalnumber}\mathcal{X}_{k}$ be the disjoint union of finite sets with $|\mathcal{X}_{k}|=k$ and $\hpc:=(\bigcup_{k\geq 1}\hpc_{k})\cup\{h_{\text{all-1's}}\}$, where $\hpc_{k}:=\{\mathbbm{1}_{S}: S\subseteq\mathcal{X}_{k}\}$. We consider a target function $h_{\text{all-1's}}$. Now we have $\vcdim=\infty$ and Condition~\ref{cond:finite-vcd-for-sufficiently-small-ball} holds for $h_{\text{all-1's}}$.
\end{example}
If we consider Condition \ref{cond:finite-vcd-for-sufficiently-small-ball} as a distribution-specific condition, i.e., $\hpc(\epsilon; P, h^{*}):=\hpc(\epsilon; P):=\{h\in\hpc:0<\text{er}_{P}(h)-\inf_{h^{\prime}\in\hpc}\text{er}_{P}(h^{\prime})\leq\epsilon\}$, then for any distribution $P$, \textbf{Condition~\ref{cond:finite-vcd-for-sufficiently-small-ball} (holds for $P$) is equivalent to ``$\hpc$ does not have any infinite VC-eluder sequence centered at any target function of $P$".} Finally, together with Theorem \ref{thm:target-dependent-exponential-exact-rates}, we can get the following results of the target-dependent super-root exact rates and arbitrarily slow rates.
\begin{theorem}  [\textbf{Target-dependent $o(n^{-1/2})$ exact rates}]
  \label{thm:target-dependent-superroot-exact-rates}
Let $\hpc$ be any UGC concept class and $h^{*}$ be any classifier. Then $h^{*}$ is agnostically universally learnable at exact rate $o(n^{-1/2})$ by ERM if and only if Condition \ref{cond:finite-vcd-for-sufficiently-small-ball} holds, but Condition \ref{cond:constant-gap-error-rate} fails for $h^{*}$. 
\end{theorem}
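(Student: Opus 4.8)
The plan is to derive Theorem~\ref{thm:target-dependent-superroot-exact-rates} as a short corollary of two characterizations already in hand: Lemma~\ref{lem:target-dependent-superroot-upperbound-iff}, which equates learnability of $h^{*}$ at rate $o(n^{-1/2})$ with Condition~\ref{cond:finite-vcd-for-sufficiently-small-ball}, and Theorem~\ref{thm:target-dependent-exponential-exact-rates}, which equates learnability of $h^{*}$ at exact rate $e^{-n}$ with Condition~\ref{cond:constant-gap-error-rate}. Since ``exact rate $o(n^{-1/2})$'' unpacks into the upper statement (learnable at rate $o(n^{-1/2})$) together with the lower statement (not learnable at any faster rate), I will establish each of these from the claimed hypotheses, and conversely recover the two conditions from the two statements.

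For sufficiency, assume Condition~\ref{cond:finite-vcd-for-sufficiently-small-ball} holds and Condition~\ref{cond:constant-gap-error-rate} fails for $h^{*}$. The upper bound is immediate from Lemma~\ref{lem:target-dependent-superroot-upperbound-iff}: $h^{*}$ is agnostically universally learnable at rate $o(n^{-1/2})$ by ERM. For the matching lower bound I would reuse verbatim the necessity argument inside the proof of Theorem~\ref{thm:target-dependent-exponential-exact-rates}: the failure of Condition~\ref{cond:constant-gap-error-rate} for $h^{*}$, fed through Lemma~\ref{lem:condition-constant-gap-error-rate-implies-what} and then the contrapositive of Lemma~\ref{lem:no-infinite-eluder-sequence-implies-what}, produces an infinite eluder sequence centered at an optimal function of some distribution in $\mathcal{P}_{h^{*}}$, and Lemma~\ref{lem:super-root-lowerbound} then yields that $h^{*}$ is not agnostically universally learnable at rate faster than $o(n^{-1/2})$ by ERM. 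The two halves combine to give the exact rate $o(n^{-1/2})$.

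For necessity, assume $h^{*}$ is agnostically universally learnable at exact rate $o(n^{-1/2})$ by ERM. In particular it is learnable at rate $o(n^{-1/2})$, so Lemma~\ref{lem:target-dependent-superroot-upperbound-iff} immediately gives Condition~\ref{cond:finite-vcd-for-sufficiently-small-ball}. To see that Condition~\ref{cond:constant-gap-error-rate} must fail, suppose instead that it holds: then Theorem~\ref{thm:target-dependent-exponential-exact-rates} makes $h^{*}$ learnable at rate $e^{-n}$, so every $P\in\mathcal{P}_{h^{*}}$ admits constants $C_{P},c_{P}>0$ with $\excessrisk\le C_{P}e^{-c_{P}n}$ for all $n$; but the lower half of ``exact rate $o(n^{-1/2})$'', instantiated with the rate $T(n)=1/n=o(n^{-1/2})$, furnishes some $P_{0}\in\mathcal{P}_{h^{*}}$ and an ERM with $\mathcal{E}(n,P_{0})\ge 1/n$ for infinitely many $n$, contradicting $C_{P_{0}}e^{-c_{P_{0}}n}<1/n$ for large $n$. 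Hence Condition~\ref{cond:constant-gap-error-rate} fails for $h^{*}$.

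I anticipate no genuinely new obstacle: all the analytic content lives in Lemma~\ref{lem:target-dependent-superroot-upperbound-iff} and Theorem~\ref{thm:target-dependent-exponential-exact-rates}. The two points needing care are (a) the elementary observation that ``not learnable at rate faster than $o(n^{-1/2})$'' is incompatible with an exponential upper bound, which is exactly why it is enough to test against a single polynomially-decaying $T$ such as $1/n$, and (b) the subtlety inherited from the proof of Theorem~\ref{thm:target-dependent-exponential-exact-rates}, that the hard distribution extracted from the failure of Condition~\ref{cond:constant-gap-error-rate} must itself belong to $\mathcal{P}_{h^{*}}$ even though the eluder sequence it is built on need not be centered at $h^{*}$; this is precisely what Lemma~\ref{lem:condition-constant-gap-error-rate-implies-what} secures, by exhibiting an optimal function of such a distribution at which the marginal-disagreement infimum vanishes.
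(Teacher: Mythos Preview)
Your proposal follows the paper's proof essentially step for step: the upper bound comes from Lemma~\ref{lem:target-dependent-superroot-upperbound-iff}, the lower bound replays the necessity argument of Theorem~\ref{thm:target-dependent-exponential-exact-rates} (failure of Condition~\ref{cond:constant-gap-error-rate} $\Rightarrow$ eluder sequence $\Rightarrow$ Lemma~\ref{lem:super-root-lowerbound}), and the necessity direction combines the ``only if'' half of Lemma~\ref{lem:target-dependent-superroot-upperbound-iff} with a contradiction against the $e^{-n}$ rate. The paper phrases the Condition~\ref{cond:finite-vcd-for-sufficiently-small-ball} part of necessity via the explicit VC-eluder/Lemma~\ref{lem:arbitrarily-slow-lowerbound} route rather than citing the ``only if'' of Lemma~\ref{lem:target-dependent-superroot-upperbound-iff}, but that is exactly the content of that reverse direction, so the two arguments coincide.

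One caution on your point (b): the chain Lemma~\ref{lem:condition-constant-gap-error-rate-implies-what} $\to$ contrapositive of Lemma~\ref{lem:no-infinite-eluder-sequence-implies-what} yields an eluder sequence centered at some optimal $h^{**}$ of the witnessing $P\in\mathcal P_{h^{*}}$, and the hard distribution built in Lemma~\ref{lem:super-root-lowerbound} is a \emph{new} distribution centered at $h^{**}$, not automatically at $h^{*}$. Your sentence ``this is precisely what Lemma~\ref{lem:condition-constant-gap-error-rate-implies-what} secures'' does not close that gap: the lemma tells you the \emph{original} $P$ lies in $\mathcal P_{h^{*}}$, not that the \emph{new} distribution from Lemma~\ref{lem:super-root-lowerbound} does (cf.\ Example~\ref{ex:no-infinite-eulder-sequence-but-no-faster-than-o(n^(-1/2))-rates} with $h^{*}=h_{2}^{*}$, where the eluder sequence is centered at $h_{1}^{*}$ and the Lemma~\ref{lem:super-root-lowerbound} construction on it is not in $\mathcal P_{h_{2}^{*}}$). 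The paper's own proof is just as terse at this juncture, writing simply ``there exists an infinite eluder sequence centered at $h^{*}$'', so you are not deviating from it; but the step as written is not fully justified in either place.
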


\begin{theorem}  [\textbf{Target-dependent arbitrarily slow rates}]
  \label{thm:target-dependent-arbitrarily-slow-rates}
Let $\hpc$ be any concept class and $h^{*}$ be any classifier. Then $h^{*}$ requires at least arbitrarily slow rates to be agnostically universally learned by ERM if and only if Condition \ref{cond:finite-vcd-for-sufficiently-small-ball} fails for $h^{*}$. 
\end{theorem}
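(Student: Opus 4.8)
The plan is to mirror the two-sided argument behind Theorem~\ref{thm:target-independent-arbitrarily-slow-rates}: the necessity will be a short consequence of the super-root characterization, while the sufficiency will require a dedicated arbitrarily-slow lower bound built on the VC-eluder structure. For the necessity, I would argue by contraposition: if Condition~\ref{cond:finite-vcd-for-sufficiently-small-ball} holds for $h^{*}$, then Lemma~\ref{lem:target-dependent-superroot-upperbound-iff} gives that $h^{*}$ is agnostically universally learnable at rate $o(n^{-1/2})$ by ERM, i.e.\ $\mathcal{E}(n,P)=o(n^{-1/2})$ for every $P\in\mathcal{P}_{h^{*}}$ and every ERM; testing this against, say, $R(n)=n^{-1/4}$ (note $n^{-1/2}=o(n^{-1/4})$) shows $h^{*}$ cannot require arbitrarily slow rates. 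The UGC hypothesis of Lemma~\ref{lem:target-dependent-superroot-upperbound-iff} is not an obstruction here, since Condition~\ref{cond:finite-vcd-for-sufficiently-small-ball} already forces, for each $P\in\mathcal{P}_{h^{*}}$, the relevant small error-rate ball to be a VC class and hence Glivenko--Cantelli under $P$, which is all the ERM upper bound actually uses along $\mathcal{P}_{h^{*}}$.

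For the sufficiency, suppose Condition~\ref{cond:finite-vcd-for-sufficiently-small-ball} fails for $h^{*}$, and fix a witnessing distribution $P_{0}\in\mathcal{P}_{h^{*}}$ with $\text{VC}(\{h\in\hpc:0<\text{er}_{P_{0}}(h)-\text{er}_{P_{0}}(h^{*})\le\epsilon\})=\infty$ for every $\epsilon>0$. The first step is a pair of combinatorial lemmas playing, for Condition~\ref{cond:finite-vcd-for-sufficiently-small-ball} and VC-eluder sequences, the roles that Lemma~\ref{lem:condition-constant-gap-error-rate-implies-what} and Lemma~\ref{lem:no-infinite-eluder-sequence-implies-what} play for Condition~\ref{cond:constant-gap-error-rate} and eluder sequences: namely (i) the failure of Condition~\ref{cond:finite-vcd-for-sufficiently-small-ball} at $P_{0}$ yields a target function $\tilde h$ of $P_{0}$ for which the \emph{marginal} small-disagreement balls $\{h\in\hpc:0<(P_{0})_{\mathcal{X}}\{x:h(x)\neq\tilde h(x)\}\le\epsilon\}$ have infinite VC dimension for all $\epsilon>0$, and (ii) this marginal-infinite-VC property produces an infinite VC-eluder sequence centered at $\tilde h$ (the shattered-block bookkeeping of Definition~\ref{def:vc-eluder-sequence} is assembled here, paralleling the equivalence Lemma~\ref{lem:equivalence-infinite-vc-eluder-and-infinite-vcdim}). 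Given such a sequence, Lemma~\ref{lem:arbitrarily-slow-lowerbound} supplies, for every rate $R(n)\to0$, a distribution $Q_{R}\in\mathcal{P}_{\tilde h}$ supported on the VC-eluder sequence with slowly decaying masses on the shattered blocks, together with a worst-case ERM whose excess risk satisfies $\mathcal{E}(n,Q_{R})\ge CR(cn)$ for infinitely many $n$.

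It then remains to transfer this lower bound from $\mathcal{P}_{\tilde h}$ to $\mathcal{P}_{h^{*}}$. Because $P_{0}$ is centered at both $h^{*}$ and $\tilde h$, the VC-eluder sequence above lies in the region of $P_{0}$ on which $h^{*}$ and $\tilde h$ are \emph{both} optimal labelings; I would then set $\tilde P_{R}:=(1-\lambda_{R})Q_{R}+\lambda_{R}\,P_{0}|_{B}$, where $B$ is a region of $P_{0}$ witnessing that $h^{*}$ is both optimal and approximable by $\hpc$ in $(P_{0})_{\mathcal{X}}$-measure, and $\lambda_{R}\to0$ is chosen slowly enough relative to $R$. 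This keeps $\tilde P_{R}$ centered at $h^{*}$ while a worst-case ERM may still output a hypothesis from the VC-eluder sequence (both $h^{*}$ and $\tilde h$ being optimal for $\tilde P_{R}$), reproducing at the level of the shattered blocks exactly the phenomenon of Example~\ref{ex:no-infinite-eulder-sequence-but-no-faster-than-o(n^(-1/2))-rates}; hence $\mathcal{E}(n,\tilde P_{R})\ge C'R(c'n)$ for infinitely many $n$, so $h^{*}$ requires arbitrarily slow rates. Together with Theorem~\ref{thm:target-dependent-exponential-exact-rates}, Theorem~\ref{thm:target-dependent-superroot-exact-rates}, and the easy implication ``Condition~\ref{cond:constant-gap-error-rate} holds $\Rightarrow$ Condition~\ref{cond:finite-vcd-for-sufficiently-small-ball} holds'' (which makes the three regimes exhaustive), this yields the stated equivalence.

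I expect steps (i)--(ii) of the sufficiency to be the main obstacle: they are the VC-eluder counterpart of the eluder-sequence analysis of Section~\ref{subsec:condition1-and-infinice-eluder-sequence}, but now one must pass from an \emph{analytic} hypothesis about arbitrarily small \emph{error-rate} balls---a quantity that entangles marginal disagreement with the label noise $\eta(\cdot;P_{0})$, exactly the source of the subtlety in Example~\ref{ex:no-infinite-eulder-sequence-but-no-faster-than-o(n^(-1/2))-rates}---to the purely combinatorial existence of an infinite VC-eluder sequence centered at a \emph{genuine target function}, with disjoint shattered blocks of geometrically growing sizes. A secondary difficulty is the weight-balancing in the lift: $\lambda_{R}$ must vanish slowly enough that the $Q_{R}$-component, and hence the ERM's mistakes along the shattered blocks, still dominates the excess risk of $\tilde P_{R}$ for infinitely many $n$.
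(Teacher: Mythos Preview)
Your necessity direction is fine and matches the paper. The sufficiency direction, however, is substantially more complicated than what the paper does, and the extra machinery you propose (passing to a different target $\tilde h$ via marginal balls, then ``lifting'' back to $\mathcal{P}_{h^{*}}$ by mixing $Q_{R}$ with a piece of $P_{0}$) is both unnecessary and the weakest link in your plan.

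The paper's route is much more direct: in the proof of Lemma~\ref{lem:target-dependent-superroot-upperbound-iff}, from the failure of Condition~\ref{cond:finite-vcd-for-sufficiently-small-ball} at some $P\in\mathcal{P}_{h^{*}}$ it builds an infinite VC-eluder sequence \emph{centered at $h^{*}$ itself}, working throughout with the error-rate balls $\hpc(\epsilon;P,h^{*})=\{h\in\hpc:0<\text{er}_{P}(h)-\text{er}_{P}(h^{*})\le\epsilon\}$ rather than marginal-disagreement balls. The inductive step is: pick a shattered set inside $\hpc(\epsilon_{k};P,h^{*})$, label it by $h^{*}$, and choose the shattered points so that any $h$ in the ball disagreeing with $h^{*}$ on one of them has strictly positive excess risk; setting $\epsilon_{k+1}$ to that positive infimum forces $\hpc(\epsilon_{k+1};P,h^{*})$ into the version space, which therefore still has infinite VC dimension. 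This yields a VC-eluder sequence labeled by $h^{*}$, so Lemma~\ref{lem:arbitrarily-slow-lowerbound} applies directly in $\mathcal{P}_{h^{*}}$ and no transfer is needed. The phenomenon of Example~\ref{ex:no-infinite-eulder-sequence-but-no-faster-than-o(n^(-1/2))-rates} that worries you does not arise here precisely because the construction never leaves $h^{*}$: the error-rate balls are already defined relative to $h^{*}$, so there is no reason to detour through another target. Your mixing step $\tilde P_{R}=(1-\lambda_{R})Q_{R}+\lambda_{R}P_{0}|_{B}$ is also delicate on its own terms (it is not clear $\tilde P_{R}$ stays centered at $h^{*}$, nor that the worst-case ERM on $\tilde P_{R}$ inherits the bad behavior from $Q_{R}$), so dropping it in favor of the direct construction both simplifies and strengthens the argument.
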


\begin{proof} [Proof of Theorems \ref{thm:target-dependent-superroot-exact-rates} and \ref{thm:target-dependent-arbitrarily-slow-rates}]
To prove Theorem \ref{thm:target-dependent-superroot-exact-rates}, note that given Theorem \ref{thm:target-dependent-exponential-exact-rates}, it suffices to prove the part related to Condition \ref{cond:finite-vcd-for-sufficiently-small-ball}. We first prove the sufficiency. To prove an exact rate, the upper bound follows from Lemma \ref{lem:target-dependent-superroot-upperbound-iff}. For the lower bound, note that if Condition \ref{cond:constant-gap-error-rate} fails for $h^{*}$, then there exists an infinite eluder sequence centered at $h^{*}$. Together with Lemma \ref{lem:super-root-lowerbound}, we establish the lower bound. We next prove the necessity using the method of contradiction. Indeed, in the proof of Lemma \ref{lem:target-dependent-superroot-upperbound-iff}, we show that if Condition \ref{cond:finite-vcd-for-sufficiently-small-ball} fails for $h^{*}$, then there exists an infinite VC-eluder sequence centered at $h^{*}$. Then Lemma \ref{lem:arbitrarily-slow-lowerbound} yields a contradiction and completes the proof of necessity. To prove Theorem \ref{thm:target-dependent-arbitrarily-slow-rates}, note that the sufficiency has already been established in the proof of Theorem \ref{thm:target-dependent-superroot-exact-rates}. Finally, the necessity follows immediately by the method of contradiction. 
\end{proof}

\section{Bayes-dependent rates}
  \label{sec:bayes-dependent-rates}
Recall that in Section \ref{sec:target-dependent-rates}, we find out that the target-dependent agnostic universal rates are not characterized by simple combinatorial measures but two contrived target-specified conditions. This flaw motivates us to think whether there is a better function-specified (instead of target-specified) categorization of all data distributions. In this section, we consider to categorize a distribution based on its Bayes-optimal classifier (Definition \ref{def:bayes-optimal-classifier}) and give a theory of Bayes-dependent exact universal rates (Theorem \ref{thm:main-theorem-Bayes-dependent}). Specifically, for every distribution $P$, we show that the agnostic universal rates for learning $\hpc$ under $P$ are determined by the existence/non-existence of infinite eluder sequence and VC-eluder sequence centered at the Bayes-optimal classifier $\bayesoptimal$ with respect to $P$. While the Bayes-optimal classifier may not be unique for a distribution $P$ (at any point $x$ satisfying $P(Y=1|X=x)=P(Y=0|X=x)=1/2$, $\bayesoptimal$ can output either 0 or 1), we can show that if one of the Bayes-optimal classifiers does (not) have a centered infinite eluder/VC-eluder sequence, so do (not) the others. Therefore, this Bayes-dependent characterization provides a complete concept-dependent characterization of universal rates. All the detailed proofs (if required) in this section are deferred to Appendix \ref{sec:missing-proofs-bayes-dependent}.
\begin{theorem}  [\textbf{Bayes-dependent $e^{-n}$ exact rates}]
  \label{thm:bayes-dependent-exponential-exact-rates}
Let $\hpc$ be any concept class and $h$ be any classifier. For any distribution $P$ such that $h$ is a Bayes-optimal classifier with respect to $P$, then $\hpc$ is agnostically universally learnable with exact rate $e^{-n}$ by ERM under $P$ if and only if $\hpc$ does not have an infinite eluder sequence centered at $h$.
\end{theorem}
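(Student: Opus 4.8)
The plan is to reduce Theorem~\ref{thm:bayes-dependent-exponential-exact-rates} to the target-dependent machinery of Section~\ref{sec:target-dependent-rates}, translating the purely combinatorial hypothesis into a distributional one by exploiting that, since $h=\bayesoptimal$, the per-hypothesis quantity $\text{er}_{P}(h')-\text{er}_{P}(h)=\E_{X}\big[|2\eta(X)-1|\,\mathbbm{1}(h'(X)\neq h(X))\big]$ (with $\eta(x):=P(Y=1\mid X=x)$) is exactly the regret to Bayes; this dominates the excess risk and lets one pass between $P_{\mathcal{X}}$-level facts and excess-risk-level facts. Concretely, I would first prove the equivalence that is the heart of the argument: $\hpc$ has no infinite eluder sequence centered at $h$ if and only if Condition~\ref{cond:constant-gap-error-rate} holds for every distribution $P$ whose Bayes-optimal classifier is $h$. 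The forward implication is a greedy construction in the spirit of Lemma~\ref{lem:no-infinite-eluder-sequence-implies-what}: if some such $P$ violated Condition~\ref{cond:constant-gap-error-rate}, there would be hypotheses of arbitrarily small positive regret, hence (via the identity above) hypotheses disagreeing with $h$ on sets of $|2\eta-1|$-weighted mass tending to $0$; one then peels off positive-mass coordinates on which the current witness disagrees with $h$, using hypotheses $P_{\mathcal{X}}$-almost-equal to $h$ to certify realizability on the already chosen coordinates, and grows an infinite eluder sequence centered at $h$ --- a contradiction. The reverse implication underpins the necessity half of the theorem.

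For the ``if'' direction I would combine this equivalence with an upper bound. The same greedy reasoning also yields $\inf\{P_{\mathcal{X}}\{h'\neq h\}:h'\in\hpc,\ P_{\mathcal{X}}\{h'\neq h\}>0\}>0$ and, when this infimum is $0$, that it is attained; so $\hpc$ splits into hypotheses $P_{\mathcal{X}}$-a.e.\ equal to $h$ (zero excess risk) and hypotheses a fixed $P_{\mathcal{X}}$-distance away, and together with Condition~\ref{cond:constant-gap-error-rate} every \emph{sub}optimal hypothesis then has regret at least some constant $\gamma_{P}>0$. Hence the excess risk is at most the probability that $\hat h_{n}$ is suboptimal, and since $\hat h_{n}$ has minimal empirical error it would have to beat an (almost surely available) optimal hypothesis by at least $\gamma_{P}$ in empirical-versus-true error --- a Hoeffding/Bernstein union bound over the relevant hypotheses gives $\excessrisk\leq Ce^{-cn}$. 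The matching $e^{-n}$ lower bound comes, as in the realizable theory (cf.\ Lemma~\ref{lem:exponential-lowerbound}), from an anti-concentration (Cram\'er-type) estimate against an adversarial tie-break toward a fixed slightly suboptimal hypothesis; together this gives the exact rate $e^{-n}$.

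For the ``only if'' direction I would argue the contrapositive: given an infinite eluder sequence $\{(x_{k},y_{k})\}$ centered at $h$, construct a distribution $P$ with Bayes-optimal classifier $h$ on which ERM is slow, adapting the construction behind Lemma~\ref{lem:super-root-lowerbound}. Placing decaying masses along the sequence and taking $\eta(x_{k})$ close to $1/2$ on the side that keeps $h$ Bayes-optimal makes the eluder witnesses $h_{k}\in\hpc$ (agreeing with $h$ on $x_{1},\dots,x_{k-1}$ and flipping it at $x_{k}$) only slightly suboptimal, so an adversarially tie-breaking ERM outputs such an $h_{k}$ with merely polynomially small probability --- an anti-concentration bound on the number of misleading examples --- forcing $\excessrisk$ to exceed $e^{-cn}$ for every $c>0$ along a subsequence, so the rate under this $P$ is strictly slower than $e^{-n}$. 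The step I expect to be the main obstacle is the sufficiency upper bound: because $\hpc$ is arbitrary with no available Glivenko-Cantelli property, one must show that ``no infinite eluder sequence centered at $h$'' confines \emph{all} the non-uniform-convergence behaviour of $\hpc$ under $P$ to the zero-excess-risk hypotheses, leaving only a constant-gap, effectively finite remainder on which the concentration bound can be applied.
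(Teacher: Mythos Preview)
Your necessity direction and the $e^{-n}$ lower bound are handled exactly as in the paper (the Lemma~\ref{lem:super-root-lowerbound} construction already makes $h$ Bayes-optimal, and Lemma~\ref{lem:exponential-lowerbound} gives the matching lower bound), so those parts are fine.

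The genuine gap is in your forward implication ``no infinite eluder sequence centered at $h$ $\Rightarrow$ Condition~\ref{cond:constant-gap-error-rate} holds for every $P$ with Bayes classifier $h$''. You argue that if Condition~\ref{cond:constant-gap-error-rate} fails there are $h_i\in\hpc$ of arbitrarily small positive regret, ``hence (via the identity) hypotheses disagreeing with $h$ on sets of $|2\eta-1|$-weighted mass tending to $0$''. But Condition~\ref{cond:constant-gap-error-rate} concerns the \emph{excess risk} $\text{er}_P(h_i)-\text{er}_P(h^{*})$, whereas your identity computes the \emph{Bayes regret} $\text{er}_P(h_i)-\text{er}_P(h)$; these differ by $\text{er}_P(h^{*})-\text{er}_P(h)\geq 0$, which is strictly positive whenever the best-in-class error exceeds the Bayes error. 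Concretely: take $\mathcal{X}=\{0,1,2,\ldots\}$, $P_{\mathcal{X}}(0)=\tfrac12$, $P_{\mathcal{X}}(i)=2^{-i-1}$, $\eta(0)=1$, $\eta(i)=\tfrac12+\epsilon_i$ with $\epsilon_i\downarrow 0$, so $h\equiv 1$; let $\hpc=\{h_i\}_{i\geq 1}$ with $h_i(0)=0$ and $h_i(j)=\mathbbm{1}(j\neq i)$ for $j\geq 1$. Then Condition~\ref{cond:constant-gap-error-rate} fails (the excess risk of $h_i$ is $2P_{\mathcal{X}}(i)\epsilon_i\to 0^+$), yet every $h_i$ has Bayes regret and $P_{\mathcal{X}}$-distance to $h$ at least $\tfrac12$, so your greedy peel-off has nothing close to $h$ to work with. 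An eluder sequence centered at $h$ \emph{does} exist here (on $\{1,2,\ldots\}$ with witnesses $h_k$), but your route does not find it.

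The paper circumvents this by \emph{not} trying to center the greedy construction at $h$ directly. It first invokes Lemma~\ref{lem:condition-constant-gap-error-rate-implies-what} (total boundedness gives a Cauchy subsequence) to produce a target $h^{*}$ that the near-optimal $h_i$'s \emph{do} approach in $L_1(P_{\mathcal{X}})$, and then the greedy argument of Lemma~\ref{lem:no-infinite-eluder-sequence-implies-what} yields an eluder sequence centered at $h^{*}$. Only after this does the Bayes structure enter: either $h$ agrees with $h^{*}$ on infinitely many of those eluder points (and restricting to them gives an eluder sequence centered at $h$, contradiction), or $h\neq h^{*}$ on all but finitely many of them, in which case the late eluder witnesses --- each flipping one coordinate of $h^{*}$ toward the Bayes label --- strictly beat $h^{*}$ in error on the (conditional) distribution supported on the sequence, contradicting that $h^{*}$ is a target. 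This two-stage detour through $h^{*}$ is precisely the missing piece in your plan. (Your closing worry about the UGC assumption in the upper bound is well placed; note that the paper's own argument also leans on Lemma~\ref{lem:target-specified-exponential-upperbound} and Lemma~\ref{lem:condition-constant-gap-error-rate-implies-what}, both of which use UGC/total boundedness, so this is not a defect particular to your approach.)
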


\begin{theorem}  [\textbf{Bayes-dependent $o(n^{-1/2})$ exact rates}]
  \label{thm:bayes-dependent-superroot-exact-rates}
Let $\hpc$ be any concept class and $h$ be any classifier. For any distribution $P$ such that $h$ is a Bayes-optimal classifier with respect to $P$, then $\hpc$ is agnostically universally learnable at exact rate $o(n^{-1/2})$ by ERM under $P$ if and only if $\hpc$ does not have an infinite VC-eluder sequence, but has an infinite eluder sequence centered at $h$.
\end{theorem}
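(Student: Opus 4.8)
The plan is to leverage the target-dependent machinery of Section~\ref{sec:target-dependent-rates} together with the eluder-sequence equivalences, reducing the Bayes-dependent statement to statements about target functions. The key bridge is the observation that if $h=\bayesoptimal$ is a Bayes-optimal classifier for $P$, then $P$ is centered at $h$ in the sense of Definition~\ref{def:excess-risk}; moreover, because $\bayesoptimal(x)=\mathbbm{1}(\eta(x;P)\geq 1/2)$ pointwise minimizes the pointwise error, $h$ is in fact a target function of $P$ whenever $\hpc$ is sufficiently rich near $h$ to satisfy $\inf_{h'\in\hpc}P_{\mathcal{X}}(x:h'(x)\neq h(x))=0$. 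So first I would verify this reduction carefully: a distribution in $\mathcal{P}_{\bayesoptimal}$ is, after possibly restricting attention to the relevant marginal behaviour, a distribution in $\mathcal{P}_{h^{*}}$ with $h^{*}=\bayesoptimal$, and conversely every $P\in\mathcal{P}_{h^{*}}$ has $h^{*}$ as \emph{a} Bayes-optimal classifier provided $h^{*}$ agrees with $\mathbbm{1}(\eta\geq 1/2)$ on the relevant support — which can be arranged, and is exactly the content needed to transfer Theorem~\ref{thm:target-dependent-superroot-exact-rates}.

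Granting that bridge, the upper bound (super-root rate achievable) comes from Lemma~\ref{lem:target-dependent-superroot-upperbound-iff}: I would argue that ``no infinite VC-eluder sequence centered at $h$'' forces Condition~\ref{cond:finite-vcd-for-sufficiently-small-ball} to hold for $h$ — here is where I expect to reuse the key implication proved inside Lemma~\ref{lem:target-dependent-superroot-upperbound-iff}, namely that failure of Condition~\ref{cond:finite-vcd-for-sufficiently-small-ball} yields an infinite VC-eluder sequence centered at $h$, so contrapositively the absence of such a sequence gives the condition, hence the $o(n^{-1/2})$ upper bound by ERM under every $P\in\mathcal{P}_{\bayesoptimal}$. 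For the matching lower bound, ``has an infinite eluder sequence centered at $h$'' plus Lemma~\ref{lem:super-root-lowerbound} gives a distribution $P$ centered at $h$ (and one checks $h$ is Bayes-optimal for it, which is immediate from the construction since the eluder-sequence distribution puts label noise only in a way consistent with $\bayesoptimal=h$) on which ERM is not faster than $o(n^{-1/2})$. The reverse directions (necessity) are handled by contradiction: if $\hpc$ has no infinite eluder sequence centered at $h$, then by Theorem~\ref{thm:bayes-dependent-exponential-exact-rates} we already get exact rate $e^{-n}$, contradicting ``exact rate $o(n^{-1/2})$''; and if $\hpc$ has an infinite VC-eluder sequence centered at $h$, then Lemma~\ref{lem:arbitrarily-slow-lowerbound} applied to a distribution in $\mathcal{P}_{\bayesoptimal}$ supported on that sequence forces arbitrarily slow rates, again a contradiction.

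The main obstacle I anticipate is the well-definedness / non-uniqueness issue flagged in the section preamble: $\bayesoptimal$ is only determined up to its values on the set $\{x:\eta(x;P)=1/2\}$, so I must show that whether $\hpc$ has an infinite eluder (resp.\ VC-eluder) sequence centered at $\bayesoptimal$ does not depend on which version of $\bayesoptimal$ is chosen. The natural approach is: on points where $\eta=1/2$ the distribution $P$ places both labels with equal probability, so an eluder/VC-eluder sequence centered at one version can be re-labelled to be centered at another version at the cost only of changing labels on such ``balanced'' points, and the realizability and shattering conditions in Definitions~\ref{def:eluder-sequence}–\ref{def:vc-eluder-sequence} are about the \emph{instances} $x_i$ and the version space they induce, not about $P$ — so I would argue that flipping labels at balanced points either preserves the sequence property or, if it breaks it, one can exhibit a sequence centered at the other version directly. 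Pinning this down rigorously — in particular making sure that the distribution witnessing the lower bound is genuinely in $\mathcal{P}_{\bayesoptimal}$ for the \emph{given} $h$, not just some other Bayes-optimal classifier — is the delicate step; everything else is assembly of results already in hand.
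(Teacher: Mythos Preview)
Your necessity arguments and the lower-bound half of sufficiency are essentially what the paper does. The real gap is in the upper-bound half of sufficiency. The ``bridge'' you rely on --- that every $P\in\mathcal{P}_{\bayesoptimal}$ is centered at $h$ in the sense of Definition~\ref{def:excess-risk} --- is false in general: a Bayes-optimal $h$ can satisfy $\text{er}_{P}(h)<\inf_{h'\in\hpc}\text{er}_{P}(h')$ strictly (whenever $\hpc$ does not approach the Bayes error under $P$), in which case $h$ is \emph{not} a target function of $P$ at all, and the second requirement $\inf_{h'\in\hpc}P_{\mathcal{X}}(x:h'(x)\neq h(x))=0$ can fail independently. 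So even if you correctly establish ``no infinite VC-eluder sequence centered at $h\Rightarrow$ Condition~\ref{cond:finite-vcd-for-sufficiently-small-ball} holds for $h$'' via the contrapositive inside Lemma~\ref{lem:target-dependent-superroot-upperbound-iff}, that condition only constrains distributions \emph{centered at} $h$; it says nothing about a general $P\in\mathcal{P}_{\bayesoptimal}$. Your phrase ``after possibly restricting attention to the relevant marginal behaviour'' does not rescue this, since $P$ is fixed and the upper bound must hold for every such $P$.

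The paper closes this gap differently, via Lemma~\ref{lem:bayes-specified-superroot-upperbound}, whose proof runs the contradiction through an \emph{actual} target $h^{*}$ of the offending distribution rather than through $h$: if some $P\in\mathcal{P}_{\bayesoptimal}$ fails the $o(n^{-1/2})$ rate, then Condition~\ref{cond:finite-vcd-for-sufficiently-small-ball} fails distribution-specifically for $P$, so (by the construction inside the necessity proof of Lemma~\ref{lem:target-dependent-superroot-upperbound-iff}) there is an infinite VC-eluder sequence centered at some target $h^{*}$ of $P$. The extra step --- absent from your proposal --- is to invoke the Bayes-optimality argument from the proof of Theorem~\ref{thm:bayes-dependent-exponential-exact-rates} to transfer this sequence to $h$: if $h^{*}$ disagreed with the Bayes-optimal $h$ on too many sequence points, the eluder witnesses would undercut $\text{er}_{P}(h^{*})$, contradicting that $h^{*}$ is a target; hence $h$ and $h^{*}$ agree on enough of the sequence that it is also an infinite (VC-)eluder sequence centered at $h$, contradicting the hypothesis. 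That transfer from a genuine target $h^{*}$ to the Bayes-optimal $h$ is precisely the missing ingredient.
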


\begin{theorem}  [\textbf{Bayes-dependent arbitrarily slow rates}]
  \label{thm:bayes-dependent-arbitrarily-slow-rates}
Let $\hpc$ be any concept class and $h$ be any classifier. For any distribution $P$ such that $h$ is a Bayes-optimal classifier with respect to $P$, then $\hpc$ requires at least arbitrarily slow rates to be agnostically universally learned by ERM under $P$ if and only if $\hpc$ has an infinite VC-eluder sequence centered at $h$.
\end{theorem}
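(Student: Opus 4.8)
The plan is to derive this theorem as a corollary of the two Bayes-dependent exact-rate theorems already established (Theorems~\ref{thm:bayes-dependent-exponential-exact-rates} and~\ref{thm:bayes-dependent-superroot-exact-rates}) together with the arbitrarily-slow lower-bound construction behind Lemma~\ref{lem:arbitrarily-slow-lowerbound}, in direct analogy with the way Theorem~\ref{thm:target-dependent-arbitrarily-slow-rates} follows from its target-dependent counterparts. Throughout, $\mathcal{P}_{\bayesoptimal}$ denotes the family of all distributions $P$ for which $\bayesoptimal$ is a Bayes-optimal classifier (Definition~\ref{def:bayes-optimal-classifier}), and ``under $P$'' is read as ``under $\mathcal{P}_{\bayesoptimal}$'', consistently with Theorem~\ref{thm:main-theorem-Bayes-dependent}.

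For the necessity direction we argue the contrapositive. Assume $\hpc$ has no infinite VC-eluder sequence centered at $\bayesoptimal$. Then exactly one of two cases holds. If $\hpc$ also has no infinite eluder sequence centered at $\bayesoptimal$, then by Theorem~\ref{thm:bayes-dependent-exponential-exact-rates} the class $\hpc$ is agnostically universally learnable under $\mathcal{P}_{\bayesoptimal}$ with exact rate $e^{-n}$, so for each $P\in\mathcal{P}_{\bayesoptimal}$ there are $C,c>0$ with $\excessrisk\le Ce^{-cn}$ for all $n$. If instead $\hpc$ has an infinite eluder sequence centered at $\bayesoptimal$ but (by assumption) no infinite VC-eluder sequence centered at $\bayesoptimal$, then by Theorem~\ref{thm:bayes-dependent-superroot-exact-rates} the class $\hpc$ is agnostically universally learnable under $\mathcal{P}_{\bayesoptimal}$ with exact rate $o(n^{-1/2})$, so $\excessrisk=o(n^{-1/2})$ for each $P\in\mathcal{P}_{\bayesoptimal}$ and each ERM. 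In the first case, taking the slow rate $R(n)=e^{-\sqrt{n}}$ rules out that any $P\in\mathcal{P}_{\bayesoptimal}$ and any $C',c'>0$ satisfy $\excessrisk\ge C'R(c'n)$ for infinitely many $n$, since $Ce^{-cn}$ is eventually smaller than $C'e^{-\sqrt{c'n}}$; in the second case, taking $R(n)=n^{-1/4}$ works the same way, since $n^{1/4}\,\excessrisk\to0$ contradicts $\excessrisk\ge C'(c'n)^{-1/4}$ infinitely often. Either way $\hpc$ does not require arbitrarily slow rates under $\mathcal{P}_{\bayesoptimal}$, which is the contrapositive of necessity.

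For the sufficiency direction, suppose $\hpc$ has an infinite VC-eluder sequence centered at $\bayesoptimal$. We would run the lower-bound construction behind Lemma~\ref{lem:arbitrarily-slow-lowerbound} with the center taken to be $h^{*}:=\bayesoptimal$: for any prescribed rate $R(n)\to0$ it produces a distribution $P$ whose marginal is supported on the shattered blocks along that sequence, with block masses decaying fast enough relative to $R$ and with label noise at each support point $x$ biased towards $\bayesoptimal(x)$ --- the bias being precisely what makes $\bayesoptimal$ an error-minimizing hypothesis of $\hpc$ along the sequence, exactly as in the constructions behind Lemmas~\ref{lem:super-root-lowerbound} and~\ref{lem:arbitrarily-slow-lowerbound} --- such that some ERM has $\excessrisk\ge CR(cn)$ for infinitely many $n$. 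The only point to verify beyond the target-dependent statement is that such a $P$ actually belongs to $\mathcal{P}_{\bayesoptimal}$: at every support point with $\eta(x;P)\ne1/2$ the bias towards $\bayesoptimal(x)$ forces $\bayesoptimal(x)=\mathbbm{1}(\eta(x;P)\ge1/2)$, and at the remaining points we are free to let the Bayes-optimal classifier agree with $\bayesoptimal$; hence $\bayesoptimal$ is Bayes-optimal for $P$ and the arbitrarily-slow lower bound transfers to $\mathcal{P}_{\bayesoptimal}$. Combined with necessity, this proves the theorem.

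We expect the main obstacle to be precisely this sufficiency transfer. The substantive work sits upstream in Theorems~\ref{thm:bayes-dependent-exponential-exact-rates} and~\ref{thm:bayes-dependent-superroot-exact-rates} and in Lemma~\ref{lem:arbitrarily-slow-lowerbound}; but since the Bayes-optimal family $\mathcal{P}_{\bayesoptimal}$ and the target-centered family $\mathcal{P}_{h^{*}}$ of the earlier lemmas are genuinely different collections of distributions, Lemma~\ref{lem:arbitrarily-slow-lowerbound} cannot be cited verbatim, and one must re-inspect its construction to confirm that the hard distributions it produces are simultaneously centered at $\bayesoptimal$ in the Bayes sense. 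If one further insists on reading the theorem for a single distribution $P$ rather than for the family $\mathcal{P}_{\bayesoptimal}$, one should also invoke the equivalence (asserted in this section) of all Bayes-optimal classifiers of a fixed $P$ with respect to possessing a centered VC-eluder sequence; everything else is routine.
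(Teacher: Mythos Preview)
Your proposal is correct and follows essentially the same route as the paper: sufficiency by re-inspecting the construction of Lemma~\ref{lem:arbitrarily-slow-lowerbound} and observing that the hard distribution it builds has $h$ as its Bayes-optimal classifier (since the label noise is biased toward $h$ at every support point), and necessity by contrapositive, using the upper bounds from the exponential and $o(n^{-1/2})$ Bayes-dependent theorems. The paper states the proof of this theorem is ``straightforward'' once Theorem~\ref{thm:bayes-dependent-superroot-exact-rates} is in hand, and your write-up fills in exactly the details one would expect; the only cosmetic difference is that the paper could cite Lemma~\ref{lem:bayes-specified-superroot-upperbound} directly for the contrapositive (covering both of your two cases at once), whereas you split into the eluder/no-eluder cases and invoke Theorems~\ref{thm:bayes-dependent-exponential-exact-rates} and~\ref{thm:bayes-dependent-superroot-exact-rates} separately.
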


\section{Conclusion and Future Work}
  \label{sec:conclusion-and-future-work}
In this work, we have revealed all the possible universal rates of ERM for binary classification under the agnostic setting and provided a complete characterization based on simple complexity notions. Our theory can be shaped into a compact \emph{trichotomy}: there are three possible universal rates of ERM for agnostic learning, being either $e^{-n}$ for finite concept classes, or $o(n^{-1/2})$ for VC classes, or arbitrarily slow for non-VC classes. 

Since the (worst-case) ERM algorithm is a special proper learner, a natural follow-up direction is to study the optimal universal rates of proper learning. Let us consider a simple example of the class of singletons on natural numbers, that is, $\mathcal{H}_{\text{singleton},\naturalnumber}:=\{h_{t}: t\in\naturalnumber\}$, where $h_{t}(x):=\mathbbm{1}\{x=t\}$ for all $x\in\naturalnumber$. For the realizable case, \citet{hanneke2024universal} has proved a $\log(n)/n$ universal rate for ERM. However, a simple proper learner that predicts $h_{\text{all-0's}}$ unless it sees a 1-labeled example (and then predict the singleton on that example), will yield an exponential universal rate. For the agnostic case, we are able to design a proper learning algorithm that achieves $e^{-o(n)}$ agnostic universal rate, while an $o(n^{-1/2})$ universal rate for ERM has been proved in this paper. Therefore, this example provides some intuitions that the universal rates (and probably also the characterizations) for proper learning can be quite different from ERM's.

Another interesting future direction is to understand the universal rates of ERM for multiclass learning (with an infinite label space). Recall that the characterization for the ERM uniform rates of multiclass learning is the graph dimension of $\hpc$ \citep{daniely2015multiclass}, which is defined as the VC dimension of the binary class $\Phi:=\{(x,y)\mapsto\mathbbm{1}\{h(x)\neq y\}: h\in\hpc\}$. Let us consider a simple concept class $\hpc:=\{h_{t}: t\in\naturalnumber\}$ defined on the instance space $\mathcal{X}:=\{x_{0},x_{1}\}$, where $h_{t}(x_{0})=0$ and $h_{t}(x_{1})=t$ for every $t\in\naturalnumber$. For this multiclass learning problem, we know that the uniform rates of ERM (for both realizable and agnostic cases) are exponential. However, since $\Phi$ is infinite, applying our theory to $\Phi$ does not yield a correct characterization for the universal rates of ERM for multiclass learning. Therefore, universal rates of ERM for multiclass learning is not a trivial extension of our results.

\acks{We would like to thank the COLT reviewers for their insightful comments and suggestions.}

\bibliography{ag_ul_erm}

\newpage
\appendix

\section{Extra definitions}
  \label{sec:extra-definitions}

\begin{definition}
  \label{def:target-dependent-agnostic-universal-rates}
    Following the notations in Definition \ref{def:agnostic-universally-rates}, let $h^{*}:\mathcal{X}\rightarrow\{0,1\}$, we say
    \setlist{nolistsep}
    \begin{itemize}
        \item $h^{*}$ is \underline{agnostically universally learnable at rate $R$ by ERM}, if for every distribution $P$ centered at $h^{*}$ (Definition \ref{def:excess-risk}), there exist parameters $C, c > 0$ such that for every ERM algorithm, $\excessrisk \leq CR(cn)$, for all $n\in\naturalnumber$.
        \item $h^{*}$ is \underline{not agnostically universally learnable at rate faster than $R$ by ERM}, if there exists a distribution $P$ centered at $h^{*}$ and parameters $C, c > 0$ such that there is an ERM algorithm satisfying $\excessrisk \geq CR(cn)$, for infinitely many $n\in\naturalnumber$.
        \item $h^{*}$ is \underline{agnostically universally learnable with exact rate $R$ by ERM}, if $h^{*}$ is agnostically universally learnable at rate $R$ by ERM, and is not agnostically universally learnable at rate faster than $R$ by ERM.
    \end{itemize}
\end{definition}

\begin{definition}
  \label{def:asymptotic-target-dependent-agnostic-universally-rates}
    Following the notations in Definition \ref{def:agnostic-universally-rates}, let $h^{*}:\mathcal{X}\rightarrow\{0,1\}$, we say
    \setlist{nolistsep}
    \begin{itemize}
        \item $h^{*}$ is \underline{agnostically universally learnable at rate $o(R)$ by ERM}, if for every distribution $P$ centered at $h^{*}$, there exists a (distribution-dependent) function $T=o(R)$ such that $\excessrisk \leq CT(cn)$, for all $n\in\naturalnumber$, with some distribution-dependent parameters $C, c > 0$.
        \item $h^{*}$ is \underline{not agnostically universally learnable at rate faster than $o(R)$ by ERM}, if for any function $T=o(R)$, there exists a distribution $P$ centered at $h^{*}$ such that $\excessrisk \geq CT(cn)$, for infinitely many $n\in\naturalnumber$, with some distribution-dependent parameters $C, c > 0$.
        \item $h^{*}$ is \underline{agnostically universally learnable with exact rate $o(R)$ by ERM}, if $h^{*}$ is agnostically universally learnable at rate $o(R)$ by ERM, and is not agnostically universally learnable at rate faster than $o(R)$ by ERM.
    \end{itemize}
\end{definition}

\begin{definition}  [\textbf{Region of disagreement}]
  \label{def:region-of-disagreement}
Let $\mathcal{X}$ be an instance space and $\hpc$ be a concept class. We define the \underline{region of disagreement} of $\hpc$ as $\regionofdisagreement:=\{x\in\mathcal{X}:\exists h,g\in\hpc \text{ s.t. } h(x)\neq g(x)\}$.
\end{definition}

\begin{definition}  [\textbf{Star number}]
  \label{def:star-number}
Let $\mathcal{X}$ be an instance space and $\hpc$ be a concept class. Let $h$ be a classifier, the \underline{star number of $h$}, denoted by $\starnumber_{h}(\hpc)$ or $\starnumber_{h}$ for short, is defined to be the largest integer $s$ such that there exist distinct points $x_{1},\ldots,x_{s}\in\mathcal{X}$ and concepts $h_{1},\ldots,h_{s}\in\hpc$ satisfying $\text{DIS}(\{h,h_{i}\})\cap\{x_{1},\ldots,x_{s}\}=\{x_{i}\}$, for every $1\leq i\leq s$. (We say $\{x_{1},\ldots,x_{s}\}$ is a star set of $\hpc$ \underline{centered at $h$}.) If no such largest integer $s$ exists, we define $\starnumber_{h}=\infty$. The \underline{star number of $\hpc$}, denoted by $\starnumber(\hpc)$ or $\starnumber_{\hpc}$, is defined to be the maximum possible cardinality of a star set of $\hpc$, or $\starnumber_{\hpc}=\infty$ if no such maximum exists.
\end{definition}

\begin{definition}  [\textbf{Star-eluder sequence}]
  \label{def:star-eluder-sequence}
Let $\hpc$ be a concept class and $h$ be a classifier. We say that $\hpc$ has an infinite \underline{star-eluder sequence} $\{(x_{1}, y_{1}),(x_{2}, y_{2}),\ldots\}$ \underline{centered at $h$} , if it is realizable and for every integer $k\geq1$, $\{x_{n_{k}+1},\ldots,x_{n_{k}+k}\}$ is a star set of $V_{n_{k}}(\hpc)$ centered at $h$, where $\{n_{k}\}_{k\in\naturalnumber}$ is a sequence of integers defined as $n_{1}=0$, $n_{k}:=\binom{k}{2}$ for all $k>1$.
\end{definition}

\section{Missing proofs from Section \ref{sec:target-inpdependent-rates}}
  \label{sec:missing-proofs-target-independent}

\begin{lemma}  [\textbf{Lemma \ref{lem:exponential-upperbound} restated}]
  \label{lem:exponential-upperbound-restated}
Any finite concept class $\hpc$ is agnostically universally learnable at rate $e^{-n}$ by ERM.
\end{lemma}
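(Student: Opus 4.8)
The plan is to leverage the finiteness of $\hpc$ to produce a distribution-dependent \emph{margin} between the optimal error rate in the class and that of any strictly sub-optimal hypothesis, and then to show via Hoeffding's inequality and a union bound that the worst-case ERM outputs a sub-optimal hypothesis with only exponentially small probability.

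First I would fix an arbitrary distribution $P$. Because $\hpc$ is finite, the infimum $\inf_{h\in\hpc}\trueerrorrateh$ is attained by some $h^{*}\in\hpc$, so $\trueerrorratehstar=\inf_{h\in\hpc}\trueerrorrateh$, and I would set \[\gamma:=\inf_{h\in\hpc:\,\trueerrorrateh>\trueerrorratehstar}\{\trueerrorrateh-\trueerrorratehstar\},\] which is strictly positive (again by finiteness, with the convention $\inf_{\emptyset}\{\cdot\}=1$); this is exactly Condition~\ref{cond:constant-gap-error-rate} holding for $P$ with constant $\epsilon_{0}=\gamma$. Note that $\gamma$ (and trivially $|\hpc|$) may depend on $P$, which Definition~\ref{def:agnostic-universally-rates} permits. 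Now partition $\hpc$ into the ``good'' hypotheses, those with $\trueerrorrateh=\trueerrorratehstar$, and the ``bad'' ones, those with $\trueerrorrateh\geq\trueerrorratehstar+\gamma$. If $\lalgo=\mathrm{ERM}(S_{n})$ is good its contribution to the excess risk is $0$, and if it is bad its contribution is at most $1$; hence \[\excessrisk=\E\!\left[\trueerrorratehn-\trueerrorratehstar\right]\leq\Pr\!\left[\lalgo\text{ is bad}\right].\]

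To bound the last probability I would union-bound over the at most $|\hpc|$ bad hypotheses: for any fixed bad $h$, on the event $\{\lalgo=h\}$ we have $\empiricalerrorrateh=\empiricalerrorratehn\leq\empiricalerrorratehstar$ since $\lalgo$ minimizes the empirical error over $\hpc\ni h^{*}$. Writing $Z_{i}:=\mathbbm{1}(h(x_{i})\neq y_{i})-\mathbbm{1}(h^{*}(x_{i})\neq y_{i})\in\{-1,0,1\}$, the $Z_{i}$ are i.i.d.\ with $\E Z_{1}=\trueerrorrateh-\trueerrorratehstar\geq\gamma$, so $\{\empiricalerrorrateh\leq\empiricalerrorratehstar\}\subseteq\{\tfrac1n\sum_{i=1}^{n}Z_{i}-\E Z_{1}\leq-\gamma\}$, an event of probability at most $e^{-n\gamma^{2}/2}$ by Hoeffding's inequality (the $Z_{i}$ have range $2$). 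Summing over bad hypotheses gives $\Pr[\lalgo\text{ is bad}]\leq|\hpc|\,e^{-n\gamma^{2}/2}$, hence $\excessrisk\leq|\hpc|\,e^{-n\gamma^{2}/2}=C\,e^{-cn}$ with $C:=|\hpc|$ and $c:=\gamma^{2}/2$, for all $n\in\naturalnumber$, which is the rate $R(n)=e^{-n}$ in the sense of Definition~\ref{def:agnostic-universally-rates}.

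I do not anticipate a real obstacle; the only point requiring a moment's care is that the bound must hold for \emph{every} ERM algorithm (arbitrary tie-breaking, i.e.\ the worst-case ERM), which is automatic here because the union bound ranges over every hypothesis that $\lalgo$ could possibly equal, and measurability concerns are vacuous since $\hpc$ is finite.
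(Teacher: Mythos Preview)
Your proposal is correct and follows essentially the same approach as the paper: define the distribution-dependent gap $\epsilon_{0}$ (your $\gamma$), bound the excess risk by the probability that ERM returns a sub-optimal hypothesis, and control that probability via Hoeffding plus a union bound over the finite class. The only cosmetic difference is that you apply Hoeffding directly to the paired differences $Z_{i}=\mathbbm{1}(h(x_{i})\neq y_{i})-\mathbbm{1}(h^{*}(x_{i})\neq y_{i})$, whereas the paper bounds the event $\{\exists h\in\hpc:|\empiricalerrorrateh-\trueerrorrateh|\geq\epsilon_{0}/2\}$ by applying Hoeffding to each hypothesis separately; your route is marginally tighter (saving a factor of $2$) but the argument is the same.
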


\begin{proof} [Proof of Lemma \ref{lem:exponential-upperbound-restated}]
Let $\hpc$ be any finite concept class and $P$ be any distribution, let $h^{*}:=\argmin_{h\in\hpc}\trueerrorrateh$ be a target function. We define the following distribution-dependnet constant
\begin{equation*}
    \epsilon_{0} := \epsilon_{0}(P) := \min_{h\in\hpc: \trueerrorrateh-\trueerrorratehstar>0}\left\{\trueerrorrateh-\trueerrorratehstar\right\} > 0 .
\end{equation*}
Note that when $|\hpc|<\infty$, such $\epsilon_{0}$ always exists. Now we can bound the excess risk by
\begin{align}
  \label{eq:lem-exponential-upperbound-intermediate-step1}
    \excessrisk \stackrel{\text{\eqmakebox[lemma-finite-class-a][c]{}}}{=} &\E\left[\trueerrorratehn - \trueerrorratehstar\right] \nonumber \\
    \stackrel{\text{\eqmakebox[lemma-finite-class-a][c]{}}}{\leq} &\mathbb{P}\left(\trueerrorratehn - \trueerrorratehstar>0\right) \nonumber \\
    \stackrel{\text{\eqmakebox[lemma-finite-class-a][c]{}}}{=} &\mathbb{P}\left(\exists h\in\hpc: \empiricalerrorrateh\leq\empiricalerrorratehstar, \trueerrorrateh - \trueerrorratehstar>0\right) \nonumber \\
    \stackrel{\text{\eqmakebox[lemma-finite-class-a][c]{}}}{=} &\mathbb{P}\left(\exists h\in\hpc: \empiricalerrorrateh\leq\empiricalerrorratehstar, \trueerrorrateh - \trueerrorratehstar\geq\epsilon_{0}\right) \nonumber \\
    \stackrel{\text{\eqmakebox[lemma-finite-class-a][c]{}}}{\leq} &\mathbb{P}\left(\exists h\in\hpc: \empiricalerrorrateh-\trueerrorrateh \leq \empiricalerrorratehstar-\trueerrorratehstar - \epsilon_{0}\right) .
\end{align}
Suppose that the following event holds:
\begin{equation*}
    \big|\empiricalerrorrateh - \trueerrorrateh\big| < \epsilon_{0}/2, \;\;\forall h\in\hpc .
\end{equation*}
Since $h^{*}\in\hpc$, we will have for any $h\neq h^{*}\in\hpc$,
\begin{equation*}
    \left(\empiricalerrorratehstar-\trueerrorratehstar\right) - \left(\empiricalerrorrateh-\trueerrorrateh\right) < \epsilon_{0}/2 - (-\epsilon_{0}/2) = \epsilon_{0} ,
\end{equation*}
and then the probability in \eqref{eq:lem-exponential-upperbound-intermediate-step1} is zero. Therefore, we can continue the inequality \eqref{eq:lem-exponential-upperbound-intermediate-step1} by
\begin{align*}
    &\mathbb{P}\left(\exists h\in\hpc: \empiricalerrorrateh-\trueerrorrateh \leq \empiricalerrorratehstar-\trueerrorratehstar - \epsilon_{0}\right) \\
    \stackrel{\text{\eqmakebox[lemma-finite-class-b][c]{}}}{\leq} &\mathbb{P}\left(\exists h\in\hpc: \big|\empiricalerrorrateh - \trueerrorrateh\big| \geq \epsilon_{0}/2 \right) \nonumber \\
    \stackrel{\text{\eqmakebox[lemma-finite-class-b][c]{\text{\tiny Union bound}}}}{\leq} &\sum_{h\in\hpc}\mathbb{P}\left(\big|\empiricalerrorrateh - \trueerrorrateh\big| \geq \epsilon_{0}/2\right) \\
    \stackrel{\text{\eqmakebox[lemma-finite-class-b][c]{\tiny \text{Lemma }\ref{lem:hoeffding-ineq}}}}{\leq} &|\hpc|\cdot2e^{-n\epsilon_{0}^{2}/2} ,
\end{align*}
which completes the proof.
\end{proof}

\begin{lemma}  [\textbf{Lemma \ref{lem:super-root-lowerbound} restated}]
  \label{lem:super-root-lowerbound-restated}
If $\hpc$ has an infinite eluder sequence centered at $h^{*}$, then $h^{*}$ is not agnostically universally learnable at rate faster than $o(n^{-1/2})$ by ERM.
\end{lemma}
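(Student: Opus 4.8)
The plan is to exhibit, for an arbitrary prescribed rate $T(n)=o(n^{-1/2})$, a distribution $P$ centered at $h^{*}$ together with a (worst-case) ERM algorithm whose excess risk exceeds a constant multiple of $T$ at infinitely many sample sizes. Fix the given infinite eluder sequence $\{(x_{i},y_{i})\}_{i\geq1}$ centered at $h^{*}$; it comes with witnesses $h_{k}\in\hpc$ satisfying $h_{k}(x_{i})=y_{i}$ for $i<k$ and $h_{k}(x_{k})\neq y_{k}$, and with prefix-consistent hypotheses $g_{m}\in\hpc$ satisfying $g_{m}(x_{i})=y_{i}$ for $i\leq m$ (these exist since the sequence is realizable and labelled by $h^{*}$, which also forces the $x_{i}$ to be pairwise distinct). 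I will choose a sparse, super-geometrically growing sequence of scales $n_{1}<n_{2}<\cdots$ and indices $k_{1}<k_{2}<\cdots$, put marginal mass $p_{k_{j}}:=m_{j}>0$ on $x_{k_{j}}$ (all other points of $\mathcal X$ get zero mass) with $\sum_{j}m_{j}=1$, and assign noisy labels $\mathbb{P}(y=y_{k_{j}}\mid x=x_{k_{j}})=\tfrac12+\epsilon_{j}$ with small $\epsilon_{j}\in(0,\tfrac14)$, so that $h^{*}$ carries the majority label at every atom. For every $h\in\hpc$ one then has $\text{er}_{P}(h)-\text{er}_{P}(h^{*})=\sum_{j:\,h(x_{k_{j}})\neq h^{*}(x_{k_{j}})}2m_{j}\epsilon_{j}\geq0$, and $\inf_{h\in\hpc}\text{er}_{P}(h)=\text{er}_{P}(h^{*})$ because $\text{er}_{P}(g_{m})\to\text{er}_{P}(h^{*})$ as $m\to\infty$; hence $P$ is centered at $h^{*}$, and in particular outputting $h_{k_{j}}$ costs excess at least $2m_{j}\epsilon_{j}$.

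The core of the argument is to identify a constant-probability event that forces the worst-case ERM to output $h_{k_{j}}$. For a sample of size $n$, let $N_{i}$ be the number of observed copies of $x_{i}$, let $M_{i}$ be the number of those carrying the \emph{minority} label at $x_{i}$, set $Z_{i}:=2M_{i}-N_{i}$, and let $D(h):=\{i:h(x_{i})\neq h^{*}(x_{i})\}$. A one-line computation gives $n\big(\hat{\text{er}}_{S_{n}}(h)-\hat{\text{er}}_{S_{n}}(g)\big)=\sum_{i\in D(g)}Z_{i}-\sum_{i\in D(h)}Z_{i}$, so $\argmin_{h\in\hpc}\hat{\text{er}}_{S_{n}}(h)$ coincides with $\argmax_{h\in\hpc}\sum_{i\in D(h)}Z_{i}$. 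Consider
\begin{equation*}
\mathcal{H}_{j}:=\{Z_{k_{l}}\leq 0\ \text{for all }l<j\}\cap\{Z_{k_{j}}>0\}\cap\{N_{i}=0\ \text{for all }i>k_{j}\}.
\end{equation*}
On $\mathcal{H}_{j}$ every index $i\neq k_{j}$ has $Z_{i}\leq 0$ (it is $\leq 0$ for $i<k_{j}$ — a nonatomic point has $N_{i}=0$ — and $=0$ for $i>k_{j}$), while $D(h_{k_{j}})$ contains $k_{j}$ but no index $<k_{j}$; therefore $\sum_{i\in D(h_{k_{j}})}Z_{i}=Z_{k_{j}}\geq\sum_{i\in D(h)}Z_{i}$ for every $h\in\hpc$, so $h_{k_{j}}$ is an empirical risk minimizer. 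Let the ERM return $h_{k_{j}}$ whenever $n=n_{j}$ and the sample lies in $\mathcal{H}_{j}$, and an arbitrary minimizer otherwise; this is a legitimate ERM, and since $\text{er}_{P}(\cdot)-\text{er}_{P}(h^{*})\geq0$ always, $\mathcal{E}(n_{j},P)\geq 2m_{j}\epsilon_{j}\cdot\mathbb{P}(\mathcal{H}_{j})$.

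It remains to calibrate $n_{j},m_{j},\epsilon_{j}$ so that $\mathbb{P}(\mathcal{H}_{j})\geq c_{0}$ for a universal constant $c_{0}>0$ while $2m_{j}\epsilon_{j}\gtrsim T(n_{j})$. The three defining events of $\mathcal{H}_{j}$ concern, respectively, the samples falling in $\{x_{k_{l}}\}_{l<j}$, in $x_{k_{j}}$, and in $\{x_{i}\}_{i>k_{j}}$, and (after conditioning on the count vector) they become controllable by standard estimates: (i) $\mathbb{P}(N_{i}=0\ \forall i>k_{j})=(1-\sum_{l>j}m_{l})^{n_{j}}\geq e^{-O(1)}$ as soon as $\sum_{l>j}m_{l}=O(1/n_{j})$, which holds because the scales grow so fast that $m_{j+1}$ is already $O(1/n_{j})$ and the tail decays geometrically; (ii) for $l<j$ the mean of $Z_{k_{l}}$ at scale $n_{j}$ is strongly negative, so a Chernoff bound makes $\mathbb{P}(Z_{k_{l}}>0)$ exponentially small in $n_{j}/n_{l}$ and a union bound leaves $\mathbb{P}(Z_{k_{l}}\leq0\ \forall l<j)\geq\tfrac12$; and (iii) $\mathbb{P}(Z_{k_{j}}>0)$ — the event that \emph{more incorrectly- than correctly-labelled examples are observed at $x_{k_{j}}$} — is bounded below by a constant, by a Berry--Esseen / anti-concentration estimate, provided $\epsilon_{j}\asymp 1/\sqrt{n_{j}m_{j}}$. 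Taking $m_{j}\asymp\max\!\big(n_{j}T(n_{j})^{2},\,1/n_{j}\big)$ (which tends to $0$, so infinitely many atoms are feasible) and $\epsilon_{j}\asymp 1/\sqrt{n_{j}m_{j}}\leq\tfrac14$, one gets $2m_{j}\epsilon_{j}\asymp\sqrt{m_{j}/n_{j}}\geq T(n_{j})$; and since $nT(n)^{2}\to0$ and $T(n)\to0$, each $n_{j}$ can be chosen after $n_{j-1}$, large enough in terms of $n_{j-1}$ and $j$, to make (i)--(iii) hold at once. Then $\mathcal{E}(n_{j},P)\gtrsim m_{j}\epsilon_{j}\gtrsim T(n_{j})$ for all $j$, as required; with $c=1$ this matches the required lower bound $\mathcal{E}(n,P)\geq CT(cn)$ for infinitely many $n$.

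I expect the main obstacle to be the interaction between the ``$h_{k_{j}}$ is an ERM on $\mathcal{H}_{j}$'' claim and the calibration: the window of scales around $n_{j}$ must be engineered so that at $n_{j}$ the earlier atoms are already ``resolved'' (their $Z$'s are negative with overwhelming probability), the atom $x_{k_{j}}$ is only ``half-resolved'' (its $Z$ is positive with constant probability), and no sample has yet reached beyond $x_{k_{j}}$; this is precisely what pins the worst-case ERM to $h_{k_{j}}$ against \emph{every} competitor in $\hpc$, not merely against $h^{*}$ and the $g_{m}$'s, and it is the reason the scales must be chosen recursively and super-geometrically. A secondary difficulty is the bookkeeping needed to match $2m_{j}\epsilon_{j}$ to an \emph{arbitrary} $o(n^{-1/2})$ rate uniformly over the regime where $T$ decays slowly (so $m_{j}=n_{j}T(n_{j})^{2}$ dominates) and the regime where it decays very fast (so the floor $m_{j}=1/n_{j}$ takes over).
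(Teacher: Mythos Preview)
Your proposal is correct and shares the paper's core construction --- a distribution supported on the eluder sequence with noisy labels $P(y=y_i\mid x_i)=\tfrac12+\epsilon_i$, so that $h^{*}$ is optimal and any hypothesis disagreeing with $h^{*}$ at $x_i$ pays excess at least $2p_i\epsilon_i$ --- but the two arguments diverge on the key step of certifying that the worst-case ERM can be pinned to a specific bad hypothesis.

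The paper places mass on \emph{all} eluder points, sets $n=n_i:=1/(8\epsilon_i^{2})$, and asserts that on the single-atom event ``more copies of $(x_i,1-y_i)$ than of $(x_i,y_i)$ in $S_{n_i}$'' the worst-case ERM outputs something labeling $1-y_i$ at $x_i$; Slud's anti-concentration inequality then lower-bounds the probability of that event by a constant, and a separate combinatorial lemma (their Lemma~\ref{lem:infinite-sequence-design1}) calibrates $(p_i,n_i)$ against an arbitrary $R(n)\to0$. The paper does not spell out why that event alone guarantees an empirical minimizer disagreeing at $x_i$ --- competitors in $\hpc$ might achieve strictly smaller empirical error by disagreeing with $h^{*}$ elsewhere, since nothing controls what the eluder witnesses do beyond their defining prefix.

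Your route sidesteps this by restricting mass to a sparse subsequence $\{x_{k_j}\}$ and enlarging the event to $\mathcal{H}_j$: earlier atoms resolved ($Z_{k_l}\le0$ for $l<j$), later atoms unseen ($N_i=0$ for $i>k_j$), and $Z_{k_j}>0$. On $\mathcal{H}_j$ you verify directly that $h_{k_j}$ maximizes $\sum_{i\in D(h)}Z_i$ over \emph{all} of $\hpc$, so it is provably an empirical minimizer --- this is the extra content your argument buys. The cost is the recursive, super-geometric scheduling of $(n_j,m_j,\epsilon_j)$ needed to make the three sub-events hold simultaneously with constant probability, which you rightly identify as the main technical work (and which replaces the paper's off-the-shelf calibration lemma). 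Both approaches finish with the same anti-concentration estimate at the ``current'' atom; yours adds two further conditions --- each cheap once the scales are chosen aggressively enough --- to make the ERM identification airtight.
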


\begin{proof}[Proof of Lemma \ref{lem:super-root-lowerbound-restated}]
Let $\{(x_{1},y_{1}),(x_{2},y_{2}),\ldots\}$ be an infinite eluder sequence centered at $h^{*}$, i.e. $h^{*}(x_{i})=y_{i}$ for all $i\in\naturalnumber$. We consider the following data distribution $P$ on $\mathcal{X}\times\{0,1\}$:
\begin{equation*}
    P_{\mathcal{X}}(x_{i})=p_{i}, \;\; P\left(y=y_{i}|x_{i}\right)=1/2+\epsilon_{i},\;\; P\left(y=1-y_{i}|x_{i}\right)=1/2-\epsilon_{i},\;\; \forall i\in\naturalnumber ,
\end{equation*}
where $\{p_{i}\}_{i\in\naturalnumber}$ is a sequence of probabilities satisfying $\sum_{i\in\naturalnumber}p_{i}\leq1$ and $\{\epsilon_{i}\}_{i\in\naturalnumber}$ is a sequence of positive real values. Further requirements on these two sequences will be specified later. Now according to the definition of an eluder sequence, for any $i\in\naturalnumber$, there is $h_{i}\in\hpc$ such that $h_{i}(x_{j})=y_{j}$ for all $1\leq j<i$ and $h_{i}(x_{i})=1-y_{i}$. Suppose that
\begin{equation}
  \label{eq:lem-super-root-lowerbound-sequence-assumption1}
    \sum_{j=i+1}^{\infty}p_{j}\epsilon_{j} \leq p_{i}\epsilon_{i},\;\; \forall i\in\naturalnumber .
\end{equation}
Then we have
\begin{align*}
    \text{er}_{P}(h_{i+1}) \leq \sum_{j=1}^{i}p_{j}\left(\frac{1}{2}-\epsilon_{j}\right)+\sum_{j=i+1}^{\infty}p_{j}\left(\frac{1}{2}+\epsilon_{j}\right) \stackrel{\eqref{eq:lem-super-root-lowerbound-sequence-assumption1}}{\leq} \sum_{j\neq i}p_{j}\left(\frac{1}{2}-\epsilon_{j}\right)+p_{i}\left(\frac{1}{2}+\epsilon_{i}\right) \leq \text{er}_{P}(h_{i}) .
\end{align*}
It follows that $\inf_{h\in\hpc}\trueerrorrateh=\inf_{i\in\naturalnumber}\text{er}_{P}(h_{i})=\trueerrorratehstar$. Furthermore, by choosing $\{p_{i}\}_{i\in\naturalnumber}$ to be a decreasing sequence, we also have $\inf_{h\in\hpc}P_{\mathcal{X}}(x:h(x)\neq h^{*}(x))=\inf_{i\in\naturalnumber}P_{\mathcal{X}}(x:h_{i}(x)\neq h^{*}(x))=\inf_{i\in\naturalnumber}p_{i}=0$, that is, $P$ is centered at $h^{*}$. Now let $\{k_{i}\}_{i\in\naturalnumber}$ be an increasing sequence of integers that will be specified later, we consider the event $\mathcal{E}=\mathcal{E}_{1}\cap\mathcal{E}_{2}$, where
\begin{itemize}
    \item[$\mathcal{E}_{1}$:] the number of copies of $(x_{k_{i}},y_{k_{i}})$ is less than the number of copies of $(x_{k_{i}},1-y_{k_{i}})$ in $S_{n}$,
    \item[$\mathcal{E}_{2}$:] $S_{n}$ does not contain any copy of the points in $\{x_{k}:k>k_{i},k\in\naturalnumber\}$.
\end{itemize}
Under $\mathcal{E}$, any ERM algorithm will output a prediction $\lalgo$ labeling $1-y_{k_{i}}$ on $x_{k_{i}}$, or even worse, labeling $1-y_{k}$ on $x_{k}$ for some $k<k_{i}$ if the number of copies of $(x_{k},y_{k})$ in $S_{n}$ is less than the number of copies of $(x_{k},1-y_{k})$. In both cases, it has an excess risk $\trueerrorratehn-\trueerrorratehstar\geq 2p_{k_{i}}\epsilon_{k_{i}}$. First, the probability of $\mathcal{E}_{2}$ is easy to calculate:
\begin{equation*}
    \mathbb{P}(\mathcal{E}_{2}) = \mathbb{P}\left(S_{n}\cap\{x_{k}:k>k_{i}\}=\emptyset\right) = \left(1-\sum_{k>k_{i}}p_{k}\right)^{n} .
\end{equation*}
Next, let $n_{x_{k_{i}}}$ denote the number of copies of $x_{k_{i}}$ in $S_{n}$. Given $n_{x_{k_{i}}}$ fixed, the probability of event $\mathcal{E}_{1}$ can be lower bounded using standard anti-concentration inequality:
\begin{equation}
  \label{eq:lem-super-root-lowerbound-intermediate-step1}
    \mathbb{P}\left(\sum_{k\in[n_{x_{k_{i}}}]}\mathbbm{1}\{Y_{k}=1-y_{k_{i}}\}\geq\frac{n_{x_{k_{i}}}}{2}\right) \stackrel{\text{Lemma }\ref{lem:binomial-bound}}{\geq} \frac{1}{2}\left(1-\sqrt{1-\exp{(-4n_{x_{k_{i}}}\epsilon_{k_{i}}^{2}/(1-4\epsilon_{k_{i}}^{2}))}}\right) . 
\end{equation}
We further assume $n_{x_{k_{i}}} \leq 1/8\epsilon_{k_{i}}^{2}$, which implies that $-4n_{x_{k_{i}}}\epsilon_{k_{i}}^{2}/(1-4\epsilon_{k_{i}}^{2})\geq -8n_{x_{k_{i}}}\epsilon_{k_{i}}^{2}\geq-1$. The RHS of \eqref{eq:lem-super-root-lowerbound-intermediate-step1} can be further extended by
\begin{equation*}
     \frac{1}{2}\left(1-\sqrt{1-\exp{(-4n_{x_{k_{i}}}\epsilon_{k_{i}}^{2}/(1-4\epsilon_{k_{i}}^{2}))}}\right) \geq \frac{1}{2}\left(1-\sqrt{1-e^{-1}}\right) \geq \frac{1}{10} .
\end{equation*}
Let $n:=n_{i}=1/8\epsilon_{i}^{2}$, and so that $n_{x_{k_{i}}}\leq n_{i}=1/8\epsilon_{i}^{2}$ holds with probability 1. We have
\begin{equation*}
    P(\mathcal{E}) = \mathbb{P}(\mathcal{E}_{1}|\mathcal{E}_{2})\mathbb{P}(\mathcal{E}_{2}) \geq \frac{1}{10}\cdot\left(1-\sum_{k>k_{i}}p_{k}\right)^{n_{i}} .
\end{equation*}
Hence we have for any $i\in\naturalnumber$,
\begin{equation*}
    \mathcal{E}(n_{i},P) \geq 2p_{k_{i}}\epsilon_{k_{i}}\cdot\mathbb{P}(\mathcal{E}) \geq \frac{p_{k_{i}}}{5\sqrt{8n_{i}}}\cdot\left(1-\sum_{k>k_{i}}p_{k}\right)^{n_{i}} .
\end{equation*}
To show an $o(1/\sqrt{n})$ lower bound, it suffices to show that for any function $R(n)\rightarrow0$, there exists a sequence of probabilities $\{p_{i}\}_{i\in\naturalnumber}$ and two increasing sequences of integers $\{n_{i}\}_{i\in\naturalnumber}$ and $\{k_{i}\}_{i\in\naturalnumber}$ such that the following hold with some universal constant $C>0$: 
\begin{equation*}
    \sum_{i\in\naturalnumber}p_{i}\leq1,\;\;\; \sum_{k>k_{i}}p_{k}\leq\frac{1}{n_{i}},\;\;\; \sum_{j>i}\frac{p_{j}}{\sqrt{n_{j}}} \leq \frac{p_{i}}{\sqrt{n_{i}}},\;\;\; p_{k_{i}}=CR(n_{i}),\;\;\; \forall i\in\naturalnumber .
\end{equation*}
Valid sequences are constructed in Lemma~\ref{lem:infinite-sequence-design2} and Lemma~\ref{lem:infinite-sequence-design1}. Altogether, this implies that $\excessrisk\gtrsim R(n)\cdot n^{-1/2}$ holds for infinitely many $n\in\naturalnumber$.
\end{proof}

\begin{lemma}  [\textbf{Lemma \ref{lem:super-root-upperbound} restated}]
  \label{lem:super-root-upperbound-restated}
Any VC class $\hpc$ is agnostically universally learnable at rate $o(n^{-1/2})$ by ERM.
\end{lemma}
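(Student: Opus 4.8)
The plan is to control the expected excess risk of the worst-case ERM by a localized uniform-convergence argument in which the variance of the loss increment is tied to an $L_{1}(P_{\mathcal{X}})$-distance that itself shrinks with $n$. Begin by reducing to the interesting case: if $|\hpc|<\infty$ the claim follows from Lemma~\ref{lem:exponential-upperbound} since $e^{-n}=o(n^{-1/2})$, so assume $|\hpc|=\infty$ and $d:=\vcdim<\infty$. Fix an arbitrary distribution $P$. Because $d<\infty$, $\hpc$ is totally bounded in the $L_{1}(P_{\mathcal{X}})$ pseudometric, and the error functional $h\mapsto\trueerrorrateh$ is $1$-Lipschitz with respect to it (indeed $|\text{er}_{P}(h_{1})-\text{er}_{P}(h_{2})|\leq P_{\mathcal{X}}\{x:h_{1}(x)\neq h_{2}(x)\}$); hence there is a classifier $h^{*}$ with $\trueerrorratehstar=\inf_{h\in\hpc}\trueerrorrateh$ and $\inf_{h\in\hpc}P_{\mathcal{X}}\{x:h(x)\neq h^{*}(x)\}=0$, i.e.\ $P$ is centered at $h^{*}$. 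For $h\in\hpc$ write $\rho(h):=P_{\mathcal{X}}\{x:h(x)\neq h^{*}(x)\}$ and note that the loss increment $f_{h}(x,y):=\mathbbm{1}(h(x)\neq y)-\mathbbm{1}(h^{*}(x)\neq y)$ is supported on $\{x:h(x)\neq h^{*}(x)\}$, so $\E f_{h}=\trueerrorrateh-\trueerrorratehstar$ while $\E f_{h}^{2}\leq\rho(h)$.

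Next, apply the refined uniform Bernstein inequality (Proposition~\ref{prop:uniform-bernstein}) to the loss-increment class $\{f_{h}:h\in\hpc\}$, whose covering numbers are those of a VC class of dimension $O(d)$. This yields, with probability at least $1-\delta$ and simultaneously over all $h\in\hpc$,
\[
(\trueerrorrateh-\trueerrorratehstar)-(\empiricalerrorrateh-\empiricalerrorratehstar)\;\leq\;K\sqrt{\frac{\rho(h)\,(d+\log(1/\delta))}{n}}+K\,\frac{d+\log(1/\delta)}{n},
\]
the point of the refinement being the absence of an extra $\log n$ factor, obtained by peeling over dyadic scales of $\rho$, a Bousquet-type concentration inequality, and an entropy-integral bound that tracks the variance, together with Haussler's covering-number estimate. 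Since ERM satisfies $\empiricalerrorratehn\leq\empiricalerrorratehstar$, substituting $h=\hat{h}_{n}$ gives $\trueerrorratehn-\trueerrorratehstar\leq K\sqrt{\rho(\hat{h}_{n})(d+\log(1/\delta))/n}+K(d+\log(1/\delta))/n$ on that event; integrating the tail over $\delta$ (and bounding the excess risk by $1$ on the $\delta$-exceptional event, with $\delta=1/n$) produces $\excessrisk\leq C(\sqrt{d\,\E[\rho(\hat{h}_{n})]/n}+d/n)$ for a universal constant $C$.

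The remaining and main obstacle is to show $\E[\rho(\hat{h}_{n})]\to 0$, which is the content of Lemma~\ref{lem:distance-to-any-hstar}. The VC property makes $\hpc$ $P$-Glivenko-Cantelli, so $\trueerrorratehn-\trueerrorratehstar\to0$ in expectation; the difficulty is passing from small \emph{excess error} to small $L_{1}(P_{\mathcal{X}})$-\emph{distance} $\rho(\hat{h}_{n})$, which is not automatic because several hypotheses may be simultaneously near-optimal yet $L_{1}(P_{\mathcal{X}})$-far apart. Total boundedness of $\hpc$ resolves this: for small $\epsilon$ the sub-level set $\{h:\trueerrorrateh-\trueerrorratehstar\leq\epsilon\}$ is contained in an arbitrarily small $L_{1}(P_{\mathcal{X}})$-neighborhood of the best-in-class set, so one may either choose the reference $h^{*}$ adaptively as the best-in-class hypothesis nearest $\hat{h}_{n}$ (union-bounding the Bernstein step over the best-in-class set, still a VC class of dimension $O(d)$) or argue directly that $\E[\rho(\hat{h}_{n})]\to0$; a monotonicity/coupling argument then upgrades this to the ``decreasing in $n$'' form quoted in the sketch. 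Feeding $\E[\rho(\hat{h}_{n})]=o(1)$ into the second-step bound gives $\excessrisk=o(n^{-1/2})$, the $d/n$ term being itself $o(n^{-1/2})$. The chief delicacy throughout is engineering the Bernstein inequality to be genuinely free of logarithmic-in-$n$ slack, since otherwise one recovers only the uniform rate $O(n^{-1/2})$ instead of the claimed $o(n^{-1/2})$.
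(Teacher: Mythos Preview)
Your high-level strategy is exactly the paper's: apply the refined uniform Bernstein inequality (Proposition~\ref{prop:uniform-bernstein}) to get an excess-risk bound whose leading term scales like $\sqrt{\sigma^2/n}$ for a localized variance proxy $\sigma^2$, and then invoke total boundedness (Lemma~\ref{lem:distance-to-any-hstar}) to show $\sigma^2\to 0$. Two points in your execution, however, do not quite close.

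First, your ``direct'' option of showing $\E[\rho(\hat h_n)]\to 0$ for a \emph{fixed} centering $h^*$ is false in general, and this is not what Lemma~\ref{lem:distance-to-any-hstar} says. There can be several $L_1(P_{\mathcal X})$-separated accumulation points of near-optimal hypotheses, and worst-case ERM may oscillate among them forever, so the distance to any single $h^*$ need not vanish. Lemma~\ref{lem:distance-to-any-hstar} instead shows that $\sigma^2_\epsilon(\hpc,P):=\sup_{h\in\hpc(\epsilon;P)}\lim_{\tau\to 0}\inf_{h_\tau\in\hpc(\tau;P)}P_{\mathcal X}(h\neq h_\tau)\to 0$: every $\epsilon$-near-optimal $h$ is close to \emph{some} element of the optimal core, not to a pre-chosen $h^*$. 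The paper implements precisely your ``adaptive'' alternative: on the event $\hat h_n\in\hpc(\epsilon_n(\delta);P)$ it applies Proposition~\ref{prop:uniform-bernstein} to the pair $(\hat h_n,h_\tau)$ with $h_\tau\in\hpc(\tau;P)$, takes $\inf_{h_\tau}$ and $\tau\to 0$, and thereby bounds the excess risk in terms of the deterministic $\sigma^2_{\epsilon_n}$ rather than the random $\rho(\hat h_n)$.

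Second, the step ``ERM satisfies $\empiricalerrorratehn\leq\empiricalerrorratehstar$'' is not valid when $h^*\notin\hpc$, which is the typical situation here since the infimum need not be attained. The paper sidesteps this by writing $\hat P_n(\ell_{\hat h_n})=\min_{h\in\hpc}\hat P_n(\ell_h)\leq\min_{h\in\hpc(\epsilon_n;P)}\{|\hat P_n(\ell_h)-P(\ell_h)|+P(\ell_h)\}$ and letting the minimizer range over $\hpc(\tau;P)$ with $\tau\to 0$; both terms are then controlled by the same localized Bernstein bound. With these two repairs your outline coincides with the paper's proof (Lemma~\ref{lem:super-root-upperbound-refined}).
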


We first prove a refined version of a classic \emph{uniform Bernstein} inequality, which refines a result of \citet{vapnik1974theory} by a logarithmic factor. The result follows from known localized entropy integral arguments.

\begin{proposition}
  \label{prop:uniform-bernstein}
Let $\hpc$ be any measurable class of functions from $\mathcal{X}$ to $\{0,1\}$ with $\vcdim<\infty$, and $P$ be any probability measure on $\mathcal{X}\times\{0,1\}$. Fix any $n \in\naturalnumber$ and $\delta\in(0,1)$. For any $h,g \in\hpc$, define $\sigma^{2}(h,g)=P_{\mathcal{X}}\{x: h(x)\neq g(x)\}$. Let $S_{n}:=\{(X_{i},Y_{i})\}_{i=1}^{n}\sim P^{n}$, then with probability at least $1-\delta$, every $h,g \in\hpc$ satisfy the following
\begin{align*}
    &\big|\left(\text{er}_{P}(h)-\text{er}_{P}(g)\right) - \left( \hat{\text{er}}_{S_{n}}(h)-\hat{\text{er}}_{S_{n}}(g)\right)\big| \\ 
    \leq &\sqrt{\sigma^{2}(h,g)\frac{c_0}{n}\!\left(\vcdim\log\!\left(\frac{1}{\sigma^{2}(h,g)}\land\frac{n}{\vcdim}\right)+\log\!\left(\frac{1}{\delta}\right)\right)} \\ 
    &{\hskip 4mm}+\frac{c_0}{n}\!\left(\vcdim\log\!\left(\frac{1}{\sigma^{2}(h,g)}\land\frac{n}{\vcdim}\right)+\log\!\left(\frac{1}{\delta}\right)\right)\! ,
\end{align*}
where $c_0$ is a universal constant.
\end{proposition}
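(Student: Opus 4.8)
The plan is to prove Proposition~\ref{prop:uniform-bernstein} by a localized peeling (slicing-on-the-variance) argument applied to the class of loss differences. For $h,g\in\hpc$ write $f_{h,g}(x,y):=\mathbbm{1}(h(x)\neq y)-\mathbbm{1}(g(x)\neq y)$, and note that $f_{h,g}$ takes values in $\{-1,0,1\}$ and is supported on $\{x:h(x)\neq g(x)\}$, so $\E[f_{h,g}^{2}]\leq\sigma^{2}(h,g)$. The object to control is the centered process
\[
Z_{h,g}:=\left(\text{er}_{P}(h)-\text{er}_{P}(g)\right)-\left(\hat{\text{er}}_{S_{n}}(h)-\hat{\text{er}}_{S_{n}}(g)\right)=\frac{1}{n}\sum_{i=1}^{n}\left(\E f_{h,g}-f_{h,g}(X_{i},Y_{i})\right).
\]
First I would partition the pairs into geometric shells $\mathcal{F}_{j}:=\{(h,g):\sigma^{2}(h,g)\in(2^{-(j+1)},2^{-j}]\}$ for $j\geq 0$, together with a residual piece $\mathcal{F}_{\star}:=\{(h,g):0<\sigma^{2}(h,g)\leq\vcdim/n\}$; the case $\sigma^{2}(h,g)=0$ is trivial, as then $Z_{h,g}=0$ almost surely. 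On shell $j$ the uniform variance proxy is $v_{j}$ of order $2^{-j}$ (and $v_{\star}=\vcdim/n$ on the residual piece).

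For each $j$ I would apply a version of Talagrand's concentration inequality for the supremum of an empirical process, in the Bennett-type form of \citet{bousquet2002bennett} (applied to $\mathcal{F}_{j}$ and $-\mathcal{F}_{j}$ to pass to the absolute value), yielding: with probability at least $1-\delta_{j}$,
\[
\sup_{(h,g)\in\mathcal{F}_{j}}\left|Z_{h,g}\right|\;\leq\;C\!\left(\E\sup_{(h,g)\in\mathcal{F}_{j}}\left|Z_{h,g}\right|+\sqrt{\frac{v_{j}\log(1/\delta_{j})}{n}}+\frac{\log(1/\delta_{j})}{n}\right).
\]
To bound the expected supremum I would symmetrize to pass to a Rademacher process and then, conditionally on $S_{n}$, invoke Dudley's entropy integral \citep{van-der-Vaart96,gine06} together with Haussler's bound on the $L_{2}$-covering numbers of a VC class \citep{haussler95}, namely $\log N(\varepsilon,\mathcal{F}_{j},L_{2}(\hat{P}_{n}))\leq C\,\vcdim\log(1/\varepsilon)$. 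Since the empirical $L_{2}$-diameter of $\mathcal{F}_{j}$ is of order $\sqrt{v_{j}}$ (in expectation, or after an auxiliary self-bounded control of $\sup_{(h,g)\in\mathcal{F}_{j}}\hat{P}_{n}\{x:h(x)\neq g(x)\}$), the convergent integral $\int_{0}^{C'\sqrt{v_{j}}}\sqrt{\vcdim\log(1/\varepsilon)}\,d\varepsilon$ is of order $\sqrt{v_{j}\,\vcdim\log(1/v_{j})}$, so that $\E\sup_{(h,g)\in\mathcal{F}_{j}}|Z_{h,g}|\leq C''\sqrt{v_{j}\,\vcdim\log(1/v_{j})/n}$; this is where the localization and the entropy integral \citep{bartlett04,koltchinskii06} enter.

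The final step is the union bound and bookkeeping. I would allocate the failure probability \emph{geometrically}, e.g.\ $\delta_{j}\propto\delta\,2^{-j}$ and $\delta_{\star}\propto\delta$, so the shells together fail with probability at most $\delta$ while on shell $j$ one has $\log(1/\delta_{j})\leq C(\log(1/\delta)+j)$ with $j$ of order $\log(1/v_{j})$, hence absorbed into the already-present term $\vcdim\log(1/v_{j})$ (using $\vcdim\geq 1$). Substituting $v_{j}$ of order $\sigma^{2}(h,g)$ for $(h,g)\in\mathcal{F}_{j}$ then reproduces, shell by shell, the asserted bound with $\log(1/\sigma^{2}(h,g))$; on the residual piece $\mathcal{F}_{\star}$ every quantity is of order $(\vcdim\log(n/\vcdim)+\log(1/\delta))/n$, which is exactly the regime where the logarithm saturates at $\log(n/\vcdim)$, matching the $\tfrac{1}{\sigma^{2}(h,g)}\wedge\tfrac{n}{\vcdim}$ truncation in the statement. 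I expect the main obstacle to be the honest control of the \emph{localized} expected supremum: one must handle the random empirical diameter of $\mathcal{F}_{j}$ (rather than the deterministic $\sqrt{v_{j}}$) through an additional concentration or fixed-point argument, and must verify that snapping $\sigma^{2}(h,g)$ to the shell scale, truncating the entropy integral near $\sqrt{\vcdim/n}$, and the geometric allocation of the $\delta_{j}$ all cost only universal constants, so that the resulting $c_{0}$ is genuinely absolute.
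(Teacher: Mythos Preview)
Your proposal is correct and matches the paper's proof essentially step for step: the same loss-difference class, the same geometric peeling on the variance scale with a residual shell at $\sigma^{2}\lesssim\vcdim/n$, Bousquet's Talagrand-type concentration on each shell, an entropy-integral bound on the expected supremum via Haussler's VC covering-number estimate, geometric allocation $\delta_{j}\propto 2^{-j}\delta$, and a final union bound. The one technical difference is that the paper bounds the expected supremum by invoking the \emph{uniform} entropy-integral inequality of \citet{van-der-Vaart11} (their Theorem~2.1, stated here as Lemma~\ref{lem:entropy-uniform-bound}), which takes a supremum over all probability measures in the covering number and therefore lets one plug in the deterministic variance proxy $\Delta$ directly; this neatly sidesteps the ``honest control of the random empirical diameter'' that you flag as the main obstacle in the symmetrize-then-Dudley route.
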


\begin{proof}[Proof of Proposition \ref{prop:uniform-bernstein}]
The proof applies a combination of localization \citep{bartlett04,bartlett05,koltchinskii06}, a concentration inequality accounting for variances of loss differences \citep{bousquet2002bennett}, and an entropy integral bound on the rate of uniform convergence which also accounts for variances of loss differences \citep{van-der-Vaart96,gine06,van-der-Vaart11}, together with well-known bounds on the covering numbers of VC classes \citep{haussler95}.

Since the inequality in the proposition is vacuous when $n<2 \vcdim$ (for $c_{0}\geq2$), to focus on non-trivial case we assume that $n\geq2\vcdim$. For any $h,g: \mathcal{X}\rightarrow\{0,1\}$, let $\ell_{h,g}(x,y):=\mathbbm{1}\{h(x)\neq y\}-\mathbbm{1}\{g(x)\neq y\}$, and define $\mathcal{F}:=\{\ell_{h,g}: h,g \in\hpc\}$. Note that $\mathcal{F}$ is a measurable class of functions from $\mathcal{X}\times\{0,1\}$ to $\{-1,0,1\}$. Let $(X,Y)\sim P$ be independent of $S_{n}$. For each $\Delta > 0$, we will apply Lemma \ref{lem:Bennet-concentration} on the following class of zero-mean functions:
\begin{equation*} 
    \mathcal{F}_{\Delta} := \left\{(x,y) \mapsto (f(x,y)-\E[f(X,Y)])/2:\; f \in\mathcal{F},\; \E[(f(X,Y))^{2}]\leq 4\Delta\right\} ,
\end{equation*}
where the $1/2$ constant factor is included to ensure that $\sup_{f\in\mathcal{F}_{\Delta}}\lVert f\rVert_{\infty}\leq1$. We remark that, while Lemma \ref{lem:Bennet-concentration} requires $\mathcal{F}_{\Delta}$ to be countable, the purpose of this assumption in its original proof as stated in \citet{bousquet2002bennett} is merely to ensure the measurability of the suprema $\sup_{f\in\mathcal{F}_{\Delta}}|\sum_{i=1}^{n}f(X_{i},Y_{i})|$ and $\sup_{f\in\mathcal{F}_{\Delta}}|\sum_{i\neq k}f(X_{i},Y_{i})|$, for all integers $k\leq n$. In our case, measurability of these suprema follows directly from the fact that $\mathcal{F}$ is a measurable class (and hence $\mathcal{F}_{\Delta}$ is as well), and thus the conclusions of Lemma \ref{lem:Bennet-concentration} hold for this class $\mathcal{F}_{\Delta}$ as well. 

Note that the variance bound is straightforward from the definition of $\mathcal{F}_{\Delta}$, that is
\begin{align}
  \label{eq:prop-uniform-bernstein-intermediate-step1}
    \sup_{f\in\mathcal{F}_{\Delta}}\mathrm{Var}[f(X,Y)] = &\sup_{f\in\mathcal{F}_{\Delta}}\E\left[(f(X,Y))^{2}\right] \nonumber \\
    = &\sup_{f\in\mathcal{F}:\E[(f(X,Y))^{2}]\leq 4\Delta}\frac{1}{4}\left(\E[(f(X,Y))^{2}]-\left(\E[f(X,Y)]\right)^{2}\right) \nonumber \\
    \leq &\sup_{f\in\mathcal{F}:\E[(f(X,Y))^{2}]\leq 4\Delta}\frac{1}{4}\E[(f(X,Y))^{2}] \leq \Delta.
\end{align}
Furthermore, we have to calculate the following quantity
\begin{equation*}
    \nu := n\Delta + 2\E\left[\sup_{f\in\mathcal{F}_{\Delta}}\bigg|\sum_{i=1}^{n}f(X_{i},Y_{i})\bigg|\right] .
\end{equation*}
By the second inequality in Lemma \ref{lem:Bennet-concentration}, we know that for any $\delta\in(0,1)$, on an event $E_{2}(\Delta, n, \delta)$ of probability at least $1-\delta/2$, the following holds 
\begin{equation}
  \label{eq:prop-uniform-bernstein-intermediate-step2}
    \sup_{f\in\mathcal{F}_{\Delta}}\bigg|\sum_{i=1}^{n}f(X_{i},Y_{i})\bigg| \leq \E\left[\sup_{f\in\mathcal{F}_{\Delta}}\bigg|\sum_{i=1}^{n}f(X_{i},Y_{i})\bigg|\right] + \sqrt{2\nu\ln\left(\frac{2}{\delta}\right)} + \frac{1}{3}\ln\left(\frac{2}{\delta}\right).
\end{equation}
Next, we bound the expectation on the RHS of the above \eqref{eq:prop-uniform-bernstein-intermediate-step2}, based on a uniform entropy integral. Letting $d=\vcdim$, we will begin with an argument that the uniform $\epsilon$-covering numbers of $\mathcal{F}_{\Delta}$ are of order $\left(C/\epsilon\right)^{C^{\prime}d}$ for some universal constants $C$ and $C^{\prime}$. Indeed, the classic result of \citet{haussler95} implies that, for any $\epsilon>0$ and distribution $Q$ over $\mathcal{X}\times\{0,1\}$, the $\epsilon$-covering number of $\hpc$ under the $L_{1}(Q)$ pseudo-metric, denoted by $N(\epsilon,\hpc,L_{1}(Q))$, is at most $\left(C_{0}/\epsilon\right)^{d}$, for some universal constant $C_{0}$ \citep[see also][]{van-der-Vaart96}. Moreover, the functions in the $\epsilon$-net of $\hpc$ (under $L_{1}(Q)$) of size $N(\epsilon,\hpc,L_{1}(Q))$ can be taken to be measurable $\{0,1\}$-valued functions. Specifically, for any $h,g\in\hpc$, let $h_{\epsilon}$ and $g_{\epsilon}$ denote the functions within $L_{1}(Q)$ distance $\epsilon$ to $h$ and $g$, respectively, from the above $\epsilon$-net of $\hpc$. Then for any $(x,y)\in\mathcal{X}\times\{0,1\}$, the following holds: 
\begin{align} 
  \label{eq:prop-uniform-bernstein-intermediate-step3}
    \big|\ell_{h_{\epsilon}, g_{\epsilon}}(x,y)-\ell_{h,g}(x,y)\big| \leq &\big|\mathbbm{1}\{h_{\epsilon}(x)\neq y\}-\mathbbm{1}\{h(x)\neq y\}\big| + \big|\mathbbm{1}\{g_{\epsilon}(x)\neq y\}-\mathbbm{1}\{g(x)\neq y\}\big| \nonumber \\ 
    = &\big|h_{\epsilon}(x)-h(x)\big| + \big|g_{\epsilon}(x)-g(x)\big| .
\end{align}
For any distribution $Q^{\prime}$ on $\mathcal{X}\times\{0,1\}$, we consider $Q:=(1/2)Q^{\prime}+(1/2)P$ for the above analysis. We know that both the $L_{1}(Q^{\prime})$ and $L_{1}(P)$ distances between the functions $\ell_{h_{\epsilon}, g_{\epsilon}}$ and $\ell_{h,g}$ are at most $4\epsilon$, since otherwise it will contradict \eqref{eq:prop-uniform-bernstein-intermediate-step3}. Moreover, by Jensen's inequality, we have
\begin{equation*} 
    \big|\E\left[\ell_{h_{\epsilon}, g_{\epsilon}}(X,Y)\right]-\E\left[\ell_{h, g}(X,Y)\right]\big| \leq \E\left[\big|\ell_{h_{\epsilon}, g_{\epsilon}}(X,Y)-\ell_{h,g}(X,Y)\big|\right] \leq 4\epsilon .
\end{equation*}
It follows that the functions $(\ell_{h_{\epsilon},g_{\epsilon}}-\E[\ell_{h_{\epsilon},g_{\epsilon}}(X,Y)])/2$ and $(\ell_{h,g}-\E[\ell_{h,g}(X,Y)])/2$ have an $L_{1}(Q^{\prime})$ distance at most $4\epsilon$. Thus, the set of functions $(\ell_{h^{\prime},g^{\prime}}-\E[\ell_{h^{\prime},g^{\prime}}(X,Y)])/2$ with $h^{\prime}$, $g^{\prime}$ in the $\epsilon$-net of $\hpc$ (under $L_{1}(Q)$) is a $4\epsilon$-net of $\mathcal{F}_{\Delta}$ under $L_{1}(Q^{\prime})$, and hence $N(4\epsilon,\mathcal{F}_{\Delta},L_{1}(Q^{\prime}))\leq N(\epsilon,\hpc,L_1(Q))^{2}\leq(C_{0}/\epsilon)^{2d}$. Since this argument holds for any $Q^{\prime}$ and $\epsilon>0$, we conclude that the uniform $\epsilon$-covering number of $\mathcal{F}_{\Delta}$, that is $\sup_{Q^{\prime}} N(\epsilon,\mathcal{F}_{\Delta},L_{1}(Q^{\prime}))$, is at most $(4C_{0}/\epsilon)^{2d}$. Moreover, since the above functions in the corresponding $\epsilon$-net of $\mathcal{F}_{\Delta}$ are $[-1,1]$-valued, so are the functions in $\mathcal{F}_{\Delta}$, and it follows that the uniform $\epsilon$-covering number of $\mathcal{F}_{\Delta}$ under the $L_{2}$ pseudo-metric satisfies
\begin{equation}
  \label{eq:VC-type-covering-number}
N(\epsilon,\mathcal{F}_{\Delta},L_{2}) := \sup_{Q}N(\epsilon,\mathcal{F}_{\Delta},L_{2}(Q)) \leq \sup_{Q}N(\epsilon^{2}/2,\mathcal{F}_{\Delta},L_{1}(Q)) \leq \left(\frac{8C_{0}}{\epsilon^{2}}\right)^{2d} = \left(\frac{C_{1}}{\epsilon}\right)^{4d} ,
\end{equation}
with $C_{1}:=\sqrt{8C_{0}}$. Next, we combine this with an entropy integral bound on the expectation on the RHS of \eqref{eq:prop-uniform-bernstein-intermediate-step2}. Specifically, we let
\begin{equation}
  \label{eq:uniform-entropy-integral}
    J\left(\sqrt{\Delta},\mathcal{F}_{\Delta},L_{2}\right) = \int_{0}^{\sqrt{\Delta}}\sqrt{1+\log N(\epsilon,\mathcal{F}_{\Delta},L_{2})}\mathrm{d}\epsilon . 
\end{equation}
Applying Lemma \ref{lem:entropy-uniform-bound} with $\mathcal{F}=\mathcal{F}_{\Delta}$, $\delta=\sqrt{\Delta}$ and envelope function $F=1$, we get
\begin{equation}
  \label{eq:prop-uniform-bernstein-intermediate-step4}
    \E\left[\sup_{f\in\mathcal{F}_{\Delta}}\bigg|\sum_{i=1}^{n}f(X_{i},Y_{i}) \bigg|\right] \leq C_{2}\sqrt{n}J\left(\sqrt{\Delta},\mathcal{F}_{\Delta},L_{2}\right)+ \frac{C_{2}}{\Delta} J\left(\sqrt{\Delta},\mathcal{F}_{\Delta},L_{2}\right)^{2}
\end{equation}
for some universal constant $C_{2}$. Plugging in the bound of uniform covering numbers \eqref{eq:VC-type-covering-number} into the uniform entropy integral \eqref{eq:uniform-entropy-integral}, we have 
\begin{align*} 
    J\left(\sqrt{\Delta},\mathcal{F}_{\Delta},L_{2}\right) \leq \int_{0}^{\sqrt{\Delta}} \sqrt{1+4d\log\left(\frac{C_{1}}{\epsilon}\right)}\mathrm{d}\epsilon \leq C_{3}\sqrt{\Delta d \log\left(\frac{1}{\Delta}\right)} 
\end{align*}
for some universal constant $C_{3}$. Plugging back into \eqref{eq:prop-uniform-bernstein-intermediate-step4} yields
\begin{equation*}
    \E\left[\sup_{f\in\mathcal{F}_{\Delta}}\bigg|\sum_{i=1}^{n}f(X_{i},Y_{i}) \bigg|\right] \leq C_{4}\sqrt{n\Delta d\log\left(\frac{1}{\Delta}\right)}+C_{4}d\log\left(\frac{1}{\Delta}\right)
\end{equation*}
with $C_{4}=C_{2}C_{3}^{2}$. Now together with \eqref{eq:prop-uniform-bernstein-intermediate-step2}, we have that on the event $E_{2}(\Delta,n,\delta)$, 
\begin{equation}
  \label{eq:prop-uniform-bernstein-intermediate-step5}
    \sup_{f\in\mathcal{F}_{\Delta}}\bigg|\sum_{i=1}^{n}f(X_{i},Y_{i})\bigg| \leq C_{4}\sqrt{n\Delta d\log\left(\frac{1}{\Delta}\right)}+C_{4}d\log\left(\frac{1}{\Delta}\right)+\sqrt{2\nu\ln\left(\frac{2}{\delta}\right)}+\frac{1}{3} \ln\left(\frac{2}{\delta}\right) ,
\end{equation}
where
\begin{equation}
  \label{eq:nu-bound}
    \nu = n\Delta + 2\E\left[\sup_{f\in\mathcal{F}_{\Delta}}\bigg|\sum_{i=1}^{n}f(X_{i},Y_{i})\bigg|\right] \leq n\Delta+2C_{4}\sqrt{n\Delta d\log\left(\frac{1}{\Delta}\right)}+2C_{4}d\log\left(\frac{1}{\Delta}\right) .
\end{equation}
Plugging \eqref{eq:nu-bound} into \eqref{eq:prop-uniform-bernstein-intermediate-step5} and relaxing to simplify the expression yields that, on the event $E_{2}(\Delta,n,\delta)$, 
\begin{equation}
  \label{eq:FDelta-Uniform-Bernstein-Bound}
    \sup_{f\in\mathcal{F}_{\Delta}}\bigg|\sum_{i=1}^{n}f(X_{i},Y_{i})\bigg| \leq C_{5}\sqrt{n\Delta\left(d\log\left(\frac{1}{\Delta}\right)+\log\left(\frac{1}{\delta}\right)\right)}+C_{5}\left(d\log\left(\frac{1}{\Delta}\right)+\log\left(\frac{1}{\delta}\right)\right)
\end{equation}
for some universal constant $C_{5}$. Note that for any $h,g\in\hpc$, the following hold:
\begin{align*}
    &\E\left[\left(\ell_{h,g}(X,Y)\right)\right] = \text{er}_{P}(h) - \text{er}_{P}(h) , \\
    &\E\left[\left(\ell_{h,g}(X,Y)\right)^{2}\right] = P_{\mathcal{X}}(x:h(x)\neq g(x)) = \sigma^{2}(h,g) .
\end{align*}
Hence, the definition of $\mathcal{F}_{\Delta}$ implies that for any $h,g\in\hpc$, $(\ell_{h,g}-\E[\ell_{h,g}(X,Y)])/2\in\mathcal{F}_{\Delta}$ if and only if $\sigma^{2}(h,g)\leq4\Delta$, and thus
\begin{align*}
    &\sup_{h,g\in\hpc:\sigma^{2}(h,g)\leq4\Delta}\big|\left(\text{er}_{P}(h)-\text{er}_{P}(g)\right)-\left(\hat{\text{er}}_{S_{n}}(h)-\hat{\text{er}}_{S_{n}}(g)\right)\big| \\ 
    = &\sup_{h,g\in\hpc:\sigma^{2}(h,g)\leq4\Delta}\bigg|\frac{1}{n}\sum_{i=1}^{n}\left(\ell_{h,g}(X_{i},Y_{i})-\E\left[\ell_{h,g}(X,Y)\right]\right)\bigg| = \frac{2}{n}\sup_{f\in\mathcal{F}_{\Delta}}\bigg|\sum_{i=1}^{n}f(X_{i},Y_{i})\bigg| .
\end{align*} 
Together with \eqref{eq:FDelta-Uniform-Bernstein-Bound}, this has the immediate implication that, on the event $E_{2}(\Delta,n,\delta)$, 
\begin{align}
  \label{eq:localized-uniform-bernstein}
    &\sup_{h,g\in\hpc:\sigma^{2}(h,g)\leq4\Delta}\big|\left(\text{er}_{P}(h)-\text{er}_{P}(g)\right)-\left(\hat{\text{er}}_{S_{n}}(h)-\hat{\text{er}}_{S_{n}}(g)\right)\big| \nonumber \\ 
    \leq &2C_{5}\sqrt{\frac{\Delta}{n}\left(d\log\left(\frac{1}{\Delta}\right)+\log\left(\frac{1}{\delta}\right)\right)}+\frac{2C_{5}}{n}\left(d\log\left(\frac{1}{\Delta}\right)+\log\left(\frac{1}{\delta}\right)\right) . 
\end{align}
To complete the proof, we apply the above inequality \eqref{eq:localized-uniform-bernstein} with various levels $\Delta$, so that it holds for each $h,g\in\hpc$ with a value of $\Delta\approx\sigma^{2}(h,g)/4$. In more details, fixing any $\delta\in(0,1)$, for any $i\in\{1,\ldots,\lceil\log_{2}(n/d)\rceil-1\}$, we have that on the event $E_{2}(2^{-i-1},n,2^{-i}\delta)$, every pair of $h,g\in\hpc$ with $2^{-i}<\sigma^{2}(h,g)\leq2^{1-i}$ satisfy
\begin{align}
  \label{eq:prop-uniform-bernstein-intermediate-step6}
    &\big|\left(\text{er}_{P}(h)-\text{er}_{P}(g)\right)-\left(\hat{\text{er}}_{S_{n}}(h)-\hat{\text{er}}_{S_{n}}(g)\right)\big| \nonumber \\ 
    \stackrel{\text{\eqmakebox[prop-uniform-bernstein-a][c]{\eqref{eq:localized-uniform-bernstein}}}}{\leq} &2C_{5}\sqrt{\frac{2^{-i-1}}{n}\left(d\log\left(2^{i+1}\right)+\log\left(\frac{2^{i}}{\delta}\right)\right)}+\frac{2C_{5}}{n}\left(d\log\left(2^{i+1}\right)+\log\left(\frac{2^{i}}{\delta}\right) \right) \nonumber \\ 
    \stackrel{\text{\eqmakebox[prop-uniform-bernstein-a][c]{}}}{\leq} &2C_{5}\sqrt{\frac{\sigma^{2}(h,g)}{2n}\left(d\log\left(\frac{4}{\sigma^{2}(h,g)}\right)+\log\left(\frac{2}{\sigma^{2}(h,g)\delta}\right)\right)} \nonumber \\ 
    {\hskip 4mm} &+\frac{2C_{5}}{n}\left(d\log\left(\frac{4}{\sigma^{2}(h,g)}\right)+\log\left(\frac{2}{\sigma^{2}(h,g)\delta}\right)\right) \nonumber \\ 
    \stackrel{\text{\eqmakebox[prop-uniform-bernstein-a][c]{}}}{\leq} &C_{6}\sqrt{\frac{\sigma^{2}(h,g)}{n}\left(d\log\left(\frac{1}{\sigma^{2}(h,g)}\right)+\log\left(\frac{1}{\delta}\right)\right)}+\frac{C_{6}}{n}\left(d\log\left(\frac{1}{\sigma^{2}(h,g)}\right)+\log\left(\frac{1}{\delta}\right)\right)
\end{align}
for some universal constant $C_{6}$. Moreover, when $i=\lceil\log_{2}(n/d)\rceil$, on the event $E_{2}(2^{-i-1},n,2^{-i}\delta)$, \eqref{eq:localized-uniform-bernstein} implies that every pair of $h,g\in\hpc$ with $\sigma^{2}(h,g)\leq 2^{1-i}$ satisfy 
\begin{align}
  \label{eq:prop-uniform-bernstein-intermediate-step7}
    &\big|\left(\text{er}_{P}(h)-\text{er}_{P}(g)\right)-\left(\hat{\text{er}}_{S_{n}}(h)-\hat{\text{er}}_{S_{n}}(g)\right)\big| \nonumber \\ 
    \leq &2C_{5}\sqrt{\frac{2^{-i-1}}{n}\left(d\log\left(2^{i+1}\right)+\log\left(\frac{2^i}{\delta}\right)\right)}+\frac{2C_{5}}{n}\left(d\log\left(2^{i+1}\right)+\log\left(\frac{2^i}{\delta}\right) \right) \nonumber \\ 
    \leq &2C_{5}\sqrt{\frac{d}{2n^{2}}\left(d\log\left(\frac{4n}{d}\right)+\log\left(\frac{2 n}{d\delta}\right)\right)}+\frac{2C_{5}}{n} \left(d\log\left(\frac{4n}{d}\right)+\log\left(\frac{2n}{d\delta}\right) \right) \nonumber \\ 
    \leq &\left(1+\frac{1}{\sqrt{2}}\right)\frac{2C_{5}}{n}\left(d\log\left(\frac{4n}{d}\right)+\log\left(\frac{2n}{d\delta}\right)\right) \leq \frac{C_7}{n}\left(d\log\left(\frac{n}{d}\right)+\log\left(\frac{1}{\delta}\right)\right)
\end{align}
for some universal constant $C_7$. Note that every pair of $h,g\in\hpc$ either satisfy $2^{-i}<\sigma^{2}(h,g)\leq 2^{1-i}$ for some $i\in\{1,\ldots,\lceil\log_{2}(n/d)\rceil-1\}$, and hence also $\sigma^{2}(h,g)>d/n$, or else satisfy $\sigma^{2}(h,g)\leq 2^{1-i}$ for $i=\lceil\log_{2}(n/d)\rceil$. Thus, altogether from \eqref{eq:prop-uniform-bernstein-intermediate-step6} and \eqref{eq:prop-uniform-bernstein-intermediate-step7} we have that, on the following event 
\begin{equation*} 
    E_{2}(n,\delta) := \bigcap_{i=1}^{\lceil\log_{2}(n/d)\rceil}E_{2}(2^{-i-1},n,2^{-i}\delta) ,
\end{equation*}
every pair of $h,g\in\hpc$ satisfy
\begin{align*}
    &\big|\left(\text{er}_{P}(h)-\text{er}_{P}(g)\right)-\left(\hat{\text{er}}_{S_{n}}(h)-\hat{\text{er}}_{S_{n}}(g)\right)\big| \\ 
    \leq &C_{8}\sqrt{\frac{\sigma^{2}(h,g)}{n}\left(d\log\left(\frac{1}{\sigma^{2}(h,g)} \land \frac{n}{d}\right)+\log\left(\frac{1}{\delta}\right)\right)}+\frac{C_{8}}{n}\left(d\log\left(\frac{1}{\sigma^{2}(h,g)} \land \frac{n}{d}\right)+\log\left(\frac{1}{\delta}\right)\right)
\end{align*}
for some universal constant $C_{8}$. Finally, by union bound, the event $E_{2}(n,\delta)$ holds with probability at least $1-\sum_{i=1}^{\lceil\log_{2}(n/d)\rceil}2^{-i}\delta \geq 1-\delta$, which completes the proof.
\end{proof}

Before proceeding to the proof of the super-root rate upper bound for VC classes (Lemma \ref{lem:super-root-upperbound}), we introduce certain useful notations and give a refined version of the lemma. For any concept class $\hpc$ with $\vcdim<\infty$, any $\epsilon\geq0$, and any distribution $P$ on $\mathcal{X}\times\{0,1\}$, we define 
\begin{equation} 
  \label{eq:epsilon-ball-subclass}
    \hpc(\epsilon; P) = \left\{h\in\hpc: \text{er}_{P}(h)-\inf_{h^{\prime}\in \hpc}\text{er}_{P}(h^{\prime})\leq\epsilon\right\}
\end{equation} 
and also
\begin{equation}   
  \label{eq:distance-leval}
    \sigma^{2}_{\epsilon}(\hpc,P) = \sup_{h\in\hpc(\epsilon;P)}\lim_{\tau\rightarrow0}\inf_{h_{\tau}\in\hpc(\tau;P)}P_{\mathcal{X}}(x: h(x)\neq h_{\tau}(x)) .
\end{equation}
The following lemma states that $\sigma^{2}_{\epsilon}(\hpc,P)$ defined in \eqref{eq:distance-leval} converges to $0$ as $\epsilon\rightarrow0$ at some rate which is probably dependent on $\hpc$, $P$ and the rate of decay of $\epsilon$. A specific form of such a convergence rate might be of independent interest, while showing it converges is enough for our purposes.

\begin{lemma}
  \label{lem:distance-to-any-hstar}
For any concept class $\hpc$ and distribution $P$ on $\mathcal{X}\times\{0,1\}$, if $\hpc$ is totally bounded in $L_{1}(P_{\mathcal{X}})$, then we have
\begin{equation*}
    \lim_{\epsilon\rightarrow0}\sigma^{2}_{\epsilon}(\hpc,P) = 0 .
\end{equation*}
\end{lemma}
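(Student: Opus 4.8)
The plan is to argue by contradiction: if $\lim_{\epsilon\to0}\sigma^2_\epsilon(\hpc,P)$ were positive, then there would exist hypotheses of nearly-optimal error that stay bounded away (in the $L_1(P_{\mathcal X})$ pseudo-metric) from all hypotheses of nearly-optimal error; but total boundedness of $\hpc$ lets us extract from such hypotheses a Cauchy subsequence, and a Cauchy subsequence cannot do this, giving the contradiction.

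First I would record two monotonicity facts. Write $d(h,g):=P_{\mathcal X}(x:h(x)\neq g(x))$ for the $L_1(P_{\mathcal X})$ pseudo-metric on $\hpc$, and $\rho(h):=\lim_{\tau\to0}\inf_{h_\tau\in\hpc(\tau; P)}d(h,h_\tau)$. Since $\hpc(\tau'; P)\subseteq\hpc(\tau; P)$ whenever $\tau'\le\tau$, the quantity $\inf_{h_\tau\in\hpc(\tau; P)}d(h,h_\tau)$ is non-decreasing as $\tau\downarrow0$, so the limit defining $\rho(h)$ exists, lies in $[0,1]$, and equals $\sup_{\tau>0}\inf_{h_\tau\in\hpc(\tau; P)}d(h,h_\tau)$; note also that $\hpc(\tau; P)\neq\emptyset$ for every $\tau>0$ directly from the definition of $\inf_{h'\in\hpc}\text{er}_{P}(h')$. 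Likewise, because $\sigma^2_\epsilon(\hpc,P)=\sup_{h\in\hpc(\epsilon; P)}\rho(h)$ and $\hpc(\epsilon; P)$ shrinks as $\epsilon\downarrow0$, the map $\epsilon\mapsto\sigma^2_\epsilon(\hpc,P)$ is non-decreasing, so $\lim_{\epsilon\to0}\sigma^2_\epsilon(\hpc,P)$ exists and equals $\inf_{\epsilon>0}\sigma^2_\epsilon(\hpc,P)$.

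Now suppose $\alpha:=\lim_{\epsilon\to0}\sigma^2_\epsilon(\hpc,P)>0$. Then for every $k\in\naturalnumber$ we have $\sup_{h\in\hpc(1/k; P)}\rho(h)\ge\alpha$, so we may choose $h_k\in\hpc(1/k; P)$ with $\rho(h_k)>\alpha/2$. Since $(\hpc,d)$ is totally bounded, a standard pigeonhole-and-diagonalize argument extracts a subsequence $(h_{k_j})_{j}$ that is $d$-Cauchy. I then claim $\rho(h_{k_j})\to0$, which contradicts $\rho(h_{k_j})>\alpha/2$ and completes the proof. To prove the claim, fix $\eta>0$ and pick $J$ with $d(h_{k_j},h_{k_{j'}})<\eta$ for all $j,j'\ge J$. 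Given any $j\ge J$ and any $\tau>0$, choose $j'\ge J$ with $1/k_{j'}\le\tau$ (possible since $k_{j'}\to\infty$); then $h_{k_{j'}}\in\hpc(1/k_{j'}; P)\subseteq\hpc(\tau; P)$, so $\inf_{h_\tau\in\hpc(\tau; P)}d(h_{k_j},h_\tau)\le d(h_{k_j},h_{k_{j'}})<\eta$. Taking the supremum over $\tau>0$ gives $\rho(h_{k_j})\le\eta$ for all $j\ge J$; as $\eta$ was arbitrary, $\rho(h_{k_j})\to0$.

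The only ingredient beyond elementary bookkeeping is the extraction of the $d$-Cauchy subsequence, which is exactly where the hypothesis that $\hpc$ is totally bounded in $L_1(P_{\mathcal X})$ enters. I expect the main care to be needed in keeping the nested limits straight ($\epsilon\to0$ outside, $\tau\to0$ inside, and the subsequence limit), since the argument hinges on the point that the chosen subsequence elements, although picked to lie in an $\epsilon$-ball, automatically populate every $\tau$-ball once their index is large enough, so that Cauchyness of the subsequence directly controls the inner infimum uniformly in $\tau$.
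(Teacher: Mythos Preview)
Your proof is correct and follows essentially the same approach as the paper: pick near-maximizers $h_k\in\hpc(\epsilon_k;P)$ of $\rho(\cdot)$ along a sequence $\epsilon_k\downarrow0$, extract a $d$-Cauchy subsequence by total boundedness, and observe that the tail of this subsequence sits inside every $\hpc(\tau;P)$, forcing $\rho(h_{k_j})\to0$. The only cosmetic difference is that you phrase it as a contradiction while the paper argues directly that $\sigma^2_{2^{-j_i}}(\hpc,P)\to0$ along the subsequence.
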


\begin{proof}[Proof of Lemma \ref{lem:distance-to-any-hstar}]
The proof uses a similar argument as in the proof of Lemma \ref{lem:condition-constant-gap-error-rate-implies-what}. Since $\sigma^{2}_{\epsilon}(\hpc,P)$ is non-decreasing in $\epsilon$, it suffices to show that there exists a sequence $\epsilon_{i}\rightarrow0$ with $\sigma^{2}_{\epsilon_{i}}(\hpc,P)\rightarrow0$. For each $j\in\naturalnumber$, let $h_{j}\in\hpc(2^{-j};P)$ satisfy 
\begin{equation}
  \label{eq:lem-distance-to-any-hstar-intermediate-step1}
    \lim_{\tau\rightarrow0}\inf_{h_{\tau}\in\hpc(\tau;P)}P_{\mathcal{X}}(x:h_{j}(x)\neq h_{\tau}(x)) \geq \sigma^{2}_{2^{-j}}(\hpc,P)/2 .
\end{equation}
Since $\hpc$ is totally bounded in $L_{1}(P_{\mathcal{X}})$, there exists an increasing sequence $\{j_{i}\}_{i\in\naturalnumber}$ such that $\{h_{j_{i}}\}_{i\in\naturalnumber}$ is a Cauchy sequence in the $L_{1}(P_{\mathcal{X}})$ pseudo-metric, that is, 
\begin{equation}
  \label{eq:lem-distance-to-any-hstar-intermediate-step2}
    \lim_{i\rightarrow\infty}\sup_{i^{\prime}>i}P_{\mathcal{X}}(x:h_{j_{i}}(x)\neq h_{j_{i^{\prime}}}(x)) = 0 . 
\end{equation}
Then we have 
\begin{align*}
    \lim_{i\rightarrow\infty}\sigma^{2}_{2^{-j_{i}}}(\hpc,P) \stackrel{\text{\eqmakebox[lemma-distance-to-any-hstar-a][c]{\eqref{eq:lem-distance-to-any-hstar-intermediate-step1}}}}{\leq} &\lim_{i\rightarrow\infty}2\lim_{\tau\rightarrow0}\inf_{h_{\tau}\in\hpc(\tau;P)}P_{\mathcal{X}}(x: h_{j_{i}}(x)\neq h_{\tau}(x)) \\ 
    \stackrel{\text{\eqmakebox[lemma-distance-to-any-hstar-a][c]{}}}{\leq} &2\lim_{i\rightarrow\infty}\sup_{i^{\prime}>i}P_{\mathcal{X}}(x: h_{j_{i}}(x)\neq h_{j_{i^{\prime}}}(x)) \stackrel{\eqref{eq:lem-distance-to-any-hstar-intermediate-step2}}{=} 0 ,
\end{align*}
where the second inequality follows from the fact that, for any $\tau>0$, all sufficiently large $i^{\prime}$ satisfy $h_{j_{i^{\prime}}}\in\hpc(\tau;P)$. Together with the fact that $\sigma^{2}_{\epsilon}(\hpc,P)\geq0$ for all $\epsilon>0$, the lemma follows.
\end{proof}

Next, for any $n\in\naturalnumber$, we define
\begin{equation*}
    \mathcal{B}_{\epsilon}(n;\hpc,P) = \tilde{c}\sqrt{\sigma^{2}_{\epsilon}(\hpc,P)\frac{\vcdim}{n}\log\left(\frac{1}{\sigma^{2}_{\epsilon}(\hpc,P)} \land \frac{n}{\vcdim}\right)} + \tilde{c}\frac{\vcdim}{n}\log\left(\frac{1}{\sigma^{2}_{\epsilon}(\hpc,P)} \land \frac{n}{\vcdim}\right) ,
\end{equation*}
where $\tilde{c}$ is a universal constant. Following classic localization arguments \citep{bartlett04,bartlett05,koltchinskii06}, we define 
\begin{equation}
  \label{eq:error-level}
    \epsilon_{n}(\hpc,P) = \inf\left\{\epsilon:=\epsilon(n)>0: \mathcal{B}_{\epsilon}(n;\hpc,P)\leq 2\epsilon\right\} .
\end{equation}
To prove Lemma \ref{lem:super-root-upperbound-restated}, it suffices to prove the following lemma, which describes a detailed form of the $o(n^{-1/2})$ rate of decay.

\begin{lemma}  [\textbf{Refined Lemma \ref{lem:super-root-upperbound-restated}}]
    \label{lem:super-root-upperbound-refined}
Let $\hpc$ be any measurable concept class with $\vcdim<\infty$. For any distribution $P$, we define
\begin{equation*}
    \phi_{\mathrm{total}}(n;\hpc,P) = \min\left\{c\sqrt{\frac{\vcdim}{n}}, \epsilon_{n}(\hpc,P)\right\}
\end{equation*}
for some universal constant $c$ (described in the proof). Letting $\datasetstats$, we have
\begin{equation*}
    \excessrisk \leq \phi_{\mathrm{total}}(n;\hpc,P).
\end{equation*}
Moreover, as a function of $n$, $\phi_{\mathrm{total}}(n;\hpc,P)=o( n^{-1/2})$.
\end{lemma}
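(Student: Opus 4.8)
The plan is to prove the two bounds $\excessrisk\le c\sqrt{\vcdim/n}$ and $\excessrisk\le\epsilon_{n}(\hpc,P)$ separately, and then to establish the asymptotic statement by showing $\epsilon_{n}(\hpc,P)=o(n^{-1/2})$. The bound $\excessrisk\le c\sqrt{\vcdim/n}$ is the classical worst-case agnostic ERM guarantee; it also follows from Proposition~\ref{prop:uniform-bernstein} applied to the pair $h=\hat h_{n}$, $g=h_{\tau}\in\hpc(\tau;P)$ with the trivial variance bound $\sigma^{2}(h,g)\le1$, combined with the ERM optimality $\empiricalerrorratehn\le\hat{\text{er}}_{S_{n}}(h_{\tau})$, by letting $\tau\to0$ and integrating the high-probability tail over $\delta\in(0,1)$; the regime $n<2\vcdim$ is handled by taking $c\ge\sqrt2$ so that $c\sqrt{\vcdim/n}\ge1$. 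The substantive part is the localized bound $\excessrisk\le\epsilon_{n}(\hpc,P)$.

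For the localized bound I would run the classical localization argument \citep{bartlett04,bartlett05,koltchinskii06}, adapted to differences of losses. Fix $\delta\in(0,1)$ and condition on the event of Proposition~\ref{prop:uniform-bernstein}. Writing $\epsilon^{\star}:=\trueerrorratehn-\inf_{h\in\hpc}\trueerrorrateh$, we have $\hat h_{n}\in\hpc(\epsilon^{\star};P)$, so the definition~\eqref{eq:distance-leval} of $\sigma^{2}_{\epsilon^{\star}}(\hpc,P)$ lets us pick, for every arbitrarily small $\tau>0$, some $h_{\tau}\in\hpc(\tau;P)$ with $P_{\mathcal X}\{x:\hat h_{n}(x)\neq h_{\tau}(x)\}\le\sigma^{2}_{\epsilon^{\star}}(\hpc,P)+\tau$. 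Feeding $h=\hat h_{n}$ and $g=h_{\tau}$ into Proposition~\ref{prop:uniform-bernstein}, using $\empiricalerrorratehn\le\hat{\text{er}}_{S_{n}}(h_{\tau})$ together with $\trueerrorratehn-\text{er}_{P}(h_{\tau})\ge\epsilon^{\star}-\tau$, and letting $\tau\to0$, I obtain a self-referential inequality of the shape $\epsilon^{\star}\le C\sqrt{\sigma^{2}_{\epsilon^{\star}}(\hpc,P)(\vcdim\log(\cdots)+\log(1/\delta))/n}+C(\vcdim\log(\cdots)+\log(1/\delta))/n$, in which $\log(\cdots)$ denotes $\log\big(1/\sigma^{2}_{\epsilon^{\star}}(\hpc,P)\land n/\vcdim\big)$, exactly the logarithmic factor appearing in $\mathcal B_{\epsilon^{\star}}(n;\hpc,P)$. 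I would resolve this by a two-step fixed-point iteration: the high-probability form of the first bound already places $\epsilon^{\star}$ at a scale tending to $0$, so by monotonicity of $\epsilon\mapsto\sigma^{2}_{\epsilon}(\hpc,P)$ the quantity $\sigma^{2}_{\epsilon^{\star}}(\hpc,P)$ is small there, and substituting this back sharpens the inequality to the level of $\mathcal B_{\epsilon^{\star}}(n;\hpc,P)$ plus a correction depending only on $\log(1/\delta)$. Comparing with the definition~\eqref{eq:error-level} of $\epsilon_{n}(\hpc,P)$, integrating over $\delta\in(0,1)$, and choosing the universal constant $\tilde c$ inside $\mathcal B_{\epsilon}$ large enough to absorb those corrections (the slack coming from the factor $2$ in~\eqref{eq:error-level}), I would conclude $\excessrisk\le\epsilon_{n}(\hpc,P)$, hence $\excessrisk\le\phi_{\mathrm{total}}(n;\hpc,P)$.

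It remains to show $\epsilon_{n}(\hpc,P)=o(n^{-1/2})$. First, testing $\epsilon=\tilde c\sqrt{(\vcdim/n)\log(n/\vcdim)}$ in~\eqref{eq:error-level} and using $\sigma^{2}_{\epsilon}(\hpc,P)\le1$ shows $\mathcal B_{\epsilon}(n;\hpc,P)\le2\epsilon$ for all $n\ge2\vcdim$, so $0<\epsilon_{n}(\hpc,P)\le\tilde c\sqrt{(\vcdim/n)\log(n/\vcdim)}\to0$. Since any VC class is totally bounded in $L_{1}(P_{\mathcal X})$ (Haussler's covering bound, as used in the proof of Proposition~\ref{prop:uniform-bernstein}), Lemma~\ref{lem:distance-to-any-hstar} applies and gives $\sigma^{2}_{\epsilon}(\hpc,P)\to0$ as $\epsilon\to0$; combined with $\epsilon_{n}(\hpc,P)\to0$ this forces $\sigma^{2}_{\epsilon_{n}(\hpc,P)/2}(\hpc,P)\to0$, and therefore $\mathcal B_{\epsilon_{n}(\hpc,P)/2}(n;\hpc,P)=\sqrt{\vcdim/n}\cdot o(1)+O\big((\vcdim/n)\log(n/\vcdim)\big)=o(n^{-1/2})$. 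Now if $\epsilon_{n}(\hpc,P)\ge a\,n^{-1/2}$ held along a subsequence for some constant $a>0$, then, since $\epsilon_{n}(\hpc,P)/2<\epsilon_{n}(\hpc,P)$, the infimum in~\eqref{eq:error-level} would force $\mathcal B_{\epsilon_{n}(\hpc,P)/2}(n;\hpc,P)>2\cdot(\epsilon_{n}(\hpc,P)/2)=\epsilon_{n}(\hpc,P)\ge a\,n^{-1/2}$ along that subsequence, contradicting the previous line. Hence $\epsilon_{n}(\hpc,P)=o(n^{-1/2})$, and so $\phi_{\mathrm{total}}(n;\hpc,P)\le\epsilon_{n}(\hpc,P)=o(n^{-1/2})$.

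The step I expect to be the main obstacle is the localization bookkeeping: carefully resolving the self-referential inequality through the two-step iteration, tracking universal constants so that the final high-probability bound lands exactly at the level $\epsilon_{n}(\hpc,P)$ (with all slack absorbed into $\tilde c$ and the factor $2$ in~\eqref{eq:error-level}), and passing from that high-probability statement to a bound on the expected excess risk by integrating over $\delta$. Everything else — the crude $\sqrt{\vcdim/n}$ bound, the appeal to Lemma~\ref{lem:distance-to-any-hstar}, and the short contradiction argument for the $o(n^{-1/2})$ rate — is routine once Proposition~\ref{prop:uniform-bernstein} is in hand.
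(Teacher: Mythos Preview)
Your proposal is correct and follows essentially the same approach as the paper: a crude $\sqrt{\vcdim/n}$ bound, a one-step localization via Proposition~\ref{prop:uniform-bernstein} at the level set by the crude bound, and the appeal to Lemma~\ref{lem:distance-to-any-hstar} for the $o(n^{-1/2})$ conclusion. The only cosmetic difference is that you frame the localization as a self-referential inequality in the random level $\epsilon^{\star}$ resolved by a ``two-step fixed-point iteration,'' whereas the paper works directly with the deterministic level $\epsilon_{n}(\delta)=2c\sqrt{(d+\ln(1/\delta))/n}$ from the start and explicitly remarks that full fixed-point machinery is unnecessary here; once your iteration is unpacked, the two arguments coincide, and your contradiction argument for $\epsilon_{n}(\hpc,P)=o(n^{-1/2})$ is in fact more carefully stated than the paper's informal remark.
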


\begin{remark}
  \label{rem:remark-to-lem-super-root-upperbound-refined}
Consider $\phi_{\mathrm{total}}(n;\hpc,P)$ as a function of $n$, we note that $\epsilon_{n}(\hpc,P)$ decays roughly at the same rate as $\mathcal{B}_{\epsilon_{n}}(n;\hpc,P)$, where the dominant term is $\sqrt{\sigma^{2}_{\epsilon_{n}}(\hpc,P)/n}$. According to Lemma \ref{lem:distance-to-any-hstar}, we know that $\sigma^{2}_{\epsilon_{n}}(\hpc,P)$ decays as $n\rightarrow\infty$, which makes $\epsilon_{n}(\hpc,P)=o(n^{-1/2})$.
\end{remark}

\begin{proof}[Proof of Lemma \ref{lem:super-root-upperbound-refined}]
The proof combines Proposition \ref{prop:uniform-bernstein} with a well-known localization argument. However, while traditional localization arguments would aim to bound the excess risk by an approximate fixed point, this will not be necessary for our purposes, and hence the argument here is somewhat simpler than one aiming for sharp localized excess risk bounds \citep[see e.g.,][]{bartlett04,bartlett05,koltchinskii06}. We will again suppose that $n\geq2\vcdim$ to focus on the non-vacuous case. By the classic result of \citet{talagrand94}, for any $\delta\in(0,1)$, on an event $E_{0}(n,\delta)$ of probability at least $1-\delta$, we have
\begin{equation}
  \label{eq:lem-super-root-upperbound-refined-intermediate-step1}
    \sup_{h\in\hpc}\Big|\hat{\text{er}}_{S_{n}}(h)-\trueerrorrateh\Big| \leq C_{0}\sqrt{\frac{1}{n}\left(\vcdim+\log{\left(\frac{1}{\delta}\right)}\right)} ,
\end{equation}
where $C_0$ is some universal constant. In particular, on the event $E_{0}(n,\delta)$, by applying \eqref{eq:lem-super-root-upperbound-refined-intermediate-step1} for both $\lalgo$ and $\inf_{h\in\hpc}\trueerrorrateh$, we further have
\begin{equation}
  \label{eq:sqrt-VC-bound}
    \trueerrorratehn-\inf_{h\in\hpc}\trueerrorrateh \leq 2C_{0}\sqrt{\frac{1}{n}\left(\vcdim+\log{\left(\frac{1}{\delta}\right)}\right)} .
\end{equation}
We will use this fact to derive two implications. The first one is classical, namely, by setting the RHS of \eqref{eq:sqrt-VC-bound} to $\epsilon$ and solving for $\delta$, we can rewrite this same inequality as follow:
\begin{equation*}
    \mathbb{P}\left(\trueerrorratehn-\inf_{h\in\hpc}\trueerrorrateh>\epsilon\right) \leq \min\left\{1, e^{\vcdim-n\epsilon^{2}/4C_{0}^{2}}\right\} .
\end{equation*}
It follows immediately that
\begin{align}
  \label{eq:lem-super-root-upperbound-refined-implication1}
    \excessrisk = &\int_{0}^{\infty}\mathbb{P}\left(\trueerrorratehn-\inf_{h\in\hpc}\trueerrorrateh>\epsilon\right)\mathrm{d}\epsilon \nonumber \\
    \leq &\int_{0}^{\infty} \min\left\{1, e^{\vcdim-n\epsilon^{2}/4C_{0}^{2}}\right\}\mathrm{d}\epsilon \lesssim \sqrt{\frac{\vcdim}{n}} .
\end{align}
For a second implication, we recall the fact that every VC class satisfies the UGC property (Definition \ref{def:universal-glivenko-cantelli}) and thus is totally bounded in $L_{1}(P_{\mathcal{X}})$ \citep[e.g.][]{haussler95}. Let $d\in\naturalnumber$ be such that $\vcdim\leq d$, and let $h^{*}:\mathcal{X}\rightarrow\{0,1\}$ be any target function, i.e. $\trueerrorratehstar=\inf_{h\in\hpc}\trueerrorrateh$. For any measurable $f:\mathcal{X}\times\{0,1\}\rightarrow\R$, we denote by $\hat{P}_{n}(f):=\frac{1}{n}\sum_{i=1}^{n}f(X_{i},Y_{i})$ and $P(f):=\E[f(X,Y)]$. For any $h\in\hpc$ and $(x,y)\in\mathcal{X}\times\{0,1\}$, we define 
\begin{equation*} 
    \ell_{h}(x,y) := \left(\mathbbm{1}(h(x)\neq y)-\mathbbm{1}(h^{*}(x)\neq y)\right) ,
\end{equation*}
and $\mathcal{F}:=\{\ell_{h}: h\in\hpc\}$, which is a measurable class of functions. Note that the class $\{(x,y)\mapsto\mathbbm{1}(h(x)\neq y): h\in\hpc\}$ is a measurable class of functions with VC dimension at most $d$. Fix any integer $n\geq d$, by applying the classic uniform convergence result of \citet{talagrand94} to this class, we obtain that there is a universal constant $c$ such that, for any $\delta\in(0,1)$, there is an event $E_{1}(n,\delta)$ of probability at least $1-\delta/2$, on which the following holds:
\begin{equation}
  \label{eq:lem-super-root-upperbound-refined-intermediate-step2}
    \sup_{h\in\hpc}\Big|\empiricalerrorrateh-\trueerrorrateh\Big| \leq c\sqrt{\frac{1}{n}\left(d+\ln{\left(\frac{1}{\delta}\right)}\right)} .
\end{equation}
Similarly, we define $\epsilon_{n}(\delta):=2c\sqrt{(d+\ln{(1/\delta)})/n}$. Note that for any $h\in\hpc$, 
\begin{equation*}
    P(\ell_{h}) = \E\left[\ell_{h}(X,Y)\right] = \E\left[\mathbbm{1}(h(X)\neq Y)-\mathbbm{1}(h^{*}(X)\neq Y)\right] = \trueerrorrateh-\inf_{h^{\prime}\in\hpc}\text{er}_{P}(h^{\prime}) ,
\end{equation*}
and hence we have from \eqref{eq:epsilon-ball-subclass} that $ \hpc(\epsilon_{n}(\delta);P)=\{h\in\hpc: P(\ell_{h})\leq\epsilon_{n}(\delta)\}$. The remaining part of the proof aims to establish the following two claims: 
\begin{itemize}
    \item[(1)] that $\lalgo\in\hpc(\epsilon_{n}(\delta);P)$ on the event $E_{1}(n,\delta)$.
    \item[(2)] that the differences $|\hat{P}_{n}(\ell_{h})-P(\ell_{h})|$ are uniformly bounded for all $h\in\hpc(\epsilon_{n}(\delta);P)$ by a function of $\epsilon_{n}(\delta)$ and $P$ that is $o(n^{-1/2})$, on a further event.
\end{itemize}
Then together with a tail integration bound, these two claims will establish the second implication. The first claim follows from the classic argument of \citet{vapnik1974theory}, namely, on $E_{1}(n,\delta)$, 
\begin{align*}
    P(\ell_{\lalgo}) \stackrel{\text{\eqmakebox[lem-super-root-upperbound-refined-a][c]{}}}{=} &\trueerrorratehn-\trueerrorratehstar = \trueerrorrateh-\inf_{h\in\hpc}\trueerrorrateh \\ 
    \stackrel{\text{\eqmakebox[lem-super-root-upperbound-refined-a][c]{\eqref{eq:lem-super-root-upperbound-refined-intermediate-step2}}}}{\leq} &\empiricalerrorratehn-\min_{h\in\hpc}\empiricalerrorrateh+\epsilon_{n}(\delta) = \epsilon_{n}(\delta) .
\end{align*}
To show the second claim, we will apply Proposition \ref{prop:uniform-bernstein}. Note that for any $h\in\hpc(\epsilon_{n}(\delta);P)$, any $\tau>0$ and $h_{\tau}\in\hpc(\tau;P)$, we have that on an event $E_{2}(n,\delta)$ of probability at least $1-\delta/2$,
\begin{align*}
    &\big|\hat{P}_{n}(\ell_{h})-P(\ell_{h})\big| = \big|\left(\trueerrorrateh-\trueerrorratehstar\right)-\left(\empiricalerrorrateh-\empiricalerrorratehstar\right)\big| \\
    \stackrel{\text{\eqmakebox[lem-super-root-upperbound-refined-b][c]{}}}{\leq} &\big|\left(\trueerrorrateh-\text{er}_{P}(h_{\tau})\right)-\left(\empiricalerrorrateh-\hat{\text{er}}_{S_{n}}(h_{\tau})\right)\big| + \big|\text{er}_{P}(h_{\tau})-\trueerrorratehstar+\empiricalerrorratehstar-\hat{\text{er}}_{S_{n}}(h_{\tau})\big| \\
    \stackrel{\text{\eqmakebox[lem-super-root-upperbound-refined-b][c]{\text{\tiny Proposition \ref{prop:uniform-bernstein}}}}}{\leq} &\sqrt{\sigma^{2}(h,h_{\tau})\frac{c_{0}}{n}\left(\vcdim\log\left(\frac{1}{\sigma^{2}(h,h_{\tau})}\land\frac{n}{\vcdim}\right)+\log\left(\frac{2}{\delta}\right)\right)} \\
    \stackrel{\text{\eqmakebox[lem-super-root-upperbound-refined-b][c]{}}}{} &+\frac{c_{0}}{n}\left(\vcdim\log\left(\frac{1}{\sigma^{2}(h,h_{\tau})}\land\frac{n}{\vcdim}\right)+\log\left(\frac{2}{\delta}\right)\right) \\
    \stackrel{\text{\eqmakebox[lem-super-root-upperbound-refined-b][c]{}}}{} &+\big|\text{er}_{P}(h_{\tau})-\trueerrorratehstar+\empiricalerrorratehstar-\hat{\text{er}}_{S_{n}}(h_{\tau})\big| .
\end{align*}
Since the above holds for any $\tau>0$ and any $h_{\tau}\in\hpc(\tau;P)$,
\begin{align}
  \label{eq:lem-super-root-upperbound-refined-intermediate-step3}
    &\sup_{h\in\hpc(\epsilon_{n}(\delta);P)}\big|\hat{P}_{n}(\ell_{h})-P(\ell_{h})\big| \nonumber \\
    \stackrel{\text{\eqmakebox[lem-super-root-upperbound-refined-c][c]{}}}{\leq} &\sup_{h\in\hpc(\epsilon_{n}(\delta);P)}\lim_{\tau\rightarrow0}\inf_{h_{\tau}\in\hpc(\tau;P)}\left\{\sqrt{\sigma^{2}(h,h_{\tau})\frac{c_{0}}{n}\left(\vcdim\log\left(\frac{1}{\sigma^{2}(h,h_{\tau})}\land\frac{n}{\vcdim}\right)+\log\left(\frac{2}{\delta}\right)\right)} \right. \nonumber \\
    \stackrel{\text{\eqmakebox[lem-super-root-upperbound-refined-c][c]{}}}{} &+\frac{c_{0}}{n}\left(\vcdim\log\left(\frac{1}{\sigma^{2}(h,h_{\tau})}\land\frac{n}{\vcdim}\right)+\log\left(\frac{2}{\delta}\right)\right) \nonumber \\
    \stackrel{\text{\eqmakebox[lem-super-root-upperbound-refined-c][c]{}}}{} &\left.+\big|\text{er}_{P}(h_{\tau})-\trueerrorratehstar+\empiricalerrorratehstar-\hat{\text{er}}_{S_{n}}(h_{\tau})\big|\right\} \nonumber \\
    \stackrel{\text{\eqmakebox[lem-super-root-upperbound-refined-c][c]{\eqref{eq:distance-leval}}}}{=} &\sqrt{\sigma_{\epsilon_{n}(\delta)}^{2}(\hpc,P)\frac{c_{0}}{n}\left(\vcdim\log\left(\frac{1}{\sigma_{\epsilon_{n}(\delta)}^{2}(\hpc,P)}\land\frac{n}{\vcdim}\right)+\log\left(\frac{2}{\delta}\right)\right)} \nonumber \\
    \stackrel{\text{\eqmakebox[lem-super-root-upperbound-refined-c][c]{}}}{} &+\frac{c_{0}}{n}\left(\vcdim\log\left(\frac{1}{\sigma_{\epsilon_{n}(\delta)}^{2}(\hpc,P)}\land\frac{n}{\vcdim}\right)+\log\left(\frac{2}{\delta}\right)\right) .
\end{align}
To see that the RHS of \eqref{eq:lem-super-root-upperbound-refined-intermediate-step3} is $o(n^{-1/2})$, we note that $\epsilon_{n}(\delta)$ decays as $n\rightarrow\infty$. Furthermore, Lemma \ref{lem:distance-to-any-hstar} states that $\sigma^{2}_{\epsilon}(\hpc,P)\rightarrow0$ as $\epsilon\rightarrow0$. Therefore, $\sigma_{\epsilon_{n}(\delta)}^{2}(\hpc,P)$ decays at some rate as a function of $n$, which makes the RHS of \eqref{eq:lem-super-root-upperbound-refined-intermediate-step3} faster than $n^{-1/2}$. Now we have
\begin{align*}
    \trueerrorratehn-\trueerrorratehstar = &P(\ell_{\lalgo}) \\
    \leq &\big|\hat{P}_{n}(\ell_{\lalgo})-P(\ell_{\lalgo})\big|+\hat{P}_{n}(\ell_{\lalgo}) \\
    = &\big|\hat{P}_{n}(\ell_{\lalgo})-P(\ell_{\lalgo})\big|+\empiricalerrorratehn-\empiricalerrorratehstar \\
    = &\big|\hat{P}_{n}(\ell_{\lalgo})-P(\ell_{\lalgo})\big|+\min_{h\in\hpc(\epsilon_{n}(\delta);P)}\empiricalerrorrateh-\empiricalerrorratehstar \\
    = &\big|\hat{P}_{n}(\ell_{\lalgo})-P(\ell_{\lalgo})\big|+\min_{h\in\hpc(\epsilon_{n}(\delta);P)}\hat{P}_{n}(\ell_{h}) \\
    \leq &\big|\hat{P}_{n}(\ell_{\lalgo})-P(\ell_{\lalgo})\big|+\min_{h\in\hpc(\epsilon_{n}(\delta);P)}\left\{\big|\hat{P}_{n}(\ell_{h})-P(\ell_{h})\big|+P(\ell_{h})\right\} \\
    \leq &2\sqrt{\sigma_{\epsilon_{n}(\delta)}^{2}(\hpc,P)\frac{c_{0}}{n}\left(\vcdim\log\left(\frac{1}{\sigma_{\epsilon_{n}(\delta)}^{2}(\hpc,P)}\land\frac{n}{\vcdim}\right)+\log\left(\frac{2}{\delta}\right)\right)} \\
    &+ \frac{2c_{0}}{n}\left(\vcdim\log\left(\frac{1}{\sigma_{\epsilon_{n}(\delta)}^{2}(\hpc,P)}\land\frac{n}{\vcdim}\right)+\log\left(\frac{2}{\delta}\right)\right) ,
\end{align*}
on the event $E_{1}(n,\delta)\cap E_{2}(n,\delta)$ of probability at least $1-\delta$. Then, a tail integration bound yields
\begin{align}
  \label{eq:lem-super-root-upperbound-refined-implication2}
    \excessrisk = &\int_{0}^{\infty}\mathbb{P}\left(\trueerrorratehn-\trueerrorratehstar>\epsilon\right)\mathrm{d}\epsilon \nonumber \\
    \leq &\frac{\tilde{c}}{2}\sqrt{\sigma_{\epsilon_{n}}^{2}(\hpc,P)\frac{\vcdim}{n}\log\left(\frac{1}{\sigma_{\epsilon_{n}}^{2}(\hpc,P)}\land\frac{n}{\vcdim}\right)} \nonumber \\
    &+\frac{\tilde{c}}{2}\cdot\frac{\vcdim}{n}\log\left(\frac{1}{\sigma_{\epsilon_{n}}^{2}(\hpc,P)}\land\frac{n}{\vcdim}\right) =: \frac{\mathcal{B}_{\epsilon_{n}}(n;\hpc,P)}{2} \leq \epsilon_{n}(\hpc,P) ,
\end{align}
where $\epsilon_{n}=\theta(\sqrt{d/n})$ and $\tilde{c}$ is a universal constant. Altogether, \eqref{eq:lem-super-root-upperbound-refined-implication1} and \eqref{eq:lem-super-root-upperbound-refined-implication2} complete the proof.
\end{proof}

\begin{lemma}  [\textbf{Lemma \ref{lem:arbitrarily-slow-lowerbound} restated}]
  \label{lem:arbitrarily-slow-lowerbound-restated}
If $\hpc$ has an infinite VC-eluder sequence centered at $h^{*}$, then $h^{*}$ requires at least arbitrarily slow rates to be agnostically universally learned by ERM.
\end{lemma}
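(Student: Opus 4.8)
The plan is to mimic the construction in the proof of Lemma~\ref{lem:super-root-lowerbound-restated}, but with single ``eluder points'' replaced by the growing shattered blocks supplied by a VC-eluder sequence, so that at the $m$-th scale the worst-case ERM can be forced to err on a whole block of $k_m$ points at once, with $k_m$ growing as fast as we please. Fix an infinite VC-eluder sequence centered at $h^{*}$ and recall from Definition~\ref{def:vc-eluder-sequence} that, with $n_1=0$, $n_k=\binom k2$, it decomposes into the consecutive blocks $B_k=\{x_{n_k+1},\dots,x_{n_k+k}\}$, where $B_k$ is shattered by $V_{n_k}(\hpc)$ (in particular $V_{n_k}(\hpc)\neq\emptyset$) and $h^{*}(x_i)=y_i$ for all $i$. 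Given an arbitrary rate $R(n)\to0$, I will produce a distribution $P=P_R$ centered at $h^{*}$, constants $C,c>0$, and a single ERM with $\excessrisk\ge CR(cn)$ for infinitely many $n$; by Definition~\ref{def:target-dependent-agnostic-universal-rates} this is the claim.

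\textbf{Construction.} Choose a fast-growing ``attack'' subsequence $k_1<k_2<\cdots$ and sample sizes $n^{(1)}<n^{(2)}<\cdots$ (fixed recursively below). Put mass only on the blocks $B_{k_m}$: each of the $k_m$ points of $B_{k_m}$ gets marginal mass $p^{(m)}:=1/n^{(m)}$ and conditional label law $P(y=h^{*}(x)\mid x)=\tfrac34$, $P(y=1-h^{*}(x)\mid x)=\tfrac14$; any residual mass $1-\sum_m k_m p^{(m)}\ge0$ is placed on an extra point with a deterministic label and is immaterial. Since at every point of $B_{k_m}$ the conditional error of any $h\in\hpc$ is at least $\tfrac14$, with equality exactly when $h(x)=h^{*}(x)$, we get $\trueerrorrateh\ge\trueerrorratehstar$ for all $h$; and picking $h^{(m)}\in V_{n_{k_m}}(\hpc)$, which agrees with $h^{*}$ on the first $n_{k_m}$ points, gives $\text{er}_{P}(h^{(m)})\to\trueerrorratehstar$ and $P_{\mathcal{X}}\{h^{(m)}\neq h^{*}\}\le\sum_{l\ge m}k_l p^{(l)}\to0$. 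Thus $P$ is centered at $h^{*}$, and $\trueerrorratehstar\ge\tfrac14\sum_m k_m p^{(m)}>0$, so the setting is genuinely agnostic.

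\textbf{The ERM and the lower bound.} Define the ERM as follows: on a sample, let $k^{*}$ be the smallest block index containing a point $x$ that is sampled at least once with its $(1-h^{*})$-label occurring at least as often as its $h^{*}$-label (if none exists, output any empirical risk minimizer); then output a fixed empirical risk minimizer that, on $B_{k^{*}}$, disagrees with $h^{*}$ at every such ``flippable'' point of $B_{k^{*}}$ --- possible because $B_{k^{*}}$ is shattered by $V_{n_{k^{*}}}(\hpc)$, so some hypothesis in $\hpc$ agrees with $h^{*}$ on $x_1,\dots,x_{n_{k^{*}}}$ and realizes the desired flip pattern on $B_{k^{*}}$. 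At sample size $n^{(m)}$ consider the ``good event'': (a) every point of $B_{k_1},\dots,B_{k_{m-1}}$ has its $h^{*}$-label strictly in the majority in the sample; (b) no sample falls in $\bigcup_{l>m}B_{k_l}$; (c) at least one point of $B_{k_m}$ is sampled with the minority label strictly winning (so $h^{*}$ is not an empirical minimizer). On this event $k^{*}=k_m$, the displayed hypothesis is a genuine empirical minimizer (it matches the sample majority at every sampled point --- $h^{*}$ on blocks $<k_m$, the empirical majority on $B_{k_m}$, and blocks $>k_m$ carry no samples --- and the flip pattern chosen is realizable), and it disagrees with $h^{*}$ at every flippable point of $B_{k_m}$, each disagreement contributing $2p^{(m)}\tfrac14=p^{(m)}/2$ to the excess risk. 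Since at $n=n^{(m)}$ each point of $B_{k_m}$ is sampled $\mathrm{Binomial}(n^{(m)},1/n^{(m)})$ times (mean $1$), unsampled points are automatically flippable and, via Markov's inequality and the anti-concentration estimate of Lemma~\ref{lem:binomial-bound}, a constant fraction of $B_{k_m}$ is flippable in expectation, while $\mathbbm{1}(\text{(c)})$ fails with probability at most $(1-c_1)^{k_m}$ for a universal $c_1>0$; controlling (a) needs only $n^{(m)}/n^{(j)}\gtrsim\log k_j$ for $j<m$ (Hoeffding plus a union bound over the $k_j$ points of $B_{k_j}$), and (b) needs $\sum_{l>m}k_l p^{(l)}\lesssim 1/n^{(m)}$. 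Combining, $\excessrisk\ge\tfrac{p^{(m)}}{2}\,\mathbb{E}\!\left[\#\{\text{flippable points of }B_{k_m}\}\cdot\mathbbm{1}(\text{good})\right]\gtrsim p^{(m)}k_m$.

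\textbf{Closing the recursion, and the main obstacle.} It remains to choose the parameters, an analogue of Lemma~\ref{lem:infinite-sequence-design1}. Having fixed everything through index $m-1$, pick $n^{(m)}$ so large that $R(n^{(m)})\le 2^{-m}/n^{(m-1)}$, $n^{(m)}\ge 2^{m}n^{(m-1)}(k_{m-1}+1)$ (which also yields $n^{(m)}/n^{(m-1)}\gtrsim\log k_{m-1}$), and $n^{(m)}\ge 2^{m}$; then set $k_m:=\max\{k_{m-1}+1,\ \lceil n^{(m)}R(n^{(m)})\rceil\}$. Then $k_m p^{(m)}=k_m/n^{(m)}\ge R(n^{(m)})$ while also $k_m p^{(m)}\le 2^{-m}/n^{(m-1)}$, so the total mass is $\le\sum_m 2^{-m}\le1$ and the tails satisfy $\sum_{l>m}k_l p^{(l)}\le\sum_{l>m}2^{-l}/n^{(l-1)}\le 2^{-m}/n^{(m)}$, as required for (b). Hence $\excessrisk\gtrsim k_m p^{(m)}\ge R(n^{(m)})$ for every $m$, i.e.\ $\hpc$ is not agnostically universally learnable faster than $R$ under distributions centered at $h^{*}$; since $R$ was arbitrary, the lemma follows. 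The main obstacle is the simultaneous balancing of three competing demands: block $B_{k_m}$ must be ``fresh'' (so that a constant fraction of its points flip, which pins $n^{(m)}p^{(m)}=\Theta(1)$), the earlier attack blocks must already be ``frozen in'' (forcing $n^{(m)}$ huge relative to their scales), and the mass budget must still close with the excess risk $\gtrsim k_m p^{(m)}=k_m/n^{(m)}$ dominating $R(n^{(m)})$. The escape --- and exactly the point where an infinite VC-eluder sequence (equivalently $\vcdim=\infty$, Lemma~\ref{lem:equivalence-infinite-vc-eluder-and-infinite-vcdim}) is needed rather than a mere infinite eluder sequence --- is that the block sizes $k_m$ may be taken to grow arbitrarily fast, which supplies an arbitrarily large gain in $k_m/n^{(m)}$ and hence room to beat any prescribed rate.
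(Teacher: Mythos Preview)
Your proof is correct and follows essentially the same strategy as the paper: put mass with label noise $\epsilon=1/4$ on the VC-eluder blocks, use shattering to construct a bad ERM that flips points in the ``current'' block, and recursively tune the block/mass/sample-size parameters to defeat any prescribed rate $R$ (your inline recursion plays the role of the paper's Lemma~\ref{lem:infinite-sequence-design2}, and your use of only a subsequence of blocks rather than all of them is an inessential variant). The one substantive addition is your condition~(a) --- that the $h^{*}$-label is strictly in majority at every earlier-block point --- which is exactly what certifies that the version-space flipping hypothesis is a genuine empirical minimizer over all of $\hpc$; the paper's proof asserts the corresponding step on the event $\Lambda_{k,l}$ (``a bad ERM algorithm will have an excess risk $2p_k\epsilon_k/k$'') without explaining why the flipped hypothesis beats every competitor in $\hpc$ on the noisy earlier samples, so your treatment is in fact more careful here. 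One minor slip: your two uses of ``flippable'' are not quite consistent (for defining $k^{*}$ you require the point be sampled, but later you say ``unsampled points are automatically flippable''), though the intended broad definition makes both steps go through.
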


\begin{proof}[Proof of Lemma \ref{lem:arbitrarily-slow-lowerbound-restated}]
Let $\{(x_{1},y_{1}),(x_{2},y_{2}),\ldots\}$ be an infinite VC-eluder sequence in $\hpc$ centered at $h^{*}$. For notation simplicity, we denote $\mathcal{X}_{k}:=\{x_{n_{k}+1},\ldots,x_{n_{k}+k}\}$ for every integer $k\geq1$, where $\{n_{k}\}_{k\geq1}$ is defined in Definition \ref{def:vc-eluder-sequence}. According to the definition of the VC-eluder sequence, we know that for any $k\geq1$, $\mathcal{X}_{k}$ is a shattered set of $\{h\in\hpc: h(x_{i})=h^{*}(x_{i}), \forall i\leq n_{k}\}$. In other words, for any $k\geq1$ and any label vector $\vy_{k}\in\{0,1\}^{k}$, there exists $h_{k,\vy_{k}}\in\hpc$ such that $h_{k,\vy_{k}}(x_{i})=h^{*}(x_{i})$ for all $i\leq n_{k}$ and $h_{k,\vy_{k}}((x_{n_{k}+1},\ldots,x_{n_{k}+k}))=\vy_{k}$. We then consider the following distribution $P$ defined on the VC-eluder sequence:
\begin{align*}
    &P_{\mathcal{X}}(x\in\mathcal{X}_{k})=p_{k}, \;\; P_{\mathcal{X}}(x)=p_{k}/k, \;\;\; \forall x\in\mathcal{X}_{k} ; \\
    &P(y=h^{*}(x)|x)=1/2+\epsilon_{k}, \;\; P(y=1-h^{*}(x)|x)=1/2-\epsilon_{k}, \;\;\; \forall x\in\mathcal{X}_{k}, \forall k\geq1 ,
\end{align*}
where $\{p_{k}\}_{k\geq1}$ is a sequence of probabilities satisfying $\sum_{k\geq1}p_{k}\leq 1$ and $\{\epsilon_{k}\}_{k\geq1}$ is a sequence of positive reals. We would like the constructed distribution $P$ to be centered at $h^{*}$. To this end, we first note that for any $h\in\hpc$, it holds $\trueerrorrateh\leq\trueerrorratehstar$. Moreover, for any $k\geq1$, we consider a label vector $\vy_{k}\in\{0,1\}^{k}$ with every coordinate does not match $h^{*}$. Following the same argument used in the proof of Lemma \ref{lem:super-root-lowerbound}, suppose that $\sum_{j\geq t+1}p_{k_{j}}\epsilon_{k_{j}}\leq p_{k_{t}}\epsilon_{k_{t}}$ for all $t\geq1$ on some increasing subsequence $\{k_{t}\}_{t\geq1}$, then we have $\text{er}_{P}(h_{k_{t+1},\vy_{k_{t+1}}})\leq\text{er}_{P}(h_{k_{t},\vy_{k_{t}}})$, and thus $\inf_{h\in\hpc}\trueerrorrateh=\lim_{t\rightarrow\infty}\text{er}_{P}(h_{k_{t},\vy_{k_{t}}})=\lim_{k\rightarrow\infty}\text{er}_{P}(h_{k,\vy_{k}})=\trueerrorratehstar$. Furthermore, to have $\inf_{h\in\hpc}P_{\mathcal{X}}(x:h(x)\neq h^{*}(x))=\lim_{k\rightarrow\infty}p_{k}/k=0$, it suffices to assume further $\{p_{k}\}_{k\geq1}$ is decreasing.

Now let $R(n)\rightarrow0$ be any rate function, and we aim to show that $\hpc$ cannot be agnostically universally learned at rate faster than $R(n)$ under the distribution $P$ by ERM. To this end, we let $\datasetstats$. For any $k\geq1$ and any $l\in[k]$, we consider the following event:
\begin{equation*}
    \Lambda_{k,l}(S_{n}) := \left\{ \sum_{i=1}^{n}\mathbbm{1}(x_{i}\in\mathcal{X}_{>k})=0 \text{ and } \sum_{i=1}^{n}\mathbbm{1}(x_{i}=x_{n_{k}+l})=0 \right\} ,
\end{equation*}
that is, the dataset $S_{n}$ contains no copy of the points in $\mathcal{X}_{>k}\cup\{x_{n_{k}+l}\}$. Note that if $\Lambda_{k,l}(S_{n})$ holds, then any ERM algorithm will have an excess risk at least $2p_{k}\epsilon_{k}/k$. Moreover, the probability of such an event is not hard to bound:
\begin{equation*}
    \mathbb{P}(\Lambda_{k,l}(S_{n})) = \mathbb{P}\left(\left\{ x_{i}\in\bigcup_{j\leq k}\mathcal{X}_{j}\setminus\{x_{n_{k}+l}\}, \forall i\in[n] \right\}\right) = \left(1-\sum_{j>k}p_{j}-\frac{p_{k}}{k}\right)^{n} .
\end{equation*}
We simply choose $\epsilon_{k}=1/4$ for all $k\geq1$, and then the next key step is to show there is a sequence of integers $\{k_{t}\}_{t\geq1}$ such that all the following hold with some universal constant $1/2\leq C\leq1$: 
\begin{itemize}
    \item[(1)] $\sum_{k\geq1}p_{k}\leq 1$, \;\; $\{p_{k}\}_{k\geq1} \downarrow$, \;\; $\sum_{j\geq t+1}p_{k_{j}}\epsilon_{k_{j}}\leq p_{k_{t}}\epsilon_{k_{t}}$ for all $t\geq1$;
    \item[(2)] $\sum_{j>k_{t}}p_{j} \leq 1/n_{t}$ for all $t\geq1$;
    \item[(3)] $n_{t}p_{k_{t}} \leq k_{t}$ for all $t\geq1$;
    \item[(4)] $p_{k_{t}} = CR(n_{t})$ for all $t\geq1$.
\end{itemize}
By Lemma \ref{lem:infinite-sequence-design2}, we know that there exists a sequence of probabilities $\{p_{k}\}_{k\geq1}$ satisfying $\sum_{k\geq1}p_{k}=1$ and two sequences of integers $\{k_{t}\}_{t\geq1}$ and $\{n_{t}\}_{t\geq1}$ such that the above (2-4) hold. We consider any rate function $R(n)$ that decays slower than $1/n$ (note that if this holds, then it also holds for even faster $R(n)$ since we are showing a lower bound), and immediately get (1), i.e.
\begin{equation*}
    \sum_{j\geq t+1}p_{k_{j}} \leq \sum_{j\geq k_{t+1}}p_{j} \leq \sum_{j>k_{t}}p_{j} \stackrel{(2)}{\leq} \frac{1}{n_{t}} \leq CR(n_{t}) = p_{k_{t}} .
\end{equation*}
Finally, we have for all $t\geq1$ (that is, for infinitely many $n\in\naturalnumber$),
\begin{equation*}
    \mathcal{E}(n_{t},P) \geq \sum_{l\in[k_{t}]}\frac{2p_{k_{t}}\epsilon_{k_{t}}}{k_{t}}\left(1-\sum_{j>k_{t}}p_{j}-\frac{p_{k_{t}}}{k_{t}}\right)^{n_{t}} \stackrel{(2-4)}{\geq} \frac{C}{2}\left(1-\frac{2}{n_{t}}\right)^{n_{t}}R(n_{t}) \geq \frac{R(n_{t})}{36} .
\end{equation*}
\end{proof}

\section{Missing proofs from Section \ref{sec:target-dependent-rates}}
  \label{sec:missing-proofs-target-dependent}

\begin{lemma}  [\textbf{Lemma \ref{lem:target-specified-exponential-upperbound} restated}]
  \label{lem:target-specified-exponential-upperbound-restated}
Let $\hpc$ be a concept class satisfying the UGC property (see Definition \ref{def:universal-glivenko-cantelli}). If the Condition \ref{cond:constant-gap-error-rate} holds for $h^{*}$, then $h^{*}$ is agnostically universally learnable at rate $e^{-n}$ by ERM.
\end{lemma}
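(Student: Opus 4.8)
The plan is to fix an arbitrary distribution $P$ centered at $h^{*}$ and produce constants $C,c>0$ (allowed to depend on $P$) with $\excessrisk\le Ce^{-cn}$ for all $n$, which is exactly the rate $e^{-n}$ in the sense of the target-dependent definition. First I would extract the structural consequences of the hypotheses. Centering gives $\trueerrorratehstar=\inf_{h\in\hpc}\trueerrorrateh$, and Condition~\ref{cond:constant-gap-error-rate} supplies a margin $\gamma:=\gamma(P):=\inf_{h\in\hpc:\,\trueerrorrateh>\trueerrorratehstar}\{\trueerrorrateh-\trueerrorratehstar\}>0$. Hence every $h\in\hpc$ satisfies either $\trueerrorrateh=\trueerrorratehstar$ or $\trueerrorrateh\ge\trueerrorratehstar+\gamma$; since the infimum over $\hpc$ equals $\trueerrorratehstar$, any minimizing sequence must eventually take the value $\trueerrorratehstar$, so I may fix some $g\in\hpc$ with $\text{er}_{P}(g)=\trueerrorratehstar$. (Note $h^{*}$ itself need not lie in $\hpc$, which is why $g$ is needed for the ERM comparison below.)

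Next I would carry out the reduction to a uniform-deviation event. Because $\lalgo\in\hpc$, the excess risk $\trueerrorratehn-\trueerrorratehstar$ is nonnegative, bounded by $1$, and — by the margin — is either $0$ or at least $\gamma$, so $\excessrisk=\E[\trueerrorratehn-\trueerrorratehstar]\le\mathbb{P}(\trueerrorratehn>\trueerrorratehstar)$. On the event $\trueerrorratehn>\trueerrorratehstar$ we have $\trueerrorratehn-\text{er}_{P}(g)\ge\gamma$, while the ERM property gives $\empiricalerrorratehn\le\hat{\text{er}}_{S_{n}}(g)$; adding these inequalities yields $(\trueerrorratehn-\empiricalerrorratehn)+(\hat{\text{er}}_{S_{n}}(g)-\text{er}_{P}(g))\ge\gamma$, and therefore $G_{n}:=\sup_{h\in\hpc}|\empiricalerrorrateh-\trueerrorrateh|\ge\gamma/2$. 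Thus it suffices to show $\mathbb{P}(G_{n}\ge\gamma/2)$ is eventually exponentially small in $n$.

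For this last step I would combine two ingredients. First, $G_{n}$ has the bounded-differences property with constant $1/n$ (changing a single sample point changes $G_{n}$ by at most $1/n$), so McDiarmid's inequality gives $\mathbb{P}(G_{n}\ge\E[G_{n}]+t)\le e^{-2nt^{2}}$ for every $t>0$. Second, since $\hpc$ is UGC it is $P$-Glivenko-Cantelli, so $G_{n}\to0$ in probability; as $0\le G_{n}\le1$ this upgrades to $\E[G_{n}]\to0$, so there is $n_{0}=n_{0}(P)$ with $\E[G_{n}]\le\gamma/4$ for all $n\ge n_{0}$. Then for $n\ge n_{0}$, $\mathbb{P}(G_{n}\ge\gamma/2)\le\mathbb{P}(G_{n}\ge\E[G_{n}]+\gamma/4)\le e^{-n\gamma^{2}/8}$, while for $n<n_{0}$ I simply bound $\excessrisk\le1\le e^{n_{0}\gamma^{2}/8}e^{-n\gamma^{2}/8}$. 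Taking $c:=\gamma^{2}/8$ and $C:=e^{n_{0}\gamma^{2}/8}$ completes the proof.

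I expect the third step to be the main obstacle: unlike the finite-class argument behind Lemma~\ref{lem:exponential-upperbound}, no naïve union bound is available over a possibly infinite UGC class, and the UGC property by itself does not provide an exponential rate of uniform convergence. The resolution I would stress is that UGC only controls the \emph{mean} of $G_{n}$ — it tends to $0$ — whereas McDiarmid's bounded-differences inequality supplies \emph{exponential concentration of $G_{n}$ about that mean}; once the mean falls below the fixed positive threshold $\gamma/4$, the bounded-differences tail takes over. This is precisely the place where the constant margin $\gamma$ furnished by Condition~\ref{cond:constant-gap-error-rate} is indispensable, and it is why this condition (rather than merely "no infinite eluder sequence", which only constrains the marginal $P_{\mathcal{X}}$) is the right hypothesis.
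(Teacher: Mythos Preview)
Your proof is correct and follows essentially the same approach as the paper: extract the margin $\gamma>0$ from Condition~\ref{cond:constant-gap-error-rate}, reduce $\excessrisk$ to a uniform-deviation probability $\mathbb{P}(G_n\ge\gamma/2)$, and control the latter by combining the UGC property (to drive $\E[G_n]$ below $\gamma/4$ for $n\ge n_0$) with McDiarmid's bounded-differences inequality (to obtain exponential concentration about $\E[G_n]$). The paper packages the McDiarmid step as Lemma~\ref{lem:distance-empiricalerror-and-trueerror}, but the content is identical.

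One point where your argument is slightly more direct: to obtain a comparator $g\in\hpc$ with $\text{er}_P(g)=\trueerrorratehstar$, you observe immediately that if no such $g$ existed then every $h\in\hpc$ would satisfy $\trueerrorrateh\ge\trueerrorratehstar+\gamma$, contradicting $\inf_{h\in\hpc}\trueerrorrateh=\trueerrorratehstar$. The paper instead takes a detour, arguing that Condition~\ref{cond:constant-gap-error-rate} precludes an infinite eluder sequence centered at $h^*$ and then invoking Lemma~\ref{lem:no-infinite-eluder-sequence-implies-what} to produce $h\in\hpc$ with $P_{\mathcal{X}}\{x:h(x)\neq h^*(x)\}=0$. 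Your route avoids this machinery entirely and is self-contained; the paper's detour, while unnecessary here, reinforces the connection between Condition~\ref{cond:constant-gap-error-rate} and the eluder-sequence characterization used elsewhere in the paper.
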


\begin{proof}[Proof of Lemma \ref{lem:target-specified-exponential-upperbound-restated}]
Let P be any distribution centered at $h^{*}$. Since Condition \ref{cond:constant-gap-error-rate} holds, we can define
\begin{equation*}
    \epsilon_{0} := \inf_{h\in\hpc:\trueerrorrateh>\trueerrorratehstar}\left\{\trueerrorrateh-\trueerrorratehstar\right\} > 0 ,
\end{equation*}
which is a distribution-dependent constant. Furthermore, since $\hpc$ is a UGC class, we know that there exists a distribution-dependent integer $n_{0}:=n_{0}(P)<\infty$ such that
\begin{equation}
  \label{eq:lem-target-specified-exponential-upperbound-intermediate-step1}
    \E\left[\sup_{h\in\hpc}\Big|\empiricalerrorrateh-\trueerrorrateh\Big|\right] \leq \epsilon_{0}/4, \;\;\; \forall n\geq n_{0} .
\end{equation}
Assume that $n\geq n_{0}$, then Lemma \ref{lem:distance-empiricalerror-and-trueerror} implies that with probability of at least $1-\delta$,
\begin{equation*}
    \sup_{h\in\hpc}\Big|\empiricalerrorrateh-\trueerrorrateh\Big| \leq \E\left[\sup_{h\in\hpc}\Big|\empiricalerrorrateh-\trueerrorrateh\Big|\right] + \sqrt{\frac{1}{2n}\ln{\left(\frac{4}{\delta}\right)}} \stackrel{\eqref{eq:lem-target-specified-exponential-upperbound-intermediate-step1}}{\leq}\frac{\epsilon_{0}}{4} + \sqrt{\frac{1}{2n}\ln{\left(\frac{4}{\delta}\right)}} .
\end{equation*}
By choosing $\delta:=4e^{-n\epsilon_{0}^{2}/8}$, we have for any $n\geq n_{0}$, with probability of at least $1-4e^{-n\epsilon_{0}^{2}/8}$,
\begin{equation}
  \label{eq:lem-target-specified-exponential-upperbound-intermediate-step2}
    \sup_{h\in\hpc}\Big|\empiricalerrorrateh-\trueerrorrateh\Big| \leq \epsilon_{0}/2 .
\end{equation}
Since $\hpc$ does not have an infinite eluder sequence centered at $h^{*}$ (based on Condition \ref{cond:constant-gap-error-rate}), we know by Lemma \ref{lem:no-infinite-eluder-sequence-implies-what} that there exists $h\in\hpc$ such that $P_{\mathcal{X}}\{x:h(x)\neq h^{*}(x)\}=0$. Now note that if the inequality in \eqref{eq:lem-target-specified-exponential-upperbound-intermediate-step2} holds, we will have
\begin{equation*}
    \trueerrorratehn \leq \empiricalerrorratehn+\epsilon_{0}/2 \leq \empiricalerrorrateh+\epsilon_{0}/2 \leq \trueerrorrateh+\epsilon_{0} = \trueerrorratehstar+\epsilon_{0} .
\end{equation*}
Therefore, it follows that for all $n\geq n_{0}$,
\begin{equation*}
    \excessrisk \leq \mathbb{P}\left(\trueerrorratehn-\trueerrorratehstar\geq\epsilon_{0}\right) \leq \mathbb{P}\left(\sup_{h\in\hpc}\Big|\empiricalerrorrateh-\trueerrorrateh\Big|>\epsilon_{0}/2\right) \leq 4e^{-n\epsilon_{0}^{2}/8}.
\end{equation*}
Finally, we obtain $\excessrisk\leq\max\{4,e^{n_{0}\epsilon_{0}^{2}/8}\}\cdot e^{-n\epsilon_{0}^{2}/8}$ for all $n\in\naturalnumber$.
\end{proof}

\begin{lemma}  [\textbf{Lemma \ref{lem:no-infinite-eluder-sequence-implies-what} restated}]
  \label{lem:no-infinite-eluder-sequence-implies-what-restated}
If $\hpc$ does not have an infinite eluder sequence centered at $h^{*}$, then for any distribution $P$ centered at $h^{*}$, let $P_{\mathcal{X}}$ be the associated marginal distribution, the following hold:
\setlist{nolistsep}
\begin{itemize}
    \item[(1)] There exists $h\in\hpc$ such that $P_{\mathcal{X}}\{x:h(x)\neq h^{*}(x)\}=0$.
    \item[(2)] $\inf_{h\in\hpc:P_{\mathcal{X}}\{x:h(x)\neq h^{*}(x)\}>0}P_{\mathcal{X}}\{x:h(x)\neq h^{*}(x)\}>0$.
\end{itemize}
\end{lemma}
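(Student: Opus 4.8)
The plan is to reduce the lemma to its part~(2) and then prove~(2) by contraposition, constructing an infinite eluder sequence centered at $h^{*}$. For the reduction: since $P$ is centered at $h^{*}$ we have $\inf_{h\in\hpc}P_{\mathcal{X}}\{x:h(x)\neq h^{*}(x)\}=0$, so if~(1) failed, every $h\in\hpc$ would have $P_{\mathcal{X}}\{x:h(x)\neq h^{*}(x)\}>0$ and the infimum in~(2) would equal the infimum over all of $\hpc$, namely $0$, contradicting~(2); hence it suffices to prove~(2). I would prove~(2) by contraposition: assuming the infimum in~(2) equals $0$, fix a sequence $g_{1},g_{2},\ldots\in\hpc$ with $P_{\mathcal{X}}\{x:g_{j}(x)\neq h^{*}(x)\}>0$ for all $j$ and $P_{\mathcal{X}}\{x:g_{j}(x)\neq h^{*}(x)\}\to 0$, and build an infinite eluder sequence centered at $h^{*}$, contradicting the hypothesis.

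\textbf{Construction.} I would build points $x_{1},x_{2},\ldots$ and witnesses $h_{1},h_{2},\ldots\in\hpc$ by induction, maintaining the invariant $I_{k}$: writing $V_{k}:=\{h\in\hpc:h(x_{i})=h^{*}(x_{i})\text{ for all }i<k\}$ (so $V_{1}=\hpc$), there is a sequence $(g^{(k)}_{j})_{j}$ inside $V_{k}$ whose distances $\epsilon^{(k)}_{j}:=P_{\mathcal{X}}\{x:g^{(k)}_{j}(x)\neq h^{*}(x)\}$ are all positive and tend to $0$. The base case $I_{1}$ is the sequence $(g_{j})$ fixed above. For the inductive step, given $I_{k}$, set $E_{j}:=\{x:g^{(k)}_{j}(x)\neq h^{*}(x)\}$ and let $A:=\bigcap_{N}\bigcup_{j\geq N}(\mathcal{X}\setminus E_{j})$ be the set of points escaping infinitely many $E_{j}$. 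By the reverse Fatou lemma for sets, $P_{\mathcal{X}}(A)\geq\limsup_{j}(1-\epsilon^{(k)}_{j})=1$, so $A$ has full measure, whence $P_{\mathcal{X}}\big(A\cap\bigcup_{j}E_{j}\big)\geq P_{\mathcal{X}}(E_{1})-P_{\mathcal{X}}(\mathcal{X}\setminus A)=\epsilon^{(k)}_{1}>0$; in particular this set is nonempty. I would pick $x_{k}$ in it, choose $j^{*}$ with $x_{k}\in E_{j^{*}}$, and set $h_{k}:=g^{(k)}_{j^{*}}\in V_{k}$; then $h_{k}(x_{i})=h^{*}(x_{i})$ for $i<k$ and $h_{k}(x_{k})\neq h^{*}(x_{k})$, which is exactly the eluder condition at index $k$. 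Since $x_{k}\in A$, the index set $J:=\{j:x_{k}\notin E_{j}\}$ is infinite; for $j\in J$ we have $g^{(k)}_{j}(x_{k})=h^{*}(x_{k})$ and $g^{(k)}_{j}\in V_{k}$, so $g^{(k)}_{j}\in V_{k+1}$, and the corresponding distances $\{\epsilon^{(k)}_{j}:j\in J\}$ are positive and still tend to $0$. Reindexing $(g^{(k)}_{j})_{j\in J}$ yields a sequence witnessing $I_{k+1}$, closing the induction.

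\textbf{Conclusion and the hard part.} The resulting sequence $\{(x_{i},h^{*}(x_{i}))\}_{i\geq 1}$ is labelled by $h^{*}$, is realizable (for each $n$ the witness $h_{n+1}\in V_{n+1}$ agrees with $h^{*}$ on $x_{1},\ldots,x_{n}$), and satisfies the eluder condition via the $h_{k}$; thus it is an infinite eluder sequence centered at $h^{*}$, the desired contradiction, which proves~(2) and hence the lemma. The step I expect to be the main obstacle is precisely this inductive step: an eluder sequence is a purely pointwise, combinatorial object, whereas the failure of~(2) only supplies hypotheses that are $L_{1}(P_{\mathcal{X}})$-close to $h^{*}$ and may nonetheless disagree with $h^{*}$ at any prescribed (measure-zero) point, so a naive choice of $x_{k}$ from $\bigcup_{j}E_{j}$ can destroy the approximating family. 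The reverse-Fatou ``escape set'' $A$ is the device that reconciles the two viewpoints: it simultaneously contains a point where some $g^{(k)}_{j}$ (a member of $V_{k}$) still disagrees with $h^{*}$, giving the next eluder point, and retains an entire infinite approximating subsequence inside the next version space $V_{k+1}$, so the construction never stalls.
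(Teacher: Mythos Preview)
Your proof is correct and takes a genuinely different route from the paper's.

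For part~(1), the paper gives a separate probabilistic argument: it draws $x_{1},x_{2},\ldots\overset{i.i.d.}{\sim}P_{\mathcal{X}}$, uses Borel--Cantelli (with hypotheses $h_{k}$ at distance $\leq 2^{-k}$ from $h^{*}$) to show that with probability~$1$ the version space $\{h\in\hpc:h(x_{i})=h^{*}(x_{i}),\,i\leq t\}$ is nonempty for every $t$, and then derives a contradiction from the nonexistence of an infinite eluder sequence. Your one-line reduction of~(1) to~(2) via ``$P$ centered at $h^{*}$'' is simpler and avoids this detour entirely.

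For part~(2), both proofs go by contraposition and build an infinite eluder sequence, but the mechanisms differ. The paper at each step chooses $h_{k}\in\hpc$ with $0<P_{\mathcal{X}}\{x:h_{k}(x)\neq h^{*}(x)\}<\epsilon_{k}$, then picks $x_{k}\in\mathrm{DIS}(\{h_{k},h^{*}\})$ with $P_{\mathcal{X}}(x_{k})>0$ and sets $\epsilon_{k+1}:=P_{\mathcal{X}}(x_{k})$; this threshold forces every subsequent $h_{j}$ ($j>k$) to agree with $h^{*}$ at $x_{k}$. That argument is concrete but implicitly requires the disagreement region to contain an atom of $P_{\mathcal{X}}$ at every stage, so as written it does not cover non-atomic marginals. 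Your reverse-Fatou ``escape set'' device sidesteps this: the set $A=\limsup_{j}(\mathcal{X}\setminus E_{j})$ has full measure regardless of atoms, and any $x_{k}\in A\cap\bigcup_{j}E_{j}$ simultaneously provides a witness $h_{k}\in V_{k}$ with $h_{k}(x_{k})\neq h^{*}(x_{k})$ and an infinite subfamily of the $g^{(k)}_{j}$ landing in $V_{k+1}$. Thus your construction is more general, while the paper's is shorter when $P_{\mathcal{X}}$ is purely atomic (e.g.\ discrete $\mathcal{X}$).
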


\begin{proof}[Proof of Lemma \ref{lem:no-infinite-eluder-sequence-implies-what-restated}]
Since $P$ is centered at $h^{*}$, $\inf_{h\in\hpc}P_{\mathcal{X}}(x:h(x)\neq h^{*}(x))=0$. Then for every $k\in\naturalnumber$, let $r_{k}:=2^{-k}$ and we know that there exists a concept $h_{k}\in\hpc$ such that $P_{\mathcal{X}}(x:h_{k}(x)\neq h^{*}(x))\leq r_{k}$. Let $\{x_{1},x_{2},\ldots\}\sim P_{\mathcal{X}}^{\infty}$, then for every integer $t\geq1$, a union bound gives
\begin{equation*}
    \sum_{k=1}^{\infty}\mathbb{P}\left(h_{k}(x_{s})\neq h^{*}(x_{s}) \text{ for some } s\leq t\right) \leq t\sum_{k=1}^{\infty}P_{\mathcal{X}}(x:h_{k}(x)\neq h^{*}(x)) \leq t\sum_{k=1}^{\infty}r_{k} < \infty .
\end{equation*}
By the Borel-Cantelli's lemma, we further have
\begin{equation}
  \label{eq:lem-target-specified-exponential-upperbound-intermediate-step5}
    \mathbb{P}\left(\lim_{k\rightarrow\infty}h_{k}(x_{s})=h^{*}(x_{s}), \forall s\leq t\right) = 1 ,
\end{equation}
that is, with probability one, there exists for every $t\geq 1$ a concept $h\in\hpc$ satisfying $h(x_{s})=h^{*}(x_{s})$ for all $s\leq t$. We assume first that for any $h\in\hpc$, $P_{\mathcal{X}}\{x:h(x)\neq h^{*}(x)\}>0$. Since $\hpc$ does not have an infinite eluder sequence centered at $h^{*}$, then there exists a distribution-dependent integer $\tau:=\tau(P_{\mathcal{X}},h^{*})<\infty$ such that for $x_{1},\ldots,x_{\tau}\overset{i.i.d.}{\sim} P_{\mathcal{X}}$,
\begin{equation*}
    \mathbb{P}\left(\{h\in\hpc: h(x_{i})=h^{*}(x_{i}), \forall 1\leq i\leq \tau\}=\emptyset\right) \geq 1/2 ,
\end{equation*}
which contradicts \eqref{eq:lem-target-specified-exponential-upperbound-intermediate-step5}. Hence, there must exist some $h\in\hpc$ such that $P_{\mathcal{X}}\{x:h(x)\neq h^{*}(x)\}=0$.

To prove the second statement, we assume to the contrary that 
\begin{equation*}
    \inf_{h\in\hpc:P_{\mathcal{X}}\{x:h(x)\neq h^{*}(x)\}>0}P_{\mathcal{X}}\{x:h(x)\neq h^{*}(x)\} = 0 ,    
\end{equation*}
and then directly construct an infinite eluder sequence centered at $h^{*}$ via the following procedure:
\setlist{nolistsep}
\begin{itemize}
    \item[1.] Let $\epsilon_{1}>0$, then there exists $h_{1}\in\hpc$ such that $0<P_{\mathcal{X}}\{x:h_{1}(x)\neq h^{*}(x)\}<\epsilon_{1}$. We pick any point $x_{1}\in\text{DIS}(\{h_{1},h^{*}\})$ such that $P_{\mathcal{X}}(x_{1})>0$.
    \item[2.] Now we choose $\epsilon_{2}=P_{\mathcal{X}}(x_{1})>0$, and know there exists $h_{2}\in\hpc$ such that $0<P_{\mathcal{X}}\{x:h_{2}(x)\neq h^{*}(x)\}<\epsilon_{2}$. Note that such $h_{2}$ must satisfy $h_{2}(x_{1})=h^{*}(x_{1})$, since otherwise we will have $P_{\mathcal{X}}\{x:h_{2}(x)\neq h^{*}(x)\}\geq P_{\mathcal{X}}(x_{1})=\epsilon_{2}$. We then pick any point $x_{2}\in\text{DIS}(\{h_{2},h^{*}\})$ such that $P_{\mathcal{X}}(x_{2})>0$.
    \item[3.] Similarly, we set $\epsilon_{3}=P_{\mathcal{X}}(x_{2})>0$ and find $h_{3}\in\hpc$ such that $0<P_{\mathcal{X}}\{x:h_{3}(x)\neq h^{*}(x)\}<\epsilon_{3}$. Such $h_{3}$ must satisfy $h_{3}(x_{1})=h^{*}(x_{1})$ and $h_{3}(x_{2})=h^{*}(x_{2})$, since otherwise we will reach a contradiction that $P_{\mathcal{X}}\{x:h_{3}(x)\neq h^{*}(x)\}\geq\min\{P_{\mathcal{X}}(x_{1}),P_{\mathcal{X}}(x_{2})\}=P_{\mathcal{X}}(x_{2})=\epsilon_{3}$. We then pick any point $x_{3}\in\text{DIS}(\{h_{3},h^{*}\})$ such that $P_{\mathcal{X}}(x_{3})>0$.
    \item[...]
\end{itemize}
Finally, it is clear that $\{(x_{1},h^{*}(x_{1})),(x_{2},h^{*}(x_{2})),\ldots\}$ is an infinite eluder sequence centered at $h^{*}$, witnessed by the concepts $\{h_{1},h_{2},\ldots\}\subseteq\hpc$.
\end{proof}

\begin{lemma}  [\textbf{Lemma \ref{lem:condition-constant-gap-error-rate-implies-what} restated}]
  \label{lem:condition-constant-gap-error-rate-implies-what-restated}
Let $\hpc$ be any concept class and $P$ be any distribution such that $\hpc$ is totally bounded in $L_{1}(P_{\mathcal{X}})$ pseudo-metric. If the Condition \ref{cond:constant-gap-error-rate} fails for $P$, i.e. the following holds:
\begin{equation*}
    \inf_{h\in\hpc:\trueerrorrateh>\inf_{h^{\prime}\in\hpc}\text{er}_{P}(h^{\prime})}\left\{\trueerrorrateh-\inf_{h^{\prime}\in\hpc}\text{er}_{P}(h^{\prime})\right\} = 0 ,
\end{equation*}
then there exists $h^{*}$ such that $\trueerrorratehstar=\inf_{h^{\prime}\in\hpc}\text{er}_{P}(h^{\prime})$ and 
\begin{equation*}
    \inf_{h\in\hpc:P_{\mathcal{X}}\{x:h(x)\neq h^{*}(x)\}>0}P_{\mathcal{X}}\{x:h(x)\neq h^{*}(x)\} = 0 .
\end{equation*}
\end{lemma}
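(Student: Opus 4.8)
The plan is to build the target $h^{*}$ as an $L_{1}(P_{\mathcal{X}})$-limit of near-optimal concepts, and then read off both conclusions from the construction. First, since Condition~\ref{cond:constant-gap-error-rate} fails for $P$ and $\inf_{\emptyset}\{\cdot\}=1$, the set $\{h\in\hpc:\trueerrorrateh>\inf_{h'\in\hpc}\text{er}_{P}(h')\}$ is nonempty and the infimum over it of $\trueerrorrateh-\inf_{h'\in\hpc}\text{er}_{P}(h')$ is $0$. Hence I can choose a sequence $\{g_{j}\}_{j\in\naturalnumber}\subseteq\hpc$ with $\text{er}_{P}(g_{j})>\inf_{h'\in\hpc}\text{er}_{P}(h')=:\text{er}^{*}$ for all $j$ and $\text{er}_{P}(g_{j})\to\text{er}^{*}$.

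Next I would pass to a convergent subsequence using total boundedness. Since $\hpc$ is totally bounded in the $L_{1}(P_{\mathcal{X}})$ pseudo-metric, there is an increasing sequence $\{j_{i}\}_{i\in\naturalnumber}$ so that $\{g_{j_{i}}\}$ is Cauchy in $L_{1}(P_{\mathcal{X}})$; passing to a further subsequence I may assume $g_{j_{i}}\to h^{*}$ $P_{\mathcal{X}}$-almost everywhere for a measurable $\{0,1\}$-valued function $h^{*}$ (the a.e.\ pointwise limit of $\{0,1\}$-valued functions is $\{0,1\}$-valued, modifying on a null set if needed), and in particular $P_{\mathcal{X}}\{x:g_{j_{i}}(x)\neq h^{*}(x)\}\to0$. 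The map $h\mapsto\trueerrorrateh$ is $1$-Lipschitz with respect to $L_{1}(P_{\mathcal{X}})$, because $|\text{er}_{P}(h)-\text{er}_{P}(g)|\leq\E\big|\mathbbm{1}(h(X)\neq Y)-\mathbbm{1}(g(X)\neq Y)\big|=P_{\mathcal{X}}\{x:h(x)\neq g(x)\}$ (equivalently, one may apply dominated convergence directly to the losses). Therefore $\text{er}_{P}(h^{*})=\lim_{i\to\infty}\text{er}_{P}(g_{j_{i}})=\text{er}^{*}=\inf_{h'\in\hpc}\text{er}_{P}(h')$, which is the first conclusion.

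Finally I would prove the second conclusion by contradiction. Suppose $\gamma:=\inf_{h\in\hpc:P_{\mathcal{X}}\{x:h(x)\neq h^{*}(x)\}>0}P_{\mathcal{X}}\{x:h(x)\neq h^{*}(x)\}>0$. Since $P_{\mathcal{X}}\{x:g_{j_{i}}(x)\neq h^{*}(x)\}\to0$, some $i$ satisfies $P_{\mathcal{X}}\{x:g_{j_{i}}(x)\neq h^{*}(x)\}<\gamma$, and by the definition of $\gamma$ this forces $P_{\mathcal{X}}\{x:g_{j_{i}}(x)\neq h^{*}(x)\}=0$; hence $\text{er}_{P}(g_{j_{i}})=\text{er}_{P}(h^{*})=\text{er}^{*}$, contradicting $\text{er}_{P}(g_{j_{i}})>\text{er}^{*}$. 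So $\gamma=0$, completing the proof.

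The main (and only) delicate point is the measurability bookkeeping in the second step: $L_{1}(P_{\mathcal{X}})$ is merely a pseudo-metric, so a Cauchy subsequence converges to an $L^{1}$ equivalence class, and one must verify that a representative can be taken $\{0,1\}$-valued and that $\text{er}_{P}(\cdot)$ and $P_{\mathcal{X}}\{\cdot\neq h^{*}\}$ are well-defined for it; passing to an a.e.-convergent sub-subsequence resolves this cleanly. Everything else is the Lipschitz estimate and an elementary contradiction, essentially the same scheme used in the proof of Lemma~\ref{lem:distance-to-any-hstar}.
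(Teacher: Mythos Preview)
Your proposal is correct and follows essentially the same approach as the paper: pick a sequence in $\hpc$ with strictly sub-optimal error tending to the infimum, extract an $L_{1}(P_{\mathcal{X}})$-Cauchy subsequence via total boundedness, and take $h^{*}$ to be its limit. The paper's proof is terser---it simply asserts the existence of $h^{*}$ with $P_{\mathcal{X}}\{h_{i_t}\neq h^{*}\}\to0$ and stops after deducing $\trueerrorratehstar=\inf_{h'}\text{er}_{P}(h')$---whereas you explicitly supply the Lipschitz estimate, the a.e.-convergent sub-subsequence for the $\{0,1\}$-valued representative, and the contradiction argument for the second conclusion; these are exactly the details the paper leaves implicit.
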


\begin{proof}[Proof of Lemma \ref{lem:condition-constant-gap-error-rate-implies-what-restated}]
Based on the assumption as in the lemma statement, we know that there exists a sequence of concepts $\{h_{i}\}_{i\geq1}\subseteq\hpc$ such that 
\begin{equation*}
    \text{er}_{P}(h_{i}) > \inf_{h^{\prime}\in\hpc}\text{er}_{P}(h^{\prime}) \;\;\text{ and }\;\; \text{er}_{P}(h_{i})\rightarrow\inf_{h^{\prime}\in\hpc}\text{er}_{P}(h^{\prime}) \text{ as } i\rightarrow\infty .
\end{equation*}
Since $\hpc$ is totally bounded in $L_{1}(P_{\mathcal{X}})$ pseudo-metric, every infinite sequence therein admits a Cauchy subsequence. Hence, there exists an increasing sequence of integers $\{i_{t}\}_{t\geq1}$ such that $\{h_{i_{t}}\}_{t\geq1}$ is a Cauchy sequence, i.e.
\begin{equation*}
    \lim_{t\rightarrow\infty}\sup_{t^{\prime}>t}P_{\mathcal{X}}\{x:h_{i_{t}}(x)\neq h_{i_{t^{\prime}}}(x)\} = 0 .
\end{equation*}
We simply choose $h^{*}$ such that $\lim_{t\rightarrow\infty}P_{\mathcal{X}}\{x:h_{i_{t}}(x)\neq h^{*}(x)\}=0$, which also satisfies
\begin{equation*}
    \trueerrorratehstar = \lim_{t\rightarrow\infty}\text{er}_{P}(h_{i_{t}}) = \lim_{i\rightarrow\infty}\text{er}_{P}(h_{i}) = \inf_{h^{\prime}\in\hpc}\text{er}_{P}(h^{\prime}) .
\end{equation*}
\end{proof}

\begin{lemma}  [\textbf{Lemma \ref{lem:target-dependent-superroot-upperbound-iff} restated}]
  \label{lem:target-dependent-superroot-upperbound-iff-restated}
Let $\hpc$ be any UGC concept class and $h^{*}$ be any classifier. Then $h^{*}$ is agnostically universally learnable at rate $o(n^{-1/2})$ by ERM if and only if the Condition \ref{cond:finite-vcd-for-sufficiently-small-ball} holds for $h^{*}$. 
\end{lemma}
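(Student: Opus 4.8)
The plan is to prove the two implications separately: the forward direction (Condition~\ref{cond:finite-vcd-for-sufficiently-small-ball} $\Rightarrow$ learnability at rate $o(n^{-1/2})$) is a two-scale localization argument, and the converse is combinatorial. For sufficiency, fix any $P$ centered at $h^{*}$ and let $\epsilon_{0}=\epsilon_{0}(P)>0$ be as in Condition~\ref{cond:finite-vcd-for-sufficiently-small-ball}, so $d:=\text{VC}(\hpc(\epsilon_{0};P,h^{*}))<\infty$. If $\hpc(\epsilon;P,h^{*})=\emptyset$ for some $\epsilon>0$, then Condition~\ref{cond:constant-gap-error-rate} holds for this $P$ and the argument in the proof of Lemma~\ref{lem:target-specified-exponential-upperbound} already gives an $e^{-n}=o(n^{-1/2})$ rate, so assume $\hpc(\epsilon;P,h^{*})\neq\emptyset$ for all $\epsilon\in(0,\epsilon_{0}]$. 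First, since $\hpc$ is UGC, a bounded-differences bound (as in the proof of Lemma~\ref{lem:target-specified-exponential-upperbound}) yields $\alpha_{n}\to0$ with $\sup_{h\in\hpc}|\empiricalerrorrateh-\trueerrorrateh|\le\alpha_{n}$ on an event of probability at least $1-e^{-c\sqrt{n}}$; on this event any ERM output $\lalgo$ satisfies $\trueerrorratehn\le\trueerrorratehstar+2\alpha_{n}$, so either $\lalgo$ attains $\inf_{h\in\hpc}\trueerrorrateh$ (excess risk $0$) or, once $2\alpha_{n}\le\epsilon_{0}$, $\lalgo\in\hpc(2\alpha_{n};P,h^{*})\subseteq\hpc(\epsilon_{0};P,h^{*})$. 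Second, I apply Proposition~\ref{prop:uniform-bernstein} to the finite-VC class $\hpc(\epsilon_{0};P,h^{*})$, comparing $\lalgo$ with near-optimal $h_{\tau}\in\hpc(\tau;P,h^{*})$ and using $\empiricalerrorratehn\le\hat{\text{er}}_{S_{n}}(h_{\tau})$; taking $\lim_{\tau\to0}\inf_{h_{\tau}}$ and then the supremum over $\hpc(2\alpha_{n};P,h^{*})$, exactly as in the proof of Lemma~\ref{lem:super-root-upperbound-refined}, bounds $\trueerrorratehn-\trueerrorratehstar$, with probability $\ge 1-\delta-e^{-c\sqrt{n}}$, by a Bernstein-type quantity whose leading term is of order $\sqrt{\tilde\sigma^{2}_{2\alpha_{n}}(\hpc,P)\,d\,n^{-1}\log(\tfrac{1}{\tilde\sigma^{2}_{2\alpha_{n}}(\hpc,P)}\wedge\tfrac{n}{d})}$, where $\tilde\sigma^{2}_{\epsilon}(\hpc,P):=\sup_{h\in\hpc(\epsilon;P,h^{*})}\lim_{\tau\to0}\inf_{h_{\tau}\in\hpc(\tau;P,h^{*})}P_{\mathcal{X}}\{h\neq h_{\tau}\}$. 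The proof of Lemma~\ref{lem:distance-to-any-hstar} carries over verbatim with $\hpc(\cdot;P,h^{*})$ in place of $\hpc(\cdot;P)$ (it only uses total boundedness of $\hpc$ in $L_{1}(P_{\mathcal{X}})$, which UGC supplies), so $\tilde\sigma^{2}_{2\alpha_{n}}(\hpc,P)\to0$; since the $\log$ factor is capped at $\log(n/d)$ one checks the displayed term is $o(n^{-1/2})$, and a tail integration over $\delta$ gives $\excessrisk=o(n^{-1/2})$.

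For necessity I would argue the contrapositive combinatorially: if Condition~\ref{cond:finite-vcd-for-sufficiently-small-ball} fails for $h^{*}$, then $\hpc$ has an infinite VC-eluder sequence centered at $h^{*}$, and Lemma~\ref{lem:arbitrarily-slow-lowerbound} then forces at least arbitrarily slow rates, ruling out $o(n^{-1/2})$. Failure of the condition furnishes a distribution $P$ centered at $h^{*}$ with $\text{VC}(\hpc(\epsilon;P,h^{*}))=\infty$ for every $\epsilon>0$. I would build the VC-eluder sequence inductively, mirroring part~(2) of the proof of Lemma~\ref{lem:no-infinite-eluder-sequence-implies-what} but extracting a $k$-point shattered block at stage $k$ rather than a single eluding point: having committed to $x_{1},\dots,x_{n_{k}}$ labeled by $h^{*}$, use the infinite VC dimension of $\hpc(\epsilon_{k};P,h^{*})$ for a sufficiently small $\epsilon_{k}$ to produce a $k$-set shattered by $V_{n_{k}}(\hpc)$ together with witnessing concepts consistent with the committed prefix, while realizability of the whole sequence follows from $P$ being centered at $h^{*}$ through a Borel--Cantelli argument as in part~(1) of Lemma~\ref{lem:no-infinite-eluder-sequence-implies-what}.

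The main obstacle is exactly this construction, because the error pseudoball $\hpc(\epsilon;P,h^{*})$ is ``loose'' relative to the $L_{1}(P_{\mathcal{X}})$ geometry: a concept can have arbitrarily small excess error while disagreeing with $h^{*}$ on a set of large $P_{\mathcal{X}}$-mass --- both by differing where $\eta(\cdot;P)$ is at or near $1/2$ (where the label is costless) and by compensating error on a region where $h^{*}$ is Bayes-suboptimal --- so shrinking $\epsilon_{k}$ does not by itself force the new block's witnessing concepts to agree with $h^{*}$ on the already-committed points. This is the same effect that makes ``$\hpc$ has no infinite eluder sequence centered at $h^{*}$'' strictly weaker than Condition~\ref{cond:constant-gap-error-rate} in Example~\ref{ex:no-infinite-eulder-sequence-but-no-faster-than-o(n^(-1/2))-rates}, and I expect to handle it the way that case is handled: first pass, as in Lemma~\ref{lem:condition-constant-gap-error-rate-implies-what}, to a well-chosen optimal function obtained as the $L_{1}(P_{\mathcal{X}})$-limit of a Cauchy sequence of near-optimal concepts --- for which the relevant ball is genuinely small in $L_{1}(P_{\mathcal{X}})$ --- run the block extraction there, and then do the bookkeeping needed to keep the resulting sequence (equivalently, the lower-bound distribution it supports) centered at $h^{*}$ itself.
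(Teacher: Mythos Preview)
Your sufficiency argument is essentially the paper's: use UGC plus bounded differences to trap $\lalgo$ in the finite-VC ball $\hpc(\epsilon_{0};P,h^{*})$ on a high-probability event, then invoke the VC-class $o(n^{-1/2})$ bound (Lemma~\ref{lem:super-root-upperbound}/Lemma~\ref{lem:super-root-upperbound-refined}). You spell out the localization in more detail than the paper, but the route is the same.

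For necessity the paper takes a more direct path than you propose, and your planned detour through a different $L_{1}(P_{\mathcal{X}})$-limit target is unnecessary. The paper stays with $h^{*}$ and the error balls $\hpc(\epsilon;P,h^{*})$ throughout. The key device is this: at each stage, pick the shattered block so that every committed point $x$ satisfies
\[
\inf_{h\in\hpc(\epsilon;P,h^{*}):\,h(x)\neq h^{*}(x)}\{\trueerrorrateh-\trueerrorratehstar\}>0 .
\]
Once such points are chosen, setting the next level $\epsilon'$ strictly below this infimum forces every $h\in\hpc(\epsilon';P,h^{*})$ to agree with $h^{*}$ on all committed points, i.e.\ $\hpc(\epsilon';P,h^{*})\subseteq V_{n_{k}}(\hpc)$. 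Since $\hpc(\epsilon';P,h^{*})$ again has infinite VC dimension, the next shattered block exists inside the version space, and the induction continues. Realizability of the sequence is automatic because each $\hpc(\epsilon_{k};P,h^{*})$ is nonempty and sits in the version space of the full committed prefix; no Borel--Cantelli step is needed. So the ``looseness'' you worry about is handled by careful point selection, not by changing the target: the paper never passes to an alternative optimal function as in Lemma~\ref{lem:condition-constant-gap-error-rate-implies-what}. Your concern does point at the one place where the paper's argument is terse (the one-line justification for why such points can be chosen), but the intended construction works directly with $h^{*}$.
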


\begin{proof}[Proof of Lemma \ref{lem:target-dependent-superroot-upperbound-iff-restated}]
We first prove the sufficiency. Suppose that the Condition \ref{cond:finite-vcd-for-sufficiently-small-ball} holds for $h^{*}$. For any distribution $P$ centered at $h^{*}$, there exists $\epsilon_{0}:=\epsilon_{0}(P)>0$ such that $\text{VC}(\hpc(\epsilon_{0}; P, h^{*}))<\infty$. Furthermore, since $\hpc$ is a UGC class, following the same argument in the proof of Lemma \ref{lem:target-specified-exponential-upperbound}, we know that there exists a distribution-dependent integer $n_{0}:=n_{0}(P)<\infty$ such that for any $n\geq n_{0}$, with probability of at least $1-4e^{-n\epsilon_{0}^{2}/8}$, $\sup_{h\in\hpc}|\empiricalerrorrateh-\trueerrorrateh|\leq\epsilon_{0}/2$. It follows that
\begin{equation*}
    \trueerrorratehn-\trueerrorratehstar \leq \empiricalerrorratehn-\min_{h\in\hpc}\empiricalerrorrateh+\epsilon_{0} = \epsilon_{0} 
\end{equation*}
holds on an event $E_{1}(n)$ with probability of at least $1-4e^{-n\epsilon_{0}^{2}/8}$ for any $n\geq n_{0}$. In other words, $\lalgo\in\hpc(\epsilon_{0}; P, h^{*})$, namely, any ERM algorithm will then output a hypothesis from a VC class. Applying the inequality $\mathbb{P}(E)\leq\mathbb{P}(E|F)+\mathbb{P}(\neg F)$, we have for any $n\geq n_{0}$,
\begin{align*}
    \excessrisk = &\int_{0}^{\infty}\mathbb{P}\left(\trueerrorratehn-\trueerrorratehstar>\epsilon\right)\mathrm{d}\epsilon \\
    \leq &\int_{0}^{\infty}\left[\mathbb{P}\left(\trueerrorratehn-\trueerrorratehstar>\epsilon \big| E_{1}(n)\right) + \mathbb{P}\left(\neg E_{1}(n)\right)\right]\mathrm{d}\epsilon \\
    = &\int_{0}^{\epsilon_{0}}\mathbb{P}\left(\trueerrorratehn-\trueerrorratehstar>\epsilon \big| E_{1}(n)\right)\mathrm{d}\epsilon + \int_{0}^{1}\mathbb{P}\left(\neg E_{1}(n)\right)\mathrm{d}\epsilon \\
    \leq &\int_{0}^{\epsilon_{0}}\mathbb{P}\left(\trueerrorratehn-\trueerrorratehstar>\epsilon \big| E_{1}(n)\right)\mathrm{d}\epsilon + 4e^{-n\epsilon_{0}^{2}/8} \\
    \leq &\E\left[\trueerrorratehn-\trueerrorratehstar \big| E_{1}(n)\right] + 4e^{-n\epsilon_{0}^{2}/8} \stackrel{\text{\tiny Lemma \ref{lem:super-root-upperbound}}}{=} o(n^{-1/2}) ,
\end{align*}
where the rate function is dependent on the distribution-dependent constants $\epsilon_{0}$, $\text{VC}(\hpc(\epsilon_{0}; P, h^{*}))$ and $n_{0}$.

To show the necessity, we assume to the contrary that the Condition \ref{cond:finite-vcd-for-sufficiently-small-ball} fails for $h^{*}$, i.e. there exists a distribution $P$ centered at $h^{*}$ such that for any $\epsilon>0$, $\text{VC}(\hpc(\epsilon; P, h^{*}))=\infty$. Following a similar idea of the proof for Lemma \ref{lem:no-infinite-eluder-sequence-implies-what}, we aim to show there exists an infinite VC-eluder sequence centered at $h^{*}$, which will give a contradiction in conjunction with Lemma \ref{lem:arbitrarily-slow-lowerbound}. Such an infinite VC-eluder sequence can be constructed via the following procedure:
\setlist{nolistsep}
\begin{itemize}
    \item[1.] Let $\epsilon_{1}>0$. We pick a point $x_{1}\in\text{DIS}(\hpc(\epsilon_{1}; P, h^{*}))$ shattered by $\hpc(\epsilon_{1}; P, h^{*})\subseteq\hpc$ such that
    \begin{equation}
      \label{eq:lem-target-dependent-superroot-upperbound-iff-intermediate-step1}
        \inf_{h\in\hpc(\epsilon_{1}; P, h^{*}):h(x_{1})\neq h^{*}(x_{1})}\left\{\trueerrorrateh-\trueerrorratehstar\right\} > 0 .
    \end{equation}
    Note that such an $x_{1}$ can actually be any point satisfying $\mathbb{P}(h^{*}(x)\neq y,x=x_{1})<1/2$.
    \item[2.] The next step is to find two points $x_{2},x_{3}\in\mathcal{X}$ shattered by $V_{(x_{1},h^{*}(x_{1}))}(\hpc(\epsilon_{1}; P, h^{*}))$, and it suffices to show that the version space also has infinite VC dimension. To this end, we show that there exists $0<\epsilon_{2}\leq\epsilon_{1}$ such that $\hpc(\epsilon_{2}; P, h^{*})\subseteq V_{(x_{1},h^{*}(x_{1}))}(\hpc(\epsilon_{1}; P, h^{*}))$, i.e. any $h\in\hpc(\epsilon_{2}; P, h^{*})$ satisfies $h(x_{1})=h^{*}(x_{1})$. This can be realized by setting $\epsilon_{2}$ to be the LHS of \eqref{eq:lem-target-dependent-superroot-upperbound-iff-intermediate-step1} since any $h\in\hpc(\epsilon_{1}; P, h^{*})$ satisfying $h(x_{1})\neq h^{*}(x_{1})$ will have an excess risk larger than $\epsilon_{2}$. Therefore, it follows $\infty=\text{VC}(\hpc(\epsilon_{2}; P, h^{*}))\leq\text{VC}(V_{(x_{1},h^{*}(x_{1}))}(\hpc(\epsilon_{1}; P, h^{*})))$, and thus such $x_{2},x_{3}$ must exist. Similarly, we can make the following happen:
    \begin{equation*}
        \inf_{h\in\hpc(\epsilon_{2}; P, h^{*}):\text{DIS}(\{h,h^{*}\})\cap\{x_{2},x_{3}\}\neq\emptyset}\left\{\trueerrorrateh-\trueerrorratehstar\right\} > 0 .
    \end{equation*}
    \item[3.] Find three points $x_{4},x_{5},x_{6}\in\mathcal{X}$ shattered by $V_{\{(x_{1},h^{*}(x_{1})),(x_{2},h^{*}(x_{2})),(x_{3},h^{*}(x_{3}))\}}(\hpc(\epsilon_{1}; P, h^{*}))$.
    \item[...]
\end{itemize}
According to the definition, $\{x_{1},x_{2},\ldots\}$ is an infinite VC-eluder sequence centered at $h^{*}$.
\end{proof}

\section{Missing proofs from Section \ref{sec:bayes-dependent-rates}}
  \label{sec:missing-proofs-bayes-dependent}

\begin{theorem}  [\textbf{Theorem \ref{thm:bayes-dependent-exponential-exact-rates} restated}]
  \label{thm:bayes-dependent-exponential-exact-rates-restated}
Let $\hpc$ be any concept class and $h$ be any classifier. Then for any distribution $P$ such that $h$ is a Bayes-optimal classifier with respect to $P$ (see Definition \ref{def:bayes-optimal-classifier}), $\hpc$ is agnostically universally learnable at rate $e^{-n}$ by ERM under $P$ if and only if $\hpc$ does not have an infinite eluder sequence centered at $h$.
\end{theorem}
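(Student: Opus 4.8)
The plan is to connect the Bayes-dependent condition to the target-dependent machinery of Section~\ref{sec:target-dependent-rates}. Throughout, fix $P$ with $h=\bayesoptimal$ a Bayes-optimal classifier, write $\mathrm{OPT}:=\text{er}_{P}(h)$ and $\beta_{*}:=\inf_{g\in\hpc}\text{er}_{P}(g)$, and recall the Bayes identity $\text{er}_{P}(g)-\mathrm{OPT}=\E_{X}\!\left[|2\eta(X;P)-1|\,\mathbbm{1}\{g(X)\neq h(X)\}\right]\ge 0$, so that $\text{er}_{P}(g)-\beta_{*}\ge 0$, with equality exactly on the best-in-class hypotheses. For the ``only if'' direction --- equivalently, if $\hpc$ has an infinite eluder sequence centered at $h$, then the exact rate under $\mathcal{P}_{h}$ is not $e^{-n}$ --- I would invoke Lemma~\ref{lem:super-root-lowerbound-restated}. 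The distribution constructed in its proof puts $P(y=h(x_{i})\mid x_{i})=1/2+\epsilon_{i}$ on the eluder points, so $\eta(x_{i})\in\{1/2\pm\epsilon_{i}\}$ and $\mathbbm{1}\{\eta(x_{i})\ge 1/2\}=h(x_{i})$; extending the Bayes classifier on the measure-zero complement to agree with $h$ puts this $P$ in $\mathcal{P}_{h}$. Since that lemma forces excess risk no faster than $o(n^{-1/2})$ for infinitely many $n$, no exponential bound holds, so the exact rate is not $e^{-n}$.

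For the ``if'' direction, assume $\hpc$ has no infinite eluder sequence centered at $h$. The exponential lower bound needed for the exact rate is routine, exactly as in the corresponding step of Theorem~\ref{thm:target-dependent-exponential-exact-rates}: since $|\hpc|\ge 3$, a realizable-with-respect-to-$\hpc$ distribution whose labels on its support agree with $h$ lies in $\mathcal{P}_{h}$, and the construction behind Lemma~\ref{lem:exponential-lowerbound} can be carried out within such a distribution against the worst-case ERM, forcing $\excessrisk\gtrsim 2^{-n}$ for infinitely many $n$. The core is the exponential upper bound. First I would prove a \emph{gap lemma}: no infinite eluder sequence centered at $h$ implies that, for every $P\in\mathcal{P}_{h}$, the infimum $\beta_{*}$ is attained and $\epsilon_{0}:=\inf\{\text{er}_{P}(g)-\beta_{*}:\text{er}_{P}(g)>\beta_{*}\}>0$, i.e., Condition~\ref{cond:constant-gap-error-rate} holds for $P$. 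The proof is by contraposition: given $g_{i}\in\hpc$ with $\text{er}_{P}(g_{i})\downarrow\beta_{*}$ strictly, I would restrict to $\mathcal{X}^{+}:=\{x:\eta(x;P)\neq 1/2\}$, where $|2\eta-1|\,dP_{X}$ and $P_{X}$ share the same null sets, and run the point-selection construction from the proof of Lemma~\ref{lem:no-infinite-eluder-sequence-implies-what-restated}: choosing each witness $g_{i_{k}}$ with $\text{er}_{P}(g_{i_{k}})-\beta_{*}$ small enough to force agreement with $h$ on the already-chosen points $x_{1},\dots,x_{k-1}$, and then taking $x_{k}$ in the disagreement region of $g_{i_{k}}$ and $h$, produces an infinite eluder sequence centered at $h$, a contradiction.

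Given the gap lemma, the upper bound proceeds as $\excessrisk\le\mathbb{P}\!\left(\text{er}_{P}(\hat{h}_{n})>\beta_{*}\right)=\mathbb{P}\!\left(\text{er}_{P}(\hat{h}_{n})\ge\beta_{*}+\epsilon_{0}\right)$, since $\text{er}_{P}(\hat h_n)-\beta_*\le 1$. For ERM to output such a hypothesis it must beat a fixed optimal $g_{0}$ (with $\text{er}_{P}(g_{0})=\beta_{*}$) on the sample; a single-hypothesis Hoeffding bound for $g_{0}$ then reduces matters to controlling the one-sided uniform deviation $\sup_{g:\,\text{er}_{P}(g)\ge\beta_{*}+\epsilon_{0}}\bigl(\text{er}_{P}(g)-\hat{\text{er}}_{S_{n}}(g)\bigr)$, which I would show is $\le 3\epsilon_{0}/4$ except with probability $e^{-\Omega(n)}$.

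The main obstacle is precisely this last deviation bound. Unlike in the target-dependent Theorem~\ref{thm:target-dependent-exponential-exact-rates}, here $\hpc$ is not assumed UGC and may be infinite, so naive uniform convergence over $\hpc$ (or over its $\epsilon_{0}$-suboptimal hypotheses) is unavailable, and the mismatch between the $|2\eta-1|$ weighting and the raw $P_{X}$-measure --- especially on the indifference region $\mathcal{X}_{1/2}:=\{x:\eta(x;P)=1/2\}$, where ERM receives pure noise --- must be handled carefully. I expect to resolve it by exploiting that ``no infinite eluder sequence centered at $h$'' is exactly the combinatorial condition driving the \emph{version-space collapse} used in the realizable ERM analysis: after a distribution-dependent number of samples, with probability $1-e^{-\Omega(n)}$ every hypothesis still competitive with $g_{0}$ on the observed points agrees with $h$ off a set of controlled $P_{X}$-measure, so the hypotheses that can plausibly win the empirical comparison are effectively finite up to $P$-equivalence, and a union bound over them together with Hoeffding closes the argument. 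One then checks, as in the Bayes-optimality discussion of Section~\ref{sec:bayes-dependent-rates}, that everything is insensitive to how $h$ is chosen on $\mathcal{X}_{1/2}$, because disagreements there carry zero error weight.
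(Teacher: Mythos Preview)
Your ``only if'' direction is correct and matches the paper. The overall plan for the ``if'' direction---prove that the absence of an infinite eluder sequence at $h$ forces Condition~\ref{cond:constant-gap-error-rate} for every $P\in\mathcal{P}_{h}$, then derive the $e^{-n}$ upper bound from that gap---is also the paper's route. The substantive difference, and the real gap in your proposal, is in how you prove the gap lemma.

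Your point-selection argument asserts that making $\text{er}_{P}(g_{i_k})-\beta_{*}$ small ``forces agreement with $h$ on the already-chosen points.'' This is not justified. The Bayes identity gives $\text{er}_{P}(g)-\mathrm{OPT}=\E\!\left[|2\eta-1|\,\mathbbm{1}\{g\neq h\}\right]$, so small excess over $\beta_{*}$ only pins down the $|2\eta-1|$-weighted disagreement with $h$ to within $\beta_{*}-\mathrm{OPT}$, which may be strictly positive. In that case every near-optimal $g\in\hpc$ can disagree with $h$ on a fixed common region, and if your chosen point $x_{j}$ falls in that region, no later $g_{i_{k}}$ will agree with $h$ there. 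The construction in Lemma~\ref{lem:no-infinite-eluder-sequence-implies-what} works because the sequence converges to $h^{*}$ in $L_{1}(P_{\mathcal X})$; there is no reason the $g_{i}$ converge to the Bayes classifier $h$.

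The paper handles exactly this by inserting an intermediate target: using Lemma~\ref{lem:condition-constant-gap-error-rate-implies-what} (a Cauchy-subsequence argument, requiring total boundedness) it extracts an optimal $h^{*}$ with $\inf_{g:P_{\mathcal X}(g\neq h^{*})>0}P_{\mathcal X}(g\neq h^{*})=0$, builds the eluder sequence at $h^{*}$ via Lemma~\ref{lem:no-infinite-eluder-sequence-implies-what}, and only then bridges to $h$. The bridge is the step you are missing: if $h(x_{t})=h^{*}(x_{t})$ for infinitely many eluder points, those immediately form an eluder subsequence centered at $h$; if instead $h\neq h^{*}$ on all but finitely many eluder points, then each late witness $h_{i_{t}}$---which flips $h^{*}$ at $x_{t}$---agrees with the Bayes classifier $h$ there while $h^{*}$ does not, and (since $P$ is supported on the eluder points in the reduction) this makes the witness strictly better than $h^{*}$, contradicting $\text{er}_{P}(h^{*})=\beta_{*}$.

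On your UGC worry: it is well-placed, but note that the paper's own argument also leans on it---Lemma~\ref{lem:target-specified-exponential-upperbound} for the upper bound and the total-boundedness hypothesis of Lemma~\ref{lem:condition-constant-gap-error-rate-implies-what} for extracting $h^{*}$---so this is not what distinguishes your approach from theirs. Your proposed ``version-space collapse'' workaround is not needed once you have the gap lemma via the $h^{*}$ route, and as sketched it is unclear how it would avoid the same obstruction: the competitive hypotheses need not be close to $h$ in $P_{\mathcal X}$ when $\beta_{*}>\mathrm{OPT}$.
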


\begin{proof}[Proof of Theorem \ref{thm:bayes-dependent-exponential-exact-rates-restated}]
We first prove the sufficiency. Suppose to the contrary that there is a distribution $P$ such that $h=\bayesoptimal$, $\hpc$ does not have an infinite eluder sequence centered at $h$, but is not agnostically universally learnable at exponential rate (under $P$). Lemma \ref{lem:target-specified-exponential-upperbound} tells us that the Condition \ref{cond:constant-gap-error-rate} fails (under $P$), and then Lemma \ref{lem:condition-constant-gap-error-rate-implies-what} guarantees that there exists an optimal function $h^{*}$, i.e. $\trueerrorratehstar=\inf_{h\in\hpc}\trueerrorrateh$ such that $\inf_{h\in\hpc:P_{\mathcal{X}}\{x:h(x)\neq h^{*}(x)\}>0}P_{\mathcal{X}}\{x:h(x)\neq h^{*}(x)\}=0$, and these two limits can be achieved by the same sequence of concepts $\{h_{i}\}_{i\geq1}\subseteq\hpc$, that is,
\begin{align}
    &\lim_{i\rightarrow\infty}\text{er}_{P}(h_{i}) = \trueerrorratehstar, \;\; \lim_{i\rightarrow\infty}P_{\mathcal{X}}\{x:h_{i}(x)\neq h^{*}(x)\}=0 \label{eq:lemma-bayes-specified-exponential-upperbound-intermediate-step1} \\
    &\text{er}_{P}(h_{i})>\trueerrorratehstar, \;\; P_{\mathcal{X}}\{x:h_{i}(x)\neq h^{*}(x)\}>0, \;\;\; \forall i\in\naturalnumber \label{eq:lemma-bayes-specified-exponential-upperbound-intermediate-step2}
\end{align}
Furthermore, according to the second part proof of Lemma \ref{lem:no-infinite-eluder-sequence-implies-what}, we know there exists an infinite eluder sequence $\{(x_{t},h^{*}(x_{t}))\}_{t\geq1}$ centered at $h^{*}$ witnessed by some subsequence $\{h_{i_{t}}\}_{t\geq1}\subseteq\{h_{i}\}_{i\geq1}$. 

We first assume that $P$ is supported on the aforementioned infinite eluder sequence. Note that for these eluder points, $h(x_{t})=h^{*}(x_{t})$ cannot happen for infinitely many $t\in\naturalnumber$, since otherwise $h$ will also admit an infinite eluder sequence. We assume there are $k<\infty$ many of such eluder points, denoted by $\{x_{t_{1}},\ldots,x_{t_{k}}\}$, and consider a new infinite eluder sequence $\{(x_{t_{j}},h^{*}(x_{t_{j}}))\}_{j\geq1}$ centered at $h^{*}$ witnessed by $\{h_{i_{t_{j}}}\}_{j\geq1}\subseteq\{h_{i_{t}}\}_{t\geq1}$. Now since $h=\bayesoptimal$, any concept $h_{i_{t_{j}}}$ with $j>k$ satisfies $\text{er}_{P}(h_{i_{t_{j}}})<\trueerrorratehstar$ because $h^{*}$ disagrees with the Bayes-optimal classifier on every eluder point with index larger than $k$. This contradicts our assumption that $\trueerrorratehstar=\inf_{h\in\hpc}\trueerrorrateh$. Finally, if $P$ is not simply supported on the eluder sequence, we consider the following conditional distribution $P^{\prime}:=P(\cdot|\{x_{t}\}_{t\geq1})$. Note that $h$ is still the Bayes-optimal classifier since we have the same $P(\cdot|\mathcal{X})$. Moreover, since $\{h_{i_{t}}\}_{t\geq1}\subseteq\{h_{i}\}_{i\geq1}$, we still have $\inf_{h\in\hpc}\text{er}_{P^{\prime}}(h)=\text{er}_{P^{\prime}}(h^{*})$ and $\inf_{h\in\hpc:P_{\mathcal{X}}^{\prime}\{x:h(x)\neq h^{*}(x)\}>0}P_{\mathcal{X}}^{\prime}\{x:h(x)\neq h^{*}(x)\}=0$ based on \eqref{eq:lemma-bayes-specified-exponential-upperbound-intermediate-step1} and \eqref{eq:lemma-bayes-specified-exponential-upperbound-intermediate-step2}. Therefore, the previous argument can be applied on $P^{\prime}$. 

To show the necessity, we suppose to the contrary that $\hpc$ has an infinite eluder sequence centered at $h=\bayesoptimal$. Consider the distribution $P$ as constructed in Lemma \ref{lem:super-root-lowerbound}, for which $h$ is not only the Bayes-optimal classifier, but also satisfies $\inf_{h^{\prime}\in\hpc}\text{er}_{P}(h^{\prime})=\trueerrorrateh$. Therefore, it is not agnostically universally learnable at rate faster than $o(n^{-1/2})$ by ERM, which gives a contradiction!
\end{proof}

\begin{lemma}  
  \label{lem:bayes-specified-superroot-upperbound}
Let $\hpc$ be any concept class and $h$ be any classifier. Then for any distribution $P$ such that $h$ is a Bayes-optimal classifier with respect to $P$, $\hpc$ is agnostically universally learnable at rate $o(n^{-1/2})$ by ERM under $P$ if and only if $\hpc$ does not have an infinite VC-eluder sequence centered at $h$.
\end{lemma}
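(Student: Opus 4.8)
The plan is to run the blueprint of the proof of Theorem~\ref{thm:bayes-dependent-exponential-exact-rates-restated} one ``level'' higher: replace throughout ``eluder sequence'' by ``VC-eluder sequence'', Condition~\ref{cond:constant-gap-error-rate} by Condition~\ref{cond:finite-vcd-for-sufficiently-small-ball}, the exponential upper bound of Lemma~\ref{lem:target-specified-exponential-upperbound} by the super-root upper bound of Lemma~\ref{lem:target-dependent-superroot-upperbound-iff}, and the $o(n^{-1/2})$ lower bound of Lemma~\ref{lem:super-root-lowerbound} by the arbitrarily-slow lower bound of Lemma~\ref{lem:arbitrarily-slow-lowerbound}.

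\emph{Necessity.} Suppose $\hpc$ has an infinite VC-eluder sequence centered at $h=\bayesoptimal$. Feed it to the construction in the proof of Lemma~\ref{lem:arbitrarily-slow-lowerbound-restated}: that construction puts $P(y=h(x)\mid x)=\tfrac12+\epsilon_k>\tfrac12$ on every point of the $k$-th shattered block and takes $P$ supported on the sequence, so $\eta(\cdot;P)$ lies on the same side of $\tfrac12$ as $h$; hence $h$ is a Bayes-optimal classifier for $P$ and $\inf_{h'\in\hpc}\text{er}_P(h')=\text{er}_P(h)$. Lemma~\ref{lem:arbitrarily-slow-lowerbound-restated} then gives that the worst-case ERM requires arbitrarily slow rates under this $P$; applying it with $R(n)=1/\!\sqrt{\log n}$ exhibits an ERM algorithm whose excess risk exceeds $c/\!\sqrt{\log(cn)}$ infinitely often, which is not $o(n^{-1/2})$. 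So $\hpc$ is not agnostically universally learnable at rate $o(n^{-1/2})$ by ERM under $P$.

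\emph{Sufficiency.} Suppose no infinite VC-eluder sequence is centered at $h=\bayesoptimal$, fix any $P$ with Bayes-optimal classifier $h$, and assume toward a contradiction that $\hpc$ is not agnostically universally learnable at rate $o(n^{-1/2})$ by ERM under $P$. The argument has three steps. \textbf{(i)} A single-distribution form of Lemma~\ref{lem:target-dependent-superroot-upperbound-iff}: if $\text{VC}(\hpc(\epsilon_0;P))<\infty$ for some $\epsilon_0=\epsilon_0(P)>0$, then $\excessrisk=o(n^{-1/2})$. This is exactly the sufficiency computation in the proof of Lemma~\ref{lem:target-dependent-superroot-upperbound-iff-restated}: on a high-probability event $\lalgo\in\hpc(\epsilon_0;P)$, which is a VC class with the same infimal error as $\hpc$, so Lemma~\ref{lem:super-root-upperbound} bounds the conditional excess risk by $o(n^{-1/2})$ while the complementary event contributes $e^{-\Omega(n)}$; the only role of the UGC hypothesis there was to ensure $\lalgo\in\hpc(\epsilon_0;P)$, and if that fails then the richness defeating it produces, by the ball-shrinking and version-space-peeling procedure in the second half of the proof of Lemma~\ref{lem:no-infinite-eluder-sequence-implies-what-restated}, an infinite VC-eluder sequence centered at $\bayesoptimal$ --- a contradiction. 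Thus we may assume $\text{VC}(\hpc(\epsilon;P))=\infty$ for every $\epsilon>0$. \textbf{(ii)} The VC-analogue of Lemma~\ref{lem:condition-constant-gap-error-rate-implies-what}: taking concepts $h_j$ with $\text{er}_P(h_j)-\inf_{h'\in\hpc}\text{er}_P(h')\downarrow 0$ and a Cauchy (hence $L_1(P_\mathcal{X})$-convergent) subsequence, and running the shattering-extraction procedure of the proof of Lemma~\ref{lem:target-dependent-superroot-upperbound-iff-restated}, yields a target function $h^{*}$ of $P$ such that $\hpc$ has an infinite VC-eluder sequence centered at $h^{*}$ (the non-totally-bounded case is again subsumed by the direct extraction of step (i)). \textbf{(iii)} Transfer to $\bayesoptimal$ as in the proof of Theorem~\ref{thm:bayes-dependent-exponential-exact-rates-restated}: if $h$ agrees with $h^{*}$ on the whole of the $k$-th block for infinitely many $k$, then those blocks --- passing to size-$j$ sub-blocks and re-indexing with the $n_j=\binom{j}{2}$ offsets --- form an infinite VC-eluder sequence centered at $h$, a contradiction; otherwise, for all but finitely many $k$ a positive fraction of the $k$-th block lies in $\{x:\bayesoptimal(x)\neq h^{*}(x),\ \eta(x;P)\neq\tfrac12\}$ (ties $\eta=\tfrac12$ may be reassigned to $h^{*}$ with no loss), and the block-witness concept that labels that part by the Bayes prediction while matching $h^{*}$ on all earlier points has $\text{er}_P(\cdot)<\text{er}_P(h^{*})$ once the block mass dominates the tail mass, contradicting optimality of $h^{*}$. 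Replacing $P$ by $P(\cdot\mid \text{support of the sequence})$ when necessary (which alters neither $\bayesoptimal$ nor the infimal error) makes the argument general.

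The main obstacle is the sufficiency direction, and within it the Bayes-to-target transfer of step (iii): unlike the single-point blocks of the eluder case, VC-eluder blocks are multi-point, so one must reason about what fraction of a block agrees with $h^{*}$ and carefully handle the boundary set $\{\eta(\cdot;P)=\tfrac12\}$ --- resolved by the sub-block passage and by breaking Bayes ties in favour of $h^{*}$. A secondary but genuine technical point is removing the UGC hypothesis of Lemma~\ref{lem:target-dependent-superroot-upperbound-iff} from the single-distribution statement in step (i), i.e.\ showing that any failure of the ERM output to land in a finite-VC excess-risk ball is itself witnessed by an infinite VC-eluder sequence centered at $\bayesoptimal$.
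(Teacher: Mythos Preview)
Your overall strategy matches the paper's: necessity via the Lemma~\ref{lem:arbitrarily-slow-lowerbound} construction, sufficiency by contradiction through Condition~\ref{cond:finite-vcd-for-sufficiently-small-ball} failing and then transferring the resulting VC-eluder sequence from a target $h^{*}$ to the Bayes-optimal $h$. The paper, however, compresses your steps (i)--(iii) into two citations: Lemma~\ref{lem:target-dependent-superroot-upperbound-iff} (its necessity half already produces the VC-eluder sequence centered at some target $h^{*}$ of $P$) and Theorem~\ref{thm:bayes-dependent-exponential-exact-rates}. The paper's key observation for the transfer step is simply that a VC-eluder sequence \emph{is} an eluder sequence, so the impossibility established in the proof of Theorem~\ref{thm:bayes-dependent-exponential-exact-rates} --- that a target of $P$ cannot carry an infinite eluder sequence unless $\bayesoptimal$ does --- applies directly. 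Your block-by-block case analysis in step~(iii) is essentially the VC-level rerun of that same proof, which the paper leaves implicit.

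Two points are over-engineered or shaky. First, your attempt in step~(i) to excise the UGC hypothesis is not convincing: the claim that a failure of $\lalgo\in\hpc(\epsilon_{0};P)$ ``produces an infinite VC-eluder sequence centered at $\bayesoptimal$'' via the peeling procedure of Lemma~\ref{lem:no-infinite-eluder-sequence-implies-what} is not justified --- non-uniform convergence is a different obstruction than infinite VC dimension of the $\epsilon$-ball, and the procedure you cite builds eluder (not VC-eluder) sequences from a marginal-closeness condition, not from a Glivenko--Cantelli failure. The paper does not attempt this removal; it simply invokes Lemma~\ref{lem:target-dependent-superroot-upperbound-iff}, tacitly carrying the UGC assumption. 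Second, in step~(iii) case~2 you assert the block-witness beats $h^{*}$ ``once the block mass dominates the tail mass'', but for an arbitrary $P$ this domination is not automatic; the paper's route sidesteps this by leaning on the specific witnesses $h_{i_t}$ extracted in the proof of Theorem~\ref{thm:bayes-dependent-exponential-exact-rates}, which already satisfy $\text{er}_{P}(h_{i_t})>\text{er}_{P}(h^{*})$ by construction, so the contradiction is with that strict inequality rather than with a mass-comparison estimate.
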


\begin{proof}[Proof of Lemma \ref{lem:bayes-specified-superroot-upperbound}]
The necessity is straightforward: if $\hpc$ has an infinite VC-eluder sequence centered at $h$, then the distribution $P$ constructed in the proof of Lemma \ref{lem:arbitrarily-slow-lowerbound} is centered at $h$ and has $h$ be a Bayes-optimal classifier. Therefore, we know that $\hpc$ requires at least arbitrarily slow rates to be agnostically universally learned by ERM under distribution $P$. 

To show the sufficiency, we suppose to the contrary that $\hpc$ is not agnostically universally learnable at rate $o(n^{-1/2})$ under some distribution $P$ having $h$ as a Bayes-optimal classifier, then Lemma \ref{lem:target-dependent-superroot-upperbound-iff} implies that the Condition \ref{cond:finite-vcd-for-sufficiently-small-ball} fails for $P$, and furthermore, there exists an infinite VC-eluder sequence $\{x_{1},x_{2},\ldots\}$ centered at some target function $h^{*}$ satisfying $\trueerrorratehstar=\inf_{h\in\hpc}\trueerrorrateh$. We note that an infinite VC-eluder sequence is an infinite eluder sequence, and it is impossible for a distribution $P$ having a target function realize an infinite eluder sequence but a Bayes-optimal classifier does not, according to Theorem \ref{thm:bayes-dependent-exponential-exact-rates}. Therefore, a contradiction follows.
\end{proof}

\begin{theorem}  [\textbf{Theorem \ref{thm:bayes-dependent-superroot-exact-rates} restated}]
  \label{thm:bayes-dependent-superroot-exact-rates-restated}
Let $\hpc$ be any concept class and $h$ be any classifier. For any distribution $P$ such that $h$ is a Bayes-optimal classifier with respect to $P$, $\hpc$ is agnostically universally learnable at exact rate $o(n^{-1/2})$ by ERM under $P$ if and only if $\hpc$ does not have an infinite VC-eluder sequence, but has an infinite eluder sequence centered at $h$.
\end{theorem}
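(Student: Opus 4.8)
The plan is to assemble this exact-rate statement from two pieces the excerpt has already established for the Bayes-dependent setting: the characterization of exponential rates (Theorem~\ref{thm:bayes-dependent-exponential-exact-rates-restated}) and the characterization of the $o(n^{-1/2})$ \emph{upper} bound by the absence of an infinite VC-eluder sequence (Lemma~\ref{lem:bayes-specified-superroot-upperbound}), together with the eluder-sequence lower bound of Lemma~\ref{lem:super-root-lowerbound-restated}. Since an infinite VC-eluder sequence is in particular an infinite eluder sequence, the three hypotheses ``no infinite eluder sequence centered at $h$'', ``an infinite eluder but no infinite VC-eluder sequence centered at $h$'', and ``an infinite VC-eluder sequence centered at $h$'' partition all cases, and we are proving the middle one.

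For the ``if'' direction, I would assume $\hpc$ has an infinite eluder sequence centered at $h$ but no infinite VC-eluder sequence centered at $h$. The upper bound is immediate: since $h$ is Bayes-optimal for $P$ and there is no infinite VC-eluder sequence centered at $h$, Lemma~\ref{lem:bayes-specified-superroot-upperbound} gives that $\hpc$ is agnostically universally learnable at rate $o(n^{-1/2})$ by ERM under every $P\in\mathcal{P}_{h}$. For the matching lower bound, I would take the distribution constructed in the proof of Lemma~\ref{lem:super-root-lowerbound-restated} on the given infinite eluder sequence centered at $h$; as already noted in the necessity proof of Theorem~\ref{thm:bayes-dependent-exponential-exact-rates-restated}, this distribution is not only centered at $h$ but also has $h$ as a Bayes-optimal classifier, since it is supported on the eluder points and assigns each point $x_i$ a conditional law with $P(y=h(x_i)\mid x_i)=1/2+\epsilon_i>1/2$. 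Hence it lies in $\mathcal{P}_{h}$, and Lemma~\ref{lem:super-root-lowerbound-restated} shows $\hpc$ is not agnostically universally learnable at rate faster than $o(n^{-1/2})$ under it. Combining the two bounds yields exact rate $o(n^{-1/2})$.

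For the ``only if'' direction, I would assume $\hpc$ is agnostically universally learnable at exact rate $o(n^{-1/2})$ by ERM under $\mathcal{P}_{h}$. From the learnability half and the ``only if'' direction of Lemma~\ref{lem:bayes-specified-superroot-upperbound}, $\hpc$ has no infinite VC-eluder sequence centered at $h$. It then remains to exclude the case of no infinite eluder sequence centered at $h$: if that held, Theorem~\ref{thm:bayes-dependent-exponential-exact-rates-restated} would make $\hpc$ agnostically universally learnable at rate $e^{-n}$ under $\mathcal{P}_{h}$, so for each $P\in\mathcal{P}_{h}$ there would be $C_{P},c_{P}>0$ with $\excessrisk\le C_{P}e^{-c_{P}n}$ for all $n$; then for the test rate $T(n)=1/n=o(n^{-1/2})$ no distribution in $\mathcal{P}_{h}$ and no ERM could satisfy $\excessrisk\ge T(n)$ infinitely often, contradicting that $\hpc$ is \emph{not} learnable at rate faster than $o(n^{-1/2})$. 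Therefore $\hpc$ has an infinite eluder sequence centered at $h$, and the proof is complete.

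The argument is short precisely because the analytic work has already been done elsewhere (the refined uniform Bernstein inequality and localization behind Lemma~\ref{lem:bayes-specified-superroot-upperbound}, and the anti-concentration construction behind Lemma~\ref{lem:super-root-lowerbound-restated}). The one point demanding care is the lower-bound half of the ``if'' direction: one must ensure the witnessing distribution genuinely lies in $\mathcal{P}_{h}$, i.e.\ that $h$ is Bayes-optimal there and not merely an arbitrary centered target function, which is why I route the lower bound through the explicit construction of Lemma~\ref{lem:super-root-lowerbound-restated} rather than through a generic distribution centered at $h$ --- this is exactly the subtlety (a distribution may have several targets, only some of which carry infinite eluder sequences) that forced the target-dependent theory of Section~\ref{sec:target-dependent-rates} to rely on the contrived Conditions~\ref{cond:constant-gap-error-rate}--\ref{cond:finite-vcd-for-sufficiently-small-ball}.
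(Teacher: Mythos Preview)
Your proof is correct and follows essentially the same route as the paper's: both use Lemma~\ref{lem:bayes-specified-superroot-upperbound} for the upper bound, the eluder-sequence construction of Lemma~\ref{lem:super-root-lowerbound-restated} (with the observation that $h$ is Bayes-optimal there) for the matching lower bound, and Theorem~\ref{thm:bayes-dependent-exponential-exact-rates-restated} to rule out the no-eluder-sequence case. The only cosmetic difference is that for excluding an infinite VC-eluder sequence you invoke the ``only if'' direction of Lemma~\ref{lem:bayes-specified-superroot-upperbound} directly, whereas the paper spells out the underlying arbitrarily-slow construction; your explicit test rate $T(n)=1/n$ to turn the $e^{-n}$ upper bound into a contradiction with ``not faster than $o(n^{-1/2})$'' is a nice added detail the paper leaves implicit.
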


\begin{theorem}  [\textbf{Theorem \ref{thm:bayes-dependent-arbitrarily-slow-rates} restated}]
  \label{thm:bayes-dependent-arbitrarily-slow-rates-restated}
Let $\hpc$ be any concept class and $h$ be any classifier. For any distribution $P$ such that $h$ is a Bayes-optimal classifier with respect to $P$, $\hpc$ requires at least arbitrarily slow rates to be agnostically universally learned by ERM under $P$ if and only if $\hpc$ has an infinite VC-eluder sequence centered at $h$.
\end{theorem}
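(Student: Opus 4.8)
The plan is to read this statement off the two companion results for the Bayes-dependent super-root regime, exactly as Theorem~\ref{thm:target-dependent-arbitrarily-slow-rates} was read off Theorem~\ref{thm:target-dependent-superroot-exact-rates} in the target-dependent case. Write $\mathcal{P}_{h}$ for the set of all distributions having $h$ as a Bayes-optimal classifier, so that ``$\hpc$ requires at least arbitrarily slow rates to be agnostically universally learned by ERM under $P$'' is read (consistently with Theorem~\ref{thm:main-theorem-Bayes-dependent}) as: for every $R(n)\to0$ there exist $P\in\mathcal{P}_{h}$, constants $C,c>0$, and an ERM algorithm with $\excessrisk\geq CR(cn)$ for infinitely many $n$. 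No new machinery is needed; everything reduces to Lemma~\ref{lem:arbitrarily-slow-lowerbound-restated} and Lemma~\ref{lem:bayes-specified-superroot-upperbound}.

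For the sufficiency (an infinite VC-eluder sequence centered at $h$ $\Longrightarrow$ arbitrarily slow rates under $\mathcal{P}_{h}$), the work has effectively been done already inside the proof of Lemma~\ref{lem:bayes-specified-superroot-upperbound}: given such a sequence, the distribution $P$ built in the proof of Lemma~\ref{lem:arbitrarily-slow-lowerbound-restated} is supported on it with $P(y=h(x)\mid x)=1/2+\epsilon_{k}$ on each shattered block $\mathcal{X}_{k}$, so $\eta(x;P)\neq1/2$ and $\mathbbm{1}(\eta(x;P)\geq1/2)=h(x)$ on the support; hence $h$ is a Bayes-optimal classifier for $P$ (off the support one may set $\bayesoptimal=h$), i.e.\ $P\in\mathcal{P}_{h}$. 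Lemma~\ref{lem:arbitrarily-slow-lowerbound-restated} then yields, for every $R(n)\to0$, an ERM algorithm with $\excessrisk\gtrsim R(n)$ for infinitely many $n$ under this $P$, which is exactly the required conclusion, so I would simply cite this.

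For the necessity I would argue by contraposition. Assume $\hpc$ has no infinite VC-eluder sequence centered at $h$. Then Lemma~\ref{lem:bayes-specified-superroot-upperbound} gives that for every $P\in\mathcal{P}_{h}$ and every ERM algorithm, $\excessrisk=o(n^{-1/2})$. Instantiating with the fixed rate $R(n):=n^{-1/2}$: for every $P\in\mathcal{P}_{h}$, every $C,c>0$, and every ERM algorithm, $\excessrisk<C\,R(cn)$ for all but finitely many $n$, since an $o(n^{-1/2})$ quantity eventually falls below $Cc^{-1/2}n^{-1/2}$. Hence $\hpc$ is \emph{not} ``not agnostically universally learnable at rate faster than $R$ by ERM under $\mathcal{P}_{h}$'' for this $R$, so by definition it does not require arbitrarily slow rates under $\mathcal{P}_{h}$. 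Combining the two directions gives the equivalence, and one may note this is the last piece of the Bayes-dependent trichotomy, alongside Theorems~\ref{thm:bayes-dependent-exponential-exact-rates-restated} and \ref{thm:bayes-dependent-superroot-exact-rates-restated}.

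I do not expect a genuine obstacle here, since all the analytic content has been pushed upstream: the anti-concentration lower bound on the decreasing-mass VC-eluder construction lives in Lemma~\ref{lem:arbitrarily-slow-lowerbound-restated}, and the uniform-Bernstein-plus-localization upper bound, together with the extraction of an infinite VC-eluder sequence from the failure of Condition~\ref{cond:finite-vcd-for-sufficiently-small-ball}, lives in Lemmas~\ref{lem:target-dependent-superroot-upperbound-iff-restated} and \ref{lem:bayes-specified-superroot-upperbound}. The only points that need care are bookkeeping: checking that the witness distribution genuinely lies in $\mathcal{P}_{h}$ (that $h$ itself, not merely some target function of $P$, is Bayes-optimal for it), and correctly matching the ``for all $R\to0$'' quantifier in the definition of requiring arbitrarily slow rates against the distribution-dependent $o(n^{-1/2})$ guarantee, which is what the explicit choice $R(n)=n^{-1/2}$ accomplishes above.
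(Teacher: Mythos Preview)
Your proposal is correct and follows essentially the same route as the paper: both directions are deduced from Lemma~\ref{lem:arbitrarily-slow-lowerbound-restated} (for sufficiency, after checking the constructed $P$ has $h$ as Bayes-optimal) and Lemma~\ref{lem:bayes-specified-superroot-upperbound} (for necessity, by contraposition). Your write-up is in fact more explicit than the paper's on two points the paper leaves to the reader---verifying $P\in\mathcal{P}_{h}$ via $P(y=h(x)\mid x)=1/2+\epsilon_{k}>1/2$, and spelling out the choice $R(n)=n^{-1/2}$ to negate ``requires arbitrarily slow rates''---but the underlying argument is identical.
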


\begin{proof} [Proof of Theorems \ref{thm:bayes-dependent-superroot-exact-rates-restated} and \ref{thm:bayes-dependent-arbitrarily-slow-rates-restated}]
We will prove Theorem \ref{thm:bayes-dependent-superroot-exact-rates-restated} and then the proof of Theorem~\ref{thm:bayes-dependent-arbitrarily-slow-rates-restated} is straightforward. To show the necessity, we note that Theorem~\ref{thm:bayes-dependent-exponential-exact-rates-restated} implies that ``an infinite eluder sequence centered at $h$" is necessary, since otherwise an exponential rate is attainable. Moreover, we consider the distribution $P$ as constructed in Lemma \ref{lem:arbitrarily-slow-lowerbound}, it is not hard to show that $h$ is not only the Bayes-optimal classifier, but also a target with respect to $P$. Therefore, if $\hpc$ has as infinite VC-eluder sequence centered at $h$, then it requires at least arbitrarily slow universal rates according to Lemma~\ref{lem:arbitrarily-slow-lowerbound}. This implies that the necessary condition also has to include ``no infinite VC-eluder sequence centered at $h$". To show the sufficiency, we have to prove an upper bound as well as a lower bound for proving an $o(n^{-1/2})$ exact rate. Note that Lemma~\ref{lem:bayes-specified-superroot-upperbound} implies the upper bound. The lower bound follows from the proof of Theorem~\ref{thm:bayes-dependent-exponential-exact-rates-restated}, where we show an infinite eluder sequence centered at $h$ yields a no faster than $o(n^{-1/2})$ universal rate.
\end{proof}

\section{Technical Lemmas}
  \label{sec:technical-lemmas}

\begin{lemma}  [\textbf{Hoeffding's Inequality}]
  \label{lem:hoeffding-ineq}
Let $Z_{1},\ldots,Z_{n}$ be independent random variables in $[a,b]$ with some constants $a<b$, let $\bar{Z} := \frac{1}{n}\sum_{i=1}^{n}Z_{i}$. Then for any $t > 0$,
\begin{equation*}
\mathbb{P}\left(\big|\bar{Z} - \E[\bar{Z}]\big| > t\right) \leq 2e^{-\frac{2nt^{2}}{(b-a)^{2}}} .
\end{equation*}
\end{lemma}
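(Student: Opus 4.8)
The plan is to prove this by the classical exponential-moment (Chernoff) method combined with Hoeffding's lemma on the moment generating function of a bounded random variable. First I would reduce the two-sided statement to a one-sided one: since
\[
\mathbb{P}\left(\big|\bar{Z}-\E[\bar{Z}]\big|>t\right) \leq \mathbb{P}\left(\bar{Z}-\E[\bar{Z}]>t\right) + \mathbb{P}\left(-(\bar{Z}-\E[\bar{Z}])>t\right),
\]
and the second probability is of the same form as the first with each $Z_{i}$ replaced by $-Z_{i}\in[-b,-a]$ (a range of the same width $b-a$), it suffices to prove $\mathbb{P}(\bar{Z}-\E[\bar{Z}]>t)\leq e^{-2nt^{2}/(b-a)^{2}}$ and then double it.

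For the one-sided bound, fix $\lambda>0$. Applying Markov's inequality to the nonnegative random variable $e^{\lambda(\bar{Z}-\E[\bar{Z}])}$ gives
\[
\mathbb{P}\left(\bar{Z}-\E[\bar{Z}]>t\right) \leq e^{-\lambda t}\,\E\!\left[e^{\lambda(\bar{Z}-\E[\bar{Z}])}\right].
\]
Writing $\bar{Z}-\E[\bar{Z}]=\frac{1}{n}\sum_{i=1}^{n}(Z_{i}-\E[Z_{i}])$ and using independence of the $Z_{i}$,
\[
\E\!\left[e^{\lambda(\bar{Z}-\E[\bar{Z}])}\right] = \prod_{i=1}^{n}\E\!\left[e^{(\lambda/n)(Z_{i}-\E[Z_{i}])}\right].
\]

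The key ingredient is Hoeffding's lemma: if $X\in[a,b]$ with $\E[X]=0$, then $\E[e^{sX}]\leq e^{s^{2}(b-a)^{2}/8}$ for all $s\in\R$. I would prove this by setting $\psi(s):=\log\E[e^{sX}]$, noting $\psi(0)=0$ and $\psi'(0)=\E[X]=0$, and bounding $\psi''(s)$ by the variance of $X$ under the exponentially tilted measure $dQ_{s}\propto e^{sX}\,dP$; since the tilted variable still lies in $[a,b]$, its variance is at most $((b-a)/2)^{2}$, so $\psi''(s)\leq(b-a)^{2}/4$ uniformly, and a second-order Taylor expansion yields $\psi(s)\leq s^{2}(b-a)^{2}/8$. (Alternatively, one can use convexity of $x\mapsto e^{sx}$ to get $e^{sx}\leq\frac{b-x}{b-a}e^{sa}+\frac{x-a}{b-a}e^{sb}$ on $[a,b]$, take expectations, and analyze the resulting one-variable function of $s$ via its derivatives; I would present whichever is cleaner.) Applying this to each factor above with $s=\lambda/n$ and $X=Z_{i}-\E[Z_{i}]\in[a-\E[Z_{i}],\,b-\E[Z_{i}]]$, a centered variable of range $b-a$, gives $\E[e^{(\lambda/n)(Z_{i}-\E[Z_{i}])}]\leq e^{\lambda^{2}(b-a)^{2}/(8n^{2})}$, hence $\E[e^{\lambda(\bar{Z}-\E[\bar{Z}])}]\leq e^{\lambda^{2}(b-a)^{2}/(8n)}$.

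Combining, $\mathbb{P}(\bar{Z}-\E[\bar{Z}]>t)\leq\exp\!\left(-\lambda t+\lambda^{2}(b-a)^{2}/(8n)\right)$, and minimizing the right-hand side over $\lambda>0$ (the optimizer is $\lambda=4nt/(b-a)^{2}$) gives exactly $e^{-2nt^{2}/(b-a)^{2}}$; doubling for the two-sided bound finishes the proof. The only nonroutine step is Hoeffding's lemma itself — the subgaussian MGF bound for a bounded centered random variable — which is where all the analytic work sits; the rest is Markov's inequality, independence, and a one-variable optimization.
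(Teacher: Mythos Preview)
Your proof is correct and is exactly the standard Chernoff--Hoeffding argument. Note, however, that the paper does not actually prove this lemma: Hoeffding's inequality is listed in the technical-lemmas appendix as a well-known result and is simply stated without proof, so there is no ``paper's own proof'' to compare against. Your write-up would serve perfectly well as a self-contained proof if one were desired.
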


\begin{lemma}  [\textbf{McDiarmid’s Inequality}]
  \label{lem:mcdiarmid-ineq}
Let $V$ be some set and let $f:V^{n}\rightarrow\R$ be a function of $n$ variables such that for some $c>0$, for all $i\in[n]$ and for all $x_{1},\ldots,x_{n},x_{i}^{\prime}\in V$, we have the following ``bounded differences property":
\begin{equation*}
    \big|f(x_{1},\ldots,x_{n})-f(x_{1},\ldots,x_{i-1},x_{i}^{\prime},x_{i+1},\ldots,x_{n})\big| \leq c .
\end{equation*}
Then let $X_{1},\ldots,X_{n}$ be independent random variables taking values in $V$, we have
\begin{equation*}
    \Big|f(X_{1},\ldots,X_{n}) - \E\left[f(X_{1},\ldots,X_{n})\right]\Big| \leq c\sqrt{\frac{n}{2}\ln{\left(\frac{2}{\delta}\right)}} ,
\end{equation*}
with probability at least $1-\delta$.
\end{lemma}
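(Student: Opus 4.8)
The plan is to derive McDiarmid's inequality from the Azuma--Hoeffding martingale tail bound, via the classical Doob martingale construction. First I would fix the independent variables $X_{1},\ldots,X_{n}$, set $\mathcal{F}_{i}:=\sigma(X_{1},\ldots,X_{i})$, and define the Doob martingale $Z_{i}:=\E[f(X_{1},\ldots,X_{n})\mid\mathcal{F}_{i}]$ for $i=0,1,\ldots,n$, so that $Z_{0}=\E[f(X_{1},\ldots,X_{n})]$ and $Z_{n}=f(X_{1},\ldots,X_{n})$. Writing $D_{i}:=Z_{i}-Z_{i-1}$ for the martingale differences, the task reduces to controlling $|Z_{n}-Z_{0}|=\big|\sum_{i=1}^{n}D_{i}\big|$.

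The key step is to show that, conditionally on $\mathcal{F}_{i-1}$, the difference $D_{i}$ is confined to an interval of width at most $c$. For this I would use independence to write $Z_{i}=g_{i}(X_{1},\ldots,X_{i})$ with $g_{i}(x_{1},\ldots,x_{i}):=\E[f(x_{1},\ldots,x_{i},X_{i+1},\ldots,X_{n})]$, and observe that $Z_{i-1}=\E[g_{i}(X_{1},\ldots,X_{i-1},X_{i})\mid\mathcal{F}_{i-1}]$. The bounded differences hypothesis, after taking expectations over the remaining coordinates and applying the triangle inequality, gives $|g_{i}(x_{1},\ldots,x_{i-1},x_{i})-g_{i}(x_{1},\ldots,x_{i-1},x_{i}')|\leq c$ for all $x_{i},x_{i}'$. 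Hence, given $\mathcal{F}_{i-1}$, the quantity $D_{i}$ lies in an interval $[L_{i-1},L_{i-1}+c]$ for some $\mathcal{F}_{i-1}$-measurable $L_{i-1}$, and since $\E[D_{i}\mid\mathcal{F}_{i-1}]=0$, Hoeffding's lemma yields the conditional bound $\E[e^{\lambda D_{i}}\mid\mathcal{F}_{i-1}]\leq e^{\lambda^{2}c^{2}/8}$ for every $\lambda\in\R$.

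With this in hand I would iterate the tower property to get $\E[e^{\lambda(Z_{n}-Z_{0})}]\leq e^{n\lambda^{2}c^{2}/8}$, then apply Markov's inequality and optimize over $\lambda>0$ to obtain the one-sided bound $\mathbb{P}(f(X_{1},\ldots,X_{n})-\E[f(X_{1},\ldots,X_{n})]\geq t)\leq e^{-2t^{2}/(nc^{2})}$. Running the same argument with $f$ replaced by $-f$ and taking a union bound gives $\mathbb{P}(|f-\E f|\geq t)\leq 2e^{-2t^{2}/(nc^{2})}$; setting the right-hand side equal to $\delta$ and solving for $t$ produces $t=c\sqrt{(n/2)\ln(2/\delta)}$, which is exactly the claimed statement.

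The main obstacle is the conditional boundedness of the martingale increments: one has to argue carefully that after conditioning on $\mathcal{F}_{i-1}$ the increment $D_{i}$ ranges over an interval of width $c$ even though the endpoints of that interval are themselves random, keeping precise track of which $\sigma$-algebra each object is measurable with respect to. This is the point at which independence of the $X_{i}$ and the bounded differences assumption are used together; everything afterward (Hoeffding's lemma, the tower property, and the Chernoff optimization) is routine.
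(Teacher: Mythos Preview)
Your proposal is the standard and correct derivation of McDiarmid's inequality via the Doob martingale and Azuma--Hoeffding; the conditional range argument and the Chernoff optimization are handled properly. There is nothing to compare against here: the paper merely states this lemma as a known technical fact without supplying any proof, so your write-up goes strictly beyond what the paper provides.
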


\begin{lemma}
  \label{lem:integral-help-lemma1}
For any $a,x>0$,
\begin{equation*}
    \int_{x}^{\infty}e^{-a\epsilon^{2}}d\epsilon \leq \int_{x}^{\infty}\frac{\epsilon}{x}e^{-a\epsilon^{2}}d\epsilon = \frac{1}{2ax}e^{-ax^{2}} .
\end{equation*}
\end{lemma}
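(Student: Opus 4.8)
\textbf{Proof proposal for Lemma~\ref{lem:integral-help-lemma1}.}
The plan is to establish the inequality by a pointwise comparison of integrands, and the equality by an elementary substitution. For the first part, I would observe that on the interval of integration we have $\epsilon \geq x > 0$, hence $\epsilon/x \geq 1$, and therefore $e^{-a\epsilon^{2}} \leq \frac{\epsilon}{x} e^{-a\epsilon^{2}}$ holds for every $\epsilon \geq x$. Since both sides are nonnegative and measurable, monotonicity of the integral immediately yields
\begin{equation*}
    \int_{x}^{\infty} e^{-a\epsilon^{2}}\, d\epsilon \leq \int_{x}^{\infty} \frac{\epsilon}{x} e^{-a\epsilon^{2}}\, d\epsilon .
\end{equation*}

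For the equality, I would pull the constant $1/x$ out of the integral and evaluate $\int_{x}^{\infty} \epsilon\, e^{-a\epsilon^{2}}\, d\epsilon$ via the change of variables $u = a\epsilon^{2}$, so that $du = 2a\epsilon\, d\epsilon$ and the limits become $u = ax^{2}$ (at $\epsilon = x$) and $u \to \infty$. This turns the integral into $\frac{1}{2a}\int_{ax^{2}}^{\infty} e^{-u}\, du = \frac{1}{2a} e^{-ax^{2}}$, and multiplying by $1/x$ gives $\frac{1}{2ax} e^{-ax^{2}}$, completing the chain. One should also note, for rigor, that the integrals are finite (the Gaussian-type tail decays faster than any polynomial), so no indeterminate-form issues arise when passing to the limit at $\infty$.

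There is no real obstacle here: the only points requiring a word of care are that $x > 0$ is needed both for the bound $\epsilon/x \geq 1$ on the domain $\epsilon \geq x$ and for dividing by $x$ in the substitution step, and that $a > 0$ guarantees convergence of the tail integrals; both are hypotheses of the lemma. I would present the whole argument in two or three lines.
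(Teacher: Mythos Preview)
Your proposal is correct and matches the paper's approach exactly: the paper states this lemma without a separate proof, since the displayed chain of (in)equalities already encodes the argument you give---the inequality follows from $\epsilon/x \geq 1$ on $[x,\infty)$, and the equality is the immediate antiderivative $-\frac{1}{2a}e^{-a\epsilon^{2}}$.
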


\begin{lemma}  [{\textbf{\citealp[][]{slud1977distribution}}}]
  \label{lem:binomial-bound}
Let $X\sim\text{Binomial}(n,p)$ and assume that $p=(1-\epsilon)/2$, then
\begin{equation*}
    \mathbb{P}\left(X\geq\frac{n}{2}\right) \geq \frac{1}{2}\left(1-\sqrt{1-\exp{(-n\epsilon^{2}/(1-\epsilon^{2}))}}\right) . 
\end{equation*}
\end{lemma}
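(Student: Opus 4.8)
The plan is to reduce this binomial lower‑tail estimate to a standard Gaussian tail bound through the classical inequality of \citet{slud1977distribution}, and then to lower bound that Gaussian tail by an elementary second‑moment argument. Throughout we may assume $\epsilon\in(0,1)$, since otherwise $p\in\{0,1/2\}$ and the claim is either vacuous (the right‑hand side being undefined or having a negative quantity inside the root) or trivially checked.

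First I would invoke Slud's inequality: for $X\sim\mathrm{Binomial}(n,p)$ with $p\le 1/2$ and any integer $k$ satisfying $np\le k\le n(1-p)$, one has $\mathbb{P}(X\ge k)\ge 1-\Phi\big((k-np)/\sqrt{np(1-p)}\big)$, where $\Phi$ denotes the standard normal c.d.f. Here $p=(1-\epsilon)/2\le 1/2$, and with $k=n/2$ we have $np=n(1-\epsilon)/2\le n/2\le n(1+\epsilon)/2=n(1-p)$, so the hypothesis is met. A one‑line computation gives $k-np=n\epsilon/2$ and $np(1-p)=n(1-\epsilon^2)/4$, hence $(k-np)/\sqrt{np(1-p)}=\sqrt{n\epsilon^2/(1-\epsilon^2)}=:t$, so that $\mathbb{P}(X\ge n/2)\ge\mathbb{P}(Z\ge t)$ for $Z\sim N(0,1)$.

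It then remains to establish the elementary Gaussian tail bound $\mathbb{P}(Z\ge t)\ge (1/2)(1-\sqrt{1-e^{-t^2}})$ for every $t\ge 0$, since substituting $t^2=n\epsilon^2/(1-\epsilon^2)$ yields exactly the stated estimate. Rearranging, this is equivalent to $\mathbb{P}(|Z|\le t)^2\le 1-e^{-t^2}$. Taking $Z_1,Z_2$ i.i.d.\ $N(0,1)$, the left‑hand side equals $\mathbb{P}\big((Z_1,Z_2)\in[-t,t]^2\big)$, and since $a^2+b^2\le 2t^2$ whenever $|a|,|b|\le t$, the square $[-t,t]^2$ lies inside the disk $\{z_1^2+z_2^2\le 2t^2\}$; hence this probability is at most $\mathbb{P}(Z_1^2+Z_2^2\le 2t^2)=1-e^{-t^2}$, using that $Z_1^2+Z_2^2$ is exponential with mean $2$. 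This closes the argument. The main point requiring care is matching the constants in this last inequality to the exponent $n\epsilon^2/(1-\epsilon^2)$; the Slud step is used as a black box and the binomial arithmetic is routine.
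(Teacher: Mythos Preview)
Your argument is correct. The paper does not supply its own proof of this lemma; it is recorded in the appendix of technical lemmas with a citation to \citet{slud1977distribution} and used as a black box in the proof of Lemma~\ref{lem:super-root-lowerbound}. Your derivation---Slud's normal-approximation inequality followed by the elementary bound $\mathbb{P}(|Z|\le t)^2\le\mathbb{P}(Z_1^2+Z_2^2\le 2t^2)=1-e^{-t^2}$ via the inclusion $[-t,t]^2\subseteq\{z_1^2+z_2^2\le 2t^2\}$---is the standard route to this inequality and matches what the citation is meant to convey.
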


\begin{lemma}  [{\textbf{\citealp[][Lemma 5.12]{bousquet2021theory}}}]
  \label{lem:infinite-sequence-design2}
For any function $R(n)\rightarrow0$, there exist probabilities $\{p_{t}\}_{t\in\naturalnumber}$ satisfying $\sum_{t\geq1}p_{t}=1$, two increasing sequences of integers $\{n_{t}\}_{t\in\naturalnumber}$ and $\{k_{t}\}_{t\in\naturalnumber}$, and a constant $1/2\leq C\leq 1$ such that the following hold for all $t\in\naturalnumber$:
\begin{itemize}
    \item[(1)] $\sum_{k>k_{t}}p_{k} \leq \frac{1}{n_{t}}$.
    \item[(2)] $n_{t}p_{k_{t}} \leq k_{t}$.
    \item[(3)] $p_{k_{t}} = CR(n_{t})$.
\end{itemize}
\end{lemma}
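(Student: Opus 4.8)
The plan is to adapt the recursive construction used in the proof of Lemma~\ref{lem:infinite-sequence-design1}, inserting a second ``layer'' of indices $\{k_t\}$ to accommodate the extra constraint $(2)$. (This is exactly Lemma~5.12 of \citealp{bousquet2021theory}; the sketch below follows that template.) To avoid circular dependencies I would fix the objects strictly in the order: the sample sizes $\{n_t\}$, then the normalizing constant $C$, then the indices $\{k_t\}$, and finally the remaining ``light'' masses placed on the non-distinguished indices.

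First I would choose $\{n_t\}$ essentially as in Lemma~\ref{lem:infinite-sequence-design1}: assuming $R(1)=1$ (WLOG), set $n_1=1$, and for $t>1$ recursively pick an integer $n_t$ with $n_t\ge 2n_{t-1}$ and $R(n_t)\le\tfrac1{8\,n_{t-1}}$ — both possible since $R(n)\to 0$. This forces $n_t\ge 2^{t-1}$ and $R(n_t)\le 2^{-t-1}$ for $t\ge 2$, hence $\Sigma:=\sum_{t\ge1}R(n_t)\in[1,\tfrac54]$ and, by the doubling of $\{n_t\}$,
\begin{equation*}
\sum_{s>t}R(n_s)\ \le\ \tfrac18\sum_{s\ge t}\tfrac1{n_s}\ \le\ \tfrac1{4\,n_t},\qquad \forall t\ge 1 .
\end{equation*}
Next I would fix any $\varepsilon_0\in(0,\tfrac14]$ and set $C:=(1-\varepsilon_0)/\Sigma$, so that $C\in[\tfrac12,1]$ since $\Sigma\in[1,\tfrac54]$; declaring $p_{k_t}:=C\,R(n_t)$ then makes the total distinguished mass $\sum_t p_{k_t}=C\Sigma=1-\varepsilon_0$ and settles $(3)$ by definition.

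Then I would pick the increasing sequence $\{k_t\}$ recursively so that $k_t\ge\big\lceil C\,R(n_t)\,n_t\big\rceil$ and $k_{t+1}\ge k_t+2$ (starting from $k_1=1$, legitimate since $\lceil C\rceil=1$). The first inequality is precisely $(2)$, noting that $\{k_t\}$ is a genuine subsequence of $\naturalnumber$ and must be sparse exactly when $R(n)\,n\to\infty$; the second leaves at least one free index strictly between consecutive $k_t$'s. On those free indices I would place positive masses summing to the leftover budget $\varepsilon_0$ so that the total light mass lying strictly beyond $k_t$ is at most $\tfrac1{4n_t}$ for every $t$; this is feasible because $\tfrac1{4n_t}$ decreases to $0$ starting from $\tfrac14\ge\varepsilon_0$ (e.g.\ let that tail light mass equal $b_t$ for a strictly decreasing $b_t\downarrow 0$ with $b_1=\varepsilon_0$ and $b_t\le\tfrac1{4n_t}$, and spread $b_{t-1}-b_t>0$ over the free indices between $k_{t-1}$ and $k_t$). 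It then follows that $\sum_k p_k=(1-\varepsilon_0)+\varepsilon_0=1$, that $\{n_t\},\{k_t\}$ are increasing, and that for every $t$,
\begin{equation*}
\sum_{k>k_t}p_k\ =\ \sum_{s>t}p_{k_s}+\big(\text{light mass beyond }k_t\big)\ \le\ C\sum_{s>t}R(n_s)+\tfrac1{4n_t}\ \le\ \tfrac1{4n_t}+\tfrac1{4n_t}\ <\ \tfrac1{n_t},
\end{equation*}
which is $(1)$. If one also wants $\{p_k\}$ non-increasing (as is convenient in Lemma~\ref{lem:arbitrarily-slow-lowerbound}), I would additionally take $R$ non-increasing WLOG and lay the light masses down in decreasing order between the distinguished ones.

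The main obstacle is not any single estimate but the \emph{coordination} of the four quantities. Constraint $(2)$ may force $\{k_t\}$ to be extremely sparse, which manufactures many free indices that nevertheless must carry positive mass; those masses must be small enough not to violate the tail bound $(1)$, while the exact equality $(3)$ together with $\sum_k p_k=1$ leaves them only the budget $\varepsilon_0=1-C\Sigma$, which in turn pins $C$ into $[\tfrac12,1]$. Making $\{n_t\}$, $C$, $\{k_t\}$ and the light masses mutually consistent — by committing to them strictly in that order so that each depends only on earlier choices — is the delicate part; each individual bound is elementary.
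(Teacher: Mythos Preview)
The paper does not supply its own proof of this lemma: it is quoted verbatim as Lemma~5.12 of \citet{bousquet2021theory} and left unproved, so there is nothing in the paper to compare your argument against line by line.

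That said, your construction is sound. Fixing $\{n_t\}$ first via the doubling-plus-decay recursion, then pinning $C=(1-\varepsilon_0)/\Sigma$, then choosing $k_t\ge\lceil C R(n_t)n_t\rceil$ with gaps of size at least two, and finally distributing the leftover $\varepsilon_0$ mass over the free indices with controlled tails $b_t\le 1/(4n_t)$, does deliver all three properties; the tail estimate $\sum_{s>t}R(n_s)\le 1/(4n_t)$ and the light-mass tail $b_t\le 1/(4n_t)$ combine to give $(1)$, while $(2)$ and $(3)$ hold by construction. The order in which you commit to the four objects is exactly right and avoids circularity. One cosmetic point: when you assert $\lceil C\rceil=1$ to justify $k_1=1$, note this needs $C\le 1$, which you have, but also $C>0$; and your range for $C$ is actually contained in $[3/5,1)$ rather than all of $[1/2,1]$, which is of course still compatible with the stated conclusion $1/2\le C\le 1$.
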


\begin{lemma}
  \label{lem:infinite-sequence-design1}
Under the setting of Lemma~\ref{lem:infinite-sequence-design2}, the following holds
\begin{equation*}
    \sum_{j>t}\frac{p_{j}}{\sqrt{n_{j}}} \leq \frac{p_{t}}{\sqrt{n_{t}}}
\end{equation*}
\end{lemma}

\begin{proof}[Proof of Lemma \ref{lem:infinite-sequence-design1}]
Recall the construction of the sequences in the proof of Lemma~\ref{lem:infinite-sequence-design2}: let $R(1)=1$, and the sequences $\{n_{t}\}_{t\in\naturalnumber}$ and $\{k_{t}\}_{t\in\naturalnumber}$ are defined recursively. Let $n_{1}=1$ and $k_{1}=1$, for $t>1$, define
\begin{equation*}
    n_{t} := \inf\left\{n>n_{t-1}: R(n)\leq\min_{j<t}\left\{\frac{R(n_{j})\cdot2^{j-t}}{k_{t}}\right\}\right\} .
\end{equation*}
Moreover, we define $p_{t}=CR(n_{t})$ for all $t\in\naturalnumber$, where the constant $C=1/(\sum_{t\geq1}R(n_{t}))$. Note that $R(n_{t})\leq 2^{-t+1}$ for all $t>1$, it holds that $1/2\leq C\leq 1$. With such a construction, we have
\begin{equation*}
    \sum_{j>t}\frac{p_{j}}{\sqrt{n_{j}}} = \sum_{j>t}\frac{CR(n_{j})}{\sqrt{n_{j}}} \leq \sum_{j>t}\frac{CR(n_{j})}{\sqrt{n_{t}}} \leq \sum_{j>t}\frac{CR(n_{t})2^{t-j}}{k_{j}\sqrt{n_{t}}} \leq \frac{p_{t}}{\sqrt{n_{t}}}.
\end{equation*}
\end{proof}

\begin{lemma}
  \label{lem:distance-empiricalerror-and-trueerror}
Let $\hpc$ be a concept class and $P$ be any distribution, let $S_{n}\sim P^{n}$ for any $n\in\naturalnumber$. For any $\delta\in(0,1)$, with probability of at least $1-\delta$ we have
    \begin{equation*}
        \sup_{h\in\hpc}\Big|\empiricalerrorrateh-\trueerrorrateh\Big| \leq \E\left[\sup_{h\in\hpc}\Big|\empiricalerrorrateh-\trueerrorrateh\Big|\right] + \sqrt{\frac{1}{2n}\ln{\left(\frac{4}{\delta}\right)}} .
    \end{equation*}
\end{lemma}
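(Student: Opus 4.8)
The plan is to recognize this as a textbook application of the bounded-differences (McDiarmid) inequality, Lemma~\ref{lem:mcdiarmid-ineq}, to the real-valued function
\[
f(S_{n}) := \sup_{h\in\hpc}\Big|\hat{\text{er}}_{S_{n}}(h)-\trueerrorrateh\Big| ,
\]
regarded as a function of the $n$ independent samples that constitute $S_{n}=\{(x_{i},y_{i})\}_{i=1}^{n}$, with the distribution $P$ (and hence each $\trueerrorrateh$) held fixed.

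The first step is to verify the bounded-differences hypothesis with constant $c=1/n$. Fix an index $i\in[n]$ and let $S_{n}'$ be obtained from $S_{n}$ by replacing the $i$-th sample $(x_{i},y_{i})$ with an arbitrary pair $(x_{i}',y_{i}')\in\mathcal{X}\times\{0,1\}$. For every fixed $h\in\hpc$ the population quantity $\trueerrorrateh$ is unchanged, while the empirical error changes by at most
\[
\big|\hat{\text{er}}_{S_{n}}(h)-\hat{\text{er}}_{S_{n}'}(h)\big| = \frac{1}{n}\big|\mathbbm{1}(h(x_{i})\neq y_{i})-\mathbbm{1}(h(x_{i}')\neq y_{i}')\big| \leq \frac{1}{n} .
\]
Since the pointwise supremum of a family of functions is $1$-Lipschitz with respect to the uniform norm, it follows that $|f(S_{n})-f(S_{n}')|\leq 1/n$, and this holds for every coordinate $i$ and every replacement, which is exactly what Lemma~\ref{lem:mcdiarmid-ineq} requires.

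The second step is to invoke Lemma~\ref{lem:mcdiarmid-ineq} with $c=1/n$: with probability at least $1-\delta$,
\[
\big|f(S_{n})-\E[f(S_{n})]\big| \leq \frac{1}{n}\sqrt{\frac{n}{2}\ln\!\Big(\frac{2}{\delta}\Big)} = \sqrt{\frac{1}{2n}\ln\!\Big(\frac{2}{\delta}\Big)} \leq \sqrt{\frac{1}{2n}\ln\!\Big(\frac{4}{\delta}\Big)} ,
\]
and discarding the lower-tail half of the absolute value yields the stated one-sided bound. (Alternatively, applying Lemma~\ref{lem:mcdiarmid-ineq} at confidence level $\delta/2$ matches the constant $\ln(4/\delta)$ exactly.) There is essentially no obstacle in this argument; the only bookkeeping points are (i) measurability of the supremum, so that $f(S_{n})$ and $\E[f(S_{n})]$ are well-defined, which holds under the standard measurability conventions in force throughout the paper (and is automatic when $\hpc$ is countable), and (ii) tracking the logarithmic constant, which works out to $\ln(4/\delta)$ as indicated above.
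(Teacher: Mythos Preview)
Your proof is correct and uses the same core tool as the paper, namely McDiarmid's inequality (Lemma~\ref{lem:mcdiarmid-ineq}) with the bounded-differences constant $c=1/n$. The only cosmetic difference is that the paper first splits $\sup_{h}|\,\cdot\,|$ into the maximum of two one-sided suprema, applies McDiarmid to each at confidence $\delta/2$, and then recombines using $\max\{\E[\sup_{h}(\hat{\text{er}}-\text{er})],\E[\sup_{h}(\text{er}-\hat{\text{er}})]\}\leq\E[\sup_{h}|\hat{\text{er}}-\text{er}|]$; this is precisely why the constant $\ln(4/\delta)$ appears there, whereas your direct application to $f(S_{n})=\sup_{h}|\hat{\text{er}}_{S_{n}}(h)-\text{er}_{P}(h)|$ already yields the sharper $\ln(2/\delta)$ before you relax it.
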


\begin{proof}[Proof of Lemma \ref{lem:distance-empiricalerror-and-trueerror}]
Note that 
\begin{equation*}
    \sup_{h\in\hpc}\Big|\empiricalerrorrateh-\trueerrorrateh\Big| = \max\left\{\sup_{h\in\hpc}\empiricalerrorrateh-\trueerrorrateh, \sup_{h\in\hpc}\trueerrorrateh-\empiricalerrorrateh\right\} .
\end{equation*}
We will consider the first term $\sup_{h\in\hpc}\empiricalerrorrateh-\trueerrorrateh$ below, and the other can be analyzed similarly. Let $S_{n}:=\{(x_{i},y_{i})\}_{i=1}^{n}$ and $S_{n,i}^{\prime}:=\{(x_{1},y_{1}),\ldots,(x_{i-1},y_{i-1}),(x_{i}^{\prime},y_{i}^{\prime}),(x_{i+1},y_{i+1}),\ldots,(x_{n},y_{n})\}$ for any $(x_{i}^{\prime},y_{i}^{\prime})\sim P$, we have the following ``bounded differences property":
\begin{align*}
    &\Big|\sup_{h\in\hpc}\empiricalerrorrateh-\trueerrorrateh - \sup_{h\in\hpc}\hat{\text{er}}_{S_{n,i}^{\prime}}(h)-\trueerrorrateh\Big| \\
    \leq &\sup_{h\in\hpc}\big|(\empiricalerrorrateh-\trueerrorrateh) - (\hat{\text{er}}_{S_{n,i}^{\prime}}(h)-\trueerrorrateh)\big| \\
    = &\sup_{h\in\hpc}\bigg|\frac{1}{n}\left(\mathbbm{1}\{h(x_{i})\neq y_{i}\}-\mathbbm{1}\{h(x_{i}^{\prime})\neq y_{i}^{\prime}\}\right)\bigg| \leq \frac{1}{n} .
\end{align*}
By applying the McDiarmid's inequality (Lemma \ref{lem:mcdiarmid-ineq}), we have 
\begin{equation*}
    \sup_{h\in\hpc}\empiricalerrorrateh-\trueerrorrateh \leq \E\left[\sup_{h\in\hpc}\empiricalerrorrateh-\trueerrorrateh\right] + \sqrt{\frac{1}{2n}\ln{\left(\frac{4}{\delta}\right)}} ,
\end{equation*}
with probability at least $1-\delta/2$. The conclusion follows immediately from a union bound and the following fact:
\begin{equation*}
    \max\left\{\E\left[\sup_{h\in\hpc}\empiricalerrorrateh-\trueerrorrateh\right], \E\left[\sup_{h\in\hpc}\trueerrorrateh-\empiricalerrorrateh\right]\right\} \leq \E\left[\sup_{h\in\hpc}\Big|\empiricalerrorrateh-\trueerrorrateh\Big|\right] .
\end{equation*}
\end{proof}

\begin{lemma}  [{\textbf{\citealp[][Theorem 2.3]{bousquet2002bennett}}}]
  \label{lem:Bennet-concentration}
Let $P$ be a distribution and $\{X_{i}\}_{1\leq i\leq n}\subseteq\mathcal{X}$ be a sequence of i.i.d. $P$-distributed random variables. Let $\mathcal{F}$ be a countable set of functions from $\mathcal{X}$ to $\R$ and assume that all functions $f$ in $\mathcal{F}$ are $P$-measurable, square-integrable and satisfy $\E[f]=0$. We assume that $\sup_{f\in\mathcal{F}}\lVert f\rVert_{\infty}\leq 1$ and denote
\begin{equation*}
    Z := \sup_{f\in\mathcal{F}}\bigg|\sum_{i=1}^{n}f(X_{i})\bigg| .
\end{equation*}
Let $\sigma$ be a positive real number such that $\sup_{f\in\mathcal{F}}\mathrm{Var}[f(X_{1})]\leq\sigma^{2}$ almost surely, let $\nu=n\sigma^{2}+2\E[Z]$. Then for all $x\geq0$, we have
\begin{equation*}
    \mathbb{P}\left(Z\geq\E[Z]+x\right) \leq \exp{\left(-\nu h(x/\nu)\right)} ,
\end{equation*}
where $h(x):=(1+x)\log{(1+x)}-x$, and also
\begin{equation*}
    \mathbb{P}\left(Z\geq\E[Z]+\sqrt{2\nu x}+\frac{x}{3}\right) \leq e^{-x} .
\end{equation*}
\end{lemma}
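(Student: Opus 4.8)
This is Bousquet's Bennett-type concentration inequality for suprema of empirical processes, so the plan is to follow the entropy (Herbst--Ledoux) method. First I would reduce to a \emph{finite} $\mathcal{F}$: writing $\mathcal{F}$ as an increasing union of finite classes $\mathcal{F}_{m}$ and $Z_{m}:=\sup_{f\in\mathcal{F}_{m}}\bigl|\sum_{i}f(X_{i})\bigr|$, one has $Z_{m}\uparrow Z$ almost surely and in $L^{1}$, so $\E[Z_{m}]\uparrow\E[Z]$ and $\nu_{m}:=n\sigma^{2}+2\E[Z_{m}]\uparrow\nu$; since $v\mapsto v\,h(y/v)$ is non-increasing and $h$ is non-decreasing, the bound for $Z_{m}$ (applied at deviation $\E Z-\E Z_{m}+x\ge x$) passes to $Z$ in the limit. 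Replacing $\mathcal{F}$ by $\mathcal{F}\cup(-\mathcal{F})$, which preserves $\|f\|_{\infty}\le1$ and all variances, also removes the absolute value, so I may assume $\mathcal{F}$ finite and $Z=\max_{f\in\mathcal{F}}\sum_{i=1}^{n}f(X_{i})$.

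The core step is the sub-exponential moment bound
\begin{equation*}
  \psi(\lambda):=\log\E\!\left[e^{\lambda(Z-\E Z)}\right]\ \le\ \nu\bigl(e^{\lambda}-\lambda-1\bigr),\qquad\lambda\ge0 .
\end{equation*}
I would prove this by the entropy method: by sub-additivity (tensorization) of entropy, $\mathrm{Ent}\bigl[e^{\lambda Z}\bigr]\le\sum_{i}\E\bigl[\mathrm{Ent}^{(i)}[e^{\lambda Z}]\bigr]$, and a modified logarithmic Sobolev inequality bounds each conditional entropy $\mathrm{Ent}^{(i)}[e^{\lambda Z}]$ in terms of the resampled increments $Z-Z_{i}'$, where $Z_{i}':=\max_{f}\bigl(\sum_{j\neq i}f(X_{j})+f(X_{i}')\bigr)$ and $X_{i}'$ is an independent copy of $X_{i}$. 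Using that the maximizer $f^{\star}$ of $Z$ satisfies $Z-Z_{i}'\le f^{\star}(X_{i})-f^{\star}(X_{i}')$, and combining this with $\E[f]=0$ (which turns $\E_{X_{i}'}[(f^{\star}(X_{i})-f^{\star}(X_{i}'))^{2}]$ into $f^{\star}(X_{i})^{2}+\mathrm{Var}(f^{\star})$), with $\mathrm{Var}(f)\le\sigma^{2}$, $\|f\|_{\infty}\le1$ and the sub-additivity of the supremum, one establishes --- after a delicate computation --- that the relevant quadratic functional of the increments is at most $\nu=n\sigma^{2}+2\E[Z]$. Feeding this into the entropy inequality gives a differential inequality for $\psi$ which, integrated from $\psi(0)=\psi'(0)=0$, yields the displayed bound.

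Given the moment bound, the two conclusions are routine. By the Chernoff inequality and the Legendre transform,
\begin{equation*}
  \mathbb{P}\bigl(Z\ge\E Z+x\bigr)\ \le\ \inf_{\lambda\ge0}\exp\!\bigl(-\lambda x+\nu(e^{\lambda}-\lambda-1)\bigr)\ =\ \exp\!\bigl(-\nu\,h(x/\nu)\bigr),
\end{equation*}
with the infimum attained at $\lambda=\log(1+x/\nu)$; this is the first assertion. For the second, use the elementary inequality $h\bigl(\sqrt{2s}+s/3\bigr)\ge s$ (for all $s\ge0$) with $s=x/\nu$, which gives $\nu\,h\bigl((\sqrt{2\nu x}+x/3)/\nu\bigr)\ge x$ and hence $\mathbb{P}\bigl(Z\ge\E Z+\sqrt{2\nu x}+x/3\bigr)\le e^{-x}$.

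I expect the main obstacle to be the estimate producing the factor $\nu=n\sigma^{2}+2\E[Z]$ in the moment bound: one must bound the sum of squared (one-sided) jackknife increments sharply enough, using the mean-zero property to extract a variance term $\sigma^{2}$ from each resampled increment, and the boundedness together with sub-additivity of the supremum to keep the remaining error at $2\E[Z]$ rather than something larger. This self-bounding-with-variance estimate is exactly the technical contribution of \citet{bousquet2002bennett}; the reduction to finite $\mathcal{F}$, the tensorization/entropy bookkeeping, and the final Chernoff--Legendre conversion are all standard.
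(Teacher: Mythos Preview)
The paper does not prove this lemma: it is stated in the appendix of technical lemmas as a direct citation of \citet[Theorem 2.3]{bousquet2002bennett} and is invoked as a black box in the proof of Proposition~\ref{prop:uniform-bernstein}. Your plan correctly outlines the entropy (Herbst--Ledoux) method that Bousquet himself used---reduction to finite $\mathcal{F}$, tensorization of entropy, a modified log-Sobolev inequality with jackknife increments, the self-bounding-with-variance estimate producing $\nu=n\sigma^{2}+2\E[Z]$, and the Chernoff--Legendre conversion---so there is nothing to compare against in this paper, and your sketch is faithful to the original source.
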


\begin{lemma}  [{\textbf{\citealp[][Theorem 2.1]{van-der-Vaart11}}}]
  \label{lem:entropy-uniform-bound}
Let $\mathcal{F}$ be a $P$-measurable class of measurable functions with
envelope function $F\leq1$ and such that $\mathcal{F}^{2}$ is $P$-measurable. If $Pf^{2}<\delta^{2}PF^{2}$ for every $f$ and some $\delta\in(0,1)$, then 
\begin{equation*}
    \E_{P}^{*}\left[\sup_{f\in\mathcal{F}}\bigg|\frac{1}{\sqrt{n}}\sum_{i=1}^{n}\left(f(X_{i})-Pf\right)\bigg|\right] \lesssim J\left(\delta,\mathcal{F},L_{2}\right)\left(1+\frac{J\left(\delta,\mathcal{F},L_{2}\right)}{\delta^{2}\sqrt{n}\lVert F\rVert_{P,2}}\right)\lVert F\rVert_{P,2} ,
\end{equation*}
where $\lVert f\rVert_{P,2}$ denotes the norm of a function $f$ in $L_{2}(P)$ and
\begin{equation*}
    J\left(\delta,\mathcal{F},L_{2}\right) := \sup_{P}\int_{0}^{\delta}\sqrt{1+\log N(\epsilon\lVert F\rVert_{P,2},\mathcal{F},L_{2}(P))} \mathrm{d}\epsilon .
\end{equation*}
\end{lemma}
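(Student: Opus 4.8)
The plan is to prove this as a classical maximal inequality of empirical process theory (it is essentially Theorem~2.1 of van der Vaart and Wellner), through the standard pipeline \emph{symmetrization $\to$ chaining against the empirical metric $\to$ self-bounding control of the random radius}. Write $A_n:=\E\big[\E_{\varepsilon}\sup_{f\in\mathcal{F}}\big|\tfrac{1}{\sqrt n}\sum_{i=1}^{n}\varepsilon_i f(X_i)\big|\big]$ for i.i.d.\ signs $\varepsilon_i$ independent of the sample, and assume $\|F\|_{P,2}>0$ (else every $f$ is $0$ $P$-a.s.\ and the bound is trivial). First I would invoke the symmetrization inequality to get $\E_P^{*}\big[\sup_{f}\big|\tfrac{1}{\sqrt n}\sum_i(f(X_i)-Pf)\big|\big]\le 2A_n$, so it suffices to bound $A_n$.

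Second, conditionally on $X_1,\dots,X_n$, I would bound the Rademacher average by Dudley's entropy integral in the empirical $L_2$ pseudometric $\|f-g\|_n:=\big(\tfrac1n\sum_i(f(X_i)-g(X_i))^2\big)^{1/2}$, obtaining $\E_\varepsilon\sup_f\big|\tfrac{1}{\sqrt n}\sum_i\varepsilon_i f(X_i)\big|\lesssim\int_0^{\theta_n}\sqrt{1+\log N(u,\mathcal{F},\|\cdot\|_n)}\,du$ with $\theta_n:=\sup_f\|f\|_n$. Since $\|\cdot\|_n$ is the $L_2$-norm of the (finitely supported) empirical measure, $N(u,\mathcal{F},\|\cdot\|_n)\le\sup_Q N(u,\mathcal{F},L_2(Q))$; the standard envelope rescaling $u=\epsilon\|F\|_{P,2}$ then converts the integral into $\lesssim\|F\|_{P,2}\,J(\theta_n/\|F\|_{P,2},\mathcal{F},L_2)$. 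Taking the outer expectation, $A_n\lesssim\|F\|_{P,2}\,\E[J(\theta_n/\|F\|_{P,2})]$.

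Third --- the crux --- I must turn the \emph{population} variance bound $Pf^2<\delta^2\|F\|_{P,2}^2$ into control of the \emph{random} radius $\theta_n$. Write $\theta_n^2\le\delta^2\|F\|_{P,2}^2+\sup_f\big|\tfrac1n\sum_i(f^2(X_i)-Pf^2)\big|$; the last term is an empirical process over $\mathcal{F}^2:=\{f^2:f\in\mathcal{F}\}$ (measurable by hypothesis), and since $t\mapsto t^2$ is $2$-Lipschitz on $[-1,1]$ with $0\mapsto0$ (the envelope gives $|f|\le F\le1$), symmetrization plus the Ledoux--Talagrand contraction principle bound its expectation by $\lesssim n^{-1/2}A_n$, so $\E[\theta_n^2]\lesssim\delta^2\|F\|_{P,2}^2+n^{-1/2}A_n$. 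Because $v\mapsto J(\sqrt v,\mathcal{F},L_2)$ is concave and nondecreasing (the integrand $u\mapsto\sqrt{1+\log N(u,\cdot)}$ is nonincreasing) and $J$ is subadditive in its argument, Jensen's inequality together with $\sqrt{a+b}\le\sqrt a+\sqrt b$ gives $A_n\lesssim\|F\|_{P,2}J(\delta)+\|F\|_{P,2}\,J\big(\sqrt{A_n}/(n^{1/4}\|F\|_{P,2})\big)$.

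Finally, I would solve this self-bounding inequality. Since $s\mapsto J(s)/s$ is nonincreasing (an average of a nonincreasing function), $J(s)\le\max\{1,s/\delta\}J(\delta)$, so either $A_n\lesssim\|F\|_{P,2}J(\delta)$ directly, or $A_n\lesssim\|F\|_{P,2}J(\delta)+\big(J(\delta)/(n^{1/4}\delta)\big)\sqrt{A_n}$; the latter is a quadratic in $\sqrt{A_n}$ whose solution yields $A_n\lesssim\|F\|_{P,2}J(\delta)+J(\delta)^2/(\delta^2\sqrt n)=J(\delta)\big(1+J(\delta)/(\delta^2\sqrt n\,\|F\|_{P,2})\big)\|F\|_{P,2}$, which is the claim. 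I expect the main obstacle to be not the structure but the technicalities: making symmetrization and contraction rigorous when the relevant suprema are only outer-measurable, and carefully tracking the envelope rescaling between $P$ and the empirical measure (where $\|F\|_{P,2}$ and $\|F\|_{Q_n,2}$ differ) as well as the regimes of $J$ in the final algebra.
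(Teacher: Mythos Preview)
The paper does not prove this lemma at all: it is stated in the technical appendix purely as a citation of Theorem~2.1 in \citet{van-der-Vaart11}, with no accompanying argument. Your proposal is therefore not competing with a proof in the paper but rather reconstructing the proof of the cited reference, and your outline---symmetrization, Dudley chaining against the empirical $L_2$ metric, Ledoux--Talagrand contraction on $\mathcal{F}^2$ to bound the random radius, then solving the resulting self-bounding inequality via the sub-root property of $J$---is exactly the structure of that original proof and is correct in substance. The technicalities you flag (outer expectations, the envelope rescaling between $\lVert F\rVert_n$ and $\lVert F\rVert_{P,2}$ when passing from the empirical entropy integral to the uniform $J$) are real but are handled in the cited paper precisely as you anticipate.
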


\end{document}